\DeclareMathOperator*{\argmax}{arg\,max}
\newtheorem{theorem}{Theorem}
\newtheorem{corollary}{Corollary}
\newtheorem{definition}{Definition}
\newtheorem{example}{Example}
\newtheorem{lemma}[theorem]{Lemma}
\newtheorem{proposition}[theorem]{Proposition}
\newtheorem*{proofsketch}{Proof sketch}
\newenvironment{keywords}
  {\par\vspace{\baselineskip}\noindent\textbf{Keywords:} }
  {}
\newcommand{\STATEx}{\item[]}
\def\email#1{{\texttt{#1}}}
\title{Probabilistic and reinforced mining of association rules}
\author{Yongchao Huang \footnote{Author email: \email{yongchao.huang@abdn.ac.uk}}}
\date{March 2025}
\begin{document}

\maketitle

\begin{abstract}
    This work introduces 4 novel probabilistic and reinforcement-driven methods for association rule mining (ARM): Gaussian process-based association rule mining (GPAR), Bayesian association rule mining (BARM), multi-armed bandit based association rule mining (MAB-ARM), and reinforcement learning based association rule mining (RLAR). These methods depart fundamentally from traditional frequency-based algorithms such as Apriori, FP-Growth, and Eclat, offering enhanced capabilities for incorporating prior knowledge, modeling uncertainty, item dependencies, probabilistic inference and adaptive search strategies. GPAR employs Gaussian processes to model item co-occurrence via feature representations, enabling principled inference, uncertainty quantification, and efficient generalization to unseen itemsets without retraining. BARM adopts a Bayesian framework with priors and optional correlation structures, yielding robust uncertainty quantification through full posterior distributions over item presence probabilities. MAB-ARM, including its Monte Carlo tree search (MCTS) companion, utilizes an upper confidence bound (UCB) strategy for efficient and adaptive exploration of the itemset space, while RLAR applies a deep Q-network (DQN) to learn a generalizable policy for identifying high-quality rules. Collectively, these approaches improve the flexibility and robustness of ARM, particularly for discovering rare or complex patterns and operating on small datasets. Empirical results on synthetic and real-world datasets demonstrate their effectiveness, while also highlighting trade-offs in computational complexity and interpretability. These innovations mark a significant shift from static, frequency-driven paradigms, offering some prior and dependency-informed, uncertainty-aware or scalable ARM frameworks for diverse application domains such as retail, geography, finance, medical diagnostics, and other risk-sensitive scenarios.

    \begin{keywords}
        Data mining, association rule, Gaussian process, uncertainty quantification, Bayesian inference, multi-armed bandit, Monte Carlo tree search, reinforcement learning.
    \end{keywords}
\end{abstract}

\tableofcontents

\section{Introduction} \label{sec:intro}

Association rule (AR \cite{Agrawal1993mining,Pei2009}) mining is a fundamental, unsupervised data mining technique for discovering frequent patterns, correlations, or associations \footnote{Recognizing that association rules identify \textit{correlations} rather than \textit{causal} relationships.} among items within large, unlabeled datasets. For example, in market basket analysis, transactional data such as supermarket purchase records can be analysed with AR to understand purchasing behaviors. These relationships are typically represented as implications of the form \{$A$\} → \{$B$\}, indicating that the occurrence of itemset $A$ often leads to the occurrence of itemset $B$. For example, in market basket analysis, where the goal is to identify purchasing behaviors such as which products are often bought together, the rule \{bread\} → \{milk\} suggests that customers who buy bread are also likely to purchase milk. AR is rule-based and the strength of a rule is evaluated using frequency-based measures such as \textit{support}, which measures the frequency of the itemset in the dataset, and \textit{confidence}, which represents the conditional probability of the consequent given the antecedent, and \textit{lift}, which assesses the strength of a rule over the random occurrence of the antecedent and the consequent. Relationships that exceed user-specified thresholds for these measures are extracted \footnote{For example, to find a rule like \{diapers\} → \{beer\}, we first identify if {diapers, beer} is a frequent itemset (high support) and then check the confidence, ensuring it meets the user-specified threshold. This process is fundamental to generating actionable insights, such as recommending product placements (e.g. placing these items near each other to encourage combined purchases.) or promotions, potentially increasing sales.}. To efficiently extract rules from datasets (e.g. transaction records), several algorithms have been developed (see Appendix.\ref{app:AR} for more details): the \textit{Apriori} algorithm \cite{Agrawal1994fast} employs a bottom-up approach, identifying frequent items in a database, generating candidate itemsets and pruning those that do not meet a predefined minimum support threshold. The \textit{frequent pattern growth} (FP-Growth \cite{Han2000mining}) algorithm represents the database using a compact data structure called an \textit{FP-tree} and extracts frequent itemsets directly from this structure, avoiding the candidate generation step of Apriori and thus improving efficiency. The \textit{equivalence class transformation} (Eclat \cite{Zaki2000scalable}) algorithm utilizes a depth-first search strategy to identify frequent itemsets by intersecting transaction lists, which can be more efficient in certain contexts compared to Apriori.

Association rule mining has diverse applications across various domains. In retail market basket analysis \cite{Pei2009,Ünvan2021}, it assists in identifying products that customers frequently purchase together, enabling retailers to optimise product placements and promotional strategies. In healthcare and clinical settings \cite{Srinivas2001,Lakshmi2017,Ramasamy2020}, AR is useful in identifying associations among symptoms and diseases, aiding in disease diagnosis and treatment planning \cite{Pei2009}. In biology, AR has been used to identify transcription factor interactions in genomic regions \cite{Ceddia2019}. AR can also be used in web usage mining to help inform user behavior \cite{Tripathi2017Web, Metwally2005}, website design and build personalized content recommendations. In finance, AR can be used for feature selection \cite{Srivastava2024} and to detect fraudulent activities by identifying unusual patterns in transaction data \cite{Sanchez2009fraud}. Association rules can also be effectively utilized for classification tasks \footnote{This branch of research, which combines AR with classification, is known as \textit{associative classification} (AC \cite{Han2012,Mao2012,Mitra2012}). By selecting high-quality rules through pruning and ranking, association rule-based classifiers often achieve higher accuracy, frequently surpassing traditional decision tree approaches.} \cite{Pei2009} by identifying relationships between features and class labels; these discovered rules can then predict labels for previously unseen cases, as in active learning. Despite its utility, traditional association rule mining faces several challenges, including the generation of numerous redundant or irrelevant rules, difficulty in selecting appropriate thresholds without domain expertise, and the risk of misinterpreting statistical associations as causal relationships. 

This work explores the development of 4 novel methods, namely Gaussian process-based ARM (GPAR), Bayesian ARM (BARM), multi-armed bandit-based ARM (MAB-ARM), and reinforcement learning-based ARM (RLAR), to enhance the flexibility and robustness of ARM in complex datasets, moving beyond traditional search algorithms. Unlike traditional frequency-based or deterministic search algorithms, these approaches either incorporate probabilistic modeling or reinforcement learning to better handle uncertainty, integrate prior knowledge into the rule discovery process and apply adaptive search. Specifically, GPAR employs Gaussian processes to model item co-occurrence probabilities by representing each item as a feature vector (e.g. size, shape, color, price, or manufacturing information), enabling principled inference through posterior marginalization and sampling. BARM applies Bayesian inference to derive full posterior distributions over item presence probabilities, offering robust uncertainty quantification and support for correlated structures. MAB-ARM employs an UCB strategy (along with a MCTS extension) for adaptive exploration of the itemset space, while RLAR uses a DQN to learn optimal itemset selection policies. These methods not only improve the quality of discovered rules by reducing spurious associations but also uncover more meaningful and actionable insights from data. Drawing on probabilistic and reinforcement approaches widely adopted in learning and decision-making tasks such as Bayesian inference and generative modeling \cite{Zoubin2015PML,Jordan1999introduction,Winkler1972introduction,Gelman2013BDA,Murphy2012PML,Bishop2023DL}, this work contributes some new methodologies for dependency-informed, uncertainty-aware and data-efficient association rule mining.

\section{Related work} \label{sec:related_work}

Traditional association rule mining (ARM) has been extensively studied, with foundational algorithms such as Apriori \cite{Agrawal1994fast}, FP-Growth \cite{Han2000mining}, and Eclat \cite{Zaki2000scalable} relying on frequency measures such as support, confidence, and lift to identify associations within large datasets. These methods have found broad application across domains including retail \cite{Pei2009,Ünvan2021}, healthcare \cite{Srinivas2001,Pei2009,Lakshmi2017,Ramasamy2020}, biology \cite{Ceddia2019}, web \cite{Tripathi2017Web,Metwally2005}, finance \cite{Sanchez2009fraud,Srivastava2024}, and classification tasks \cite{Han2012,Mao2012,Mitra2012}. However, traditional ARM approaches encounter some challenges, notably generating redundant, suspicious or irrelevant rules, struggling with appropriate metrics threshold selection \footnote{Traditional frequency-based ARM methods are generally sensitive to threshold selection. For example, \cite{Gonzalez2020BRM} argued that frequentist thresholds often lead to significant yet infrequent associations.}, and the inability to capture uncertainty and rarity adequately \cite{Pei2009,Lakshmi2017,Ramasamy2020,Ünvan2021}. Their measures of uncertainty, such as confidence, are based on empirical frequency counts and fail to incorporate relevant prior knowledge.

These limitations have prompted research into probabilistic and Bayesian enhancements. Tian et al. \cite{Tian2013BAR} introduced Bayesian association rule mining (BAR), which combines traditional Apriori techniques with \textit{Bayesian networks}. BAR defines novel measures, i.e. Bayesian confidence (BC) and Bayesian lift (BL), to measure the conditional dependencies between items more robustly than traditional frequency metrics. BAR outputs best rules according to BC and BL. Their method was demonstrated effectively on clinical phenotype datasets, revealing rules that capture nuanced relationships otherwise missed by frequentist methods. González et al. \cite{Gonzalez2020BRM} proposed a Bayesian rule mining (BRM) framework which expands the taxonomy of AR mining to incorporate belief-based criteria for rule selection. BRM employs an 'increasing belief' criterion based on recursive Bayesian updating rather than the conventional frequency-based minimum-support threshold. It recursively applies Bayes' theorem to evaluate rules, progressively updates the belief in a rule's validity, enabling it to effectively extract rare associations without suffering from support dilution which is inherent in frequency-based methods. This approach has demonstrated effectiveness in identifying rare and significant associations from datasets.

Bayesian rule mining has also been used in classification and time series modelling to enhance interpretability and accuracy. Beygelzimer et al. \cite{Dominique2012} introduced a Bayesian framework for classification rule mining in quantitative databases, which defines a model space and prior distribution over classification rules. These mined rules then serve as features for classifiers. This approach enables the efficient mining of locally optimal classification rules without the need for univariate preprocessing of attributes, demonstrating resilience to spurious patterns and improving predictive performance over traditional rule-based classifiers. Similarly, Wang et al. \cite{Wang2017BRL} introduced Bayesian rule sets (BRS), a machine learning algorithm that constructs classifiers composed of a small number of short rules. These models, structured in restricted disjunctive normal form, offer enhanced interpretability by producing concise rule sets that describe specific classes. The BRS framework incorporates user-defined priors to shape the model according to domain-specific interpretability requirements and has been effectively applied in contexts such as in-vehicle personalized recommender systems. Hüwel and Beecks \cite{Huwel2023} introduced a novel method which uses the Apriori algorithm to efficiently identify frequent kernel components in GPs for time series modelling. Their approach leverages the strengths of frequent itemset mining to uncover prevalent patterns in GP models, thereby facilitating a deeper understanding of the underlying data structures and enhancing the interpretability and analysis of complex time series data. 

While these Bayesian ARM methods demonstrate the potential of probabilistic frameworks, they predominantly rely on discrete Bayesian structures or belief networks, which limits their capacity to exploit continuous probabilistic modeling for capturing complex item dependencies. Moreover, no ARM methods have extensively explored the integration of Gaussian Processes (GPs), the formulation of a generalized Bayesian framework, or the application of MAB and RL techniques to enhance rule discovery. This work aims to bridge these gaps by e.g. introducing a GP-based mining method (GPAR) which incorporates continuous probabilistic modeling into ARM. Unlike previous probabilistic approaches that depend on discrete Bayesian structures or networks, GPAR represents items as feature vectors, employs GPs to model continuous latent variables capturing item correlations, and computes co-occurrence probabilities by sampling the marginalized joint posterior distribution. This GPAR methodology offers a principled approach to explicitly handle uncertainty, potentially improving the quality of mined association rules (e.g. identifying rare but significant associations), and enhancing the flexibility and robustness of rule discovery in complex datasets (e.g. reducing redundant rule generation). Extending GPAR, we build a general Bayesian framework (BARM) for ARM that principally integrates prior knowledge and uncertainty quantification to support a wide range of rule mining scenarios.

In parallel, reinforcement learning-based itemset mining has gained attention, notably in the generic itemset mining based on reinforcement learning (GIM-RL) framework \cite{GIM2022} which presents a unified framework where an agent, implemented as a DQN, is trained via reinforcement learning to extract diverse itemset types by iteratively adding or removing items, guided by a reward function reflecting the relevance to a specified target type. This approach demonstrates general effectiveness across various tasks, including the mining of high utility itemsets (HUIs), frequent itemsets (FIs), and association rules (ARs), and introduces a novel agent transfer mechanism to leverage knowledge from a source dataset for efficient mining on a related target dataset. We also employed the DQN approach in our RLAR method due to its simplicity and universality. Rather than general itemset construction, RLAR learns a task-specific, generalizable policy optimized for discovering high-quality association rules across a spectrum of support thresholds. In contrast to GIM-RL’s task-agnostic itemset extraction approach, RLAR focuses on adaptive rule optimization, prioritizes rare but significant rules, enhances rule diversity via managing uncertainty in action selection, and refrains from relying on transfer learning or extending beyond the ARM domain. Inspired by the reinforcement approach, we also crafted a multi-armed bandit (MAB) based and a Monte Carlo tree search (MCTS) method to provide adaptive exploration strategies that dynamically adjust to dataset characteristics and improve the efficiency of rule discovery processes.

\section{Association rule mining (ARM)} \label{sec:ARM}

Association rule (AR \footnote{More details about AR can be found in Appendix.\ref{app:AR}.}) mining is an unsupervised learning method used to uncover frequent patterns from unlabeled datasets, particularly in \textit{transactional data} such as supermarket purchase records. Introduced by Agrawal et al. \cite{Agrawal1993mining}, AR learning identifies relationships among items, typically represented as implications of the form \{$A$\} → \{$B$\}, where $A$, $B$ $\subseteq \mathcal{I}$, and $A$ $\cap$ $B$ = $\emptyset$. $A$ and $B$ are called \textit{itemsets} which can consist of one or more items. Each itemset is a collection of items, and each \textit{item} is represented by a binary attribute \footnote{Binary attributes provide a straightforward way to represent the presence or absence of items in transactions. This representation simplifies the formulation and interpretation of association rules, such as $\{i_j\} \Rightarrow \{i_k\}$, which indicates that the presence of item $i_j$ implies the presence of item $i_k$ in a transaction. Note that, real-world datasets can contain categorical or numerical attributes with multiple (more than two) possible values (e.g. a hierarchical structure or taxonomy). To apply traditional AR mining techniques to such data, these attributes are typically transformed into a binary format through a process called \textit{discretization} or \textit{binarization}, generating the \textit{extended} transactions. For example, a categorical attribute with multiple levels can be converted into multiple binary attributes, each representing the presence or absence of a \textit{specific} category. This transformation allows the use of standard AR mining algorithms, but can lead to an explosion in the number of attributes and potential loss of information. There are techniques, e.g. \cite{Srikant1997}, developed to handle hierarchical and quantitative attributes.} or variable. The largest itemset which contains all (unique) items is denoted as $\mathcal{I} = \{i_1, i_2, \ldots, i_M\}$, with $M=|\mathcal{I}|$ being the total number of unique items being considered. $\mathcal{I}$ and its sub itemsets are finite sets of literals \cite{Srikant1997}, e.g. $\mathcal{I}= \{i_1, i_2, \ldots, i_7\}$ = \{milk, bread, butter, eggs, beer, diapers, fruit\} \footnote{Note that, mathematically speaking, a \textit{set} has non-repeated and un-ordered elements. Therefore, the set $\mathcal{I}$ should be theoretically non-indexable.}. Each item $i_k$, e.g. $i_1=\text{milk}$, is a binary \textit{attribute} \footnote{\textit{Attribute} here refers to the fact that $i_k$ is an binary attribute (or variable) of the itemset $\mathcal{I}$, not an attribute of the item itself. In traditional AR, there is no notation of item attributes, we shall introduce \textit{features} to represent an item later in GPAR.}, meaning in any transaction, it is either present (1) or absent (0). For example, in market basket analysis, $i_k=1$ means item $k$ is purchased in this transaction. The implication \{$A$\} → \{$B$\} is referred to as an \textit{association rule}, where $A$ is the \textit{antecedent} (or left-hand side, LHS), and $B$ is the \textit{consequent} (or right-hand side, RHS). Every rule is composed by two different, disjoint itemsets. The statement \{$A$\} → \{$B$\} is often read as \textit{if} $A$ \textit{then} $B$, implying the \textit{co-occurrence} of the antecedent and consequent. For example, the rule \{bread\} → \{milk\} suggests that customers who buy bread also tend to buy milk. We denote $\mathcal{T} = \{\mathbf{t}_1, \mathbf{t}_2, \ldots, \mathbf{t}_N\}$ as a set \footnote{As the transaction database $\mathcal{T}$ can be arranged in time, maybe interpreting $\mathcal{T}$ as a \textit{DataFrame} or \textit{array} or even a \textit{dictionary} (with datetime being the key) makes it more meaningful.} of \textit{transactions} with $N=|\mathcal{T}|$ being the total number of transactions. Each transaction $t_k$ represents a subset of items purchased together - it can be represented either in literal form such as $t=$ \{milk, bread, fruit\} (a 3-itemset), or be encoded as a binary vector \footnote{Note, the literal representation of a transaction can have varying lengths across transactions, while in the database its encoded representation, i.e. the binary vector, has fixed length $M$ with ordered elements.}, e.g. $\mathbf{t}_1 = [1, 1, 0, 0, 0, 0, 1]$ assuming the items order \{milk, bread, butter, eggs, beer, diapers, fruit\}, which has the same length $M$ as the universe itemset $\mathcal{I}$. A transaction $t \in T$ satisfies \cite{Agrawal1993mining} an itemset $A$ if $t[k] = 1$ for all items $i_k$ in $A$.

The strength of a rule is evaluated using \textit{frequency-based} measures. Transactions data as an example, the \textit{support} of an itemset $A$ $\subseteq \mathcal{I}$ is defined as the proportion of transactions in $\mathcal{T}$ that contain $A$:
\[
\text{\textit{support}}(A) = \frac{|\{ t \in \mathcal{T} \mid A \subseteq t \}|}{|D|}
\]
and the \textit{confidence} of a rule \{$A$\} → \{$B$\} is the conditional probability that a transaction contains the consequent B given that it contains the antecedent $A$:
\[
\text{\textit{confidence}}(\text{\{$A$\} → \{$B$\}}) = p(B \mid A) = \frac{\text{\textit{support}}(A \cup B)}{\text{\textit{support}}(A)}
\]
and the \textit{lift} of a rule measures the strength of the implication relative to the expected co-occurrence of the antecedent and the consequent under independence:
\[
\text{\textit{lift}}(A \rightarrow B) = \frac{\text{\textit{confidence}}(A \rightarrow B)}{\text{\textit{support}}(B)} = \frac{\text{\textit{support}}(A \cup B)}{\text{\textit{support}}(A) \times \text{\textit{support}}(B)}.
\]

These metrics ensure that discovered rules are statistically significant: support quantifies prevalence, confidence captures reliability, and lift highlights deviation from independence (non-random associations). Notably, association rules uncover statistical correlations, not causal relationships, which is a critical distinction to avoid erroneous interpretations (e.g. concluding that bread purchases cause milk purchases).

To efficiently extract association rules from large datasets, several algorithms have been developed \footnote{More details about these 3 AR mining algorithms please see Appendix.\ref{app:AR}.}. The \textit{Apriori} algorithm \cite{Agrawal1994fast} uses a bottom-up approach, generating candidate itemsets and pruning those below a minimum support threshold. The \textit{FP-Growth} algorithm \cite{Han2000mining} constructs a compressed representation of the dataset using an FP-tree, enabling frequent itemset mining without candidate generation, which improves efficiency. The \textit{Eclat} algorithm \cite{Zaki2000scalable} employs a depth-first search strategy with vertical data representation, using transaction ID intersections for for frequent itemset discovery. These methods have broad applications, including market basket analysis for retail optimisation, disease-symptom pattern detection in healthcare, and personalized recommendations in web usage mining. However, traditional AR mining also presents challenges, such as rule redundancy, difficulty in selecting meaningful thresholds without domain knowledge, and the risk of mistaking statistical associations for causal insights, etc.

\section{GPAR: Gaussian process based ARM} \label{sec:GPAR}

Traditional AR mining identifies frequent itemsets and rules (e.g. $A$ → $B$) from transactional data based on frequency metrics such as support and confidence, without considering item similarities or uncertainty. GPAR reframes this unsupervised task as a supervised, probabilistic framework. GPAR leverages Gaussian processes (GPs) to model item relationships: each item $k$ from the item basket $\{1, 2, \dots, M\}$ is represented by a $d$-dimensional feature vector $\mathbf{x}_k \in \mathbb{R}^d$ which encode attributes such as size, shape, color, or price, etc. These vectors serve as inputs to a GP which outputs the continuous, scalar-valued, latent variables $z_k$ whose sign represents the membership of item $k$ in transactions. Inference is performed to obtain the GP posterior, incorporating prior knowledge about item similarities and transaction data; by sampling from the marginalised posterior, we can evaluate each itemset, i.e. estimating its co-occurrence probabilities (i.e. support) and confidence. By modeling the relation between input feature vectors $\mathbf{x}_k$ and the latent variable $z_k$ using GP, we craft the unsupervised data mining problem into a supervised one. In the following, we first briefly re-visit the inner workings of GP, and then describe the GPAR procedure in steps.

\subsection*{\textit{Gaussian process: preliminaries}}

Gaussian processes (GPs \cite{Rasmussen2004GP,Rasmussen2006GPbook}) are non-parametric, probabilistic models that define distributions over functions \footnote{More details about GP can be found in Appendix.\ref{app:GP_more_details}.}. A Gaussian process (GP) models a collection of random variables, any finite number of which have a joint Gaussian distribution. GPs are used for regression, classification, and uncertainty quantification. A GP is fully specified by its mean function $m(x)$, and covariance function (kernel) $k(\mathbf{x}, \mathbf{x}')$. For any finite set of input points $\{\mathbf{x}_1, \mathbf{x}_2, \ldots, \mathbf{x}_n\}$ with $\mathbf{x}_j \in \mathbb{R}^d$, the corresponding function values $\mathbf{f} = [f(\mathbf{x}_1), f(\mathbf{x}_2), \ldots, f(\mathbf{x}_n)]^\top$, where \footnote{Here for simplicity we use single-output GP, i.e. $f(\mathbf{x})$ is scalar-valued.} $f: \mathbb{R}^d \to \mathbb{R}$, are jointly distributed as a multivariate normal (MVN) distribution \cite{Rasmussen2006GPbook}:
\[
\mathbf{f} \sim \mathcal{N}(\mathbf{m}, K)
\]
where $\mathbf{m} = [m(\mathbf{x}_1), m(\mathbf{x}_2), \ldots, m(\mathbf{x}_n)]^\top$ is the mean vector \footnote{Zero mean prior, i.e. $\mathbf{m}(\mathbf{x}) = \mathbf{0}$, is commonly used.} and $K$ the covariance matrix with entries $K_{ij} = k(\mathbf{x}_i, \mathbf{x}_j)$. In GP, the kernel $k(\cdot,\cdot)$ acts as a covariance function, it captures the similarity between inputs, enabling GP to model non-linear relationships \footnote{One can choose a kernel function with known properties to reflect prior information, e.g. smoothness or periodicity to better represent data (the kernel hyper-parameters are optimised when training GP).}. The kernel needs to be positive definite to ensure the resulting covariance matrix $K=[k(\mathbf{x}_i,\mathbf{x}_j)]_{i,j=1}^M$ is positive definite (i.e. the eigenvalues of the covariance matrix $K$ are all non-negative). The choice of kernel function $k(\mathbf{x}, \mathbf{x}')$ significantly impact its performance (e.g. item similarity, flexibility, expressiveness, scalability), as kernel encodes assumptions about the function's properties, such as smoothness and periodicity as well as prior assumptions about the data. Radial basis function (RBF) kernel \footnote{The RBF kernel is also termed the \textit{Gaussian} or \textit{squared exponential} (SE) kernel \cite{Rasmussen2006GPbook}.}, for example:
\begin{equation} \tag{cc.\ref{eq:RBF_kernel}}
    k(\mathbf{x}, \mathbf{x}') = \sigma_f^2 \exp\left(-\frac{\|\mathbf{x} - \mathbf{x}'\|^2}{2\ell^2}\right)
\end{equation}
where $\sigma_f^2$ is the magnitude and $\ell$ the length scale parameter - they are hyper-parameters controlling the function's variance and smoothness, respectively. The RBF kernel yields high similarity for close points, and decays as distance increases.

Given a chosen form of the mean and covariance functions \footnote{More generally, training a GP involves choosing different functional forms for mean and covariance functions as well as finding the optimal hyper-parameters of these functions and noise level.}, we want to  make inferences about all of the hyper-parameters (e.g. those defining mean, kernel and noise) in the light of the data \cite{Rasmussen2004GP}. Training a GP  is done via maximizing the \textit{log marginal likelihood} \footnote{This log marginal likelihood can also be obtained directly by observing $\mathbf{y} \sim \mathcal{N}(0, K + \sigma_n^2 I)$. Maximizing this log marginal likelihood trades off model fit and complexity, see details in Appendix.\ref{app:GP_more_details}.} using training data \cite{Rasmussen2006GPbook}:
\begin{equation} \tag{cc.Eq.\ref{eq:GP_log_likelihood}} \label{eq:GP_log_likelihood_cc}
\log p(\mathbf{y} \mid X, \boldsymbol{\theta}) = 
    -\frac{1}{2} (\mathbf{y} - \mathbf{m})^\top (K + \sigma_n^2 I)^{-1} (\mathbf{y} - \mathbf{m})
    - \frac{1}{2} \log |K + \sigma_n^2 I| 
    - \frac{n}{2} \log 2\pi
\end{equation}
where $K = k(X, X)$ is the noise-free kernel matrix, $K_y = K + \sigma_n^2 I \in \mathbb{R}^{n \times n}$ is the covariance matrix of the noisy observations, $\boldsymbol{\theta}$ denotes all kernel hyper-parameters (including noise variance $\sigma_n^2$). 

The hyper-parameters $\boldsymbol{\theta} = (\ell,\sigma_f^2,\sigma_n^2)$, in the case of RBF for example, are learned by maximizing the log marginal likelihood, which is typically performed using gradient-based methods (as analytical derivatives are available) without a separate validation set:
\[
\boldsymbol{\theta}^* = \arg\max_{\boldsymbol{\theta}} \log p(\mathbf{y} \mid X, \boldsymbol{\theta})
\]

After fitting the GP, we can make predictions. In regression, for example, given training data $\{(\mathbf{x}_i, y_i)\}_{i=1}^N$ with inputs $\mathbf{x}_i$ and response $y_i$, the goal is to predict the function value $f(\mathbf{x}_*)$ at a new input $\mathbf{x}_*$ with uncertainty estimates. Assuming a zero mean prior and incorporating Gaussian noise with variance $\sigma_n^2$, the \textit{predictive distribution} for $f(\mathbf{x}_*)$ is Gaussian with mean and variance given by \cite{Rasmussen2006GPbook}:
\begin{equation} \tag{cc.Eq.\ref{eq:posterior_predictive_signle_test_point}} \label{eq:posterior_predictive_signle_test_point_cc}
    \begin{aligned}
    \mu(\mathbf{x}_*) &= \mathbf{k}_*^\top (K + \sigma_n^2 I)^{-1} \mathbf{y}, \\
    \sigma^2(\mathbf{x}_*) &= k(\mathbf{x}_*, \mathbf{x}_*) - \mathbf{k}_*^\top (K + \sigma_n^2 I)^{-1} \mathbf{k}_*
    \end{aligned}
\end{equation}
where $\mathbf{k}_* = [k(\mathbf{x}_1, \mathbf{x}_*), k(\mathbf{x}_2, \mathbf{x}_*), \ldots, k(\mathbf{x}_n, \mathbf{x}_*)]^\top$ is the covariance vector between the training inputs and test input $\mathbf{x}_*$. $K$ is the covariance matrix of the training inputs, and $\mathbf{y} = [y_1, y_2, \ldots, y_n]^\top$ is the vector of observed labels. 

GPs supplies not only point predictions but also uncertainty quantification, providing a principled way to handle noise \cite{Rasmussen2006GPbook}; it also provides flexibility of encoding prior knowledge - we can flexibly choose mean and covariance functions to represent prior knowledge. This principled treatment of uncertainty and prior knowledge representation make GPs highly applicable in time series forecasting \cite{Huwel2023, Rasmussen2006GPbook}, active learning \cite{Rodrigues2014,Riis2022}, and Bayesian optimisation \cite{McIntire2016,Wang2024}. Given its capacity in handling uncertainty and encoding prior knowledge, we employ a GP in association rule mining to model how item attributes (e.g. size, color, shape, price, etc) influence co-occurrence probabilities \footnote{Note that, although GP naturally supplies uncertainty quantification, in the context of GPAR, we only use its posterior for probability estimation (no notion of uncertainty over the estimated probability for a generated rule); we are not interested in making and quantifying predictions as in \ref{eq:posterior_predictive_signle_test_point_cc}.}. By capturing the similarities between items through appropriate kernel functions, GPs can provide a flexible, probabilistic alternative to simple, empirical frequency-based models. 

\subsection{Data representations}
In GPAR, each item $k$ in the universal itemset $\mathcal{I} = \{1, 2, \dots, M\}$ is represented by a feature vector \footnote{To avoid confusion, we use \textit{feature} vector instead of \textit{attribute} vector here to describe the item, as attribute has been used in traditional AR to reflect that fact that each item $i_k$ is a binary attribute of the itemset $\mathcal{I}$ indicating membership or appearance of item $k$.} (a $d$-tuple) $\mathbf{x}_k \in \mathbb{R}^d$, a $d$-dimensional vector that encodes descriptive attributes of the item, such as size, shape, color, price, category, etc. For example, an item might have a feature vector like 
\[
\mathbf{x}_k = \left[
\overset{\text{size}}{0.1},\ 
\overset{\text{shape}}{0.5},\ 
\overset{\text{color}}{1},\ 
\overset{\text{price}}{0.9},\
...
\right]
\]
where each element is a normalized value corresponding to a specific feature. The distance between two feature vectors $\mathbf{x}_i$ and $\mathbf{x}_j$ is fed into a kernel whose output (scalar-valued) quantifies the dissimilarity (covariance) between items $i$ and $j$.

The collection of all feature vectors for the $M$ items forms the feature matrix $X = [\mathbf{x}_1, \mathbf{x}_2, \dots, \mathbf{x}_M]^\top \in \mathbb{R}^{M \times d}$. This matrix serves as the input (often referred to as input indices) to the GP model (specifically, the kernel function), based on which the kernel function generates the covariance matrix for the GP model. The mean vector (assumed to be zero in our case) and the covariance matrix fully define the GP model and the distribution of the GP outputs - the continuous latent variables that capture underlying patterns among items.

Transactions in GPAR are represented similarly to traditional AR mining. Each transaction $\{\mathbf{t}_j\} \subseteq \mathcal{I}$ is encoded as a binary vector $\mathbf{t}_j = [i_1, i_2, \dots, i_M]^\top$, where $i_k \in \{0, 1\}$ indicates the presence ($i_k = 1$) or absence ($i_k = 0$) of item $k$ in transaction $\mathbf{t}_j$. These binary membership variables are linked to latent continuous variables $z_k$, one for each item $k$, through a quantization process \footnote{This quantization strategy converts continuous outputs into binary labels and is widely used across machine learning applications. For example, in binary linear classifiers like generalized linear models (GLMs), the predicted label is computed as \(\text{\textit{sign}}(\text{GLM}(\mathbf{x}))\), where \(\text{GLM}(\mathbf{x})\) generates a continuous score that is thresholded by the sign function \cite{Boyd_Vandenberghe_2018}. Similarly, in sigmoid-based binary classification, the label is determined by \(\text{sign}(\sigma(\mathbf{w}^\top \mathbf{x} + b) - 0.5)\), where \(\sigma(\cdot)\) is the sigmoid function producing a probability, and 0.5 acts as the standard decision threshold. This technique also extends to neural networks, where quantization is applied to binarize continuous activations or weights, enhancing efficiency or enabling discrete predictions.}: $i_k = \mathbb{I}(z_k > 0)$, where $\mathbb{I}(\cdot)$ is the indicator function ($\mathbb{I}(x) = 1$ if $x > 0$, and 0 otherwise). $\mathcal{T} = \{\mathbf{t}_1, \mathbf{t}_2, \dots, \mathbf{t}_N\}$ is the collection of all observed transactions $\mathbf{t}_j$, where $N$ is the total number of transactions. $\mathcal{T}$ is therefore a label or response matrix, i.e. $\mathcal{T} \in \mathbb{R}^{N \times M}$.

\subsection{The GP model, parameter estimation and posterior inference}
The vector of latent variables $\mathbf{z} = [z_1, z_2, \dots, z_M]^\top$ is modeled as a multivariate Gaussian \footnote{GPAR uses a static GP with static mean and covariance functions. We are not fitting a multivariate normal (MVN) distribution to the data, but over $z_k$, with their covariances produced by the kernel which consumes item attributes.} with a GP prior: $\mathbf{z} \sim \mathcal{N}(\mathbf{0}, K)$. The covariance matrix $K \in \mathbb{R}^{M \times M}$ encodes item dependencies: diagonal elements $K[i, i]$ reflect the model's confidence in predicting the presence of item $i$; off-diagonal elements $K[i, j]$ capture the joint influence and co-occurrence with other items. This covariance matrix is defined by a kernel function, e.g. the radial basis function (RBF) kernel $K_{M \times M} [i, j] = k(\mathbf{x}_i, \mathbf{x}_j) = \sigma_f^2 \exp\left( -\frac{\|\mathbf{x}_i - \mathbf{x}_j\|^2}{2\ell^2} \right)$ in which $\sigma_f^2$ represents the variance magnitude, and $\ell$ is the length scale. This kernel encodes item similarity: a smaller distance between feature vectors $\mathbf{x}_i$ and $\mathbf{x}_j$ results in a higher covariance $K_{ij}$, which may suggest a greater likelihood of items $i$ and $j$ co-occurring in transactions \footnote{Whether similar items, e.g. milk from different brands, co-occur frequently in transactions remains questionable though. There are arguments about whether supermarkets should place items similar to each other together ('taxonomy'), or they should make strategic placement based on consumer behaviour evidence such as put the 'beer and diapers' \cite{Berry1997Book}, which could be very different based on their feature vectors, together. Of course, one can take any empirically implied associations (from transaction history) as a factor into the feature vector.}. There are other kernels \cite{Rasmussen2006GPbook} such as linear, Ornstein–Uhlenbeck, Matérn, periodic, rational quadratic, etc, available to model different phenomena.

To obtain the values of kernel parameters, we maximize the log-likelihood \footnote{It is called log \textit{marginal} likelihood which emphasizes the non-parametric nature of GP \cite{Rasmussen2004GP}.} of all transactions $\mathcal{T} = \{\mathbf{t}_1, \mathbf{t}_2, \dots, \mathbf{t}_N\}$:
\begin{equation} \label{eq:GP_training_MLE}
    \boldsymbol{\theta} = \argmax_{\boldsymbol{\theta}} \log p(\mathcal{T} \mid X, \boldsymbol{\theta})
\end{equation}
where \footnote{Compared to (\ref{eq:GP_log_likelihood_cc}), here we have also assumed zero mean for the GP.}
\begin{equation*}
\log p(\mathcal{T} \mid X, \boldsymbol{\theta}) =
\sum_{j=1}^N \log p(\mathbf{t}_j \mid X, \boldsymbol{\theta}) = \sum_{j=1}^N \left[
    -\frac{1}{2} \mathbf{t}_j^\top K_t^{-1} \mathbf{t}_j 
    - \frac{1}{2} \log |K_t| 
    - \frac{d}{2} \log 2\pi
    \right]
\end{equation*}
where $\log p(\mathcal{T} \mid X, \boldsymbol{\theta})$ is the log-likelihood of the transaction data $\mathcal{T}$ given the feature matrix $X$ and parameters $\boldsymbol{\theta}$, and the likelihood $p(\mathbf{t}_j | \mathbf{z})$ is approximated\footnote{This approximation is important for computational feasibility. For a transaction $\mathbf{t}_j$, $\mathbf{t}_j$ indicates which items are present. The likelihood $p(\mathbf{t}_j | \mathbf{z})$ tells how likely the observed transaction $\mathbf{t}_j$ (a binary vector showing which items are present) is, given the latent variables $\mathbf{z}$. Since $\mathbf{t}_j$ is binary and $\mathbf{z}$ is continuous, computing the exact likelihood $p(\mathbf{t}_j | \mathbf{z})$ would require integrating over all possible $\mathbf{z}$ configurations that match $\mathbf{t}_j$, i.e. $p(i_k = 1 | z_k) = p(z_k > 0)$, which, for a Gaussian $z_k \sim \mathcal{N}(0, K_{kk})$, is 0.5 without conditioning, but the joint distribution and correlations in $K$ make exact computation complex, so we treat $p(\mathbf{t}_j | \mathbf{z})$ as proportional to the probability density of $\mathbf{z}$ under the Gaussian process, conditioned on $z_k > 0$ for $k \in \mathbf{t}_j$.} via $z_k$. $K_t = K + \sigma_n^2 I \in \mathbb{R}^{n \times n}$ is the covariance matrix of the noisy observations; $K = k(X, X)$ is the noise-free kernel matrix generated by the kernel function with chosen parameters $\sigma_f$ and $\ell$. $\boldsymbol{\theta}=(\sigma_f,\ell,\sigma_n)$ denotes all kernel hyper-parameters to be optimised. The overall log-likelihood is decomposed into summation of the log likelihood at each transaction, as we assume independence between transactions $\mathbf{t}_i$ and $\mathbf{t}_j$ given $X$ and $\boldsymbol{\theta}$. The quadratic term $-\frac{1}{2} \mathbf{t}_j^\top K_t^{-1} \mathbf{t}_j$ measures the 'distance' of $\mathbf{t}_j$ from the mean (zero) under the covariance $K_t$, while the normalization term $-\frac{1}{2} \log |K_t|$ involving the determinant of the covariance matrix serves as a penalty for complexity.

For each transaction, we set $z_k = 1$ where items are present and 0 elsewhere. The maximum likelihood optimisation involves plugging the feature matrix $X$ and the label data $\mathbf{t}_j, i=1,2,...,N$, into the above formula to evaluate the overall likelihood at some chosen kernel parameters; the choice of kernel parameters can be guided by e.g. gradient descent. One notes that, calculating the overall likelihood matrix inversion, which costs $\mathcal{O}(M^3)$ (see \textit{GP training} in Appendix.\ref{app:GP_more_details}), making this optimisation process computationally intensive for large $M$. After estimating $\sigma$ and $\ell$, the posterior distribution over $\mathbf{z}$, conditioned on the response matrix $\mathcal{T}$ and the feature matrix $X = [\mathbf{x}_1, \mathbf{x}_2, \dots, \mathbf{x}_M]^\top$, is derived: $\mathbf{z} \sim \mathcal{N}(\mathbf{0}, K_t)$ in which $K_t$ is obtained using the updated covariance function, enabling probabilistic inference.

\subsection{Rule evaluation using marginalised posterior and Monte Carlo sampling}
After parameter inference, conventional uses of GP would utilize the trained GP to make predictions at new data points and propagate parameter uncertainties into predictions. In the context of GPAR, aligning with traditional AR target, we are interested in sampling from the marginal posterior to obtain probabilistic estimates of rules (e.g. likelihood of frequent itemsets) based on \textit{existing} items. To mine rules, we examine all \footnote{Enumerating all possible combination of items from $M$ items costs $\mathcal{O}(2^M)$. This is same for traditional AR and GPAR.} possible itemsets $I \neq \phi$ and $I \subseteq \mathcal{I}$; for each itemset $I$, GPAR estimates its co-occurrence probability (the joint probability):
\begin{equation} \label{eq:rule_inference_via_integral_of_marginalised_joint}
    p(I) = p(\forall k \in I, z_k > 0) = \int_0^\infty \cdots \int_0^\infty \mathcal{N}(z; 0, K_I) dz
\end{equation}
where $K_I$ is the submatrix of the overall covariance matrix $K_{M \times M}$, corresponding to items in $I$. Eq.\ref{eq:rule_inference_via_integral_of_marginalised_joint} gives the probability $p(I)$ that all items in the itemset are present (i.e. all their latent variables are positive), leveraging the fact that GP models the latent variables (and any subset of them) for the items as a multivariate normal distribution.

Computing a multivariate probability integral in Eq.\ref{eq:rule_inference_via_integral_of_marginalised_joint} is analytically intractable for more than a few dimensions \footnote{The joint probability $p(\text{all } z_k > 0 \text{ for } k \in I)$ requires integrating over a hyper-rectangle, which has no closed-form solution. See Appendix.\ref{app:marginal_posterior_sampling} for an extended discussion on this.}; it can be numerically computed via \textit{Monte Carlo sampling} from the marginal posterior $\mathbf{z}_I \sim \mathcal{N}(\mathbf{0}, K_I)$, which leads to the approximation 
\begin{equation} \label{eq:GPAR_marginal_posterior_MC_approx}
    p(I) \approx \frac{1}{S} \sum_{s=1}^S \mathbb{I}(\mathbf{z}_{s,I} > 0)
\end{equation}
given $S$ samples $\mathbf{z}_{s,I}$. With a user-specified threshold $min\_prob$, if $p(I) > min\_prob$, rules $A$ → $B$ (where $A \cup B = I$, $A \cap B = \emptyset$) are generated. We further examine the confidence $\text{conf}(A \rightarrow B) = p(I) / p(A)$, if $\text{conf}(A \rightarrow B) > min\_conf$, then the rule $A \rightarrow B$ is identified as frequent (or significant). The complexity of enumerating all itemsets and sampling scales exponentially \textit{w.r.t.} no. of items $M=|\mathcal{I}|$ as $\mathcal{O}(2^M (M^3 + S M^2))$, limiting GPAR to small to medium datasets (e.g. $M \leq 15$, see later complexity analysis). 

\subsection{Rule generation beyond existing items}
In addition to the capabilities of traditional association rule mining, GPAR offers an extra bonus usage: the trained GP model enables probabilistic inference \footnote{This is termed \textit{posterior predictive} in e.g. probabilistic programming languages, which makes predictions at new data point and propagates parameter uncertainties to predictions.} for new rules involving items outside the universal itemset \(\mathcal{I}\). In traditional AR mining, the analysis is restricted to a fixed universal itemset $\mathcal{I} = \{1, 2, \dots, M\}$, which contains all items present in the dataset \footnote{In traditional AR mining, the task is to mine all frequent rules in an unsupervised manner, so there is no training and test split of the data.}. This means that rules (e.g. $A \rightarrow B$, where $A$ and $B$ are subsets of $\mathcal{I}$) can only be generated for items already observed in the data. If a previously unobserved, new item is introduced, e.g. a new product is imported to a store, the entire dataset must be reprocessed to refresh the rule-mining framework, which can be computationally costly and impractical in dynamic settings. In contrast, GPAR mining provides a powerful advantage: once trained, the GP model can seamlessly extend to new items without reprocessing the entire dataset, making GPAR uniquely valuable for real-world applications such as retail and e-commerce where new items are regularly added, offering flexibility and efficiency beyond the limitations of traditional AR mining. For example, if a store adds a new type of coffee to its inventory, GPAR can use its feature vector (e.g. price, brand, roast type, etc) to infer rules such as 'if a customer buys tea, they might also buy this new coffee' without needing to collect new transaction data or retrain the model.

GPAR’s ability to handle new items stems from its use of a Gaussian process - a probabilistic model that leverages \textit{feature-based} representations of items and their relationships. Each item $k \in \mathcal{I}$ in the dataset is associated with a feature vector $\mathbf{x}_k \in \mathbb{R}^d$, which captures attributes such as size, price, category, or other relevant properties. These feature vectors serve as the inputs to the GP model, which learns to associate them with latent variables $z_k$ that indicate the presence (membership) or importance of each item in transactions \footnote{GPAR compute the co-occurrence (joint) probabilities directly rather than individual predictions. In classical GP regression or classification, the goal is to predict a target variable $z_k$ given a feature vector $\mathbf{x}_k$. The GP learns a distribution over possible functions that map inputs to outputs, allowing for predictions of $z_k$ for new inputs. This is useful for tasks where independent predictions are sufficient (e.g. stock price prediction, risk estimation, etc.). AR mining aims to understand dependencies and co-occurrences, rather than individual occurrence. Therefore, GPAR doesn't predict individual $z_k$ in isolation; it  models the joint distribution of these latent variables, which allows it to estimate the co-occurrence probability of items appearing together in transactions. This is a departure from classical GP usage.}. During training, the GP uses a kernel function (e.g. RBF or custom designed) to capture similarities between the feature vectors of existing items. Kernel-based similarity is core to GP’s reasoning - it quantifies how similar the new item is to existing items based on their feature vectors. For example, when a new item $M+1$, which is outside the original set $\mathcal{I}$, is introduced, it is assigned a feature vector $\mathbf{x}_{M+1}$ based on its attributes, and a covariance vector of length $M$ linking the new item to the trained model is computed. If the new item $\mathbf{x}_{M+1}$ is close to the feature vectors of items that frequently co-occur in transactions, the GP will assign $z_{M+1}$ a distribution that reflects this similarity, suggesting that the new item is likely to follow similar co-occurrence patterns. To predict the co-occurrence of items in the augmented itemset universe $\mathcal{I} = \{1, 2, \dots, M, M+1\}$, we can augment the trained covariance matrix $K_{M \times M}$ by incorporating this covariance vector, which enables use to evaluate the probability of new rules via Monte Carlo sampling with the new covariance matrix \footnote{Similar to classic GP usage, a trivial task is to plug this covariance vector into its posterior prediction formulas (Eq.\ref{eq:posterior_predictive_signle_test_point}) to predict the (Gaussian) distribution of the latent variable $z_{M+1}$ for the new item, which probabilistically predicts the individual value of $\mathbf{z}_{M+1}$.}, i.e. sampling from the joint posterior distribution of the latent variables $(z_1,z_2,...,z_M,z_{M+1})$. This similarity-driven approach ensures that the inferences are grounded in the learned relationships among the original items. The generalization capacity \footnote{This is a general property of supervised algorithms.} of GP also ensures consistency of informaiton flow. As a non-parametric model, GP does not assume a fixed functional form for the relationships between items. Instead, it relies on the data-driven covariance structure defined by the kernel. This allows the GP to make predictions for unseen inputs (e.g. the feature vector of a new item) by extrapolating from the patterns observed in the training data. As long as the new item’s feature vector $\mathbf{x}_{M+1}$ can be compared to the existing ones, the GP can infer its latent variable $z_{M+1}$ and integrate it into the rule-mining framework. Therefore, the accuracy of GPAR rule inference depends on two factors: feature relevance and kernel choice. The feature vectors must meaningfully represent the properties that drive co-occurrence patterns (i.e. ensuring items likely to co-occur are close in the new feature space, even if they’re dissimilar in physical attributes), and the kernel function must effectively capture similarities that align with the data’s underlying structure. We discuss kernel design in Section.\ref{sec:custom_kernel_design}. 

\subsection{The GPAR algorithm} \label{sec:GPAR_algorithm}
The full GPAR algorithm is presented in Algo.\ref{algo:GPAR}. We discuss its computational and memory complexity, as well as comparison with classic AR methods.

\begin{algorithm}[H]
\footnotesize
\caption{GPAR: Gaussian process-based association rule mining}
\label{algo:GPAR}
\textbf{Input:} a set of items $\{1, 2, \dots, M\}$, each represented by a $d$-dimensional feature vector $\mathbf{x}_i$, forming an items feature matrix $X \in \mathbb{R}^{M \times d}$; a set of transactions $\mathcal{T} = \{\mathbf{t}_1, \mathbf{t}_2, \dots, \mathbf{t}_N\}$, where each $\mathbf{t}_j \in \mathbb{R}^M$; a kernel function $k(\cdot, \cdot)$, e.g. RBF; a minimum probability threshold $min\_prob$; a minimum confidence threshold $min\_conf$; number of Monte Carlo samples $S$. \\
\textbf{Output:} a set of association rules $\mathcal{R}$ where each rule $r \in \mathcal{R}$ is of the form $A$ → $B$ with $A, B \subseteq \{1, 2, \dots, M\}$, satisfying the probability and confidence thresholds.

\vspace{1mm}\hrule\vspace{1mm}

\begin{algorithmic}[1]
\STATE Compute kernel matrix $K \in \mathbb{R}^{M \times M}$ with $K[i,j]=k(\mathbf{x}_i,\mathbf{x}_j)$, $\mathbf{x}_i,\mathbf{x}_j\in X$. \hfill\textit{$\mathcal{O}(M^2 d)$}
\STATE optimise kernel parameters by minimizing the negative log-likelihood of the transactions $\mathcal{T}$ over $T$ iterations. E.g. length scale $\ell = \arg\min_{\ell} -\log p(\mathcal{T} | X, \ell)$. \hfill\textit{$\mathcal{O}(T (M^2d + NM^2 + M^3)$}
\STATE Update the covariance matrix $K=\text{RBF\_kernel}(X,\ell)$ with the estimated length scale $\ell$. \hfill\textit{$\mathcal{O}(M^2 d)$}
\STATE Initialize an empty set $\mathcal{R}=\emptyset$ to store association rules. \hfill\textit{$\mathcal{O}(1)$}
\FOR{$m=2,\dots,M$}
    \STATE Generate all possible size $m$ itemsets $I_m = \binom{\{1, 2, \dots, M\}}{m}$. \hfill\textit{$\mathcal{O}(\mathcal{C}_m^M)$}
    \FOR{itemset $I\in I_m$}
        \STATE Estimate co-occurrence probability using Monte Carlo sampling:
        $p(I)=\frac{1}{S}\sum_{s=1}^{S}\mathbb{I}(\mathbf{z}_s>0)$, $\mathbf{z}_s\sim\mathcal{N}(0,K_I)$. \hfill\textit{$\mathcal{O}(m^3+S m^2)$}
        \IF{$p(I)>min\_prob$}
            \FOR{each \textit{(antecedent,consequent)} split $(A,B)$ of $\mathcal{I}$}
                \STATE Estimate the antecedent probability:
                $p(A)=\frac{1}{S}\sum_{s=1}^{S}\mathbb{I}(\mathbf{z}_s>0)$, $\mathbf{z}_s\sim\mathcal{N}(0,K_A)$. \hfill\textit{$\mathcal{O}((m-1)^3+S(m-1)^2)$}
                \STATE Compute confidence: $\text{conf}(A\rightarrow B)=\frac{p(I)}{p(A)}$. \hfill\textit{$\mathcal{O}(1)$}
                \IF{$\text{conf}(A\rightarrow B)>min\_conf$}
                    \STATE Add rule $A\rightarrow B$ to $\mathcal{R}$. \hfill\textit{$\mathcal{O}(1)$}
                \ENDIF
            \ENDFOR
        \ENDIF
    \ENDFOR
\ENDFOR
\STATE Return the rules set $\mathcal{R}$. \hfill\textit{$\mathcal{O}(1)$}
\end{algorithmic}
\end{algorithm}

\paragraph{Computational complexity}
Suppose we have $M$ unique items (e.g. cake, bread, milk, beer, etc), each item can be represented by a $d$-dimensional vector (e.g. shape, size, color, etc), this setting gives the itemset matrix $IS_{M \times d}$ and covariance matrix $\Sigma_{M \times M}$. We also have $N$ transaction records, each transaction record is a vector of length $M$ with each element representing the binary value of the scalar $z_k$ (i.e. the membership of item $\mathcal{I}$).

For GP inference, the major computation lies in operations such as kernel computation and inversion of the covariance matrix. Kernel computation involves calculating the pairwise item distance, which takes $\mathcal{O}(M^2d)$. Optimizing the kernel hyper-parameters (e.g. length scale and magnitude in RBF) using e.g. \textit{L-BFGS-B} \cite{Zhu1997BFGS} involves searching through the parameter space, computing the covariance and evaluating corresponding log likelihood (inverting the covariance matrix), over $T$ iterations the optimisation process takes $\mathcal{O}(T(M^2d + NM^2 + M^3))$ - the first refers to the cost of computing $\Sigma_{M \times M}$ using pairwise kernel distance, the second refers to the cost of plugging in the $N$ transactions data, each is a $M$-length membership vector, the third term corresponds to the cost induced by the one time LU (Cholesky) decomposition of the covariance matrix $\Sigma_{M \times M}$ in each iteration.

After fitting the GP, we evaluate the probability of different mining rules, i.e. rule inference. The number of total combinations of rules to mine (i.e. total number of itemsets) is $\sum_{m=2}^M \mathcal{C}^M_m = 2^M - M -1$, which is $\mathcal{O}(2^M)$. For each itemset of size $m \in (1,M]$, we compute its probability of co-occurrence \footnote{In our implementation, we further identify its antecedent set of size $m-1$, which further induces a factor of $m$ to the cost. We can set a marginal probability threshold to reduce the number of this calculation though.}, i.e. the marginalised joint probability. We use \textit{Monte Carlo sampling} \footnote{For standard Gaussian distributions, if $m = 1$, $p(z_i > 0) = 0.5$; if $m = 2$, the standard bivariate Gaussian probability $p(z_i > 0, z_j > 0) = \frac{1}{4} + \frac{1}{2\pi} \arcsin(\rho)$, with $\rho$ being the correlation coefficient. For 3 or more dimensions, the multivariate normal CDF is not analytically tractable; numerical methods such as MC sampling, quadrature methods \cite{Hennig_Osborne_Kersting_2022}, etc, need to be sought.} to generate $S$ samples from a sub multivariate Gaussian $p(\mathbf{x}; \boldsymbol{\mu}_{m \times 1},\boldsymbol{\Sigma}_{m \times m})$ with Cholesky decomposition of $\boldsymbol{\Sigma}_{m \times m}$ \cite{Rasmussen2006GPbook}, which costs $\mathcal{O}(m^3 + Sm^2)$. Therefore, the total cost \footnote{In Algo.\ref{algo:GPAR}, there is a subsequent step for splitting the itemset into (antecedent,consequent), which induces similar cost $\mathcal{O}((m-1)^3+S(m-1)^2)$.} is $\mathcal{O}(\sum_{m=2}^M \mathcal{C}^M_m \times (m^3 + Sm^2)) \leq \mathcal{O}(2^M (M^3 + S M^2))$, which dominates \footnote{Indeed, our later experiments in Section.\ref{sec:GPAR_experiments} empirically verify that the largest cost of GPAR lies in the Monte Carlo sampling when performing rules inference.} the overall cost of GPAR due to the factor $\mathcal{O}(2^M)$.

The overall complexity, combining GP inference and marginal density evaluation, is $\mathcal{O}(M^2d + T(M^2d + NM^2 + M^3) + \sum_{m=2}^M \mathcal{C}^M_m \times (m^3 + Sm^2))$, which is roughly dominated by the last term $\mathcal{O}(2^M (M^3 + SM^2))$. We observe that, the complexity is linear in the number of transactions $N$, number of optimisation iterations $T$, and dimension of item vector $d$, but has exponential dependence on the number of items $M$ to be considered. That is, GPAR is extremely sensitive to $M$, for $M=20$, for example, the computational cost becomes infeasible as it contains mining more than $2^{20} \approx$ 1 million itemsets. While the item representation dimension $d$ only appears in kernel computation, which is relatively minor, the number of transactions $N$ mainly affects the log likelihood evaluation, which is $\mathcal{O}(TNM^2)$ - it can be a bottleneck for very large $N$, but not as impactful as $M$. The number of items $M$ has impact on the overall cost through the exponential factor $2^M$ and the cubic term $M^3$. The number of Monte Carlo samples $S$ used to evaluating the marginal density may increase exponentially as  dimension due to \textit{curse of dimensionality}, if high accuracy is demanded \footnote{One may use approximate sampling to reduce $S$.}.

As a comparison, the Apriori implementation generally costs $\mathcal{O}(NM)$ for binary matrix conversion, and $\mathcal{O}(2^M)$ for itemset enumeration \footnote{Similar to GPAR, the number of rules to mine can be reduced using a minimum support threshold.} (detailed complexity analysis can be found in Appendix.\ref{app:traditional_AR_mining_algos}). As both Apriori and GPAR both suffer from the exponential growth of items (which is inevitable without further actions), to make a fair comparison, we ignore the factor $2^M$ and just refer to the complexity of GPAR as $\mathcal{O}(M^3 + SM^2)$, and the complexity of Apriori as $\mathcal{O}(NM)$. We see that Apriori is more scalable for large number of items $M$ due to pruning, whereas GPAR evaluates all itemsets without early termination, making it less practical. In general, we expect GPAR to be feasible only for small $M$ (e.g. $M \leq 15$).  

\paragraph{Memory usage} 
The covariance matrix $K \in \mathbb{R}^{M \times M}$, where $M$ is the number of items, requires $\mathcal{O}(M^2)$ space, dominating the memory footprint due to its quadratic growth; For example, with $M = 15$, this matrix occupies approximately $15 \times 15 \times 8 = 1,800$ bytes (assuming 8 bytes per float). The transaction data $\mathcal{T} \in \mathbb{R}^{N \times M}$, with $N$ transactions, consumes $\mathcal{O}(NM)$ space, which scales linearly with both $N$ and $M$; for $N = 1,000$ and $M = 15$, this amounts to $1,000 \times 15 \times 8 = 120000$ bytes. During Monte Carlo sampling, the algorithm generates $S$ samples for each itemset of size $m$, with a sub-covariance matrix $K_I \in \mathbb{R}^{m \times m}$ and samples $\mathbf{z}_s \in \mathbb{R}^m$, requiring up to $\mathcal{O}(m^2 + Sm)$ space per itemset evaluation; for a maximum $m = M$, $S = 1,000$, and $M = 15$, this is approximately $(15^2 + 1,000 \times 15) \times 8 = 121,800$ bytes per itemset. However, as itemsets are evaluated sequentially, only one itemset’s samples are stored at a time, capping this cost at $\mathcal{O}(M^2 + SM)$. Additionally, the set of association rules $\mathcal{R}$ grows with the number of rules, potentially up to $\mathcal{O}(2^M)$, but each rule (storing indices of items) is small, typically $\mathcal{O}(M)$ per rule, making its contribution less significant compared to $K$ and $\mathcal{T}$. Overall, GPAR’s memory usage is dominated by the kernel matrix ($\mathcal{O}(M^2)$) and transaction data ($\mathcal{O}(NM)$), with Monte Carlo sampling adding a manageable $\mathcal{O}(M^2 + SM)$ per itemset; while feasible for small $M \leq 15$, the quadratic and linear scalings with $M$ and $N$ respectively can become prohibitive for larger datasets, necessitating careful memory management or optimisation strategies.

\paragraph{Compare with traditional AR mining method}
A comparison of GPAR and traditional association rule mining is made in Table.\ref{tab:gpar_comparison}. While GPAR offers advanced probabilistic modeling, its scalability and interpretability lag behind traditional methods for large datasets. The interpretability of GPAR is worse than traditional association rule mining because GPAR generates probabilistic rules based on latent variables (i.e. using GP to model the membership variables $z_k$), which are more complex and less intuitive. In contrast, traditional methods produce straightforward \textit{if-then} statements (correlation, not causation) derived from frequency counts, making them easier to understand.

\begin{table}[h]
\centering
\tiny
\caption{Comparison: Traditional Association Rule Mining \textit{vs} GPAR}
\label{tab:gpar_comparison}
\setlength{\tabcolsep}{3pt}
\renewcommand{\arraystretch}{1.1}
\begin{threeparttable}
\begin{tabular}{p{3cm} p{4.2cm} p{4.2cm}}
\toprule
\textbf{} & \textbf{Traditional AR Mining} & \textbf{GPAR with Feature Vectors} \\
\midrule
\textbf{Data Type} & Discrete (e.g. binary transactions) & Discrete, modeled via continuous latent variables \\
\textbf{Prior Knowledge} & Minimal, data-driven & High, can encode in kernel function \\
\textbf{Uncertainty Handling} & Limited, based on frequency counts & High, probabilistic framework \\
\textbf{Scalability} & High with optimized algorithms (Apriori, FP-Growth, Eclat) & Low, computationally complex for large $M$ \\
\textbf{Inference of Unobserved Itemsets} & Re-processing, high-cost & Direct marginalisation, low-cost \\
\textbf{Interpretability} & High, rules are if-then statements & Lower, probabilistic modeling of latent variables \\
\textbf{Computational Cost} & Linear to quadratic in transactions & Cubic in number of items for inference \\
\bottomrule
\end{tabular}
\begin{tablenotes}
\item[1] Both are subject to exponential growth of itemsets, although pruning can reduce this complexity.
\end{tablenotes}
\end{threeparttable}
\end{table}

\subsection{Custom kernel design} \label{sec:custom_kernel_design}

The choice of kernel significantly impacts item similarity representation, flexibility in modeling complex data patterns, and scalability to large datasets. Conventional kernels, such as the radial basis function (RBF), assume smooth item similarity, but may fail to adequately capture nuanced relationships (e.g. complementary \textit{vs} substitute items) important for effective rule mining. For example, the RBF kernel $k(\mathbf{x}_i, \mathbf{x}_j) = \exp\left( -\frac{\|\mathbf{x}_i - \mathbf{x}_j\|^2}{2\ell^2} \right)$ hints that items with similar feature vectors (i.e. small $\|\mathbf{x}_i - \mathbf{x}_j\|$) have high covariance, implying that their latent variables are highly correlated and thus likely to co-occur in transactions. However, this implication doesn’t always hold in practice - consider these two scenarios: (1) \textit{Substitute} products. Two milk products from different brands might have very similar feature vectors (e.g. nutritional content, packaging, etc), making them close in feature space (if these features are used to encode the products). But because they serve the same purpose, people are unlikely to buy both in a single transaction, meaning their co-occurrence should be low, not high as indicated by the output of an RBF kernel. (2) \textit{Complement} products. The classic 'beer and diapers' example shows that dissimilar items (in terms of features like product category or use case) can co-occur frequently due to complementary needs, despite being far apart in feature space. In order to make these component products 'close', as measured by a RBF kernel, to reflect the fact that they may co-occur frequently, we have to carefully craft the features, which essentially encodes prior human knowledge or experiences into data representation. In both cases, the RBF kernel’s behavior (i.e. covariance decreases as distance increases) works against capturing the true transaction patterns. This misalignment suggests we need a kernel that better reflects co-occurrence dynamics.

For AR mining use, we would an expect an ideal kernel to sharply peak at certain optimal distance $d_0$ centered at the most frequent items (which captures the frequent co-occurrence of complementary items), and exhibit some inverse-distance behaviour, i.e. similar items co-occur less frequently in a purchase. We aim to design a valid PSD kernel reflecting these requirements. In the following, we propose several kernel candidates that are both valid and invalid. An extended discussion on kernel design can be found in Appendix.\ref{app:kernel_designs}. 

\paragraph{An exponetiated IMQ (EIMQ) kernel} Combining the spirits of the RBF and the inverse multiquadratic (IMQ, see Appendix.\ref{app:IMQ_kernel}) kernels, we trivially propose the EIMQ kernel:
\begin{equation} \label{eq:EIMQ_kernel_cc} \tag{cc.Eq.\ref{eq:EIMQ_kernel}}
k(\mathbf{x}_i, \mathbf{x}_j) = \exp \left(- \frac{1}{(\|\mathbf{x}_i - \mathbf{x}_j\|^2 + c^2 )^{\beta}} \right), \quad c > 0, \, \beta > 0
\end{equation}
where $c > 0$ is a small constant to avoid singularity (division by zero); $\beta$ controls the decay rate of the kernel. EIMQ encourages similar items (small distance) have lower covariance, while dissimilar items (larger distance) have higher covariance: the induced covariance approaches a finite limit of 1 as the distance between $\mathbf{x}_i$ and $\mathbf{x}_j$ increases; it approaches zero \footnote{To be exact, $k(\mathbf{x}_i, \mathbf{x}_j) \rightarrow e^{-c^{-2\beta}}$ as $\|\mathbf{x}_i - \mathbf{x}_j\|^2 \rightarrow 0$.} as the distance decreases. This behaviour directly addresses the issue with substitute items: similar milk products would have low covariance \footnote{This behaviour is similar to other kernel-based methods such as SVGD in which the kernel models the repulsive and attractive forces - strong repulsion at small distances in this case.}, reflecting their low likelihood of co-occurrence, while dissimilar items (e.g. beer and diapers) might co-occur more often \footnote{Of course this depends on feature representations.}. This intuition for dissimilar items aligns with what we expected for complementary items such as beer and diapers - although they are not infinitely dissimilar. Three examples of the EIMQ kernel are shown in Fig.\ref{fig:EIMQ_kernel_examples}). However, the positive semi-definite (PSD) property, as required to be a valid kernel, is satisfied (see Appendix.\ref{app:EIMQ_kernel} an analysis) for the EIMQ kernel, which prevents its use in GPAR\footnote{Symmetry and PSD are required for a kernel to be valid and useful in GP modelling. To determine whether a kernel is PSD, one can instantiate the kernel and check if the induced covariance matrix is PSD, i.e. for any finite set of inputs $\{ \mathbf{x}_1, \ldots, \mathbf{x}_n\}$, the kernel matrix $K$ with entries $K_{ij} = k(\mathbf{x}_i, \mathbf{x}_j)$ must be symmetric and PSD. Empirically checking a covariance matrix instance is not a principled way to verify kernel PSD but a straightforward approach to detect potential violation if it's not PSD.}. Further, Eq.\ref{eq:EIMQ_kernel} overgeneralizes by implying all dissimilar items have higher covariance, whereas some extremely dissimilar items might not co-occur at all.

\begin{figure}
    \centering
    \includegraphics[width=0.35\linewidth]{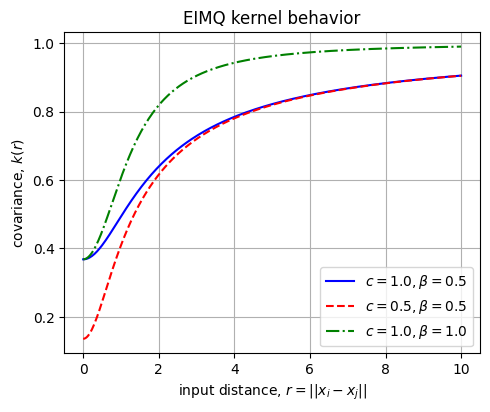}
    \caption{Instantiations of the EIMQ kernel.}
    \label{fig:EIMQ_kernel_examples}
\end{figure}

\paragraph{An absolute RBF kernel} To encourage the co-occurrence of complementary items which can be distant in their feature space, we inherit the spirit of RBF kernel and simply remove the negative sign from the exponent, making similarity grow with distance:
\begin{equation} \label{eq:absolute_RBF}
    k(\mathbf{x}_i, \mathbf{x}_j) = \exp\left( \frac{(\|\mathbf{x}_i - \mathbf{x}_j\|)^2}{2\ell^2} \right)
\end{equation}

This trivial idea based, absolute RBF kernel is not PSD in general - its outputs grow exponentially with distance, giving a kernel matrix that grow unbounded and fail the definiteness condition \footnote{As exponential is unbounded above, the matrix is prone to having large off-diagonal entries, which breaks PSD.}. A 1D example with $[x_1=0, x_2=1, x_3=2, x_4=3]$ is shown in Fig.\ref{fig:absolute_RBF_kernel}, in which negative eigenvalues breaks PSD - a Hermitian kernel matrix is positive semi-definite if and only if all its eigenvalues are non-negative. Its counterpart, i.e. a RBF kernel with same parameter setting ($\ell$=1.0), conversely demonstrates PSD with non-negative eigenvalues in Fig.\ref{fig:RBF}.

\begin{figure}[htbp]
    \centering
    \begin{subfigure}[b]{0.45\linewidth}
        \centering
        \includegraphics[width=\linewidth]{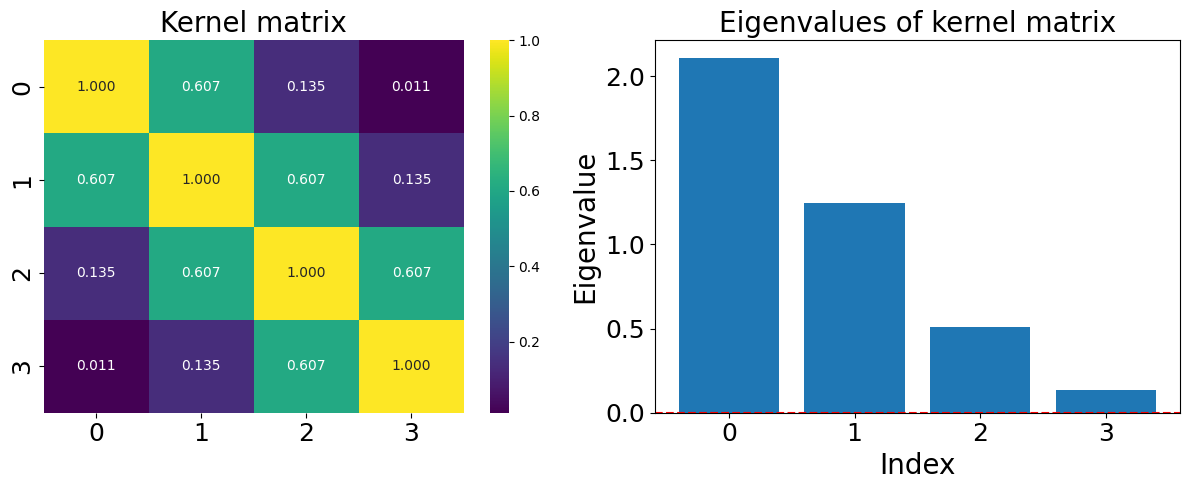}
        \caption{RBF ($\ell$=1.0)}
        \label{fig:RBF}
    \end{subfigure}
    \begin{subfigure}[b]{0.45\linewidth}
        \centering
        \includegraphics[width=\linewidth]{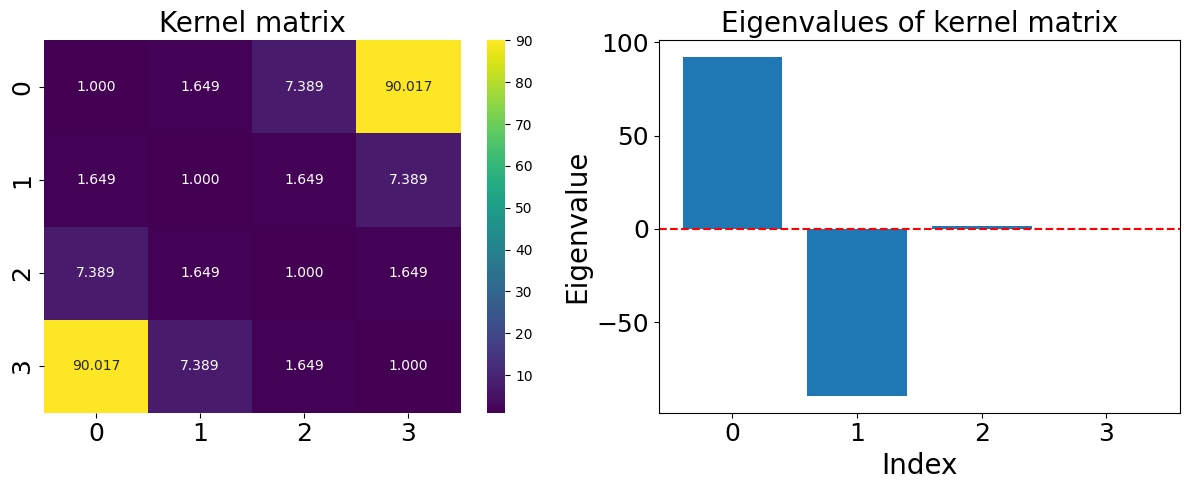}
        \caption{Absolute RBF ($\ell$=1.0)}
        \label{fig:absolute_RBF_kernel}
    \end{subfigure}
    \caption{A comparison of example RBF and absolute RBF kernels.}
    \label{fig:comparison_EBF_absolute_RBF}
\end{figure}

\paragraph{A shifted RBF kernel} 

We consider another variant of the RBF kernel designed to peak at a tunable optimal distance $d_0$, representing complementary items that are neither excessively similar nor dissimilar:
\begin{equation}\label{eq:shifted_RBF}
    k(\mathbf{x}_i, \mathbf{x}_j) = \exp\left(-\frac{(\|\mathbf{x}_i - \mathbf{x}_j\| - d_0)^2}{2\ell^2}\right)
\end{equation}
where $d_0>0$ is a hyperparameter at which covariance is maximized, and can be empirically optimized from data. The length-scale $\ell$ controls the rate at which covariance decays around $d_0$.

This kernel captures intuitive item interactions: covariance peaks at $\|\mathbf{x}_i - \mathbf{x}_j\| = d_0$ (e.g. complementary products such as beer and diapers), while it decreases both at small distances, modeling substitutable items (e.g. two similar milk brands), and at large distances, preventing overgeneralization. Unlike the standard RBF kernel, which peaks at zero distance (identical feature vectors), the shifted RBF kernel is maximized at a nonzero \textit{optimal} or \textit{preferred} distance $d_0$. Nonetheless, it remains a stationary \footnote{A stationary kernel \cite{Rasmussen2006GPbook} is one whose covariance function depends only on the relative position between inputs, not on their absolute locations, i.e. $k(\mathbf{x}, \mathbf{x}') = k(\mathbf{x} - \mathbf{x}')$. For stationary covariance functions such as RBF, we can write $k(\mathbf{x},\mathbf{x}') = k(\mathbf{x}-\mathbf{x}')$.} radial kernel, depending solely on the relative distance $r = \|\mathbf{x}_i - \mathbf{x}_j\|$, and can thus be succinctly expressed as:
\[
k(r) = \exp\left(-\frac{(r - d_0)^2}{2\ell^2}\right)
\]

\begin{figure}[H]
    \centering
    \begin{subfigure}[b]{0.3\linewidth}
        \centering
        \includegraphics[width=\linewidth]{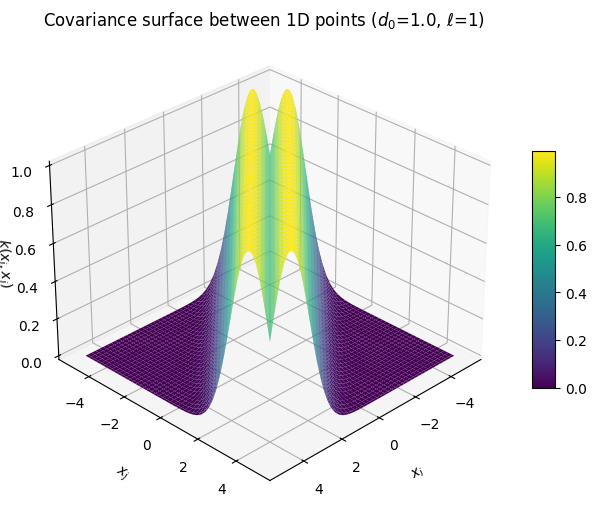}
    \end{subfigure}
    \hspace{0.2cm}
    \begin{subfigure}[b]{0.3\linewidth}
        \centering
        \includegraphics[width=\linewidth]{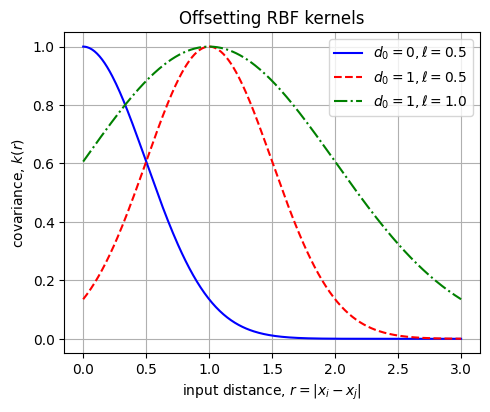}
    \end{subfigure}
    \caption{The shifted RBF kernel with 1D inputs.}
    \label{fig:shifted_RBF}
\end{figure}

The behaviour of the shifted RBF kernel in 1D is shown in Fig.\ref{fig:shifted_RBF}. This shifted RBF kernel, however, is generally not positive semi-definite (PSD). According to \textit{Schoenberg's theorem} (Theorem.\ref{theorem:Schoenbergs} in Appendix.\ref{app:kernel_designs}) and Corollary.\ref{corollary:PSD_and_radial_funcs}, a radial kernel of the form $k(r)=\exp(-\phi(r))$ is PSD if and only if $\phi(r)$ is conditionally negative definite (CND). Equivalently, a radial kernel $f(r^2)$ is PSD if and only if it is completely monotonic in $r$. The standard RBF kernel $k(r)=\exp(-r^2)$ meets this criterion (see Example.\ref{example:RBF_kernel}), while kernels such as $k(r)=\exp(-(r - d_0)^2)$ do not. Specifically, the shifted RBF kernel is neither monotonic nor completely monotonic, since it peaks at $r=d_0$, violating conditions for positive definiteness. Empirical tests (e.g. computing Gram matrices for simple datasets) confirm that shifted RBF kernels typically yield negative eigenvalues, particularly for positive $d_0$ and small length-scale $\ell$.

As the shifted RBF kernel does not satisfy \textit{Mercer's condition} (Theorem.\ref{theorem:mercer} in Appendix.\ref{app:kernel_designs}), it cannot be directly employed as a valid kernel in standard kernel methods such as Gaussian processes or SVMs, unless $d_0=0$ (recovering the standard RBF kernel). A practical solution to this issue involves converting the non-PSD kernel into a PSD kernel through an empirical eigen-decomposition approach. Specifically, given the covariance matrix $K$ computed from training data, we perform eigen-decomposition $K_{M \times M} = Q_{M \times M} \Lambda_{M \times M} Q^\top_{M \times M}$, keeping only positive eigenvalues $\{\lambda_k\}_{k=1}^m$ and extracting the $i$-th row $\{q_{ik}\}_{k=1}^m$ of $K$ for data point $\mathbf{x}_i$, obtaining an \textit{empirical} feature mapping:
\[
\phi(\mathbf{x}_i) = [\sqrt{\lambda_1} q_{i1},\,\ldots,\,\sqrt{\lambda_m} q_{im}]^\top
\]
yielding a new, data-driven PSD kernel:
\[
k^{\text{new}}(\mathbf{x}_i, \mathbf{x}_j) = \phi(\mathbf{x}_i)^\top \phi(\mathbf{x}_j)
\]

Collecting these features into a matrix $\Phi \in \mathbb{R}^{M \times m}$, we reconstruct a PSD covariance matrix as:
\[
K^{\text{new}}_{M \times M} = Q_{M \times m} \Lambda_{m \times m} Q^\top_{m \times M} = (Q_{M \times m} \Lambda^{1/2}_{m \times m})(Q_{M \times m} \Lambda^{1/2}_{m \times m})^\top = \Phi_{M \times m} \Phi^\top_{m \times M}
\]

This data-driven kernel construction ensures PSD conditions are satisfied, albeit with additional computational complexity from empirical kernel learning \footnote{The eigen-decomposition step adds computational overhead, especially for large $M$. The complexity is $\mathcal{O}(M^3)$ for the eigen-decomposition, but this is a one-time cost when computing $K$, and subsequent operations (e.g. in Monte Carlo sampling) operate on smaller submatrices.}. The shifted RBF kernel peaks at $\|\mathbf{x}_i - \mathbf{x}_j\| = d_0$, capturing complementary relationships between items that are neither too similar nor too dissimilar. This leads to a different GP model and has impact on later joint probability estimation compared to the standard RBF kernel. During GP training, $d_0$ can be tuned based on the dataset. For example, $d_0$ could be set based on the average pairwise distance between feature vectors in $X$. To estimate a good $d_0$, we can optimize $d_0$ in a similar fashion to optimize $\ell$ via maximum likelihood \footnote{In our later experiments, we tested different kernels on \textit{Synthetic 1} dataset. Unfortunately, for this dataset we found the MLE optimised $d_0$=0, which reduces it to RBF.}.

\paragraph{A neural tangent kernel (NTK)}
NTK \cite{Jacot2020NTK} is an empirical, data-driven kernel derived from deep neural networks. It is defined through the gradients of the neural network outputs \textit{w.r.t.} its parameters. Specifically, consider a neural network \footnote{The neural network can be in different state depending on the training stages, i.e. it can be a function of time $f(\mathbf{x}; \theta(t))$.} $f(\mathbf{x}; \theta)$, where $\mathbf{x}$ denotes the input and $\theta$ the network parameters (e.g. weights and biases). Different from standard Gaussian process kernels, a NTK is defined using the gradient of the output of the randomly initialized neural net with respect to its parameters \cite{Jacot2020NTK,Arora2019NTK,Yang2020NTK}:
\begin{equation} \label{eq:NTK}
    K(\mathbf{x}, \mathbf{x}') = \lim_{\text{width} \to \infty} \left\langle \frac{\partial f(\mathbf{x}; \theta)}{\partial \theta}, \frac{\partial f(\mathbf{x}'; \theta)}{\partial \theta} \right\rangle
\end{equation}
As the network width (the number of neurons per layer) approaches infinity, the NTK emerges as a deterministic and stable kernel \footnote{See statements in Lemma.\ref{lem:NTK_kernel_flow} and Theorem.\ref{thm:NTKntk_equiv} in Appendix.\ref{app:NN_equivalent_to_NTK}, as well as \cite{Jacot2020NTK,Arora2019NTK}.}. In some literature, e.g. \cite{Jacot2020NTK,Arora2019NTK}, the NTK associated with a depth-$L$, infinitely wide neural net is denoted as $\Theta_\infty^{(L)}$. The following proposition states that the NTK expressed by Eq.\ref{eq:NTK} is PSD: 
\begin{proposition}[Positive‐definiteness of NTK on parameter gradients]
Let 
\[
k(\mathbf{x},\mathbf{x}') \;=\;\bigl\langle\nabla_\theta f(\mathbf{x}),\,\nabla_\theta f(\mathbf{x}')\bigr\rangle
\]
be the empirical NTK of a finite‐width network. Then for any finite set $\{\mathbf{x}_i\}_{i=1}^M$ and real coefficients $\{c_i\}_{i=1}^M$, the Gram form
\[
\sum_{i,j=1}^M c_i\,c_j\,k(\mathbf{x}_i,\mathbf{x}_j)
\]
is non‑negative, i.e.\ $k$ is positive‑semidefinite (PSD).  Moreover, in the infinite‑width limit $k\to\Theta_\infty^{(L)}$ with $L$ being the depth of the network, this remains true, and in fact becomes strictly positive‑definite (PD) on an unit sphere (see Prop.2 of \cite{Jacot2020NTK}).
\end{proposition}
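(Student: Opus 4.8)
The plan is to split the claim into three pieces of increasing difficulty and treat them separately: (i) PSD of the empirical, finite-width NTK; (ii) PSD of the infinite-width limit $\Theta_\infty^{(L)}$; and (iii) strict positive-definiteness of $\Theta_\infty^{(L)}$ on the unit sphere. Piece (i) is essentially a tautology once one notices that $k(\mathbf{x},\mathbf{x}')=\langle\nabla_\theta f(\mathbf{x}),\nabla_\theta f(\mathbf{x}')\rangle$ is a \emph{Gram kernel}, namely the Euclidean inner product in parameter space $\mathbb{R}^P$ (with $P$ the number of trainable parameters) of the explicit feature map $\phi(\mathbf{x}):=\nabla_\theta f(\mathbf{x})$. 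First I would fix an arbitrary finite set $\{\mathbf{x}_i\}_{i=1}^M$ and real coefficients $\{c_i\}_{i=1}^M$, and compute, using only bilinearity of the inner product,
\[
\sum_{i,j=1}^M c_i c_j\, k(\mathbf{x}_i,\mathbf{x}_j)=\Bigl\langle \sum_{i=1}^M c_i\nabla_\theta f(\mathbf{x}_i),\ \sum_{j=1}^M c_j\nabla_\theta f(\mathbf{x}_j)\Bigr\rangle=\Bigl\|\sum_{i=1}^M c_i\nabla_\theta f(\mathbf{x}_i)\Bigr\|_2^2\ \ge\ 0,
\]
which establishes PSD; symmetry is immediate from commutativity of the inner product. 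No hypothesis beyond differentiability of $f$ in $\theta$ is used, so the argument holds for any architecture.

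For piece (ii) I would invoke the width-limit convergence already quoted in Appendix.\ref{app:NN_equivalent_to_NTK} (Lemma.\ref{lem:NTK_kernel_flow} / Theorem.\ref{thm:NTKntk_equiv}): as width $\to\infty$, the kernel matrix $[k(\mathbf{x}_i,\mathbf{x}_j)]_{i,j}$ converges entrywise (in probability, at random initialization) to the deterministic matrix $[\Theta_\infty^{(L)}(\mathbf{x}_i,\mathbf{x}_j)]_{i,j}$. Since by (i) each finite-width matrix $[k(\mathbf{x}_i,\mathbf{x}_j)]_{i,j}$ is symmetric PSD, and the cone of symmetric PSD matrices is closed under entrywise limits, the limit is symmetric PSD as well; concretely, $\sum_{i,j}c_ic_j\,\Theta_\infty^{(L)}(\mathbf{x}_i,\mathbf{x}_j)$ is a limit of non-negative numbers, hence non-negative. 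I would note that a more self-contained route to the same conclusion is available: $\Theta_\infty^{(L)}$ can be written as a finite sum of products of the layerwise dual kernels $\Sigma^{(\ell)}$ and their derivatives $\dot\Sigma^{(\ell)}$, each of which is itself an expectation of an inner product and hence PSD, so the claim follows from the fact that sums and (Schur) products of PSD kernels are PSD.

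Piece (iii) — the upgrade from PSD to strictly PD on the sphere — is where the genuine work lies, and I would cite it rather than reprove it, since it is exactly Proposition~2 of \cite{Jacot2020NTK}. The reduction I would state is this: restricted to $S^{d-1}$, $\Theta_\infty^{(L)}(\mathbf{x},\mathbf{x}')$ is a \emph{dot-product kernel}, a function of $\langle\mathbf{x},\mathbf{x}'\rangle$ alone built by recursive composition of the arc-cosine (ReLU) dual kernels; by Schoenberg's characterization of PSD kernels on the sphere, strict positive-definiteness for pairwise distinct inputs follows once one shows the Gegenbauer/spherical-harmonic expansion of this function has strictly positive coefficients (at least along an infinite set of degrees), which precludes any nontrivial vector in the kernel of the Gram matrix. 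I expect this to be the main obstacle were one to insist on a self-contained proof: parts (i) and (ii) are cheap (bilinearity and closedness of the PSD cone), whereas establishing strict PD requires controlling the full spectrum of the limiting kernel through its explicit harmonic decomposition, which is substantially more delicate. I would therefore present the reduction to the spherical expansion and defer the positivity of the coefficients to \cite{Jacot2020NTK}.
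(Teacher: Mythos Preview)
Your proposal is correct and follows essentially the same approach as the paper: part (i) is the identical Gram/bilinearity computation $\sum_{i,j}c_ic_jk(\mathbf{x}_i,\mathbf{x}_j)=\|\sum_i c_i\nabla_\theta f(\mathbf{x}_i)\|^2\ge 0$, and for parts (ii)--(iii) both you and the paper defer to the infinite-width limit and cite Prop.~2 of \cite{Jacot2020NTK} for strict PD on the sphere. Your treatment of (ii) via closedness of the PSD cone under entrywise limits is somewhat more explicit than the paper's one-line ``this dot-product structure carries over,'' but the underlying idea is the same.
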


\begin{proofsketch}
For a neural network with finite width, take any set of points $\{\mathbf{x}_1, \dots, \mathbf{x}_M\}$ and coefficients \(\{c_1, \dots, c_M\}\), we want to check if the following is PSD:
\[ \sum_{i,j=1}^M c_i c_j k(\mathbf{x}_i, \mathbf{x}_j) = \sum_{i,j=1}^M c_i c_j \left\langle \nabla_\theta f(\mathbf{x}_i), \nabla_\theta f(\mathbf{x}_j) \right\rangle 
= \left\langle \sum_{i=1}^M c_i \nabla_\theta f(\mathbf{x}_i), \sum_{j=1}^M c_j \nabla_\theta f(\mathbf{x}_j) \right\rangle \]
Since the two sums are identical, denote $\mathbf{v} = \sum_{i=1}^n c_i \nabla_\theta f(\mathbf{x}_i)$, we arrive at $\left\langle \mathbf{v}, \mathbf{v} \right\rangle = \|\mathbf{v}\|^2$. The squared norm of any vector $\mathbf{v}$ is always non-negative: $\|\mathbf{v}\|^2 \geq 0 $. Thus, $k(\mathbf{x}, \mathbf{x}')$ is PSD as it is a dot product kernel in the space of parameter gradients.
In the infinite‑width limit, this dot‑product structure carries over to the NTK $\Theta_\infty^{(L)}$, and by a more detailed spectral argument one shows that, for non‑polynomial, Lipschitz activations and $L\ge2$, the restriction of $\Theta_\infty^{(L)}$ to the unit sphere $\mathbb S^{n_0 - 1} = \left\{ \mathbf{x} \in \mathbb{R}^{n_0} : \mathbf{x}^T \mathbf{x} = 1 \right\}$, with $n_0$ being the input dimension, is strictly positive‑definite (Prop.2 in \cite{Jacot2020NTK}).
\end{proofsketch}

NTK encapsulates the behavior of the neural network in the infinite-width regime \footnote{NTK captures the behavior of a fully-trained wide neural net under weaker condition \cite{Arora2019NTK}. The equivalence between a fully trained, sufficiently wide neural network and a NTK regression predictor is discussed in Appendix.\ref{app:NN_equivalent_to_NTK}. Training a neural net follows kernel gradient flow, see statements in Lemma.\ref{lem:NTK_kernel_flow} in Appendix.\ref{app:NN_equivalent_to_NTK}, as well as \cite{Jacot2020NTK,Arora2019NTK}.}. In the infinite-width limit, training a sufficiently wide neural network via gradient descent becomes equivalent to kernel regression with the NTK \footnote{See statements in Theorem.\ref{thm:NTKntk_equiv} in Appendix.\ref{app:NN_equivalent_to_NTK}, as well as \cite{Jacot2020NTK,Arora2019NTK}}. Therefore, NTK bridges deep learning and kernel methods, establishing an insightful connection to GPs, wherein the NTK defines a GP prior over neural network functions. This connection elucidates neural network generalization and convergence properties, providing a theoretical foundation for analyzing and designing neural architectures\cite{Jacot2020NTK,Arora2019NTK,Yang2020NTK}. For GPAR, NTK can be employed as the covariance function, offering advantages including expressiveness, feature learning, and data-driven flexibility. Specifically, NTKs capture complex, non-linear relationships between item features, which may help identify intricate co-occurrence patterns overlooked by standard kernels \footnote{It is reported \cite{Arora2019NTK} that, the NTK associated with an 11-layer convolutional network (i.e. CNTK) featuring global average pooling achieves a 77\% classification accuracy, which is 10\% higher than the best reported performance of a Gaussian process with a fixed kernel on CIFAR-10 \cite{Novak2019GP}.}. Additionally, NTKs implicitly learn feature representations through the neural network's structure (e.g. MLP, CNN), adapting directly to data without manual feature engineering. Moreover, by adjusting network architecture parameters (e.g. depth and activation functions), the NTK can be customized to suit specific characteristics of transaction data. Here we use a simple architecture, i.e. a wide, 3-layer shallow network with \textit{ReLU} activation \footnote{This architecture is useful and important because, according to Hornik et al. \cite{Hornik1993universal,Rasmussen2006GPbook}, an infinitely wide neural network with a single hidden layer can act as a universal approximator for a wide range of activation functions (excluding polynomials), enabling it to approximate any continuous function given sufficient neurons. Also note, in the limit
of infinite width, a single hidden-layer neural network with \textit{i.i.d.} random parameters is a function drawn from a Gaussian process \cite{Neal1996,Arora2019NTK}.}, whose hidden layer has $m$ neurons. The induced NTK can be expressed in the following analytic form:
\begin{equation} \label{eq:NN_3layer_NTK_cc} \tag{cc.Eq.\ref{eq:NN_3layer_NTK}}
  k^{NTK}(\mathbf{x}, \mathbf{x}') = m \cdot \left\{ \frac{\|\mathbf{x}\| \|\mathbf{x}'\|}{2\pi} \left( \sqrt{1 - \rho^2} + \rho \arccos(-\rho) \right) + \left( \frac{1}{4} + \frac{\arcsin(\rho)}{2\pi} \right) \mathbf{x}^\top \mathbf{x}' \right\}
\end{equation}

Using NTK in GPAR presents challenges as well. First, as in all kernel-based methods, computational cost can be significant, as calculating and storing the NTK matrix scales poorly with large datasets, adding burden to the rule mining process. Scalability may be addressed through low-rank approximation. Second, as in all neural network methods, NTKs offer reduced interpretability compared to traditional kernels, complicating explanations of how item features influence rule discovery.

\paragraph{A 3-layer \textit{erf}-activated neural network kernel}
Williams \cite{Williams1998NNkernel} and Neal \cite{Neal1996} demonstrated that in the limit of an infinite number of hidden units, a Bayesian neural network with certain priors on its weights converges to a Gaussian process. For a specific architecture consisting of an input layer, a single hidden layer with $m$ units, and a linear output layer with scalar output, a well-defined, analytic kernel function can be derived with mild assumptions. To construct such neural net kernel, we augment the inputs $\mathbf{x} \in \mathbb{R}^d$ with a constant bias term to form $\tilde{\mathbf{x}} = (1,x^1,\dots,x^d)^T \in \mathbb{R}^{d+1}$. The input-to-output mapping is:
\begin{equation} \label{3layer_NN_kernel_input_to_output_cc} \tag{cc.Eq.\ref{3layer_NN_kernel_input_to_output}}
    f(\mathbf{x}) = b + \sum_{j=1}^{m} v_j \sigma(\mathbf{x}; \mathbf{w}_j)
\end{equation}
where $m$ is again the number of hidden neurons, $b$ the bias term. $v_j$ are the hidden-to-output weights. $\{\mathbf{w}_j\}_{j=1}^m$ is the collection of input-to-hidden weight vectors; $\mathbf{w}_j$ is the $d+1$-dimensional weight vector for the $i$-th hidden unit, including a first weight element for the augmented bias input. $\sigma(\mathbf{x}; \mathbf{w}_j)$ is the activation function of the $j$-th hidden unit. 

The $m$ hidden neurons employ an activation function which, in our specific case, is an error function, i.e. $\sigma(z) = \text{erf}(z) = \frac{2}{\sqrt{\pi}} \int_0^z e^{-t^2} dt$. To arrive at a tractable analytic formulation of the neural net kernel, we further make following assumptions: the activation is bounded, and all weight vectors $\mathbf{w}_j$ are assumed to follow a common zero-mean, isotropic Gaussian distribution, i.e. $\mathbf{w}_j \sim \mathcal{N}(\mathbf{0}, \Sigma)$ with diagonal covariance matrix $\Sigma = \text{diag}(\sigma_0^2, \sigma_1^2, \dots, \sigma_d^2)$; the bias and the hidden-to-output weights have independent zero-mean Gaussian priors. The resulting covariance function (derivation see Appendix.\ref{app:NN_kernel_derivation}), often termed the \textit{neural network kernel} or \textit{arcsin kernel}, is given by \cite{Williams1998NNkernel,Rasmussen2006GPbook}:
\begin{equation} \label{eq:NN_kernel_erf_activation_cc} \tag{cc.Eq.\ref{eq:NN_kernel_erf_activation}}
    k^{NN}(\mathbf{x}, \mathbf{x}') = \frac{2}{\pi} \sin^{-1}\left(\frac{2\tilde{\mathbf{x}}^T \Sigma \tilde{\mathbf{x}}'}{\sqrt{(1+2\tilde{\mathbf{x}}^T \Sigma \tilde{\mathbf{x}})(1+2\tilde{\mathbf{x}}'^T \Sigma \tilde{\mathbf{x}}')}}\right)
\end{equation}

This kernel provides a non-stationary covariance. The hyper-parameters in $\Sigma$ encodes assumptions about the input-to-hidden weights (and bias), they control the properties of the learned function: $\sigma_0^2$ influences the offset of the activation functions, while $\sigma_i^2$ (for $i \geq 1$) determines the relevance or scaling of the $i$-th input feature, affecting how quickly the function varies along that dimension. An additional overall scale factor and a bias variance term (representing $\mathbb{E}[b^2]$ from the network's output bias) can also be incorporated into this kernel.

\subsection{GPAR experiments} \label{sec:GPAR_experiments}

\subsubsection{Experimental setup and environments}

We evaluate the performances of AR mining algorithms on 3 datasets: \textit{Synthetic 1}, \textit{Synthetic 2}, and a real-world \textit{UK Accidents dataset} \cite{road_safety_data}. Seven methods are tested on \textit{Synthetic 1}: GPAR with 4 kernel variants (RBF, shifted RBF, neural network, NTK), Apriori, FP-Growth, and Eclat. For \textit{Synthetic 2} and \textit{UK Accidents}, 4 methods are evaluated: RBF-based GPAR, Apriori, FP-Growth, and Eclat. Each method is assessed across a range of minimum support thresholds (0.1 to 0.5), with a fixed minimum confidence of 0.5, measuring \textit{runtime}, \textit{memory usage}, \textit{no. of frequent itemsets}, and \textit{no. of rules generated}. More details about experiments can be found in Section.\ref{app:GPAR_experimental_setup}.

Experiments on \textit{Synthetic 1} and \textit{Synthetic 2} datasets were conducted on a laptop with an 11th Gen Intel® Core™ i7-1185G7 processor at 3.00 GHz, with 8 CPUs across 4 cores and a memory hierarchy with 192 KiB L1 data cache, 5 MiB L2 cache, and 12 MiB L3 cache. The \textit{UK Accidents} dataset experiments ran in \textit{Google Colab} on an Intel(R) Xeon(R) CPU at 2.00GHz with 48 processors across 24 cores, a 39,424 KB cache, and 350 GB of memory, of which 337 GB was free. Both systems supported AVX512 and included mitigations for Spectre and Meltdown, ensuring efficient computation for the respective tasks.

\subsubsection{Data and processing methods}

Each dataset comprises two distinct matrices: a \textit{feature matrix} and a \textit{transaction matrix}. The feature matrix contains $M$-rows of $d$-dimensional vectors representing item characteristics, such as shape, size, color, and price for each of the $M$ items. In the transaction matrix, an item is labeled as 1 in an $m$-length vector if its latent variable $z > 0$, and 0 otherwise, with each row representing a transaction and indicating the presence (1) or absence (0) of items. For example, a dataset with $M=10$ items includes bread ($\mathbf{x}_B = [\text{large, round, white, \$2}]$), milk ($\mathbf{x}_m = [\text{medium, cylindrical, white, \$3}]$), eggs ($\mathbf{x}_e = [\text{small, oval, brown, \$4}]$), and others, where a transaction record like [1, 0, 1, 0, 0, 0, 0, 0, 0, 0] indicates the presence of items 0 (bread) and 2 (eggs) and the absence of others. Item relationships are modeled using a selected kernel to compute a covariance matrix $K$, enabling the estimation of the probability $p(z_b > 0, z_m > 0, z_e > 0)$ from transaction data to assess the likelihood of bread, milk, and eggs being purchased together, focusing on their joint distribution while marginalizing over other items.

\textit{Synthetic 1} comprises 1000 transactions of 10 items, with a feature matrix $X \in \mathbb{R}^{10 \times 10}$ sampled from a standard normal distribution. Transactions are generated using a multivariate normal distribution with an RBF kernel, retaining items with positive latent values. Synthetic 2 includes 1000 transactions of 15 items, organized into three Gaussian clusters (food, drinks, consumables) with a feature matrix $X \in \mathbb{R}^{15 \times 10}$. Transactions are sampled from multiple clusters to emulate diverse purchasing patterns (see Appendix \ref{app:synthetic_data_tests} for details). The \textit{UK Accidents dataset} (2005-2015) contains 1,048,575 records, filtered to 5231 fatal and serious accidents, with a transaction matrix $\mathcal{T}_{5231 \times 39}$ constructed from seven categorical columns (e.g. \textit{Accident\_Severity}, \textit{Weather\_Conditions}) and a feature matrix $X_{39 \times 10}$ derived from spatial and temporal features (e.g. \textit{Longitude}, \textit{Hour\_of\_Day}) (see Appendix \ref{app:accident_data_tests}).

For GPAR, frequent itemsets are identified via Monte Carlo sampling (100 samples), followed by confidence and lift computation for rule generation. Apriori and FP-Growth utilize the \textit{Mlxtend} library \cite{raschkas_2018_mlxtend}, employing standard pruning techniques, while Eclat uses a vertical database format with recursive transaction ID intersections. Performance is presented in tabular form and visualized graphically, comparing metrics across support thresholds.

\subsubsection{Results}
On \textit{Synthetic 1}, GPAR with neural network kernel achieves the fastest runtime (0.133s at support 0.1), followed by NTK (0.449s), while RBF and shifted RBF variants \footnote{Unfortunately, for this dataset the MLE optimised $d_0$=0, which reduces the shifted RBF to RBF.} are slower (8.930-10.179s), as shown in Table \ref{tab:synthetic1_comparison_runtime_mainText}. Apriori and Eclat are consistently efficient (average runtime <0.635s), with FP-Growth slightly slower (2.019-2.967s). Memory usage is negligible for all GPAR variants and Apriori/FP-Growth, but Eclat peaks at 18.149 MB (support 0.1). GPAR (RBF/shifted RBF) generates the most frequent itemsets (1013 at 0.1) and rules (54036), far exceeding neural network (140 itemsets, 322 rules) and NTK (305 itemsets, 1485 rules), as detailed in Table \ref{tab:synthetic1_comparison_noRules_mainText}. Apriori/FP-Growth consistently produce 1023 itemsets and 57002 rules (0.1-0.4), dropping at 0.5, while Eclat maintains 1033 itemsets across all thresholds. Rule analysis shows GPAR (RBF/shifted RBF) uncovers complex patterns with high lift (e.g. 12.1212 for `item\_0, item\_8, item\_6, item\_9 $\to$ item\_1, item\_2, item\_3, item\_4, item\_7'), while Apriori/FP-Growth/Eclat focus on simpler, high-confidence rules (e.g. confidence 0.9981 for `item\_7'). These trends are illustrated in Fig. \ref{fig:synthetic1_performance_comparison_mainText}, highlighting trade-offs between computational efficiency and pattern discovery.

\textit{Synthetic 2} highlights RBF-based GPAR's ability to infer latent patterns, identifying rules with high confidence (e.g. 13.0000 for `[0, 3, 4, 6, 8, 12, 13] $\to$ [2, 11]') despite zero support counts, leveraging item feature similarities. However, its runtime (231.274s at 0.1) and memory usage (276.090 MB) are significantly higher than Apriori (0.006s), FP-Growth (0.046s), and Eclat (0.087s), which show negligible memory usage. GPAR generates far more itemsets (10215 at 0.1) and rules (1200793) than traditional methods (e.g. Apriori/FP-Growth: 24 itemsets, 18 rules), which produce simpler rules with lower lift (e.g. 1.4632 for `[item\_11] $\to$ [item\_12]' for Apriori). Full results are in Appendix \ref{app:synthetic_data_tests}.

For the \textit{UK Accidents} dataset, RBF-based GPAR is computationally intensive (1027.79s, 341.23 MB at 0.1), while Apriori, FP-Growth, and Eclat are efficient (0.01-3.47s, memory peaking at 22.95 MB for Eclat), as shown in Table \ref{tab:accident_comparison_frequentItemsets_mainText}. GPAR generates the most itemsets (10366) and rules (16869) at 0.1, diminishing at higher thresholds, compared to Apriori/FP-Growth (277 itemsets, 2052 rules) and Eclat (479 itemsets, 5794 rules). GPAR identifies nuanced patterns (e.g. `Accident\_Severity=2 $\to$ Weather\_Conditions=1', support 0.4610), associating severe accidents with fine weather, while Eclat excels in lift (9.8493 for rules involving wet road conditions and junctions), indicating strong associations with specific contexts. Results from the \textit{UK Accidents} dataset are detailed in Appendix \ref{app:accident_data_tests}.

\begin{table}[h]
\centering
\scriptsize
\begin{threeparttable}
\caption{Runtime performance (in \textit{seconds}) (\textit{Synthetic 1})}
\label{tab:synthetic1_comparison_runtime_mainText}
\begin{tabular}{p{1cm} p{1cm} p{1.8cm} p{1.5cm} p{1cm} p{1cm} p{1.5cm} p{1cm}}
\toprule
Min Support & GPAR (RBF) & GPAR (shifted RBF) & GPAR (neural net) & GPAR (NTK) & Apriori (AVG) & FP-Growth (AVG) & Eclat (AVG) \\
\midrule
0.1 & 10.179 & 8.930 & 0.133 & 0.449 & 0.483 & 2.967 & 0.556 \\
0.2 & 8.448 & 8.697 & 0.046 & 0.077 & 0.505 & 2.734 & 0.555 \\
0.3 & 3.295 & 2.658 & 0.017 & 0.030 & 0.565 & 2.622 & 0.575 \\
0.4 & 0.215 & 0.174 & 0.002 & 0.016 & 0.570 & 2.700 & 0.558 \\
0.5 & 0.013 & 0.034 & 0.005 & 0.003 & 0.211 & 2.019 & 0.635 \\
\bottomrule
\end{tabular}
\end{threeparttable}
\end{table}

\begin{table}[h]
\centering
\scriptsize
\begin{threeparttable}
\caption{No. of rules generated (\textit{Synthetic 1})}
\label{tab:synthetic1_comparison_noRules_mainText}
\begin{tabular}{p{1cm} p{1cm} p{1.8cm} p{1.5cm} p{1cm} p{1cm} p{1.5cm} p{1cm}}
\toprule
Min Support & GPAR (RBF) & GPAR (shifted RBF) & GPAR (neural net) & GPAR (NTK) & Apriori (AVG) & FP-Growth (AVG) & Eclat (AVG) \\
\midrule
0.1 & 54036 & 54036 & 322 & 1485 & 57002 & 57002 & 57002 \\
0.2 & 51479 & 51328 & 59 & 159 & 57002 & 57002 & 57002 \\
0.3 & 19251 & 16067 & 18 & 26 & 57002 & 57002 & 57002 \\
0.4 & 782 & 530 & 0 & 4 & 57002 & 57002 & 57002 \\
0.5 & 2 & 8 & 0 & 0 & 20948 & 20948 & 57002 \\
\bottomrule
\end{tabular}
\end{threeparttable}
\end{table}

\begin{table}[h]
\centering
\scriptsize
\begin{threeparttable}
\caption{No. of frequent itemsets generated (\textit{UK Accidents})}
\label{tab:accident_comparison_frequentItemsets_mainText}
\begin{tabular}{p{1.5cm} p{1.5cm} p{1.5cm} p{1.5cm} p{1.5cm}}
\toprule
Min Support & GPAR & Apriori & FP-Growth & Eclat \\
\midrule
0.1 & 10366 & 277 & 277 & 479 \\
0.2 & 770 & 153 & 153 & 235 \\
0.3 & 4 & 75 & 75 & 97 \\
0.4 & 0 & 55 & 55 & 69 \\
0.5 & 0 & 39 & 39 & 69 \\
\bottomrule
\end{tabular}
\begin{tablenotes}
\item[1] Total number of accident records: 5231.
\item[2] Numbers are counts of frequent itemsets generated.
\item[3] We use min\_conf=0.5 to filter the generated rules. 
\end{tablenotes}
\end{threeparttable}
\end{table}

\begin{figure}[H]
    \centering
    \includegraphics[width=1.0\textwidth]{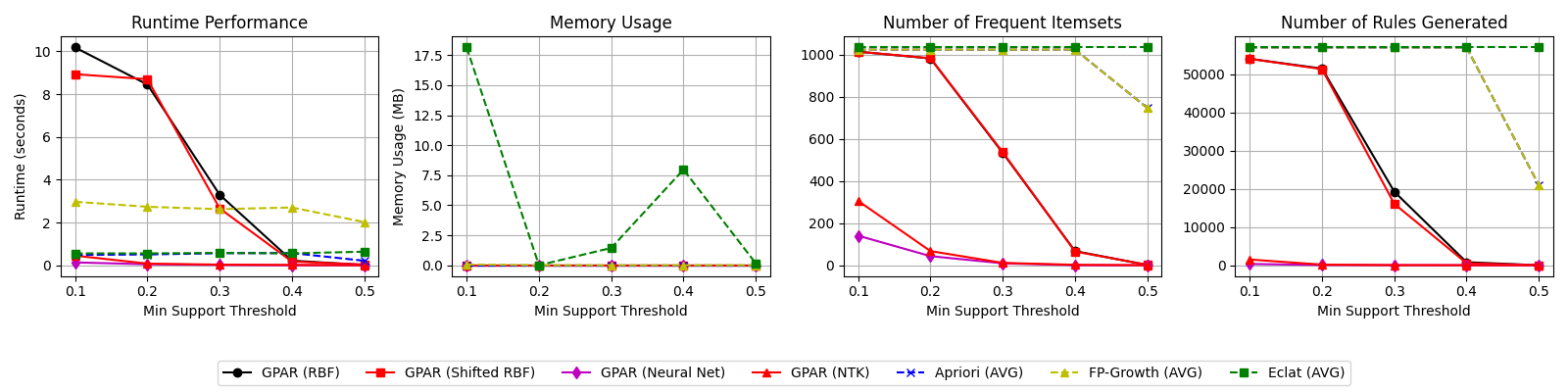}
    \caption{Performances of 7 algorithms on \textit{Synthetic 1} dataset across min support thresholds.}
    \label{fig:synthetic1_performance_comparison_mainText}
\end{figure}

\subsubsection{Discussions}
In GPAR, each item $\mathcal{I}$ is associated with a feature vector $\mathbf{x}_i$, capturing attributes such as size, shape, color, price, etc. These attributes provide a measure of natural similarity between items, which can be leveraged for modeling relationships between the corresponding continuous, scalar-valued, latent variable $z_k$ (i.e. the raw, un-quantized membership variable) whose sign (positive or negative) represents the presence of item $k$ in a transaction (e.g. $z_k$ > 0 implies presence). Assuming a GP prior (i.e. mean and covariance structures), the dependencies between items can be captured based on their attribute similarities. The joint distribution of all $z_k$ is multivariate Gaussian with mean vector $\mu$ and covariance matrix $K$, where $K[i,j] = k(\mathbf{x}_i, \mathbf{x}_j)$ and $k$ is a kernel function (e.g. RBF, linear, Matérn, etc) based on the feature vectors. The covariance matrix $K$ captures the dependencies, with higher covariance between items that co-occur more frequently. For a subset $\mathcal{I}_s \in \mathcal{I}$ of $s$ items, the probability that all items in $\mathcal{I}_s$ are present in a transaction, i.e. their co-occurrence probability (marginal joint probability), is $p(\text{all } z_i > 0 \text{ for } i \in \mathcal{I}_s)$. This is the marginal probability of the multivariate Gaussian distribution over the subset of variables corresponding to $\mathcal{I}_s$, which can be computed by marginalizing over the remaining items (i.e. integrating out $z_j$ for $j \notin \mathcal{I}_s$). The marginalization process involves computing the joint probability over the subset $\mathcal{I}_s$, which is standard \footnote{Computing the probability that an subset $\mathcal{I}_s$ of items is present (all $z_k > 0$ for $k \in \mathcal{I}_s$) involves integrating over a hyper-rectangle in the multivariate Gaussian, which is feasible but computationally intensive for large sets, especially without closed-form solutions.} in multivariate Gaussian distributions but can be computationally intensive for large $s$ due to high-dimensional integration - this can be got around by Monte Carlo sampling. 

As discussed in Section.\ref{sec:GPAR_algorithm}, scalability is the bottleneck of GPAR. Traditional association rule mining algorithms such as Apriori and FP-Growth are optimized for large datasets, with complexities that scale with the number of transactions $N$ and items $M$. GPAR's computational cost, potentially cubic \footnote{Cost can be reduced by using e.g. sparse or low-rank approximation of the covariance matrix though.} in the number of items $M$ for Gaussian process inference, limits its applicability to small to medium-sized datasets. Added complexity due to probabilistic computations (estimating the posterior probabilities using Monte-Carlo sampling) further imposes computational burden. In addition, same as other ARM algorithms, when mining frequent itemsets, GPAR is further bottlenecked by evaluating exponentially many subsets. Heuristics or pruning based on partial evaluations might be necessary.

Despite the computational challenge, GPAR offers several benefits. While traditional ARM methods use empirical probability (e.g. frequency counts), GPAR provides a principled framework to handle probability. GPAR allows prior knowledge to be incorporated: the covariance function can encode domain knowledge, such as known similarities between items, enhancing model generalization. For example, in retail, prior knowledge about product categories could inform the kernel, leading to more interpretable rules. GPAR is able to capture complex patterns. Gaussian processes can model non-linear relationships, potentially capturing more nuanced associations than traditional methods, such as rare but significant patterns which could be missed by support thresholds, as those appear in retail or medical symptom-disease associations.

GPAR could be suitable for small to medium-sized datasets, or where computational resources are not a constraint, and probabilistic insights are valuable. It can also be valuable for specialized domains where the probabilistic interpretation is vital, for example medical diagnosis (symptom-disease associations) or financial risk assessment. If we have some prior knowledge which are believed to be insightful, GPAR can be used to incorporate them to uncover novel patterns or test hypotheses as well. However, for large-scale applications such as market basket analysis in supermarkets, traditional methods are likely to be more efficient. The trade-off is between the richness of probabilistic modeling and computational feasibility.

Overall, the GPAR approach is a novel extension to the family of ARM algorithms, offering a probabilistic framework to handle uncertainty and incorporate prior knowledge. By modeling item relationships with Gaussian processes and latent variables, it can capture complex patterns. However, practical limitations, particularly scalability for large datasets, suggest it is best suited for smaller datasets or specialized applications where these challenges can be managed. Further research could explore approximations or structured covariance functions to enhance efficiency.

\section{BARM: a Bayesian framework for association rule mining} \label{sec:BARM}

We generalise the GPAR framework, which uses a GP to model latent membership variables, to a fully Bayesian approach that directly models item co-occurrence probabilities using probabilistic distributions. This framework, termed Bayesian association rule mining (BARM), treats the presence of items in transactions as binary outcomes governed by latent probabilities, which are assigned prior distributions and updated via Bayesian inference given the transaction data. In the following, we first re-visit the principles of Bayesian modelling, then introduce the BARM methodology in steps.

\subsection*{\textit{Bayesian inference: preliminaries}}

Bayesian modeling provides a probabilistic framework for reasoning under uncertainty, which is particularly well-suited for association rule mining as it allows for the incorporation of prior knowledge and the quantification of uncertainty in item co-occurrence probabilities. At its core, Bayesian inference updates the probability distribution of parameters of interest using observed data through Bayes' theorem, expressed as:
\begin{equation} \label{eq:posterior}
    p(\boldsymbol{\theta} | D) = \frac{p(D | \boldsymbol{\theta}) p(\boldsymbol{\theta})}{p(D)}
\end{equation}
where $\boldsymbol{\theta}$ represents the parameters (e.g. item presence probabilities in BARM), $D$ denotes the observed data (i.e. transaction records), $p(\boldsymbol{\theta})$ is the prior distribution reflecting initial beliefs, $p(D | \boldsymbol{\theta})$ is the likelihood of the data given the parameters, and $p(D) = \int p(D | \boldsymbol{\theta}) p(\boldsymbol{\theta}) d\boldsymbol{\theta}$ is the marginal likelihood, often intractable in complex models. Exact inference of $p(\boldsymbol{\theta} | D)$ can be realised if an analytical form can be derived (e.g. conjugate prior and likelihood); when analytical computation of the posterior $p(\boldsymbol{\theta} | D)$ is infeasible, approximate methods such as Markov Chain Monte Carlo (MCMC \cite{Bayesian_signal_processing_Joseph}, e.g. Metropolis–Hastings \cite{Metropolis1953}, Gibbs \cite{Geman1984}, Langevin \cite{Roberts1996}, Hamiltonian \cite{Duane1987}, slice sampling \cite{Neal2003slice}, SMC \cite{Doucet2001}, etc) sampling or variational inference (e.g. mean-field VI \cite{Jordan1999introduction}, stochastic VI \cite{Hoffman2013}, black-box VI \cite{Ranganath2014}, SVGD \cite{Liu2016SVGD}, EParVI \cite{EParVI2024}, SPH-ParVI \cite{SPH_ParVI2024}, MPM-ParVI \cite{MPM_ParVI2024}, etc) are employed to draw samples from the posterior, enabling the estimation of quantities such as co-occurrence probabilities in BARM - although these methods can be computationally intensive in high-dimensional or large-scale settings. Bayesian inference offers some advantages for ARM, including the ability to quantify uncertainty through posterior distributions, incorporate domain knowledge via priors, and perform continuous updates as new transactions are observed, thereby enhancing the robustness and interpretability of mined rules, particularly for rare or uncertain patterns.

\subsection{Setting the scene}

We have available a set of $M$ items $\{1, 2, \dots, M\}$, each item $i$ is represented by a $d$-dimensional feature vector $\mathbf{x}_i=[x_{i1},x_{i1},...,x_{id}]$, forming a feature matrix $X \in \mathbb{R}^{M \times d}$. Also available are transaction records which consist of $N$ transactions $\mathcal{T} = \{\mathbf{t}_1, \mathbf{t}_2, \dots, \mathbf{t}_N\}$, in which each transaction $\mathbf{t}_j = [z_1,z_1,...,z_M]$ is a binary vector with each element being the latent presence variable $z_i \in \{0, 1\}$ indicating the presence ($z_i=1$) or absence ($z_i=0$) of item $i$. Using a user-specified minimum probability threshold min\_prob and a minimum confidence threshold min\_conf, we aim to generate a set of association rules $\mathcal{R}$, where each rule $r \in \mathcal{R}$ is of the form $A \rightarrow B$, with $A, B \subseteq \{1, 2, \dots, M\}$, satisfying the specified thresholds.

\subsection{Model specification and inference} 

Instead of modeling item presence via a GP with latent variables $z$, BARM directly models the probability of item presence using a probabilistic approach. We illustrate the key elements of the Bayesian model in the following.

\paragraph{Priors for item presence probability} For each item $i$, let $p_i = p(z_i=1) = p(\text{item } i \text{ is present})$ be the probability that item $i$ appears in a transaction. We assume $p_i \in [0, 1]$ is a random variable with a prior distribution. For example, we can assign a Beta prior to each $p_i$, i.e. $p_i \sim \text{Beta}(\alpha_i, \beta_i)$, where $ \alpha_i > 0 $ and $ \beta_i > 0 $ are hyper-parameters (controlling the shape of Beta distribution), reflecting our initial beliefs about the item’s presence frequency (e.g. it becomes uniform prior with $\alpha_i = \beta_i = 1$). A proper intro about the Beta distribution is provided in Appendix.\ref{app:Beta_dist}.

\paragraph{Item dependencies} For an itemset $I \subseteq \{1, 2, \dots, M\}$, the joint probability of all items in $I$ being present, $p(I) = p(z_1=1,z_2=1,...,z_M=1) = p(\mathbf{t}[i] = 1 \ \forall i \in I)$, is modeled using a dependency structure, e.g. assuming covariance relations between the latent presence variable $z_i$. Similar to kernel-based methods, to capture correlations between items, we introduce a correlation parameter $\rho_{ij}$ for each pair of items $(i, j)$, which can be informed by the feature matrix $X$. This dependence can be characterised by a copula-like structure or a pairwise covariance structure. For example, we can employ a RBF kernel to model pairwise correlations: $\rho_{ij} = k(\mathbf{x}_i, \mathbf{x}_j) = \exp(-||\mathbf{x}_i - \mathbf{x}_j||^2 / \ell^2)$ with $\ell$ again being a length scale parameter. Instead of manually specifying the value of $\ell$, we can treat it as a parameter and assign a prior to it, e.g. $\ell \sim \text{Gamma}(a, b)$, to principally incorporate uncertainty in the dependency structure. 

\paragraph{Likelihood} Given a transaction $\mathbf{t}_j$, the likelihood of observing $\mathbf{t}_j$ is modeled as a product of \textit{Bernoulli} distributions, adjusted for correlations:
\begin{equation} \label{eq:BARM_likelihood}
    p(\mathbf{t}_j | \{p_i\}, \{\rho_{ij}\}) = \prod_{i \in I_j} p_i \prod_{i \notin I_j} (1 - p_i) \cdot g(\{\rho_{ij}\})
\end{equation}
where $I_j = \{i : \mathbf{t}_j[i] = 1\}$ is the set of items present in transaction $j$, and $g(\{\rho_{ij}\})$ is a correlation adjustment factor (e.g. derived from a Gaussian copula or a simplified pairwise model).

\paragraph{Posterior} Given the aforementioned prior and likelihood, we can obtain the posterior distribution of $p_i$ and any hyper-parameters (e.g. $\ell$), given the transaction data $\mathcal{T}$ and the item attributes $X$:
\begin{equation} \label{eq:BARM_posterior}
p(\{p_i\}, \ell | \mathcal{T}, X) \propto p(\mathcal{T} | \{p_i\}, \ell, X) \cdot \prod_{i=1}^M p(p_i) \cdot p(\ell)
\end{equation}
where $p(\mathcal{T} | \{p_i\}, \ell, X) = \prod_{j=1}^N p(\mathbf{t}_j | \{p_i\}, \ell, X)$ is the overall likelihood, assuming independence of transactions. Each atomic likelihood is computed by Eq.\ref{eq:BARM_likelihood}.

\paragraph{Approximate Bayesian inference and co-occurrence probabilities} Computing the posteriors in Eq.\ref{eq:BARM_posterior}, unfortunately, in many cases is analytically intractable (unless conjugate prior and likelihood are used, and the correlation structure is removed), we can use \textit{Markov chain Monte Carlo} (MCMC) sampling \footnote{Off-the-shelf implementations of MCMC sampling algorithms can be found in many libraries across languages, e.g. \textit{PyMC} \cite{Abrilpla2023PyMC}, \textit{PINTS} \cite{Clerx2019Pints}, \textit{Stan} \cite{Carpenter2017Stan}, and \textit{Turing.jl} \cite{Fjelde2024Turing}, etc.} such as Metropolis-Hastings (MH \cite{MH_Robert}) or Gibbs \cite{Gibbs_Geman} sampling, or variational inference (VI) methods to generate samples from the posterior. The resulted samples yield the joint posterior over all $\{p_i\}$, so for any itemset $I \in \mathcal{I}$, the required subset $\{p_i \mid i \in I\}$ is readily available in each sample, along with $\ell$. To compute the joint co-occurrence probability $p(I)$, we first perform MCMC sampling to generate samples $(\{p_i\}, \ell)$ from the posterior distribution $p(\{p_i\}, \ell | \mathcal{T}, X)$, where $\{p_i\}$ represents the set of item presence probabilities for all items $i = 1, 2, \dots, M$, and $\ell$ is the length scale parameter for the correlation function. This posterior is a \textit{joint distribution} over all $p_i$ (for all items) and $\ell$, i.e. all the $S$ MCMC samples are drawn from the full joint posterior $p(\{p_i\}, \ell | \mathcal{T}, X)$, and each MCMC sample is a tuple $(\{p_1^{(s)}, p_2^{(s)}, \dots, p_M^{(s)}\}, \ell^{(s)})$. For a given itemset $I \subseteq \{1, 2, \dots, M\}$, the subset of probabilities $\{p_i^{(s)} \mid i \in I\}$ is directly available as a subset of this joint sample. However, to compute $p(I | \{p_i^{(s)}\}, \ell^{(s)})$, which is the probability that all items in $I$ are present (i.e. $p(\mathbf{t}[i] = 1 \ \forall i \in I)$), we need a model for the joint distribution of item presences given $\{p_i^{(s)}\}$ and $\ell^{(s)}$. In this BARM framework, the item presences are modeled as Bernoulli random variables, but with a correlation structure governed by $\ell^{(s)}$. Specifically, the correlation function $k(\mathbf{x}_i, \mathbf{x}_j) = \exp(-||\mathbf{x}_i - \mathbf{x}_j||^2 / (\ell^{(s)})^2)$ defines pairwise correlations $\rho_{ij}^{(s)}$, and a simplified dependency model (e.g. a Gaussian copula or a pairwise interaction model) is used to approximate the joint probability. Thus, for each MCMC sample $s$, $p(I | \{p_i^{(s)}\}, \ell^{(s)})$ is computed as:
\begin{equation} \label{eq:BARM_subset_prob_estimate_via_MCMC_posterior_samples}
    p(I | \{p_i^{(s)}\}, \ell^{(s)}) = p(\mathbf{t}[i] = 1 \ \forall i \in I | \{p_i^{(s)}\}, \ell^{(s)}) \approx \prod_{i \in I} p_i^{(s)} \cdot g(\{\rho_{ij}^{(s)} \mid i, j \in I\})
\end{equation}
where $g(\cdot)$ adjusts for correlations among items in $I$, and its computation depends on the size of $I$, denoted $m = |I|$, leading to the complexity $\mathcal{O}(S \cdot m)$ for $S$ samples. The challenge lies in accurately modeling the correlation adjustment $g(\cdot)$, which may require approximations for tractability, especially for larger itemsets.

Note the difference between the likelihood equation for a transaction $\mathbf{t}_j$ (Eq.\ref{eq:BARM_likelihood}) and the co-occurrence probability $p(I | \{p_i^{(s)}\}, \ell^{(s)})$ (Eq.\ref{eq:BARM_subset_prob_estimate_via_MCMC_posterior_samples}). Eq.\ref{eq:BARM_subset_prob_estimate_via_MCMC_posterior_samples} is designed to compute the probability that all items in a specific itemset $I$ are present (i.e. $\mathbf{t}[i] = 1 \ \forall i \in I$) for a given MCMC sample. $\{p_i^{(s)}\}$ are the sampled item presence probabilities, and $\ell^{(s)}$ determines the pairwise correlations $\rho_{ij}^{(s)}$. This equation focuses solely on the items in $I$, modeling their joint presence by taking the product of their individual probabilities $\prod_{i \in I} p_i^{(s)}$ and adjusting for their correlations via $g(\{\rho_{ij}^{(s)} \mid i, j \in I\})$. The term $g(\cdot)$ accounts for dependencies among the items in $I$, and the equation does not consider items outside $I$ as the goal is to estimate the probability of the specific event where all items in $I$ are present, irrespective of the state of other items. In contrast, the likelihood equation for a transaction $\mathbf{t}_j$, issued by Eq.\ref{eq:BARM_likelihood}, models the probability of observing the entire transaction $\mathbf{t}_j$, which is a binary vector of length $M$ indicating the presence (1) or absence (0) of each item. $I_j = \{i : \mathbf{t}_j[i] = 1\}$ is the set of items present in transaction $j$, and the likelihood accounts for all items: $\prod_{i \in I_j} p_i$ captures the probability of items that are present, while $\prod_{i \notin I_j} (1 - p_i)$ accounts for items that are absent. The correlation adjustment $g(\{\rho_{ij}\})$ applies to the entire transaction, considering dependencies across all pairs of items. This equation is used during Bayesian inference (MCMC sampling) to compute the likelihood of the observed data $\mathcal{T}$, enabling the (joint) posterior update of $\{p_i\}$ and $\ell$. Therefore, the difference arises from their distinct purposes: the co-occurrence probability focuses on a subset of items $I$ to estimate the probability of their joint presence for association rule mining, thus only including terms for items in $I$ (i.e. $\mathbf{t}[i] = 1 \ \forall i \in I$). Conversely, the likelihood equation models the full transaction $\mathbf{t}_j$, requiring terms for both present and absent items to compute the probability of the entire observed vector, which is necessary for posterior inference. Both equations incorporate a correlation adjustment $g(\cdot)$, but the scope of items considered and the events being modeled (joint presence of a subset versus the full transaction) are different. This difference ensures that the co-occurrence probability aligns with the needs of rule mining, while the likelihood supports the Bayesian updating of model parameters.

We have now computed the joint probability $p(I)$ for each posterior sample; if we draw $S$ samples from the posterior, we estimate $p(I)$ by averaging \footnote{Averaging is essentially integrating over the correlation structure.}:
\begin{equation} \label{eq:BARM_cooccurrence_probs}
    p(I) \approx \frac{1}{S} \sum_{s=1}^S p(I | \{p_i^{(s)}\}, \ell^{(s)})
\end{equation}
where $\{p_i^{(s)}\}, \ell^{(s)}$ are the $s$-th posterior samples. $p(I | \{p_i^{(s)}\}, \ell^{(s)})$ is calculated using Eq.\ref{eq:BARM_subset_prob_estimate_via_MCMC_posterior_samples}.

\subsection{The BARM algorithm}

We present the BARM algorithm in Algo.\ref{algo:BARM}. BARM replaces the GP model in GPAR with a direct Bayesian model of item presence probabilities, using \textit{Beta} priors for individual item probabilities and a correlation structure informed by item features to capture dependencies between items \footnote{A dependency-free version of BARM, which simplifies by setting $g(\cdot) = 1$, removes the feature matrix $X$, and uses analytical posterior for $\{p_i\}$, is provided in Section.\ref{sec:BARM_variant}.}. By employing MCMC sampling, BARM updates these probabilities with transaction data, estimating co-occurrence probabilities and deriving association rules in a manner similar to GPAR but with enhanced uncertainty quantification and flexibility to incorporate prior knowledge. While BARM may incur higher computational costs due to MCMC sampling, it avoids the memory-intensive covariance matrix operations of GPAR, offering a scalable and interpretable alternative for association rule mining in a Bayesian framework.

\begin{algorithm}[H]
\footnotesize
\caption{BARM: Bayesian association rule mining}
\label{algo:BARM}
\textbf{Input:} a set of items $\{1, 2, \dots, M\}$, each represented by a $d$-dimensional feature vector $\mathbf{x}_i$, forming an items feature matrix $X \in \mathbb{R}^{M \times d}$; a set of transactions $\mathcal{T} = \{\mathbf{t}_1, \mathbf{t}_2, \dots, \mathbf{t}_N\}$, where each $\mathbf{t}_j \in \{0, 1\}^M$; prior parameters for item presence probabilities $\{\alpha_i, \beta_i\}_{i=1}^M$; a correlation function $k(\cdot, \cdot)$, e.g. RBF-based; a minimum probability threshold $min\_prob$; a minimum confidence threshold $min\_conf$; number of MCMC (or ParVI) samples $S_{\text{MCMC}}$; number of posterior samples $S$ for probability estimation. \\
\textbf{Output:} a set of association rules $\mathcal{R}$ where each rule $r \in \mathcal{R}$ is of the form $A \rightarrow B$ with $A, B \subseteq \{1, 2, \dots, M\}$, satisfying the probability and confidence thresholds.

\vspace{1mm}\hrule\vspace{1mm}

\begin{algorithmic}[1]
\STATE Define prior distributions: $p_i \sim \text{Beta}(\alpha_i, \beta_i)$ for each item $i$, and length scale $\ell \sim \text{Gamma}(a, b)$ for the correlation function $k(\mathbf{x}_i, \mathbf{x}_j) = \exp(-||\mathbf{x}_i - \mathbf{x}_j||^2 / \ell^2)$. \hfill\textit{$\mathcal{O}(1)$}
\STATE Perform Bayesian inference using MCMC to sample from the posterior $p(\{p_i\}, \ell | \mathcal{T}, X)$ in Eq.\ref{eq:BARM_posterior}, generating $S_{\text{MCMC}}$ samples of $\{p_i, \ell\}$. \hfill\textit{$\mathcal{O}(S_{\text{MCMC}} \cdot N \cdot M)$}
\STATE Initialize an empty set $\mathcal{R} = \emptyset$ to store association rules. \hfill\textit{$\mathcal{O}(1)$}
\FOR{$m = 2, \dots, M$}
    \STATE Generate all possible size $m$ itemsets $I_m = \binom{\{1, 2, \dots, M\}}{m}$. \hfill\textit{$\mathcal{O}(\mathcal{C}_m^M)$}
    \FOR{itemset $I \in I_m$}
        \STATE Estimate co-occurrence probability $p(I)$ using posterior samples:
        $p(I) = \frac{1}{S} \sum_{s=1}^{S} p(I | \{p_i^{(s)}\}, \ell^{(s)})$, where $\{p_i^{(s)}\}, \ell^{(s)}$ are subsampled from MCMC samples. \hfill\textit{$\mathcal{O}(S \cdot m)$}
        \IF{$p(I) > min\_prob$}
            \FOR{each \textit{(antecedent, consequent)} split $(A, B)$ of $I$}
                \STATE Estimate antecedent probability $p(A)$ using posterior samples:
                $p(A) = \frac{1}{S} \sum_{s=1}^{S} p(A | \{p_i^{(s)}\}, \ell^{(s)})$. \hfill\textit{$\mathcal{O}(S \cdot (m-1))$}
                \STATE Compute confidence: $\text{conf}(A \rightarrow B) = \frac{p(I)}{p(A)}$. \hfill\textit{$\mathcal{O}(1)$}
                \IF{$\text{conf}(A \rightarrow B) > min\_conf$}
                    \STATE Add rule $A \rightarrow B$ to $\mathcal{R}$. \hfill\textit{$\mathcal{O}(1)$}
                \ENDIF
            \ENDFOR
        \ENDIF
    \ENDFOR
\ENDFOR
\STATE Return the rules set $\mathcal{R}$. \hfill\textit{$\mathcal{O}(1)$}
\end{algorithmic}
\end{algorithm}

\paragraph{Computational and memory complexity}
Step 2 uses MCMC to sample from the posterior of $\{p_i\}$ and $\ell$, replacing the GP optimisation in GPAR. MCMC typically requires multiple iterations to converge (e.g. MH or Gibbs sampling), the dominant cost is the evaluation of the likelihood over $ S_{\text{MCMC}} $ iterations and $N$ transactions, with $M$ parameters to update. The complexity is $\mathcal{O}(S_{\text{MCMC}} \cdot N \cdot M)$. Here we have ignored the cost of pre-computing the correlation adjustment factor $g(\cdot)$, therefore the cost of likelihood evaluation (Eq.\ref{eq:BARM_likelihood}) is dominated by the Bernoulli product $ \prod_{i \in I_j} p_i \prod_{i \notin I_j} (1 - p_i)$, which scales with $M$. If we are to take into account of the cost of computing $g(\cdot)$, the correlation function $ k(\mathbf{x}_i, \mathbf{x}_j) $ depends on the feature vectors $\mathbf{x}_i$ and $\mathbf{x}_j$, costing $ \mathcal{O}(d) $ per pair, but since $ g(\cdot) $ typically involves summing over pairs, the cost per transaction is approximately $ \mathcal{O}(M^2 d) $ in the worst case. Across $ N $ transactions, the likelihood evaluation costs $\mathcal{O}(N \cdot M^2 d)$ per MCMC iteration. The overall cost of MCMC sampling would therefore be $\mathcal{O}(S_{\text{MCMC}} \cdot (N \cdot M + N \cdot M^2 d))$, which is roughly $\mathcal{O}(S_{\text{MCMC}} \cdot N \cdot M^2 d)$.

Steps 4–18 mirror GPAR’s structure for generating itemsets and rules but use posterior samples to estimate probabilities: co-occurrence probabilities $p(I)$ and antecedent probabilities $p(A)$ are computed by averaging over $S$ posterior samples, with a cost of $\mathcal{O}(S \cdot m)$ and $\mathcal{O}(S \cdot (m-1))$, respectively, as they involve evaluating probabilities for $m$ or $m-1$ items per sample (evaluating the product over $ m = |I| $ items costs $ \mathcal{O}(m) $ per sample, and with $ S $ samples, the cost is $ \mathcal{O}(S \cdot m)$). We need to evaluate $\mathcal{O}(2^M)$ itemsets, leading to the total of $ \mathcal{O}(2^M \cdot S \cdot M) $. Again, we have assumed the correlation adjustment factor  $ g(\cdot) $ is either precomputed or simplified to a linear cost (e.g. using a sparse approximation); if we are to consider the cost of computing $g(\cdot)$, it costs $ \mathcal{O}(m^2) $ per sample, and the cost of the rule inference would be $ \mathcal{O}(2^M \cdot S \cdot (M+M^2)) $ or $ \mathcal{O}(2^M \cdot S \cdot M^2) $ in the worst case.

The total cost of BARM sums up to $\mathcal{O}(S_{\text{MCMC}} \cdot N \cdot M + 2^M \cdot S \cdot M)$, or $\mathcal{O}(S_{\text{MCMC}} \cdot N \cdot M^2 \cdot d + 2^M \cdot S \cdot M^2)$ if considering computing the pairwise correlation adjustment. Unlike GPAR, BARM avoids the $\mathcal{O}(M^3)$ cost of GP covariance matrix operations, making it potentially more scalable for large $M$, though MCMC sampling can be expensive if $S_{\text{MCMC}}$ is large.

Storing the feature matrix $X$ requires $\mathcal{O}(M d)$, transaction matrix $\mathcal{T}$ requires $\mathcal{O}(NM)$, posterior samples for $\{p_i\}$ and $\ell$ requires $\mathcal{O}(S_{\text{MCMC}} \cdot M)$, making the memory usage manageable but growing with $S_{\text{MCMC}}$. 

\subsection{Comparison with GPAR}  
Unlike GPAR, which uses a GP to model latent variables $z$ and Monte Carlo sampling to estimate probabilities, BARM directly models item presence probabilities $p_i$ with Bayesian priors, avoiding the need for a covariance matrix $K$ ($\mathcal{O}(M^2)$ in GPAR) and its associated Cholesky decomposition ($\mathcal{O}(M^3)$). In terms of uncertainty quantification, uncertainty in GPAR arises from its marginalised posterior and associated Monte Carlo sampling and doesn't account for hyper-parameter uncertainties (kernel hyper-parameters are point estimated using transaction data); BARM represents uncertainty as a full posterior distribution over $p(I)$, derived from the variability across posterior samples of $\{p_i\}$ and $\ell$ and accounting for uncertainties in both the membership beliefs and hyper-parameters, which provides a comprehensive view of uncertainty and important for interpreting rare or uncertain rules
- it allows users to assess the reliability of low probability estimates. In terms of flexibility, BARM applies priors on $p_i$ and $\ell$, allowing for the incorporation of domain knowledge (e.g. expected item frequencies), whereas GPAR represents prior knowledge via the chosen kernel and tunes it using data through optimisation, which may overfit to the transaction data \footnote{In Bayesian modelling, prior can be viewed as a kind of regularisation, maximizing a posterior (MAP) is in fact maximizing the likelihood (MLE) plus the prior, which reduces the chance of overfitting.}. In terms of computational efficiency, although BARM’s MCMC sampling can be computationally expensive compared to GPAR’s GP kernel optimisation (e.g. via maximum likelihood), its linear in $M$ complexity \footnote{There is no optimisation in this BARM setting, one just needs to generate samples from the (intractable) full posterior to estimate the posterior probability for each candidate rule during the rule generation stage; in contrast, GPAR requires tuning the hyper-parameter (which is fast though) and sampling the marginalised posterior.} per itermset is cheaper than the Monte Carlo posterior sampling used in GPAR which costs $\mathcal{O}(M^3)$ per itermset, potentially scaling better for large $M$ if $S_{\text{MCMC}}$ is controlled.

BARM's full Bayesian approach lends it power in rare rule inference. Rare rules often have low co-occurrence probabilities, making them sensitive to small changes in model parameters. In GPAR, the lack of uncertainty quantification in $\ell$ and its reliance on a point estimate for $p(I)$ can lead to misleading conclusions about such rules, as the user cannot distinguish between a reliably low probability and an estimate with high uncertainty. BARM’s posterior distribution over $p(I)$ provides a more robust framework for interpreting such rules, as it explicitly captures the range of possible probabilities and their associated uncertainty, enabling better informed decision-making in applications where rare rules are of interest (e.g. identifying unusual purchasing patterns, symptom-diagnosis association, or accident correlations). 

Overall, BARM offers greater flexibility by incorporating prior knowledge through assigned priors on hyper-parameters, potentially improving robustness and uncertainty quantification in rule mining, and its overall computational complexity may be less than GPAR. GPAR can also provide a notation of variation of uncertainty (although we used the average posterior samples in Section.\ref{sec:GPAR_algorithm}). BARM, on the other hand, requires careful prior specification to avoid bias (e.g. in presence of small transaction data). More discussions can be found in Appendix.\ref{app:compare_GPAR_and_BARM}.

\subsection{The item dependency-free BARM} \label{sec:BARM_variant}

We can simplify the BARM framework by removing the dependency adjustment (i.e. setting $g(\cdot) = 1$) in both the likelihood and co-occurrence probability computations. This modification eliminates the need for modeling pairwise correlations between items, which in turn removes the dependency on the feature vector representations $\mathbf{x}_i$ and the associated correlation function $k(\mathbf{x}_i, \mathbf{x}_j) = \exp(-||\mathbf{x}_i - \mathbf{x}_j||^2 / \ell^2)$.

By setting $g(\{\rho_{ij}\}) = 1$, we assume that the presence of items in a transaction is independent \footnote{This strong assumption seems unreasonable for ARM, however, it does simplify the computation and provides some insights.}, meaning the joint probability of item presences factorizes directly as the product of their individual probabilities. The original BARM likelihood Eq.\ref{eq:BARM_likelihood} is therefore reduced to:
\[
  p(\mathbf{t}_j | \{p_i\}) = \prod_{i \in I_j} p_i \prod_{i \notin I_j} (1 - p_i)
\]
which is now a product of independent Bernoulli distributions for each item. And the co-occurrence probability Eq.\ref{eq:BARM_subset_prob_estimate_via_MCMC_posterior_samples} becomes:
\begin{equation} \label{eq:BARM_cooccurrence_prob}
    p(I | \{p_i^{(s)}\}) = \prod_{i \in I} p_i^{(s)}
\end{equation}
assuming independence among the items in $I$.

As the correlation function $k(\mathbf{x}_i, \mathbf{x}_j)$ and length scale $\ell$ are no longer needed to compute $\rho_{ij}$, the feature matrix $X \in \mathbb{R}^{M \times d}$ and the prior on $\ell$ (i.e. $\ell \sim Gamma(a, b)$) can be removed from the model. This simplifies the input to just the transaction data $\mathcal{T}$. The posterior distribution now involves only $\{p_i\}$, i.e. $p(\{p_i\} | \mathcal{T})$, and the MCMC sampling step no longer needs to sample $\ell$. Further, if Beta priors are used, as the likelihood is a product of independent Bernoullis, the posterior for each $p_i$ can be computed analytically (as Beta prior is conjugate to Bernoulli likelihood), which eliminates the need for MCMC sampling. 

Given the simplified likelihood and Beta prior \footnote{Beta distribution is often used as a prior distribution to describe the distribution of a probability value \cite{Mackay2003Information}. E.g. to represent ignorance of prior parameter values we can use Beta(1,1), Beta(0,0), Beta(1/2,1/2), etc.} $p_i \sim \text{Beta}(\alpha_i, \beta_i)$, the posterior for each $p_i$ can be derived analytically due to conjugacy \footnote{In Bayesian statistics, conjugacy implies that the posterior distribution, after updating with observed data, remains within the same Beta family, facilitating analytical computation of the posterior parameters. Conjugacy simplifies posterior computation, as the parameters of the posterior can be updated analytically based on the data, without requiring numerical methods like MCMC sampling. Beta distribution (of the first kind) is the conjugate prior probability distribution for Bernoulli, binomial, negative binomial, and geometric distributions. }, i.e. Beta distribution is the conjugate prior for the Bernoulli likelihood (or its extension to binomial data). For item $i$, let $n_i = \sum_{j=1}^N \mathbf{t}_j[i]$ be the number of transactions where item $i$ is present, and $N - n_i$ the number where it is absent. For a set of $ N $ transactions, the presence of item $ i $ in each transaction $ \mathbf{t}_j[i] $ is modeled as a Bernoulli trial with success probability $ p_i $. The likelihood of observing the data $ \{\mathbf{t}_j[i]\}_{j=1}^N $ (i.e. the number of times item $ i $ is present, $ n_i = \sum_{j=1}^N \mathbf{t}_j[i] $, and absent, $ N - n_i $) is given by the binomial likelihood:
\[
p(\{\mathbf{t}_j[i]\}_{j=1}^N | p_i) = \binom{N}{n_i} p_i^{n_i} (1 - p_i)^{N - n_i}
\]
where $ \binom{N}{n_i} $ is the binomial coefficient - a constant with respect to $ p_i $, it can be ignored in the posterior computation up to a normalizing constant. The posterior distribution is proportional to the product of the likelihood and the prior:
\[
p(p_i | \{\mathbf{t}_j[i]\}_{j=1}^N) \propto p(\{\mathbf{t}_j[i]\}_{j=1}^N | p_i) \cdot p(p_i)
\]
Substituting the prior and likelihood expressions gives:
\[
p(p_i | \{\mathbf{t}_j[i]\}_{j=1}^N) \propto \left[ p_i^{n_i} (1 - p_i)^{N - n_i} \right] \cdot \frac{p_i^{\alpha_i - 1} (1 - p_i)^{\beta_i - 1}}{B(\alpha_i, \beta_i)}
\]
where $ B(\alpha_i, \beta_i) = \frac{\Gamma(\alpha_i) \Gamma(\beta_i)}{\Gamma(\alpha_i + \beta_i)} $ is the Beta function, a normalizing constant for the prior.
The \textit{pdf} of the Beta distribution is \footnote{See Appendix.\ref{app:Beta_dist} an intro about Beta distribution.} $ p(p_i) = \frac{p_i^{\alpha_i - 1} (1 - p_i)^{\beta_i - 1}}{B(\alpha_i, \beta_i)} $, and the likelihood $ p_i^{n_i} (1 - p_i)^{N - n_i} $ has the same functional form. Combining these, the unnormalized posterior becomes \footnote{The Beta distribution is conjugate to the binomial likelihood, meaning that if the prior distribution for a probability $p$ is Beta, the posterior distribution after observing binomial data is also Beta, with updated parameters.}:
\[
p(p_i | \{\mathbf{t}_j[i]\}_{j=1}^N) \propto p_i^{n_i} (1 - p_i)^{N - n_i} \cdot p_i^{\alpha_i - 1} (1 - p_i)^{\beta_i - 1} = p_i^{n_i + \alpha_i - 1} (1 - p_i)^{N - n_i + \beta_i - 1}
\]
which is the kernel of a Beta distribution with parameters $ \alpha_i + n_i $ and $ \beta_i + N - n_i $. The normalizing constant is therefore $ B(\alpha_i + n_i, \beta_i + N - n_i) $, so the posterior is again Beta distributed:
\begin{equation} \label{eq:BARM_Beta_posterior}
    p_i | \mathcal{T} \sim \text{Beta}(\alpha_i + n_i, \beta_i + N - n_i)
\end{equation}
The conjugacy holds because the Beta prior and the Bernoulli (or binomial) likelihood share the same exponential family form, where the sufficient statistics (counts of successes $ n_i $ and failures $ N - n_i $) update the prior parameters directly. This analytical update eliminates the need for iterative sampling methods such as MCMC, as the posterior parameters are cirectly computed in closed form. 

The updated BARM algorithm without dependency adjustment is presented in Algo.\ref{algo:denpendency_free_BARM}. 

\begin{algorithm}[H]
\footnotesize
\caption{BARM: Bayesian association rule mining (without item dependency)}
\label{algo:denpendency_free_BARM}
\textbf{Input:} a set of items $\{1, 2, \dots, M\}$; a set of transactions $\mathcal{T} = \{\mathbf{t}_1, \mathbf{t}_2, \dots, \mathbf{t}_N\}$, where each $\mathbf{t}_j \in \{0, 1\}^M$; prior parameters for item presence probabilities $\{\alpha_i, \beta_i\}_{i=1}^M$; a minimum probability threshold $min\_prob$; a minimum confidence threshold $min\_conf$; number of samples $S$ for probability estimation. \\
\textbf{Output:} a set of association rules $\mathcal{R}$ where each rule $r \in \mathcal{R}$ is of the form $A \rightarrow B$ with $A, B \subseteq \{1, 2, \dots, M\}$, satisfying the probability and confidence thresholds.

\vspace{1mm}\hrule\vspace{1mm}

\begin{algorithmic}[1]
\STATE Compute posterior parameters for each item $i$: $n_i = \sum_{j=1}^N \mathbf{t}_j[i]$, then $p_i | \mathcal{T} \sim \text{Beta}(\alpha_i + n_i, \beta_i + N - n_i)$. \hfill\textit{$\mathcal{O}(N \cdot M)$}
\STATE Sample $S$ values $\{p_i^{(s)}\}_{s=1}^S$ from the posterior $\text{Beta}(\alpha_i + n_i, \beta_i + N - n_i)$ for each item $i$. \hfill\textit{$\mathcal{O}(S \cdot M)$}
\STATE Initialize an empty set $\mathcal{R} = \emptyset$ to store association rules. \hfill\textit{$\mathcal{O}(1)$}
\FOR{$m = 2, \dots, M$}
    \STATE Generate all possible size $m$ itemsets $I_m = \binom{\{1, 2, \dots, M\}}{m}$. \hfill\textit{$\mathcal{O}(\mathcal{C}_m^M)$}
    \FOR{itemset $I \in I_m$}
        \STATE Estimate co-occurrence probability $p(I)$ using posterior samples:
        $p(I) = \frac{1}{S} \sum_{s=1}^{S} \prod_{i \in I} p_i^{(s)}$. \hfill\textit{$\mathcal{O}(S \cdot m)$}
        \IF{$p(I) > min\_prob$}
            \FOR{each \textit{(antecedent, consequent)} split $(A, B)$ of $I$}
                \STATE Estimate antecedent probability $p(A)$ using posterior samples:
                $p(A) = \frac{1}{S} \sum_{s=1}^{S} \prod_{i \in A} p_i^{(s)}$. \hfill\textit{$\mathcal{O}(S \cdot (m-1))$}
                \STATE Compute confidence: $\text{conf}(A \rightarrow B) = \frac{p(I)}{p(A)}$. \hfill\textit{$\mathcal{O}(1)$}
                \IF{$\text{conf}(A \rightarrow B) > min\_conf$}
                    \STATE Add rule $A \rightarrow B$ to $\mathcal{R}$. \hfill\textit{$\mathcal{O}(1)$}
                \ENDIF
            \ENDFOR
        \ENDIF
    \ENDFOR
\ENDFOR
\STATE Return the rules set $\mathcal{R}$. \hfill\textit{$\mathcal{O}(1)$}
\end{algorithmic}
\end{algorithm}

Compared to the original BARM in Algo.\ref{algo:BARM}, in the dependence-free BARM, we set  $ g(\cdot) = 1 $ and removed the feature matrix $X$, correlation function $k(\cdot, \cdot)$, and length scale $\ell$, as they are no longer needed without dependency modeling. Removing the correlation adjustment reduces the likelihood model to independent Bernoullis, which preserves the conjugacy property and eliminates the need for feature vectors and correlation modeling. This simplification allows for an analytical posterior computation using Beta-Bernoulli conjugacy, significantly reducing computational overhead. However, this comes at the cost of losing the ability to capture real-world dependencies (e.g. complementary items such as bread and butter being more likely to appear together), which may affect the quality of mined rules in datasets where items exhibit strong correlations. However, it makes the framework more computationally efficient and easier to implement.

\paragraph{Complexity} Step 1 computes the posterior parameters for each $p_i$ analytically using the conjugate Beta-Bernoulli update, with complexity $\mathcal{O}(N \cdot M)$ to count the occurrences $n_i$. Step 2 samples $S$ values from each posterior $Beta(\alpha_i + n_i, \beta_i + N - n_i)$, replacing the MCMC sampling step in the original BARM, with complexity $\mathcal{O}(S \cdot M)$. Step 7 computes the co-occurrence probability $p(I)$ of a sub-itemset $I$ as the average of $\prod_{i \in I} p_i^{(s)}$ over $S$ samples, with complexity $\mathcal{O}(S \cdot m)$. Step 9 computes $p(A)$ similarly for the antecedent $A$, with complexity $\mathcal{O}(S \cdot (m-1))$. 

The removal of MCMC sampling reduces the inference cost from $\mathcal{O}(S_{\text{MCMC}} \cdot N \cdot M)$ to $\mathcal{O}(N \cdot M)$. The total complexity is now dominated by the itemset evaluation loop, which remains $\mathcal{O}(2^M \cdot S \cdot M)$. The absence of correlation computations makes each probability estimation simpler. The total cost of this dependency-free BARM sums up to $\mathcal{O}(N \cdot M + 2^M \cdot S \cdot M)$, which is quite manageable compared to the original BARM cost $\mathcal{O}(S_{\text{MCMC}} \cdot N \cdot M + 2^M \cdot S \cdot M)$.

\subsection{BARM experiment}
\label{sec:BARM_experiments}

Here we evaluate the performance of the item dependency-free BARM method on the \textit{Synthetic 1} dataset which was used in the GPAR experiment (Section.\ref{sec:GPAR_experiments}). This experiment was conducted on the same laptop (specs see Section.\ref{sec:GPAR_experiments}) as used for the GPAR synthetic experiments. We utilized Python with the \textit{PyMC} library (version 5.23.0) for MCMC sampling, alongside \textit{NumPy}, \textit{Pandas}, and \textit{Psutil} for data processing and metric recording.

\subsubsection{Data and processing methods}

The \textit{Synthetic 1} dataset, identical to that used in the GPAR experiments, comprises 1000 transactions involving 10 items, with a feature matrix $X \in \mathbb{R}^{10 \times 10}$ sampled from a standard normal distribution and a transaction matrix $\mathcal{T} \in \{0, 1\}^{1000 \times 10}$. Each transaction $\mathbf{t}_j$ is a binary vector indicating the presence (1) or absence (0) of each item, generated using a multivariate normal distribution with an RBF kernel, retaining items with positive latent values (see Appendix.\ref{app:synthetic_data_tests} for details). For the dependency-free BARM, the feature matrix $X$ was not utilized, as the method assumes item independence, eliminating the need for correlation modeling via item features.

We assigned uniform Beta priors, i.e. $\text{Beta}(1, 1)$, to each item’s presence probability to reflect a lack of prior knowledge about item frequencies. The likelihood was defined as a product of independent Bernoulli distributions for each transaction. We generated samples of the analytical posterior $\text{Beta}(\alpha_i + n_i, \beta_i + N - n_i)$ (Eq.\ref{eq:BARM_Beta_posterior}) via direct sampling \footnote{Sampling from a Beta distribution is straightforward using standard random number generators, as the Beta distribution is well-supported in numerical libraries such \textit{NumPy}.}, drawing 1000 posterior samples for each item’s presence (posterior) probability. The co-occurrence probability for itemset $I$, i.e. $p(I)$ (the support), is approximated by the expected value of $p(I \mid \{p_i^{(s)}\})$ over the $S=1000$ posterior samples, i.e. averaging Eq.\ref{eq:BARM_cooccurrence_prob}:
\begin{equation} \label{eq:BARM_cooccurrence_prob_avg}
    p(I) \approx \frac{1}{S} \sum_{s=1}^S p(I \mid \{p_i^{(s)}\}) = \frac{1}{S} \sum_{s=1}^S \prod_{i \in I} p_i^{(s)}
\end{equation}
Alternatively, instead of averaging, one can obtain a sample-based distribution of $p(I)$ using these $S=1000$ posterior samples.

As in GPAR experiments, we recorded the performance metrics, i.e. runtime (in seconds), memory usage (in MB), number of frequent itemsets, and number of rules generated, for minimum support thresholds of 0.1, 0.2, 0.3, 0.4, and 0.5, with a fixed minimum confidence threshold of 0.5. Frequent itemsets were identified by comparing co-occurrence probabilities against the minimum support threshold, and rules were generated by evaluating all possible antecedent-consequent splits, computing confidence as the ratio of itemset probability to antecedent probability. Rules meeting the minimum confidence threshold were retained, and lift was calculated as the ratio of itemset probability to the product of antecedent and consequent probabilities.

\subsubsection{Results}

The performance of the dependency-free BARM method, utilizing direct sampling from the analytical Beta posterior, on the Synthetic 1 dataset is summarized in Table.\ref{tab:BARM_metrics_Synthetic1}. At a minimum support of 0.1, BARM required 1.1821 seconds and negligible memory, generating 876 frequent itemsets and 12029 rules. As the support threshold increased, computational demands decreased significantly: at 0.2, runtime was 0.2314 seconds with negligible memory usage, producing 385 itemsets and 3024 rules. This reduction reflects stricter criteria for frequent itemsets, limiting the number of valid rules generated. Both GPAR and BARM can produce uncertainty quantification for each co-occurrence probability $p(I)$, as well as probabilistic confidence and lift, by sampling from the GP or Bayesian posterior distributions, yielding full distributions for these quantities; however, in our results, we only report the sample averages, as in Eq.\ref{eq:BARM_cooccurrence_prob_avg}, for simplicity.

\begin{table}[H]
\centering
\scriptsize
\begin{threeparttable}
\caption{Performance metrics of \textbf{BARM} on \textit{Synthetic 1} dataset}
\label{tab:BARM_metrics_Synthetic1}
\begin{tabular}{p{2cm} p{2cm} p{2cm} p{2cm} p{2cm}}
\toprule
Min Support & Runtime (s) & Memory (MB) & Frequent Itemsets & Rules \\
\midrule
0.1 & 1.1821 & 0.0 & 876 & 12029 \\
0.2 & 0.2314 & 0.0 & 385 & 3024 \\
0.3 & 0.0734 & 0.0 & 165 & 802 \\
0.4 & 0.0481 & 0.0 & 45 & 90 \\
0.5 & 0.0451 & 0.0 & 44 & 88 \\
\bottomrule
\end{tabular}
\begin{tablenotes}
\item[1] Total number of transactions: 1000.
\item[2] Minimum confidence used: 0.5.
\item[3] Direct sampling from analytical Beta posterior with 1000 samples.
\item[4] Co-occurrence probabilities estimated with 1000 posterior samples ($S = 1000$).
\end{tablenotes}
\end{threeparttable}
\end{table}

Compared to GPAR variants and classic ARM methods (Apriori, FP-Growth, Eclat) on Synthetic 1 (see Tables.\ref{tab:synthetic1_comparison_runtime} to.\ref{tab:synthetic1_comparison_noRules}), BARM exhibits competitive performance characteristics. At a minimum support of 0.1, BARM’s runtime (1.1821 seconds) is higher than GPAR’s neural network kernel (0.133 seconds), NTK (0.449 seconds), Apriori (0.483 seconds), and Eclat (0.556 seconds), but lower than FP-Growth (2.967 seconds) and significantly lower than GPAR RBF/shifted RBF (10.179/8.930 seconds). BARM’s memory usage (0.0 MB) is comparable to GPAR variants, Apriori, and FP-Growth (0.0–0.063 MB), and lower than Eclat’s peak (18.149 MB). BARM generates fewer itemsets (876) and rules (12029) than GPAR RBF/shifted RBF (both \footnote{The shifted RBF based GPAR is the same as the RBF based GPAR as its shifted distance $d_0=0$ after optimisation.} with 1013 itemsets, 54036 rules), Apriori/FP-Growth (both with 1023 itemsets, 57002 rules), and Eclat (1033 itemsets, 57002 rules), but more than GPAR neural network (140 itemsets, 322 rules) and NTK (305 itemsets, 1485 rules). At higher support thresholds (0.4–0.5), BARM’s runtime (0.0451–0.0481 seconds) approaches that of GPAR neural network (0.002–0.005 seconds) and classic methods (0.211–0.635 seconds), while its itemset (44–45) and rule counts (88–90) are comparable to GPAR RBF/shifted RBF (1–67 itemsets, 2–782 rules), though classic methods maintain higher counts (746–1033 itemsets, 20948–57002 rules).

Tables.\ref{tab:BARM_mined_rules_rankedBySupport_minSupport01_Synthetic1},.\ref{tab:BARM_mined_rules_rankedByConfidence_minSupport01_Synthetic1}, and.\ref{tab:BARM_mined_rules_rankedByLift_minSupport01_Synthetic1} present the top 10 association rules mined by BARM at a minimum support of 0.1, ranked by support, confidence, and lift, respectively. Table.\ref{tab:BARM_mined_rules_rankedBySupport_minSupport01_Synthetic1} shows high-support rules such as \(\text{item}_2 \to \text{item}_9\) (support 0.5356, confidence 0.7308, lift 1.0000) and \(\text{item}_9 \to \text{item}_2\) (support 0.5356, confidence 0.7329, lift 1.0000), indicating frequent but near-independent co-occurrences, which is consistent with BARM’s items independence assumption. In contrast, GPAR RBF/shifted RBF (Table.\ref{tab:RBF_GPAR_mined_rules_rankedBySupport_minSupport01_Synthetic1}) identifies rules such as \(\text{item}_5 \to \text{item}_8, \text{item}_6\) (support 0.5200, confidence 0.9123, lift 2.3392), while Apriori/FP-Growth (Tables.\ref{tab:Apriori_mined_rules_rankedBySupport_minSupport01_Synthetic1},.\ref{tab:FP-Growth_mined_rules_rankedBySupport_minSupport01_Synthetic1}) focus on \(\text{item}_7 \to \text{item}_5\) (support 0.6620, confidence 0.9220, lift 1.2700). Table.\ref{tab:BARM_mined_rules_rankedByConfidence_minSupport01_Synthetic1} lists rules with confidence around 0.7329, such as \(\text{item}_1, \text{item}_4, \text{item}_6, \text{item}_9 \to \text{item}_2\) (support 0.1916, confidence 0.7329), far below GPAR RBF/shifted RBF’s top confidence of 2.2727 (Table.\ref{tab:RBF_GPAR_mined_rules_rankedByConfidence_minSupport01_Synthetic1}) or Apriori/FP-Growth’s 0.9981 (Tables.\ref{tab:Apriori_mined_rules_rankedByConfidence_minSupport01_Synthetic1},.\ref{tab:FP-Growth_mined_rules_rankedByConfidence_minSupport01_Synthetic1}). Table.\ref{tab:BARM_mined_rules_rankedByLift_minSupport01_Synthetic1} shows near-trivial lift values (e.g. 1.0001 for \(\text{item}_0, \text{item}_2, \text{item}_3, \text{item}_6 \to \text{item}_4, \text{item}_7\)), reflecting weak associations due to the independence assumption, unlike GPAR RBF/shifted RBF’s high lift (12.1212, Table.\ref{tab:RBF_GPAR_mined_rules_rankedByLift_minSupport01_Synthetic1}) or NTK’s 8.5470 (Table.\ref{tab:NTK_GPAR_mined_rules_rankedByLift_minSupport01_Synthetic1}).

\begin{table}[H]
\centering
\scriptsize
\begin{threeparttable}
\caption{Top 10 association rules mined by \textbf{BARM} (ranked in descending order by \textit{\color{red}{support}})}
\label{tab:BARM_mined_rules_rankedBySupport_minSupport01_Synthetic1}
\begin{tabular}{r >{\raggedright\arraybackslash}p{4cm} >{\raggedright\arraybackslash}p{4cm} p{1cm} p{1cm} p{1cm}}
\toprule
\# & Antecedent & Consequent & \color{red}{Supp} & Conf & Lift \\
\midrule
1 & item\_2 & item\_9 & 0.5356 & 0.7308 & 1.0000 \\
2 & item\_9 & item\_2 & 0.5356 & 0.7329 & 1.0000 \\
3 & item\_2 & item\_5 & 0.5312 & 0.7247 & 1.0000 \\
4 & item\_5 & item\_2 & 0.5312 & 0.7329 & 1.0000 \\
5 & item\_2 & item\_6 & 0.5312 & 0.7248 & 1.0000 \\
6 & item\_6 & item\_2 & 0.5312 & 0.7329 & 1.0000 \\
7 & item\_5 & item\_9 & 0.5296 & 0.7307 & 1.0000 \\
8 & item\_9 & item\_5 & 0.5296 & 0.7247 & 1.0000 \\
9 & item\_6 & item\_9 & 0.5297 & 0.7308 & 1.0000 \\
10 & item\_9 & item\_6 & 0.5297 & 0.7248 & 1.0000 \\
\bottomrule
\end{tabular}
\begin{tablenotes}
\item[1] Total number of transactions: 1000.
\item[2] Minimum support used in mining: 0.1.
\item[3] Minimum confidence used: 0.5.
\item[4] Items correspond to synthetic dataset indices.
\end{tablenotes}
\end{threeparttable}
\end{table}

\begin{table}[H]
\centering
\scriptsize
\begin{threeparttable}
\caption{Top 10 association rules mined by \textbf{BARM} (ranked in descending order by \textit{\color{red}{confidence}})}
\label{tab:BARM_mined_rules_rankedByConfidence_minSupport01_Synthetic1}
\begin{tabular}{r >{\raggedright\arraybackslash}p{6cm} >{\raggedright\arraybackslash}p{3cm} p{1cm} p{1cm} p{1cm}}
\toprule
\# & Antecedent & Consequent & Supp & \color{red}{Conf} & Lift \\
\midrule
1 & item\_1, item\_4, item\_6, item\_9 & item\_2 & 0.1916 & 0.7329 & 1.0000 \\
2 & item\_0, item\_1, item\_4, item\_6, item\_9 & item\_2 & 0.1381 & 0.7329 & 1.0000 \\
3 & item\_1, item\_4, item\_5, item\_6, item\_9 & item\_2 & 0.1388 & 0.7329 & 1.0000 \\
4 & item\_1, item\_3, item\_4, item\_6, item\_9 & item\_2 & 0.1371 & 0.7329 & 1.0000 \\
5 & item\_1, item\_4, item\_6, item\_7, item\_9 & item\_2 & 0.1374 & 0.7329 & 1.0000 \\
6 & item\_0, item\_1, item\_4, item\_5, item\_6, item\_9 & item\_2 & 0.1001 & 0.7329 & 1.0000 \\
7 & item\_1, item\_4, item\_6 & item\_2 & 0.2622 & 0.7329 & 1.0000 \\
8 & item\_1, item\_4, item\_9 & item\_2 & 0.2643 & 0.7329 & 1.0000 \\
9 & item\_0, item\_1, item\_4, item\_6 & item\_2 & 0.1890 & 0.7329 & 1.0000 \\
10 & item\_1, item\_4, item\_5, item\_6 & item\_2 & 0.1900 & 0.7329 & 1.0000 \\
\bottomrule
\end{tabular}
\begin{tablenotes}
\item[1] Total number of transactions: 1000.
\item[2] Minimum support used in mining: 0.1.
\item[3] Minimum confidence used in filtering candidate rules: 0.5.
\item[4] Items correspond to synthetic dataset indices.
\end{tablenotes}
\end{threeparttable}
\end{table}

\begin{table}[H]
\centering
\scriptsize
\begin{threeparttable}
\caption{Top 10 association rules mined by \textbf{BARM} (ranked in descending order by \textit{\color{red}{lift}})}
\label{tab:BARM_mined_rules_rankedByLift_minSupport01_Synthetic1}
\begin{tabular}{r >{\raggedright\arraybackslash}p{5cm} >{\raggedright\arraybackslash}p{5cm} p{1cm} p{1cm} p{1cm}}
\toprule
\# & Antecedent & Consequent & Supp & Conf & \color{red}{Lift} \\
\midrule
1 & item\_0, item\_2, item\_3, item\_6 & item\_4, item\_7 & 0.1380 & 0.5035 & 1.0001 \\
2 & item\_0, item\_2, item\_3, item\_5, item\_6 & item\_4, item\_7 & 0.1000 & 0.5035 & 1.0001 \\
3 & item\_1, item\_4, item\_7 & item\_0, item\_3 & 0.1826 & 0.5160 & 1.0001 \\
4 & item\_1, item\_4, item\_6, item\_7 & item\_0, item\_3 & 0.1324 & 0.5160 & 1.0001 \\
5 & item\_0, item\_3, item\_6 & item\_4, item\_7 & 0.1883 & 0.5035 & 1.0001 \\
6 & item\_1, item\_2, item\_4, item\_7 & item\_0, item\_6 & 0.1355 & 0.5226 & 1.0001 \\
7 & item\_0, item\_3, item\_5, item\_6 & item\_4, item\_7 & 0.1364 & 0.5035 & 1.0001 \\
8 & item\_1, item\_2, item\_4, item\_7 & item\_0, item\_3 & 0.1338 & 0.5160 & 1.0001 \\
9 & item\_1, item\_3, item\_4, item\_7 & item\_0, item\_6 & 0.1324 & 0.5226 & 1.0001 \\
10 & item\_1, item\_4, item\_5, item\_7 & item\_0, item\_6 & 0.1340 & 0.5226 & 1.0001 \\
\bottomrule
\end{tabular}
\begin{tablenotes}
\item[1] Total number of transactions: 1000.
\item[2] Minimum support used in mining: 0.1.
\item[3] Minimum confidence used in filtering candidate rules: 0.5.
\item[4] Items correspond to synthetic dataset indices.
\item[5] Lift values are close to 1.0000, making the ordering sensitive to small differences.
\end{tablenotes}
\end{threeparttable}
\end{table}

Table.\ref{tab:BARM_analysis_minSupport01_Synthetic1} details the top 10 rules with \textit{item 2} as the consequent, ranked by confidence, for a minimum support level 0.1. All rules exhibit a confidence around 0.7329, with the top rule, \([1, 4, 6, 9] \to [2]\), having a support of 0.1916 and a support count of 496 transactions \footnote{This mismatch between the Bayesian posterior probability (i.e. the support) and the empirical frequency 496/1000 is very interesting.}. Other rules, such as \([1, 4, 6] \to [2]\) (support 0.2622, count 527), indicate frequent co-occurrences involving item 2. The example transactions, consistently including all items (e.g. \([0, 1, 2, 3, 4, 5, 6, 7, 8, 9]\)), reflect the dense nature of the Synthetic 1 dataset. Comparatively, GPAR RBF/shifted RBF (Table.\ref{tab:RBF_GPAR_analysis_minSupport01_Synthetic1}) yields higher-confidence rules (e.g. \([0, 1, 3, 4, 8] \to [2, 7]\), confidence 2.1176, support 0.3600), while Apriori/FP-Growth (Table.\ref{tab:Apriori_analysis_minSupport01_Synthetic1}) achieve near-perfect confidence (0.9855) for complex rules, and Eclat (Table.\ref{tab:Eclat_analysis_minSupport01_Synthetic1}) offers high confidence (1.0571) for simpler rules such as \([0, 1, 9] \to [2]\).

\begin{table}[H]
\centering
\scriptsize
\begin{threeparttable}
\caption{Top 10 association rules mined by \textbf{BARM} with item 2 on RHS (min support = 0.1, rules ranked by \color{red}{confidence})}
\label{tab:BARM_analysis_minSupport01_Synthetic1}
\begin{tabular}{r >{\raggedright\arraybackslash}p{2.6cm} p{1cm} p{1cm} p{1cm} >{\raggedright\arraybackslash}p{1cm} >{\raggedright\arraybackslash}p{5cm}}
\toprule
\# & Rule & \color{red}{Conf} & Supp & Supp Count & Example Trans.\ Indices & Example Transactions \\
\midrule
1 & [1, 4, 6, 9] $\to$ [2] & 0.7329 & 0.1916 & 496 & [1, 2, 3] & [[0, 1, 2, 3, 4, 5, 6, 7, 8, 9], [0, 1, 2, 3, 4, 5, 6, 7, 8, 9], [0, 1, 2, 3, 4, 5, 6, 7, 8, 9]] \\
2 & [0, 1, 4, 6, 9] $\to$ [2] & 0.7329 & 0.1381 & 481 & [1, 2, 3] & [[0, 1, 2, 3, 4, 5, 6, 7, 8, 9], [0, 1, 2, 3, 4, 5, 6, 7, 8, 9], [0, 1, 2, 3, 4, 5, 6, 7, 8, 9]] \\
3 & [1, 4, 5, 6, 9] $\to$ [2] & 0.7329 & 0.1388 & 483 & [1, 2, 3] & [[0, 1, 2, 3, 4, 5, 6, 7, 8, 9], [0, 1, 2, 3, 4, 5, 6, 7, 8, 9], [0, 1, 2, 3, 4, 5, 6, 7, 8, 9]] \\
4 & [1, 3, 4, 6, 9] $\to$ [2] & 0.7329 & 0.1371 & 485 & [1, 2, 3] & [[0, 1, 2, 3, 4, 5, 6, 7, 8, 9], [0, 1, 2, 3, 4, 5, 6, 7, 8, 9], [0, 1, 2, 3, 4, 5, 6, 7, 8, 9]] \\
5 & [1, 4, 6, 7, 9] $\to$ [2] & 0.7329 & 0.1374 & 487 & [1, 2, 3] & [[0, 1, 2, 3, 4, 5, 6, 7, 8, 9], [0, 1, 2, 3, 4, 5, 6, 7, 8, 9], [0, 1, 2, 3, 4, 5, 6, 7, 8, 9]] \\
6 & [0, 1, 4, 5, 6, 9] $\to$ [2] & 0.7329 & 0.1001 & 474 & [1, 2, 3] & [[0, 1, 2, 3, 4, 5, 6, 7, 8, 9], [0, 1, 2, 3, 4, 5, 6, 7, 8, 9], [0, 1, 2, 3, 4, 5, 6, 7, 8, 9]] \\
7 & [1, 4, 6] $\to$ [2] & 0.7329 & 0.2622 & 527 & [1, 2, 3] & [[0, 1, 2, 3, 4, 5, 6, 7, 8, 9], [0, 1, 2, 3, 4, 5, 6, 7, 8, 9], [0, 1, 2, 3, 4, 5, 6, 7, 8, 9]] \\
8 & [1, 4, 9] $\to$ [2] & 0.7329 & 0.2643 & 515 & [1, 2, 3] & [[0, 1, 2, 3, 4, 5, 6, 7, 8, 9], [0, 1, 2, 3, 4, 5, 6, 7, 8, 9], [0, 1, 2, 3, 4, 5, 6, 7, 8, 9]] \\
9 & [0, 1, 4, 6] $\to$ [2] & 0.7329 & 0.1890 & 507 & [1, 2, 3] & [[0, 1, 2, 3, 4, 5, 6, 7, 8, 9], [0, 1, 2, 3, 4, 5, 6, 7, 8, 9], [0, 1, 2, 3, 4, 5, 6, 7, 8, 9]] \\
10 & [1, 4, 5, 6] $\to$ [2] & 0.7329 & 0.1900 & 507 & [1, 2, 3] & [[0, 1, 2, 3, 4, 5, 6, 7, 8, 9], [0, 1, 2, 3, 4, 5, 6, 7, 8, 9], [0, 1, 2, 3, 4, 5, 6, 7, 8, 9]] \\
\bottomrule
\end{tabular}
\begin{tablenotes}
\item[1] Total number of transactions: 1000.
\item[2] Minimum support used in mining: 0.1.
\item[3] Minimum confidence used in filtering candidate rules: 0.5.
\item[4] Items correspond to synthetic dataset indices.
\end{tablenotes}
\end{threeparttable}
\end{table}

Examining the tables with item 2 as the consequent, a notable discrepancy exists between the support (posterior probability) and the observed empirical frequencies (actual transaction count /1000) in probabilistic methods, as shown in Fig.\ref{fig:support_vs_count}. For frequency-based methods such as Apriori, FP-Growth, and Eclat (Tables.\ref{tab:Apriori_analysis_minSupport01_Synthetic1},.\ref{tab:FP-Growth_analysis_minSupport01_Synthetic1},.\ref{tab:Eclat_analysis_minSupport01_Synthetic1}), the support exactly matches the empirical frequencies (i.e. count/total number of transactions), as these methods compute support directly as the fraction of transactions containing the itemset. For example, Apriori’s top rule has support 0.4760 and count 476, yielding count/1000 = 0.4760. In contrast, probabilistic methods such as BARM and GPAR variants (Tables.\ref{tab:BARM_analysis_minSupport01_Synthetic1},.\ref{tab:RBF_GPAR_analysis_minSupport01_Synthetic1},.\ref{tab:shifted_RBF_GPAR_analysis_minSupport01_Synthetic1},.\ref{tab:nn_GPAR_analysis_minSupport01_Synthetic1},.\ref{tab:NTK_GPAR_analysis_minSupport01_Synthetic1}) show significant mismatches. For BARM, the top rule’s support is 0.1916, but count/1000 is 0.496, indicating underestimation. GPAR (RBF/shifted RBF) shows support 0.3600 \textit{vs} count/1000=0.488, while GPAR (neural net) has the largest gap (e.g. support 0.1000 \textit{vs} count/1000=0.542). GPAR (NTK) also underestimates (e.g. support 0.1000 \textit{vs} count/1000=0.496). This discrepancy arises because probabilistic methods estimate support via posterior sampling (e.g. Beta for BARM, Gaussian processes for GPAR), which smooths probabilities based on model assumptions, unlike the empirical counts of frequency-based methods. Probabilistic methods’ systematic underestimation of the support is a concern.

\begin{figure}[H]
    \centering
    \includegraphics[width=0.5\linewidth]{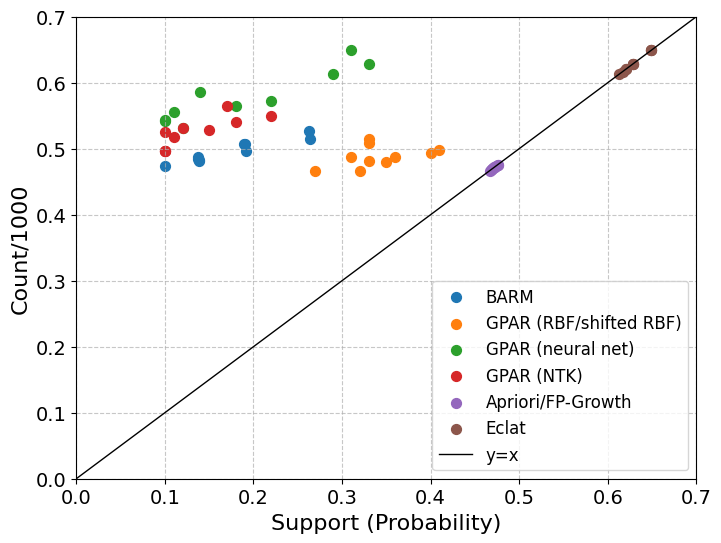}
    \caption{Empirical frequencies \textit{vs} support for the top 10 rules (ordered by confidence) with item 2 as consequent.}
    \label{fig:support_vs_count}
\end{figure}

\section{MAB-ARM: multi-armed bandit based association rule mining} 
\label{sec:MAB_ARM}

We present a third method for association rule mining using multi-armed bandit (MAB). The MAB framework is well-suited for sequential decision-making under uncertainty, and by modeling each transaction as an $M$-vector of Bernoulli variables, we frame the problem of mining association rules as an MAB problem. 

\subsection{Inspiration} 

In ARM, transactions are represented as Bernoulli vectors: each transaction $\mathbf{t}_j \in \{0, 1\}^M$ is an $M$-vector where each element $\mathbf{t}_j[i] \in \{0, 1\}$ represents the presence (1) or absence (0) of item $i$ in this transaction $j$. This can be modeled as the outcome of $M$ independent Bernoulli random variables, where the $i$-th variable has success probability $p_i = p(\mathbf{t}_j[i] = 1)$, representing the probability of item $i$ being present in a transaction. We therefore have the multi-armed bandit analogy: in a classical MAB problem, there are $K$ arms, each associated with an unknown reward distribution (e.g. Bernoulli for binary rewards), and the goal is to maximize cumulative rewards by choosing which arm to pull at each time step, balancing exploration (learning about the arms) and exploitation (choosing the best-known arm). We can apply this sequential decision-making game to generating the transaction Bernoulli vectors $\mathbf{t}_j$, by treating transactions as observations, and the $M$ items as $M$ arms in an MAB problem. In each round $j=1,2,...,N$ of the game, each 'pull' of an arm $i$ corresponds to observing whether item $i$ is present in a transaction, akin to receiving a reward (1 if present, 0 if absent). The reward distribution for arm $i$ is Bernoulli with parameter $p_i$, which is initially unknown.

However, the standard MAB approach focuses on individual arms, while AR mining requires estimating joint probabilities, e.g. $p(I) = p(\mathbf{t}[i] = 1 \ \forall i \in I)$ for an itemset $I$, not just individual item probabilities. We can therefore treat it as a combinatorial MAB problem, in which we are interested in 'pulling' subsets of arms (itemsets) rather than individual arms. This combinatorial nature raises complexity: evaluating subsets of items (itemsets) scales exponentially with $M$ ($\mathcal{O}(2^M)$).

Unlike a typical MAB setup where the agent actively chooses which arm to pull, here the transactions $\mathcal{T} = \{\mathbf{t}_1, \mathbf{t}_2, \dots, \mathbf{t}_N\}$ are given as a fixed dataset. We can simulate the MAB process by treating the selection of itemsets to evaluate as the 'arms' to pull, where pulling an arm corresponds to estimating the co-occurrence probability of an itemset using the transaction data. Also, in MAB, we have the exploration-exploitation trade-off, i.e. we balance exploring arms to estimate their reward distributions and exploiting arms with high estimated rewards. For AR mining, exploration corresponds to sampling transactions to estimate itemset probabilities, and exploitation corresponds to focusing on itemsets with high estimated co-occurrence probabilities to generate rules.

\subsection{MAB-ARM: methodology}

To adapt the MAB framework for AR mining, we use a combinatorial MAB approach where each 'arm' corresponds to an itemset $I \subseteq \{1, 2, \dots, M\}$. However, since evaluating all $2^M$ itemsets is computationally infeasible for large $M$, we will use an MAB strategy to prioritize the evaluation of promising itemsets, balancing exploration (evaluating new itemsets to estimate their co-occurrence probabilities) and exploitation (focusing on itemsets with high estimated probabilities that are likely to yield valid association rules).

\paragraph{Arms as itemsets}
Each arm corresponds to an itemset $I$. The 'reward' for pulling arm $I$ is an estimate of the co-occurrence probability $p(I)$, which we compute using the transaction data $\mathcal{T}$. As there are $2^M$ possible itemsets, we start with smaller itemsets (e.g. size 2) and use the MAB strategy to guide the exploration of larger itemsets.

\paragraph{Reward estimation} For an itemset $I$, the co-occurrence probability is estimated as:
 \[
 \hat{p}(I) = \frac{1}{N} \sum_{j=1}^N \mathbb{I}(\mathbf{t}_j[i] = 1 \ \forall i \in I),
 \]
 where $N$ is the number of transactions, and $\mathbb{I}(\mathbf{t}_j[i] = 1 \ \forall i \in I) = 1$ if all items in $I$ are present in transaction $\mathbf{t}_j$, and 0 otherwise. This estimate serves as the 'reward' for arm $I$, reflecting the empirical frequency of $I$ in the dataset.

\paragraph{MAB strategy} We use the upper confidence bound (UCB) strategy to balance exploration and exploitation \footnote{In the contexts of game theory and reinforcement learning, exploration allows the agent (i.e. the decision-maker) to improve its knowledge about each action and hopefully leading to a long-term benefit. Exploitation allows the agent to choose a greedy action to try to get the most reward for short-term benefit. However, a pure greedy action selection can lead to sub-optimal behaviour. The exploration-exploitation dilemma says that an agent can not choose to both explore and exploit at the same time, and UCB score represents the uncertainty in balancing exploration and exploitation.}. For each itemset $I$, after $t$ total pulls (evaluations of itemsets), let $n_I(t)$ be the number of times $I$ has been evaluated, and $\hat{p}_I(t)$ be the current estimate of $p(I)$. The UCB score for arm $I$ is:
\[
\text{UCB}_I(t) = \hat{p}_I(t) + \sqrt{\frac{2 \log t}{n_I(t)}},
\]
where the second term is the exploration bonus, encouraging the evaluation of itemsets that have been sampled less frequently. At each step, we select the itemset $I$ with the highest UCB score to evaluate, update its probability estimate, and use the estimate to generate association rules if it meets the thresholds. More details about the UCB method can be found in Appendix.\ref{app:UCB}.

\paragraph{Rules generation} For an itemset $I$ with $\hat{p}(I) > min\_prob$, evaluate all splits into antecedent $A$ and consequent $B$, estimate $\hat{p}(A)$, compute confidence $\text{conf}(A \rightarrow B) = \frac{\hat{p}(I)}{\hat{p}(A)}$, and add the rule to $\mathcal{R}$ if $\text{conf}(A \rightarrow B) > min\_conf$.

To manage the exponential number of itemsets, we can apply a pruning strategy to reduce the search space: we use the MAB strategy to prioritize itemsets likely to yield rules, effectively pruning the search space. For example, if an itemset $I$ has a low estimated probability after several evaluations, its UCB score will decrease, and the algorithm will focus on other itemsets.

\subsection{The MAB-ARM algorithm} 

The MAB-ARM algorithm is presented in Algo.\ref{algo:MAB-ARM}.

\begin{algorithm}[H]
\footnotesize
\caption{MAB-ARM: multi-armed bandit association rule mining}
\label{algo:MAB-ARM}
\textbf{Input:} a set of items $\{1, 2, \dots, M\}$; a set of transactions $\mathcal{T} = \{\mathbf{t}_1, \mathbf{t}_2, \dots, \mathbf{t}_N\}$, where each $\mathbf{t}_j \in \{0, 1\}^M$; a minimum probability threshold $min\_prob$; a minimum confidence threshold $min\_conf$; maximum number of itemset evaluations $T_{\text{max}}$; minimum itemset size $m_{min}$; maximum itemset size $m_{max}$. \\
\textbf{Output:} a set of association rules $\mathcal{R}$ where each rule $r \in \mathcal{R}$ is of the form $A \rightarrow B$ with $A, B \subseteq \{1, 2, \dots, M\}$, satisfying the probability and confidence thresholds.

\vspace{1mm}\hrule\vspace{1mm}

\begin{algorithmic}[1]
\STATE Initialize an empty set $\mathcal{R} = \emptyset$ to store association rules. \hfill\textit{$\mathcal{O}(1)$}
\STATE Initialize a set of candidate itemsets $\mathcal{I} = \emptyset$. For $m = m_{\text{min}}$ to $m_{max}$, add all size $m$ itemsets $I_m = \binom{\{1, 2, \dots, M\}}{m}$ to $\mathcal{I}$. \hfill\textit{$\mathcal{O}(\sum_{m=m_{\text{min}}}^{m_{\text{max}}} \mathcal{C}_m^M)$}
\STATE Initialize counters: for each $I \in \mathcal{I}$, set $n_I = 0$ (number of evaluations) and $\hat{p}_I = 0$ (estimated probability). \hfill\textit{$\mathcal{O}(|\mathcal{I}|)$}
\FOR{$t = 1$ to $T_{\text{max}}$}
    \STATE Compute UCB scores for each $I \in \mathcal{I}$: $\text{UCB}_I(t) = \hat{p}_I + \sqrt{\frac{2 \log t}{n_I}}$ (if $n_I = 0$, set $\text{UCB}_I(t) = \infty$). \hfill\textit{$\mathcal{O}(|\mathcal{I}|)$}
    \STATE Select itemset $I^* = \arg\max_{I \in \mathcal{I}} \text{UCB}_I(t)$. \hfill\textit{$\mathcal{O}(|\mathcal{I}|)$}
    \STATE Update $n_{I^*} = n_{I^*} + 1$, and estimate $\hat{p}_{I^*} = \frac{1}{N} \sum_{j=1}^N \mathbb{I}(\mathbf{t}_j[i] = 1 \ \forall i \in I^*)$. \hfill\textit{$\mathcal{O}(N \cdot m)$}
    \IF{$\hat{p}_{I^*} > min\_prob$}
        \FOR{each \textit{(antecedent, consequent)} split $(A, B)$ of $I^*$}
            \STATE Estimate antecedent probability: $\hat{p}(A) = \frac{1}{N} \sum_{j=1}^N \mathbb{I}(\mathbf{t}_j[i] = 1 \ \forall i \in A)$. \hfill\textit{$\mathcal{O}(N \cdot (m-1))$}
            \STATE Compute confidence: $\text{conf}(A \rightarrow B) = \frac{\hat{p}_{I^*}}{\hat{p}(A)}$. \hfill\textit{$\mathcal{O}(1)$}
            \IF{$\text{conf}(A \rightarrow B) > min\_conf$}
                \STATE Add rule $A \rightarrow B$ to $\mathcal{R}$. \hfill\textit{$\mathcal{O}(1)$}
            \ENDIF
        \ENDFOR
    \ENDIF
    \STATE Optionally prune $\mathcal{I}$: remove itemsets with low $\text{UCB}_I(t)$ or $\hat{p}_I$ after sufficient evaluations. \hfill\textit{$\mathcal{O}(|\mathcal{I}|)$}
\ENDFOR
\STATE Return the rules set $\mathcal{R}$. \hfill\textit{$\mathcal{O}(1)$}
\end{algorithmic}
\end{algorithm}

The MAB-ARM algorithm takes the transaction data $\mathcal{T}$, thresholds min\_prob and min\_conf, and parameters $T_{\text{max}}$ (maximum number of itemset evaluations), $m_{min}$, and $m_{max}$ (range of itemset sizes to consider) as inputs. Unlike GPAR and BARM, MAB-ARM does not require feature vectors $\mathbf{x}_i$ - it directly uses the transaction data to estimate probabilities, and similar to classic methods, these probabilities are observed empirical frequencies. We first initialise a candidate set of itemsets $\mathcal{I}$ within the size range $[m_{\text{min}}, m_{\text{max}}]$, reducing the search space compared to evaluating all $2^M$ itemsets (Step 2). Also initialised are the counters for each itemset for tracking the number of evaluations and estimated probabilities (Step 3). Steps 4–18 is the main MAB loop, in which we iterate $T_{\text{max}}$ times, using the UCB strategy to select and evaluate itemsets. Step 5 computes UCB scores at current iteration $t$, prioritizing unexplored itemsets (via $\text{UCB}_I = \infty$ for $n_I = 0$) and balancing exploration and exploitation. Step 7 estimates the co-occurrence probability $\hat{p}_{I}$ for the selected itemset, costing $\mathcal{O}(N \cdot m)$ to scan $N$ transactions. Steps 8–13 generate rules if $\hat{p}_{I} > \text{min\_{prob}}$, similar to GPAR and BARM. Step 17 optionally prunes itemsets with low UCB scores or probabilities, further reducing the search space for next iteration.

\paragraph{Computational complexity} 
The initialization stage costs $\mathcal{O}(\sum_{m=m_{\text{min}}}^{m_{\text{max}}} \mathcal{C}_m^M)$, depending on the range of itemset sizes. The main loop takes $T_{\text{max}}$ iterations, with each iteration costing $\mathcal{O}(|\mathcal{I}|)$ for UCB computation and selection, and $\mathcal{O}(N \cdot m)$ for probability estimation, leading to a total of $\mathcal{O}(T_{\text{max}} \cdot (|\mathcal{I}| + N \cdot M))$. The overall cost $\mathcal{O}(T_{\text{max}} \cdot (\sum_{m=m_{\text{min}}}^{m_{\text{max}}} \mathcal{C}_m^M + N \cdot M)) \leq \mathcal{O}(T_{\text{max}} \cdot (2^M + N \cdot M))$ can be more efficient than GPAR’s $\mathcal{O}(2^M \cdot (M^3 + S M^2))$, and comparable to BARM’s $\mathcal{O}(2^M \cdot S \cdot M)$ for the rule mining phase, as $T_{\text{max}}$ can be set to a reasonable value to limit evaluations. Detailed complexity analysis can be found in Appendix.\ref{app:MAB-ARM_complexity}.

\subsection{Comparison with GPAR and BARM}

MAB-ARM uses a combinatorial multi-armed bandit approach to AR mining, treating itemsets as arms and using a UCB strategy to balance exploration and exploitation in evaluating co-occurrence probabilities. Compared to GPAR and BARM, MAB-ARM offers a more efficient search through the itemset space by limiting evaluations and pruning low probability itemsets, while its UCB-based uncertainty quantification helps prioritize uncertain or rare rules for further exploration, though it lacks the full posterior uncertainty representation of BARM or the dependency modeling of GPAR and BARM. This makes MAB-ARM particularly suitable for large datasets where computational efficiency and targeted exploration of rare rules are priorities, at the cost of potentially over-simplifying item relationships.

\paragraph{Modelling} 
GPAR uses a GP model to model latent variables $\mathbf{z}$, with item dependencies captured via a kernel-based covariance matrix $K$, while BARM models item presence probabilities $p_i$ with Beta priors, incorporating dependencies via a correlation function (optional) and performing Bayesian inference using analytical solution or with MCMC. MAB-ARM treats itemsets as arms in a combinatorial MAB problem, guided by UCB exploration and exploitation mechanism, it directly estimates the co-occurrence probabilities of combinatorial rules from transaction data (i.e. empirical frequencies $\hat{p}(I) = \frac{1}{N} \sum \mathbb{I}(\mathbf{t}_j[i] = 1 \forall i \in I)$) without modeling dependencies or using priors, instead of GPAR and BARM’s sampling from posterior distributions.

\paragraph{Notion of uncertainty} 
GPAR provides both distributional (via Monte Carlo sampling) and point (via averaging) estimates for co-occurrence probabilities, with uncertainty limited to sampling variability (no uncertainty notation with kernel hyper-parameters), while BARM offers a full posterior distribution over co-occurrence probabilities, capturing uncertainty in both $\{p_i\}$ and hyper-parameters (e.g. kernel length-scale $\ell$), which is advantageous for interpreting rare rules. BARM can also supply point estimate via averaging.
MAB-ARM quantifies uncertainty through UCB scores, which include an exploration term $\sqrt{\frac{2 \log t}{n_I}}$. This term reflects the uncertainty in the estimate $\hat{p}_I$: a larger term indicates higher uncertainty due to fewer evaluations ($n_I$). While this does not provide a full posterior distribution like BARM, it allows MAB-ARM to prioritize itemsets with uncertain estimates for further evaluation, helping the discovery of rare rules.

\paragraph{Computational efficiency} 
GPAR has high computational cost due to GP inference ($\mathcal{O}(M^3)$ for covariance operations) and Monte Carlo sampling for each itemset, while BARM is cheaper than GPAR if an analytical posterior is pursued; it can be expensive if MCMC sampling based inference or item dependency structure is exploited. MAB-ARM uses the MAB exploration-exploitation mechanism to generate rules, which is more efficient when limiting the number of itemset evaluations $T_{\text{max}}$. The UCB strategy prunes the search space, making it scalable for larger $M$. However, all these 3 methods suffer from an exponential term $2^M$ with their complexity, with MAB-ARM potentially suffer less by limiting the length of itemsets for generating rules.

\paragraph{Modelling item dependency} 
GPAR naturally takes into account item dependencies; BARM has the option to incorporate item dependencies in correcting its likelihood, posterior and rule evaluation. MAB-ARM assumes independence in probability estimation (when calculating the UCB scores), relying solely on empirical frequencies from the transaction data, which simplifies computation but may miss insights from item relationships.

\subsection{MAB-ARM experiment}

\subsubsection{Experimental setup}

The MAB-ARM experiment was conducted on the \textit{Synthetic 1} dataset, comprising 1000 transactions involving 10 items, identical to the dataset used in the GPAR and BARM experiments. Each transaction is a binary vector indicating the presence or absence of items, generated using a multivariate normal distribution with an RBF kernel, retaining items with positive latent values. The dataset also includes a feature matrix $X \in \mathbb{R}^{10 \times 10}$ sampled from a standard normal distribution, although MAB-ARM does not utilize this feature matrix, focusing solely on transaction data.

The MAB-ARM algorithm employs an upper confidence bound (UCB, details see Appendix.\ref{app:UCB}) strategy to balance exploration and exploitation in itemset evaluation. The settings are as follows: 
(1) Number of MAB iterations ($T_{\text{max}}$): set to 1000 evaluations to balance thorough exploration with computational efficiency. Increasing $T_{\text{max}}$ would enhance exploration at the cost of longer runtime, while decreasing it would accelerate computation but potentially miss frequent itemsets.
(2) Itemset size: set to sizes 2 to 10 to cover all possible itemset lengths (as the dataset contains 10 items). Larger itemsets exponentially increase the search space, pruning strategy may be used to improve search efficiency.
(3) Pruning strategy: itemsets with low UCB scores are pruned every 100 iterations, retaining those with fewer than 10 evaluations to ensure sufficient exploration of less frequent itemsets. The pruning frequency and threshold can be adjusted to trade off between computational speed and the completeness of itemset discovery.
(4) Support calculation: MAB-ARM uses empirical frequencies to compute support, defined as the fraction of transactions containing an itemset (count/1000). As in classic AR methods, the support calculated in this way directly reflects observed transaction frequencies, avoiding the underestimation seen in probabilistic methods such as GPAR and BARM due to posterior smoothing.
(5) Confidence threshold: a minimum confidence of 0.5 was applied to filter association rules, which is consistent with the GPAR and BARM experiments. Confidence is calculated as the ratio of the itemset’s support to the antecedent’s support, and lift is computed as the ratio of the itemset’s support to the product of the antecedent and consequent supports.

The experiment was conducted on the same computational environment as the GPAR and BARM experiments, using a laptop with specifications detailed in the GPAR experiment section. Performance metrics recorded include runtime (seconds), memory usage (MB), number of frequent itemsets, and number of association rules generated, evaluated across minimum support thresholds of 0.1, 0.2, 0.3, 0.4, and 0.5, and a fixed minimum confidence of 0.5.

\subsubsection{Results}

\begin{table}[H]
\centering
\scriptsize
\begin{threeparttable}
\caption{Performance metrics of \textbf{MAB-ARM} on \textit{Synthetic 1} dataset}
\label{tab:MABARM_metrics_Synthetic1}
\begin{tabular}{p{1.8cm} p{1.2cm} p{2cm} p{2.2cm} p{2cm}}
\toprule
Min support & Runtime (s) & Memory (MB) & Frequent itemsets & Rules \\
\midrule
0.1 & 1.6599 & 0.000 & 1000 & 50372 \\
0.2 & 1.6036 & 13.625 & 1000 & 50372 \\
0.3 & 1.6251 & 0.625 & 1000 & 50372 \\
0.4 & 1.6126 & 6.375 & 1000 & 50372 \\
0.5 & 0.9390 & 0.125 & 736 & 20948 \\
\bottomrule
\end{tabular}
\begin{tablenotes}
\item[1] Total number of transactions: 1000.
\item[2] Minimum confidence used: 0.5.
\item[3] MAB-ARM with $T_{\text{max}} = 1000$, itemset sizes 2 to 10.
\end{tablenotes}
\end{threeparttable}
\end{table}

The performance of the MAB-ARM method on the Synthetic 1 dataset is summarized in Table.\ref{tab:MABARM_metrics_Synthetic1}. We observe that, at a minimum support of 0.1, MAB-ARM required 1.6599 seconds with negligible memory usage (0.0 MB, as recorded by \textit{Psutil}), generating 1000 frequent itemsets and 50372 rules. Runtime remained relatively stable across support thresholds 0.1 to 0.4 (1.6036-1.6599 seconds), with a notable decrease to 0.9390 seconds at 0.5, reflecting fewer qualifying itemsets and rules due to stricter support criteria. Memory usage exhibited spikes at support thresholds 0.2 (13.625 MB) and 0.4 (6.375 MB), likely due to temporary storage of candidate itemsets during UCB evaluations, but was minimal otherwise (0.0-0.625 MB). The number of frequent itemsets was consistently 1000 for support thresholds 0.1 to 0.4, dropping to 736 at 0.5, indicating that MAB-ARM effectively identifies frequent itemsets until the support threshold becomes highly restrictive. Similarly, the rule count remained constant at 50372 for thresholds 0.1 to 0.4, decreasing to 20948 at 0.5, reflecting the reduced number of frequent itemsets available for rule generation. The stability in itemset and rule counts across lower support thresholds suggests that MAB-ARM’s UCB-based selection, combined with periodic pruning, efficiently identifies frequent itemsets up to a high support threshold, after which stricter criteria naturally reduce the output.

In a second run, we investigated the algorithm’s behavior by increasing the key parameter $T_{max}$ and saving an extra variable $n_I$ which records the number of times an itemset has been evaluated. The the results are shown in Table.\ref{tab:MABARM_metrics_Synthetic1_secondRun}. First, we noted that memory usage is highly sensitive to the variables stored for analysis, rendering variations up to 100 MB statistically insignificant for detecting performance differences. Second, increasing the number of iterations $T_{\text{max}}$ from 1000 to $2^{10} = 1024$ (to ensure comprehensive evaluation of all itemsets) while tracking the number of evaluations per itemset $n_I$, which increased memory demands. This adjustment resulted in a modest increase in frequent itemsets (from 1000 to 1013 at thresholds 0.1 to 0.4) and a substantial rise in rules generated (from 50,372 to 57,024 at thresholds 0.1 to 0.4), with runtime extending to 2.3471 seconds at 0.1 and memory usage peaking at 40.625 MB at 0.2. Plot of evaluation counts (Fig.\ref{fig:evaluation_counts}) reveals a relatively even distribution, with a small number of itemsets evaluated twice. 

\begin{table}[H]
\centering
\scriptsize
\begin{threeparttable}
\caption{Performance metrics of \textbf{MAB-ARM} on \textit{Synthetic 1} dataset (2nd run)}
\label{tab:MABARM_metrics_Synthetic1_secondRun}
\begin{tabular}{p{1.8cm} p{1.2cm} p{2cm} p{2.2cm} p{2cm}}
\toprule
Min support & Runtime (s) & Memory (MB) & Frequent itemsets & Rules \\
\midrule
0.1 & 2.3471 & 0.000  & 1013 & 57024 \\
0.2 & 2.0871 & 40.625 & 1013 & 57024 \\
0.3 & 1.8898 & 2.124  & 1013 & 57024 \\
0.4 & 1.9054 & 14.875 & 1013 & 57024 \\
0.5 & 1.0720 & 0.875  & 736  & 20970 \\
\bottomrule
\end{tabular}
\begin{tablenotes}
\item[1] Total number of transactions: 1000.
\item[2] Minimum confidence used: 0.5.
\item[3] MAB-ARM with $T_{\text{max}} = 1024$, itemset sizes 2 to 10.
\end{tablenotes}
\end{threeparttable}
\end{table}

\begin{figure}[H]
\centering
\includegraphics[width=0.5\textwidth]{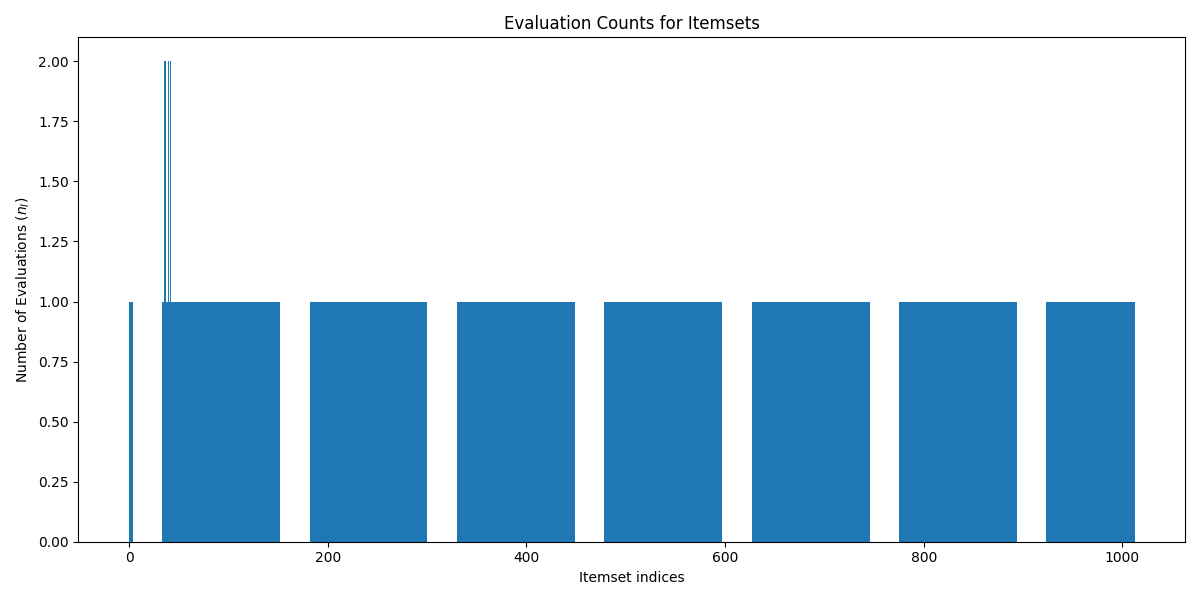}
\caption{Evaluation counts for itemsets with $T_{\text{max}} = 1024$.}
\label{fig:evaluation_counts}
\end{figure}

\paragraph{Comparison with other methods}

Comparing MAB-ARM with GPAR variants (RBF, shifted RBF, neural network, NTK), BARM, and classic ARM methods (Apriori, FP-Growth, Eclat) on the Synthetic 1 dataset (see Tables.\ref{tab:synthetic1_comparison_runtime},.\ref{tab:synthetic1_comparison_memory},.\ref{tab:synthetic1_comparison_frequentItemsets},.\ref{tab:synthetic1_comparison_noRules},.\ref{tab:BARM_metrics_Synthetic1}), MAB-ARM exhibits distinct performance characteristics:

\begin{enumerate}
    \item \textbf{Runtime}: at a minimum support of 0.1, MAB-ARM’s runtime (1.6599 seconds) is significantly faster than GPAR RBF/shifted RBF (10.179/8.930 seconds) and FP-Growth (2.967 seconds), but slower than GPAR with neural network kernel (0.133 seconds), GPAR with NTK kernel (0.449 seconds), Apriori (0.483 seconds), Eclat (0.556 seconds), and BARM (1.1821 seconds). The fixed $T_{\text{max}} = 1000$ evaluations ensures consistent exploration across thresholds, contributing to stable but higher runtimes compared to pruning-based methods such as Apriori or Eclat, which leverage efficient candidate generation and pruning.
    \item \textbf{Memory usage}: MAB-ARM is memory efficient, its memory usage is generally low (0.0-0.625 MB), except for notable peaks at support thresholds 0.2 (13.625 MB) and 0.4 (6.375 MB), possibly due to temporary storage of candidate itemsets during UCB score calculations. This performance is comparable to GPAR variants, Apriori, FP-Growth, and BARM, which maintain negligible memory usage (0.0-0.063 MB). Eclat, however, exhibits higher peaks (18.149 MB at support level 0.1, 7.969 MB at 0.4). The memory spikes in MAB-ARM suggest potential optimisation opportunities, such as more aggressive pruning or streamlined candidate management.
    \item \textbf{Number of frequent itemsets}: MAB-ARM generates 1000 frequent itemsets at support thresholds 0.1 to 0.4, dropping to 736 at 0.5, aligning closely with Apriori/FP-Growth (1023 itemsets at 0.1--0.4, 746 at 0.5) and GPAR RBF/shifted RBF (1013 at 0.1, decreasing to 1/2 at 0.5). Eclat consistently produces the highest number of itemsets (1033 across all thresholds), while GPAR with neural network kernel (140 at 0.1, 0 at 0.4-0.5) and NTK (305 at 0.1, 0-2 at 0.4-0.5) generate significantly fewer. BARM produces fewer itemsets (876 at 0.1, 44 at 0.5) than MAB-ARM, reflecting its conservative posterior-based estimation. MAB-ARM’s high itemset count results from its UCB-driven exploration, which evaluates a broad range of itemsets of sizes 2 to 10.
    \item \textbf{Number of rules}: MAB-ARM generates 50372 rules at support thresholds 0.1 to 0.4, a bit fewer than the counts of Apriori/FP-Growth/Eclat (57002 rules at 0.1-0.4) and GPAR RBF/shifted RBF (54036 at 0.1). At a support of 0.5, MAB-ARM produces 20948 rules, matching Apriori/FP-Growth (20948) but falling below Eclat (57002). GPAR with neural network kernel (322 at 0.1, 0 at 0.4-0.5) and NTK kernel (1485 at 0.1, 0-4 at 0.4-0.5) generate significantly fewer rules, while BARM produces fewer still (12029 at 0.1, 88 at 0.5). MAB-ARM’s robust rule count is driven by its ability to identify frequent itemsets efficiently across itemset sizes 2 to 10.
\end{enumerate}

\paragraph{Rule quality}

Table.\ref{tab:MABARM_mined_rules_rankedBySupport_minSupport01_Synthetic1} presents the top 10 MAB-ARM rules at a minimum support of 0.1, ranked by \textit{support}. High-support rules, such as \(\text{item}_7 \to \text{item}_5\) (support 0.662, confidence 0.9220, lift 1.2700) and \(\text{item}_5 \to \text{item}_7\) (support 0.662, confidence 0.9118, lift 1.2700), indicate frequent co-occurrences with moderate lift, suggesting meaningful but not highly specific associations. These rules align closely \footnote{Table.\ref{tab:MABARM_mined_rules_rankedBySupport_minSupport01_Synthetic1} is indeed the same as Table.\ref{tab:Apriori_mined_rules_rankedBySupport_minSupport01_Synthetic1} and Table.\ref{tab:FP-Growth_mined_rules_rankedBySupport_minSupport01_Synthetic1}.} with Apriori/FP-Growth’s top rules (e.g. \(\text{item}_7 \to \text{item}_5\), support 0.6620, confidence 0.9220, lift 1.2700, see Table.\ref{tab:Apriori_mined_rules_rankedBySupport_minSupport01_Synthetic1} and Table.\ref{tab:FP-Growth_mined_rules_rankedBySupport_minSupport01_Synthetic1}) but differ from Eclat (Table.\ref{tab:Eclat_mined_rules_rankedBySupport_minSupport01_Synthetic1}) and GPAR RBF/shifted RBF’s focus on higher-lift rules (e.g. \(\text{item}_5 \to \text{item}_8, \text{item}_6\), support 0.5200, confidence 0.9123, lift 2.3392, see Table.\ref{tab:RBF_GPAR_mined_rules_rankedBySupport_minSupport01_Synthetic1} and Table.\ref{tab:shifted_RBF_GPAR_mined_rules_rankedBySupport_minSupport01_Synthetic1}). BARM’s top rules, such as \(\text{item}_2 \to \text{item}_9\) (support 0.5356, confidence 0.7308, lift 1.0000), exhibit near-trivial lift due to its independence assumption, highlighting MAB-ARM’s advantage in capturing empirical dependencies.

\begin{table}[H]
\centering
\scriptsize
\begin{threeparttable}
\caption{Top 10 association rules mined by \textbf{MAB-ARM} (ranked in descending order by \textit{\color{red}{support}})}
\label{tab:MABARM_mined_rules_rankedBySupport_minSupport01_Synthetic1}
\begin{tabular}{r >{\raggedright\arraybackslash}p{2cm} >{\raggedright\arraybackslash}p{2cm} p{0.6cm} p{0.6cm} p{0.6cm}}
\toprule
\# & Antecedent & Consequent & \color{red}{Supp} & Conf & Lift \\
\midrule
1 & item\_7 & item\_5 & 0.662 & 0.9220 & 1.2700 \\
2 & item\_5 & item\_7 & 0.662 & 0.9118 & 1.2700 \\
3 & item\_7 & item\_8 & 0.662 & 0.9220 & 1.2877 \\
4 & item\_8 & item\_7 & 0.662 & 0.9246 & 1.2877 \\
5 & item\_0 & item\_5 & 0.660 & 0.9141 & 1.2591 \\
6 & item\_5 & item\_0 & 0.660 & 0.9091 & 1.2591 \\
7 & item\_0 & item\_8 & 0.656 & 0.9086 & 1.2690 \\
8 & item\_8 & item\_0 & 0.656 & 0.9162 & 1.2690 \\
9 & item\_0 & item\_7 & 0.655 & 0.9072 & 1.2635 \\
10 & item\_7 & item\_0 & 0.655 & 0.9123 & 1.2635 \\
\bottomrule
\end{tabular}
\begin{tablenotes}
\item[1] Total number of transactions: 1000.
\item[2] Minimum support used in mining: 0.1.
\item[3] Minimum confidence used: 0.5.
\item[4] Items correspond to synthetic dataset indices.
\end{tablenotes}
\end{threeparttable}
\end{table}

Table.\ref{tab:MABARM_mined_rules_rankedByConfidence_minSupport01_Synthetic1} lists the top 10 rules ranked by \textit{confidence}, with rules such as \(\text{item}_1, \text{item}_4, \text{item}_5, \text{item}_8 \to \text{item}_7\) (confidence 0.9981, support 0.529, lift 1.3901) achieving near-perfect confidence. These are comparable to Apriori/FP-Growth’s high-confidence rules (e.g. confidence 0.9981 for \(\text{item}_4, \text{item}_5, \text{item}_1, \text{item}_8 \to \text{item}_7\), see Table.\ref{tab:Apriori_mined_rules_rankedByConfidence_minSupport01_Synthetic1} and Table.\ref{tab:FP-Growth_mined_rules_rankedByConfidence_minSupport01_Synthetic1}), and significantly exceed BARM’s maximum confidence (0.7329 for \(\text{item}_1, \text{item}_4, \text{item}_6, \text{item}_9 \to \text{item}_2\), see Table.\ref{tab:BARM_mined_rules_rankedByConfidence_minSupport01_Synthetic1}). GPAR with NTK achieves highest confidence (5.5 in Table.\ref{tab:NTK_GPAR_mined_rules_rankedByConfidence_minSupport01_Synthetic1}), GPAR with neural network kernel (3.6667 in Table.\ref{tab:nn_GPAR_mined_rules_rankedByConfidence_minSupport01_Synthetic1}), GPAR RBF/shifted RBF (e.g. 2.2727 for \(\text{item}_1, \text{item}_2, \text{item}_3, \text{item}_6, \text{item}_7, \text{item}_9 \to \text{item}_0, \text{item}_4\), see Table.\ref{tab:RBF_GPAR_analysis_minSupport01_Synthetic1} and Table.\ref{tab:shifted_RBF_GPAR_mined_rules_rankedByConfidence_minSupport01_Synthetic1}) also achieve high confidence, reflecting GPAR's ability to model complex dependencies via Gaussian processes.

\begin{table}[H]
\centering
\scriptsize
\begin{threeparttable}
\caption{Top 10 association rules mined by \textbf{MAB-ARM} (ranked in descending order by \textit{\color{red}{confidence}})}
\label{tab:MABARM_mined_rules_rankedByConfidence_minSupport01_Synthetic1}
\begin{tabular}{r >{\raggedright\arraybackslash}p{5cm} >{\raggedright\arraybackslash}p{1.2cm} p{0.6cm} p{0.6cm} p{0.6cm}}
\toprule
\# & Antecedent & Consequent & Supp & \color{red}{Conf} & Lift \\
\midrule
1 & item\_1, item\_4, item\_5, item\_8 & item\_7 & 0.529 & 0.9981 & 1.3901 \\
2 & item\_1, item\_5, item\_8, item\_9 & item\_7 & 0.528 & 0.9981 & 1.3901 \\
3 & item\_0, item\_1, item\_2, item\_5, item\_8 & item\_7 & 0.526 & 0.9981 & 1.3901 \\
4 & item\_0, item\_1, item\_4, item\_5, item\_8 & item\_7 & 0.520 & 0.9981 & 1.3901 \\
5 & item\_1, item\_2, item\_3, item\_5, item\_8 & item\_7 & 0.519 & 0.9981 & 1.3901 \\
6 & item\_0, item\_1, item\_5, item\_8, item\_9 & item\_7 & 0.518 & 0.9981 & 1.3901 \\
7 & item\_0, item\_1, item\_2, item\_5, item\_6, item\_8 & item\_7 & 0.516 & 0.9981 & 1.3901 \\
8 & item\_1, item\_5, item\_6, item\_8, item\_9 & item\_7 & 0.514 & 0.9981 & 1.3901 \\
9 & item\_1, item\_2, item\_3, item\_5, item\_6, item\_8 & item\_7 & 0.514 & 0.9981 & 1.3901 \\
10 & item\_1, item\_3, item\_5, item\_8, item\_9 & item\_7 & 0.513 & 0.9981 & 1.3900 \\
\bottomrule
\end{tabular}
\begin{tablenotes}
\item[1] Total number of transactions: 1000.
\item[2] Minimum support used in mining: 0.1.
\item[3] Minimum confidence used: 0.5.
\item[4] Items correspond to synthetic dataset indices.
\end{tablenotes}
\end{threeparttable}
\end{table}

Table.\ref{tab:MABARM_mined_rules_rankedByLift_minSupport01_Synthetic1} shows the top 10 rules ranked by \textit{lift}, with rules such as \(\text{item}_2, \text{item}_3, \text{item}_4, \text{item}_5 \to \text{item}_0, \text{item}_1, \text{item}_6, \text{item}_9\) (lift 1.7612, support 0.468, confidence 0.9123) indicating strong associations. These lift values are lower than GPAR RBF/shifted RBF’s maximum lift (12.1212 \(\text{item}_0, \text{item}_8, \text{item}_6, \text{item}_9 \to \text{item}_1, \text{item}_2, \text{item}_3, \text{item}_4, \text{item}_7\) in Table.\ref{tab:RBF_GPAR_mined_rules_rankedByLift_minSupport01_Synthetic1} and Table.\ref{tab:shifted_RBF_GPAR_mined_rules_rankedByLift_minSupport01_Synthetic1}), GPAR NTK’s (8.5470 for \(\text{item}_8, \text{item}_1, \text{item}_4 \to \text{item}_2, \text{item}_5, \text{item}_6\) in Table.\ref{tab:NTK_GPAR_mined_rules_rankedByLift_minSupport01_Synthetic1}), and GPAR neural net's (6.4103 in Table.\ref{tab:nn_GPAR_mined_rules_rankedByLift_minSupport01_Synthetic1}), but they significantly exceed BARM’s near-trivial lift values (e.g. 1.0001 for \(\text{item}_0, \text{item}_2, \text{item}_3, \text{item}_6 \to \text{item}_4, \text{item}_7\) in Table.\ref{tab:BARM_mined_rules_rankedByLift_minSupport01_Synthetic1}). Apriori/FP-Growth’s top lift (1.7982 for \(\text{item}_6, \text{item}_0, \text{item}_1, \text{item}_9, \text{item}_7 \to \text{item}_2, \text{item}_3, \text{item}_8, \text{item}_5, \text{item}_4\) in Table.\ref{tab:Apriori_mined_rules_rankedByLift_minSupport01_Synthetic1}) is slightly higher than MAB-ARM’s, reflecting their exhaustive search over all possible itemsets.

\begin{table}[H]
\centering
\scriptsize
\begin{threeparttable}
\caption{Top 10 association rules mined by \textbf{MAB-ARM} (ranked in descending order by \textit{\color{red}{lift}})}
\label{tab:MABARM_mined_rules_rankedByLift_minSupport01_Synthetic1}
\begin{tabular}{r >{\raggedright\arraybackslash}p{4cm} >{\raggedright\arraybackslash}p{4cm} p{0.6cm} p{0.6cm} p{0.6cm}}
\toprule
\# & Antecedent & Consequent & Supp & Conf & \color{red}{Lift} \\
\midrule
1 & item\_2, item\_3, item\_4, item\_5 & item\_0, item\_1, item\_6, item\_9 & 0.468 & 0.9123 & 1.7612 \\
2 & item\_0, item\_1, item\_6, item\_9 & item\_2, item\_3, item\_4, item\_5 & 0.468 & 0.9035 & 1.7612 \\
3 & item\_4, item\_5, item\_6, item\_9 & item\_0, item\_1, item\_2, item\_3 & 0.468 & 0.9213 & 1.7581 \\
4 & item\_0, item\_1, item\_2, item\_3 & item\_4, item\_5, item\_6, item\_9 & 0.468 & 0.8931 & 1.7581 \\
5 & item\_4, item\_6, item\_7, item\_9 & item\_0, item\_1, item\_2, item\_3 & 0.469 & 0.9196 & 1.7550 \\
6 & item\_0, item\_1, item\_2, item\_3 & item\_4, item\_6, item\_7, item\_9 & 0.469 & 0.8950 & 1.7550 \\
7 & item\_2, item\_3, item\_4, item\_8 & item\_0, item\_1, item\_6, item\_9 & 0.469 & 0.9089 & 1.7547 \\
8 & item\_0, item\_1, item\_6, item\_9 & item\_2, item\_3, item\_4, item\_8 & 0.469 & 0.9054 & 1.7547 \\
9 & item\_0, item\_2, item\_3, item\_4 & item\_1, item\_5, item\_6, item\_9 & 0.468 & 0.9159 & 1.7511 \\
10 & item\_1, item\_5, item\_6, item\_9 & item\_0, item\_2, item\_3, item\_4 & 0.468 & 0.8948 & 1.7511 \\
\bottomrule
\end{tabular}
\begin{tablenotes}
\item[1] Total number of transactions: 1000.
\item[2] Minimum support used in mining: 0.1.
\item[3] Minimum confidence used: 0.5.
\item[4] Items correspond to synthetic dataset indices.
\end{tablenotes}
\end{threeparttable}
\end{table}

Table.\ref{tab:MABARM_analysis_minSupport01_Synthetic1} lists the top 10 rules with \(\text{item}_2\) as the consequent, ranked by \textit{confidence}, at a minimum support of 0.1. The top rule, \([1, 3, 4, 5, 6, 9] \to [2]\) (confidence 0.9855, support 0.476, support count 476), exemplifies MAB-ARM’s ability to identify highly confident rules with substantial support. MAB-ARM’s reliance on empirical frequencies for \textit{support} calculation ensures that the support of each rule precisely matches the observed transaction frequency (count/1000). This eliminates the discrepancy observed in probabilistic methods such as GPAR and BARM, as previously illustrated in Fig.\ref{fig:support_vs_count}. BARM underestimates support. For example, the top rule with \(\text{item}_2\) as the consequent (Rule \([1, 4, 6, 9] \to [2]\) in Table.\ref{tab:BARM_analysis_minSupport01_Synthetic1}) has a support of 0.1916 but a count/1000 of 0.496, indicating significant underestimation due to Bayesian smoothing. All GPAR variants also show similar discrepancies: GPAR RBF/shifted RBF reports a support of 0.3600 \textit{vs} count/1000 = 0.488 (with confidence 2.1176) for Rule \([0, 1, 3, 4, 8] \to [2, 7]\) in Table.\ref{tab:RBF_GPAR_analysis_minSupport01_Synthetic1} and Table.\ref{tab:shifted_RBF_GPAR_analysis_minSupport01_Synthetic1}. GPAR with neural network kernel shows support 0.1000 \textit{vs} count/1000 = 0.542 for the top Rule \([8, 9, 6] \to [2]\) in Table.\ref{tab:nn_GPAR_analysis_minSupport01_Synthetic1}. GPAR with NTK kernel has support 0.1000 \textit{vs} count/1000 = 0.496 for Rule \([8, 1, 4, 6] \to [2, 5]\) in Table.\ref{tab:NTK_GPAR_analysis_minSupport01_Synthetic1}. MAB-ARM’s empirical frequency-based support aligns closely with Apriori/FP-Growth which also use empirical frequencies (e.g. Apriori: support 0.4760, count 476 for Rule \([6, 1, 3, 8, 9, 7, 4] \to [2]\) in Table.\ref{tab:Apriori_analysis_minSupport01_Synthetic1}), this evidenced accuracy makes them reliable for applications requiring precise frequency-based insights, such as market basket analysis. Eclat, however, its supports overestimate the corresponding empirical frequencies, as observed in Table.\ref{tab:Eclat_analysis_minSupport01_Synthetic1}.

\begin{table}[H]
\centering
\scriptsize
\begin{threeparttable}
\caption{Top 10 association rules mined by \textbf{MAB-ARM} with item 2 on RHS (min support = 0.1, rules ranked by \color{red}{confidence})}
\label{tab:MABARM_analysis_minSupport01_Synthetic1}
\begin{tabular}{r >{\raggedright\arraybackslash}p{2.9cm} p{0.6cm} p{0.5cm} p{0.6cm} >{\raggedright\arraybackslash}p{1.8cm} >{\raggedright\arraybackslash}p{5cm}}
\toprule
\# & Rule & \color{red}{Conf} & Supp & Supp Count & Example Trans.\ Indices & Example Transactions \\
\midrule
1 & [1, 3, 4, 5, 6, 9] $\to$ [2] & 0.9855 & 0.476 & 476 & [1, 2, 3] & [[0, 1, 2, 3, 4, 5, 6, 7, 8, 9], [0, 1, 2, 3, 4, 5, 6, 7, 8, 9], [0, 1, 2, 3, 4, 5, 6, 7, 8, 9]] \\
2 & [1, 3, 4, 6, 7, 8, 9] $\to$ [2] & 0.9855 & 0.476 & 476 & [1, 2, 3] & [[0, 1, 2, 3, 4, 5, 6, 7, 8, 9], [0, 1, 2, 3, 4, 5, 6, 7, 8, 9], [0, 1, 2, 3, 4, 5, 6, 7, 8, 9]] \\
3 & [1, 3, 4, 5, 6, 7, 9] $\to$ [2] & 0.9854 & 0.474 & 474 & [1, 2, 3] & [[0, 1, 2, 3, 4, 5, 6, 7, 8, 9], [0, 1, 2, 3, 4, 5, 6, 7, 8, 9], [0, 1, 2, 3, 4, 5, 6, 7, 8, 9]] \\
4 & [0, 1, 3, 4, 6, 9] $\to$ [2] & 0.9854 & 0.473 & 473 & [1, 2, 3] & [[0, 1, 2, 3, 4, 5, 6, 7, 8, 9], [0, 1, 2, 3, 4, 5, 6, 7, 8, 9], [0, 1, 2, 3, 4, 5, 6, 7, 8, 9]] \\
5 & [1, 3, 4, 5, 6, 8, 9] $\to$ [2] & 0.9854 & 0.472 & 472 & [1, 2, 3] & [[0, 1, 2, 3, 4, 5, 6, 7, 8, 9], [0, 1, 2, 3, 4, 5, 6, 7, 8, 9], [0, 1, 2, 3, 4, 5, 6, 7, 8, 9]] \\
6 & [0, 1, 3, 4, 6, 7, 9] $\to$ [2] & 0.9853 & 0.469 & 469 & [1, 2, 3] & [[0, 1, 2, 3, 4, 5, 6, 7, 8, 9], [0, 1, 2, 3, 4, 5, 6, 7, 8, 9], [0, 1, 2, 3, 4, 5, 6, 7, 8, 9]] \\
7 & [0, 1, 3, 4, 6, 8, 9] $\to$ [2] & 0.9853 & 0.469 & 469 & [1, 2, 3] & [[0, 1, 2, 3, 4, 5, 6, 7, 8, 9], [0, 1, 2, 3, 4, 5, 6, 7, 8, 9], [0, 1, 2, 3, 4, 5, 6, 7, 8, 9]] \\
8 & [0, 1, 3, 4, 5, 6, 9] $\to$ [2] & 0.9853 & 0.468 & 468 & [1, 1, 2, 3] & [[0, 1, 2, 3, 4, 5, 6, 7, 8, 9], [0, 1, 2, 3, 4, 5, 6, 7, 8, 9], [0, 1, 2, 3, 4, 5, 6, 7, 8, 9]] \\
9 & [3, 4, 5, 6, 9] $\to$ [2] & 0.9839 & 0.488 & 488 & [1, 2, 3] & [[0, 1, 2, 3, 4, 5, 6, 7, 8, 9], [0, 1, 2, 3, 4, 5, 6, 7, 8, 9], [0, 1, 2, 3, 4, 5, 6, 7, 8, 9]] \\
10 & [3, 4, 6, 7, 8, 9] $\to$ [2] & 0.9838 & 0.487 & 487 & [1, 2, 3] & [[0, 1, 2, 3, 4, 5, 6, 7, 8, 9], [0, 1, 2, 3, 4, 5, 6, 7, 8, 9], [0, 1, 2, 3, 4, 5, 6, 7, 8, 9]] \\
\bottomrule
\end{tabular}
\begin{tablenotes}
\item[1] Total number of transactions: 1000.
\item[2] Minimum support used in mining: 0.1.
\item[3] Minimum confidence used: 0.5.
\item[4] Items correspond to synthetic dataset indices.
\end{tablenotes}
\end{threeparttable}
\end{table}

\paragraph{Discussion}

MAB-ARM offers a balanced approach to association rule mining, leveraging UCB to prioritize itemset evaluations, achieving competitive runtime and memory efficiency compared to classic methods while generating a high number of frequent itemsets and rules. Its empirical frequency-based support calculation ensures accurate support estimation, avoiding the underestimation inherent in probabilistic methods like BARM and GPAR. This makes MAB-ARM particularly suitable for applications where precise transaction frequencies are needed. The algorithm’s performance is robust across support thresholds, with stable itemset and rule counts at lower thresholds (0.1-0.4) due to effective UCB-driven exploration and periodic pruning.

Compared to GPAR, MAB-ARM is significantly faster than the RBF/shifted RBF variants (8.930-10.179 seconds at 0.1) but slower than the neural network (0.133 seconds) and NTK (0.449 seconds) kernels, offering rule counts (50372 at 0.1) comparable to RBF/shifted RBF (54036). Against classic methods, MAB-ARM’s runtime is higher than Apriori (0.483 seconds) and Eclat (0.556 seconds) but competitive with FP-Growth (2.967 seconds), while its rule count is slightly lower than Apriori/FP-Growth/Eclat (57002), despite enumerating itemsets of sizes 2 to 10. BARM, with a runtime of 1.1821 seconds and fewer rules (12029 at 0.1), is outperformed by MAB-ARM in both rule quantity and quality, particularly in lift, due to BARM’s independence assumption.

The fixed $T_{\text{max}} = 1000$ ensures computational feasibility for medium-sized datasets, but may limit exploration of very large itemsets compared to Apriori/FP-Growth/Eclat or GPAR RBF/shifted RBF’s high-lift rules. Future improvements include adaptive $T_{\text{max}}$ based on dataset characteristics, integration of feature-based priors to leverage item similarities, or optimized pruning to further reduce memory usage, potentially bridging the gap with GPAR’s probabilistic modeling while maintaining MAB-ARM’s empirical accuracy and efficiency.

\subsection{An improved MAB-ARM method}

The naive MAB-ARM algorithm (Algo.\ref{algo:MAB-ARM}) has an inefficiencies in its rule discovery process: once an itemset’s probability is evaluated based on empirical frequency, this value remains static in subsequent iterations, despite its UCB score decreasing over time to encourage exploration of new rules. If its empirical frequency (evaluated probability) is high, it retains a high UCB score in later iterations (with marginal decrease), leading repeated evaluations in Steps 6-16, which encompass UCB score computation to rule generation,  and fail to update its probability (because its empirical frequency is fixed based on the transaction data), thereby wasting computational resources. This repetitive assessment of previously evaluated itemsets and their subsets, without altering the probabilities of derived sub-itemset rules, undermines the algorithm’s performance, diverting effort from exploring potentially more promising itemsets.

To address this inefficiency, we propose an improved MAB-ARM algorithm (Algo.\ref{algo:EMAB-ARM}) integrating the Apriori principle, which states that if an itemset is frequent, all its supersets (mother itemsets) are at least as frequent. Upon evaluating a current itemset, the algorithm updates the probabilities of all $2^M$ possible itemsets, adjusting any superset’s probability to match that of the current itemset if it is lower, while leaving higher probabilities unchanged. This associative update  lifts up the UCB scores of underestimated itemsets, enabling the algorithm to prioritize those with greater frequency potential and reducing redundant evaluations by leveraging hierarchical itemset relationships. Applied to the same \textit{Synthetic 1} dataset, the resulting runtimes and memory usages are comparable to (marginally smaller than) those in Table.\ref{tab:MABARM_metrics_Synthetic1_secondRun} with evidence not being statistically significant due to the small dataset used. The frequencies of visits for each itemset, as shown in Fig.\ref{fig:evaluation_counts_improved}, however, show some interesting pattern as compared to Fig.\ref{fig:evaluation_counts}, as a result of the impact of the associative probability update. The improved algorithm may help redistribute evaluation efforts and reduce the bias induced by the static probability assignments and aggressive pruning within limited number of iterations.

\begin{algorithm}[H]
\footnotesize
\caption{EMAB-ARM: enhanced multi-armed bandit association rule mining}
\label{algo:EMAB-ARM}
\textbf{Input:} a set of items $\{1, 2, \dots, M\}$; a set of transactions $\mathcal{T} = \{\mathbf{t}_1, \mathbf{t}_2, \dots, \mathbf{t}_N\}$, where each $\mathbf{t}_j \in \{0, 1\}^M$; a minimum probability threshold $min\_prob$; a minimum confidence threshold $min\_conf$; maximum number of itemset evaluations $T_{\text{max}}$; minimum itemset size $m_{min}$; maximum itemset size $m_{max}$. \\
\textbf{Output:} a set of association rules $\mathcal{R}$ where each rule $r \in \mathcal{R}$ is of the form $A \rightarrow B$ with $A, B \subseteq \{1, 2, \dots, M\}$, satisfying the probability and confidence thresholds.

\vspace{1mm}\hrule\vspace{1mm}

\begin{algorithmic}[1]
\STATE Initialize an empty set $\mathcal{R} = \emptyset$ to store association rules. \hfill\textit{$\mathcal{O}(1)$}
\STATE Initialize a set of candidate itemsets $\mathcal{I} = \emptyset$. For $m = m_{\text{min}}$ to $m_{max}$, add all size $m$ itemsets $I_m = \binom{\{1, 2, \dots, M\}}{m}$ to $\mathcal{I}$. \hfill\textit{$\mathcal{O}(\sum_{m=m_{\text{min}}}^{m_{\text{max}}} \mathcal{C}_m^M)$}
\STATE Initialize counters: for each $I \in \mathcal{I}$, set $n_I = 0$ (number of evaluations) and $\hat{p}_I = 0$ (estimated probability). \hfill\textit{$\mathcal{O}(|\mathcal{I}|)$}
\FOR{$t = 1$ to $T_{\text{max}}$}
    \STATE Compute UCB scores for each $I \in \mathcal{I}$: $\text{UCB}_I(t) = \hat{p}_I + \sqrt{\frac{2 \log t}{n_I}}$ (if $n_I = 0$, set $\text{UCB}_I(t) = \infty$). \hfill\textit{$\mathcal{O}(|\mathcal{I}|)$}
    \STATE Select itemset $I^* = \arg\max_{I \in \mathcal{I}} \text{UCB}_I(t)$. \hfill\textit{$\mathcal{O}(|\mathcal{I}|)$}
    \STATE Update $n_{I^*} = n_{I^*} + 1$, and estimate $\hat{p}_{I^*} = \frac{1}{N} \sum_{j=1}^N \mathbb{I}(\mathbf{t}_j[i] = 1 \ \forall i \in I^*)$. \hfill\textit{$\mathcal{O}(N \cdot m)$}
    \STATE \textcolor{red}{Associative probability update: for each $I \in \mathcal{I}$ where $I^* \subseteq I$ and $|I| > |I^*|$, if $\hat{p}_I < \hat{p}_{I^*}$, set $\hat{p}_I = \hat{p}_{I^*}$}. \hfill\textit{$\mathcal{O}(2^M)$}
    \IF{$\hat{p}_{I^*} > min\_prob$}
        \FOR{each \textit{(antecedent, consequent)} split $(A, B)$ of $I^*$}
            \STATE Estimate antecedent probability: $\hat{p}(A) = \frac{1}{N} \sum_{j=1}^N \mathbb{I}(\mathbf{t}_j[i] = 1 \ \forall i \in A)$. \hfill\textit{$\mathcal{O}(N \cdot (m-1))$}
            \STATE Compute confidence: $\text{conf}(A \rightarrow B) = \frac{\hat{p}_{I^*}}{\hat{p}(A)}$. \hfill\textit{$\mathcal{O}(1)$}
            \IF{$\text{conf}(A \rightarrow B) > min\_conf$}
                \STATE Add rule $A \rightarrow B$ to $\mathcal{R}$. \hfill\textit{$\mathcal{O}(1)$}
            \ENDIF
        \ENDFOR
    \ENDIF
    \STATE Optionally prune $\mathcal{I}$: remove itemsets with low $\text{UCB}_I(t)$ or $\hat{p}_I$ after sufficient evaluations. \hfill\textit{$\mathcal{O}(|\mathcal{I}|)$}
\ENDFOR
\STATE Return the rules set $\mathcal{R}$. \hfill\textit{$\mathcal{O}(1)$}
\end{algorithmic}
\end{algorithm}

\begin{figure}[H]
\centering
\includegraphics[width=0.5\textwidth]{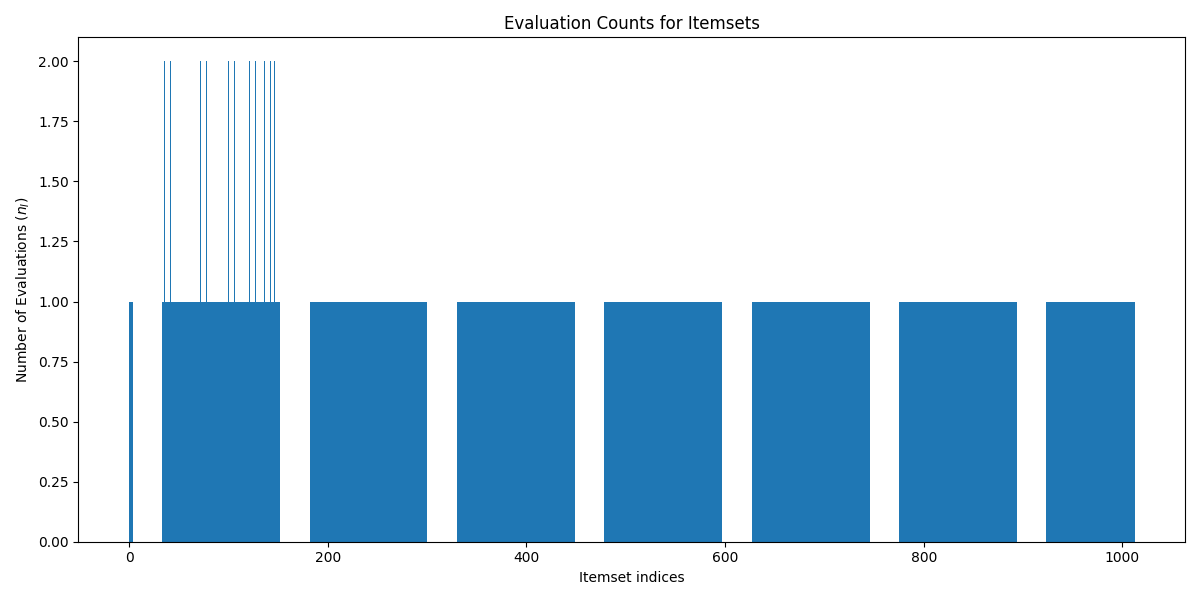}
\caption{Evaluation counts for itemsets with $T_{\text{max}} = 1024$ (improved MAB-ARM).}
\label{fig:evaluation_counts_improved}
\end{figure}

\subsection{An Monte Carlo tree search (MCTS) method for association rule mining} \label{subsec:MCTS-ARM}

MCTS is a heuristic search algorithm widely used in decision-making problems, particularly in game-playing AI such as \textit{AlphaGo} \cite{Silver2016AlphaGo}. In the context of association rule mining, MCTS can be adapted to search for itemsets that generate high-quality association rules, treating the itemset space as a tree where nodes represent itemsets. We formulate the MCTS based ARM framework as follows:

\paragraph{Tree structure} Each node in the tree represents an itemset, starting with an empty itemset at the root. Child nodes are generated by adding one item to the parent itemset, forming supersets. For example, from the empty set {}, children include {A}, {B}, {C}, and from {A}, children include {A,B}, {A,C}, etc. Edges represent the action of adding a specific item to the parent itemset. The depth of a node corresponds to the size of the itemset (number of items). A maximum depth can be set to limit exploration to itemsets of manageable size, based on dataset characteristics or computational constraints.

\paragraph{The MCTS -based search procedure} MCTS operates through four phases: selection, expansion, simulation, and backpropagation. In the \textit{selection phase}, MCTS employs UCB to choose the most promising child node to explore \footnote{UCT (upper confidence bound 1, UCB1, applied to trees) is utilized in MCTS to balance exploitation and exploration during child node selection, addressing the challenge of prioritizing moves with high average win rates while ensuring under-explored moves are evaluated. The UCT formula, \(\frac{w_i}{n_i} + c \sqrt{\frac{\ln N_i}{n_i}}\), integrates exploitation through the first term (high for moves with a high win ratio) and exploration through the second term (high for moves with few simulations), with \(w_i\), \(n_i\), \(N_i\), and \(c\) representing wins, simulations for the node, total parent simulations, and an empirically tuned exploration parameter (typically \(\sqrt{2}\)) respectively. This approach, derived from UCB1 by \cite{Auer2002FiniteTimeMAB} and adapted by \cite{Kocsis2006MABMC}, enhances decision-making efficiency in multi-stage models such as Markov decision processes (MDP) \cite{Chang2005MDP}, making it suitable for navigating the complex itemset space in association rule mining.}:
\[
\text{UCB} = \bar{r} + c \sqrt{\frac{\ln N}{n}}
\]
where \(\bar{r}\) is the average reward of the node, based on past evaluations. \(N\) is the total number of visits to the parent node. \(n\) is the number of visits to the child node. \(c\) is an exploration parameter (e.g. 0.5–2.0, typically set to \(\sqrt{2}\) or tuned empirically, which balances exploration and exploitation. Similar to the MAB-ARM procedure, we start from the tree root, recursively select the child node with the highest UCB value until a leaf node (unexpanded or terminal) is reached. This ensures the algorithm balances exploiting itemsets with high average rewards and exploring less-visited itemsets. 

In the \textit{expansion phase}, if the selected leaf node has not been fully expanded, we create new child nodes by adding valid items to the itemset, constrained by e.g. maximum itemset size. We can also apply pruning to avoid expanding unpromising itemsets. For example, we can use the anti-monotonicity property of support: if an itemset’s support is below the minimum threshold, do not expand it further, as its supersets will also be infrequent. This mirrors traditional association rule mining strategies such as Apriori.

In the \textit{simulation phase}, MCTS performs random rollout. From the newly expanded node, we perform a simulation by randomly generating a sequence of item additions to create a larger itemset, up to a predefined depth or until a terminal condition. Then we evaluate the simulated itemset by computing its potential to generate association rules. The potential can be quantified by a reward function which can be defined as support-based:
\[
\text{Reward} = 
\begin{cases}
\text{support}(I), & \text{if } \text{support}(I) \geq \text{min\_support} \\
\delta \in \{0, -1\}, & \text{otherwise}
\end{cases}
\]
or we can generate candidate rules (e.g. \(X \rightarrow Y\) where \(X \cup Y\) is the itemset) and compute the maximum confidence:
\[
Reward = \max(\text{confidence}(X \rightarrow Y)) \text{ for all valid } Y
\]
or the number of rules with confidence above a threshold
\[
\text{Reward} = \text{no. of rules with } \text{confidence} \geq \text{min\_conf}
\]
or just combine support and confidence to prioritize frequent and reliable rules: 
\[
\text{reward = support × confidence}
\]
For example, for an itemset \(X\), we compute support as the fraction of transactions containing \(X\). If support $\geq$ min\_support, we continue to generate rules \(X \setminus Y \rightarrow Y\) for all non-empty \(Y \subseteq X\). The we calculate reward = maximum confidence of valid rules, or 0 if no rules meet min\_confidence. The reward is dynamically updated after each simulation, contributing to the average reward (\(\bar{r}\)) used in the UCB formula.

In the \textit{backpropagation phase}, we propagate the reward from the simulation back through the tree, updating the visit count (\(n\)) and total reward for each node in the path. The average reward (\(\bar{r}\)) is re-calculated as:
\[
\bar{r} = \frac{\text{total reward}}{\text{visit count}}
\]
Then we update UCB values using the updated \(\bar{r}\) and visit counts, which impacts future selections. The MCTS-ARM algorithm, using the maximum confidence reward, is presented in Algo.\ref{algo:MCTS-ARM} in Appendix.\ref{app:MCTS-ARM}.

Similar to MAB-ARM, we set a maximum itemset size to prevent combinatorial explosion of rules and run MCTS for a fixed number of iterations (e.g. 10000). Pruning happens in multiple stages, for example, we can skip expansion of itemsets with support below min\_support; during simulation, we can discard rules with confidence below min\_confidence to focus on promising itemsets. Using a cache of evaluated itemsets, we can also avoid exploring itemsets that are supersets of already-evaluated low-reward itemsets.

The advantages of using MCTS for itemset search lie in: (1) efficient exploration. UCB ensures a balance between exploring new itemsets and exploiting known high-reward ones, reducing the need to evaluate all possible combinations. (2) Flexible reward design and dynamic adaption. The reward function can be customized to prioritize different rule quality metrics (e.g. support, confidence, lift); rewards are updated dynamically, allowing the algorithm to adapt to the dataset’s structure and focus on promising regions of the itemset space. (3) Scalability. By structuring the itemset space as a tree and incorporating pruning, MCTS has the potential to scale to large datasets while discovering high-quality rules. 

However, it also faces several challenges. First, the computational cost of MCTS grows with the number of items in the dataset. This can be potentially mitigated by limiting the search tree’s depth and number of iterations, adopting efficient representations such as bit-vectors (as seen in GIM-RL \cite{GIM2022}), and parallelizing simulations to exploit modern multi-core systems. Second, reward design plays a important role in guiding the search; poorly designed rewards can lead to the discovery of suboptimal rules. To mitigate this, hybrid reward functions, e.g. using the product of support and confidence, and reward normalization techniques can be employed to ensure balanced and meaningful evaluations. Third, overfitting to simulation rollouts is a risk, particularly when relying on random simulations that may not reflect true reward distributions. This issue can be alleviated by integrating domain knowledge to bias rollouts towards more informative itemsets or by using heuristic-guided simulations to improve the fidelity of reward estimation.

\section{RLAR: reinforcement learning based association rule mining} \label{sec:RLAR}

Reinforcement learning (RL) is a machine learning framework \footnote{The other two are supervised and un-supervised learning.} where an agent learns to make sequential decisions by interacting with an environment to maximize a cumulative reward by trial and error. Fitting to the context of AR mining, the agent aims to identify itemsets that generate high-quality association rules (e.g. $A \rightarrow B$) by exploring the combinatorial space of possible itemsets. RL offers a promising approach to ARM by learning to identify valuable itemsets through reward-guided trials.

\subsection*{\textit{Reinforcement learning: preliminaries}}

In RL, the interaction between agent and environment is formalized as a \textit{Markov decision process} (MDP), defined by a tuple $\langle \mathcal{S}, \mathcal{A}, \mathcal{P}, \mathcal{R}, \gamma \rangle$, where $\mathcal{S}$ is the set of states, $\mathcal{A}$ is the set of actions, $\mathcal{P}(s' | s, a)$ is the transition probability to the next state $s'$, $\mathcal{R}(s, a, s')$ is the reward function, and $\gamma \in [0, 1)$ is the discount factor balancing immediate and future rewards. At each time step $t$, the agent observes the current state $s_t \in \mathcal{S}$, selects an action $a_t \in \mathcal{A}$ based on a policy $\pi(a | s)$, receives a reward $r_t$, and transitions to a new state $s_{t+1}$. The objective is to learn an optimal policy $\pi^*$ that maximizes the expected discounted return, defined as $G_t = \sum_{k=0}^\infty \gamma^k r_{t+k}$. The action-value function, or Q-function, $Q^\pi(s, a) = \mathbb{E}_\pi[G_t | s_t = s, a_t = a]$, represents the expected return for taking action $a$ in state $s$ and following policy $\pi$. In our RLAR method, RL is applied to explore itemsets, where states represent itemset configurations, actions correspond to toggling items, and rewards are derived from the quality of association rules (based on support and confidence). The RLAR environment uses the data, i.e. a binary transaction matrix, to compute these rewards, enabling the agent to learn a policy for generating high-quality rules efficiently.

The deep Q-network (DQN) method, a value-based RL approach employed in our RLAR framework, uses a neural network to approximate the Q-function for large state-action spaces, overcoming the limitations of traditional Q-learning. Q-learning, the discrete precursor to DQN, updates a tabular Q-function using the Bellman equation: $Q(s, a) \leftarrow Q(s, a) + \alpha \left[ r + \gamma \max_{a'} Q(s', a') - Q(s, a) \right]$, where $\alpha$ is the learning rate, $r$ is the immediate reward, and $s'$ is the next state. This update iteratively improves Q-values toward the optimal $Q^*(s, a)$. However, Q-learning is infeasible for RLAR’s high-dimensional state space (e.g. itemset combinations). DQN addresses this by parameterizing the Q-function as $Q(s, a; \theta)$, where $\theta$ represents neural network weights, trained to minimize the loss $L(\theta) = \mathbb{E} \left[ \left( r + \gamma \max_{a'} Q(s', a'; \theta^-) - Q(s, a; \theta) \right)^2 \right]$, using a target network with fixed weights $\theta^-$. In RLAR, DQN predicts Q-values for toggling items in an itemset, guiding the agent to select actions that maximize rule quality. Experience replay, where experiences $(s, a, r, s')$ are stored in a buffer (with size $B$) and sampled in mini-batches (with mini-batch size $B_m$), stabilizes training by reducing correlation. The target network and replay buffer ensure robust learning, making DQN suitable for navigating the complex state-action space of association rule mining to discover frequent and confident rules.

\subsection{RLAR: methodology}

Drawing inspiration from the RL-based generic itemset mining (GIM-RL \cite{GIM2022}) method, we configure the RLAR framework as follows:

\paragraph{The RL environment} We define the state and action spaces, as well as the transition and reward mechanisms for using a RL agent for ARM. 

The \textit{state space} $\mathcal{S}$ consists of all possible itemsets represented as bit-vectors of length $M$ (the total number of items in the dataset), where each position $i \in \{1, 2, \dots, M\}$ is 1 if item $i$ is included in the itemset and 0 otherwise. For example, for items $\mathcal{I} = \{A, B, C\}$, the itemset $\{A, C\}$ is encoded as $[1, 0, 1]$. To enrich the state representation, additional features can also be included: the support of the current itemset, defined as $\text{supp}(X) = \frac{1}{N} \sum_{j=1}^N \mathbb{I}(\mathbf{t}_j[i] = 1 \ \forall i \in X)$; the maximum confidence of rules derived from the itemset, computed as $\max(\{\text{conf}(A \rightarrow B) \mid A \cup B = X, B \neq \emptyset\})$; and the itemset size $|X|$ to enforce constraints such as maximum itemset size $m_{\text{max}}$. Thus, a state $s \in \mathcal{S}$ is a tuple $(v, \text{supp}, \text{max\_conf}, |X|)$, where $v \in \{0, 1\}^M$ is the bit-vector, and the additional features are real-valued or integer scalars.

The \textit{action space} $\mathcal{A}$ comprises actions that modify the current itemset by adding or removing a single item. For $M$ items, there are up to $M$ actions: for each item $i \in \{1, 2, \dots, M\}$, the agent can choose to add item $i$ (if $v[i] = 0$) or remove item $i$ (if $v[i] = 1$). Alternatively, the action space can be restricted to only adding items, as in GIM-RL \cite{GIM2022}, reducing $\mathcal{A}$ to the set of actions that add an item not currently in the itemset, i.e. $\mathcal{A}_s = \{ \text{add } i \mid v[i] = 0 \}$, where $s = (v, \text{supp}, \text{max\_conf}, |X|)$. Applying an action updates the state: adding item $i$ sets $v[i] = 1$, removing item $i$ sets $v[i] = 0$, and the new state reflects the updated itemset and its computed metrics (support, maximum confidence, and size).

\paragraph{DQN architecture} In RLAR, the deep Q-network (DQN) employs a fully connected multi-layer perceptron (MLP) to approximate the Q-function, $Q(s, a; \theta)$, where $s$ represents the state, $a$ denotes the action, and $\theta$ are the network parameters. The network architecture consists of an input layer of size $M + k$, where $M$ is the length of the itemset bit-vector and $k$ accounts for additional features such as support and itemset size; 3 hidden layers, each with 128 to 256 neurons and ReLU activation; and an output layer with $M$ neurons, each corresponding to the Q-value for adding or removing an item. Note the depth of the MLP can impact its capability in decision-making. 

\paragraph{Reward design} Reward function is key for guiding the DQN towards discovering high-quality ARs. We define a composite reward incorporating the ARM metrics support, confidence, and lift.
Support is the proportion of transactions containing an itemset, indicating its frequency, Computed as $\text{supp}(X) = \frac{1}{N} \sum_{j=1}^N \mathbb{I}(\mathbf{t}_j[i] = 1 \ \forall i \in X)$, where $N$ is the number of transactions, and \(\mathbf{t}_j\) is the $j$-th transaction.
Confidence is the conditional probability that a transaction containing the antecedent also contains the consequent, reflecting rule reliability. For an itemset $X$, generate rules $A \rightarrow B$ where $A \cup B = X$, $B \neq \emptyset$. Compute $\text{conf}(A \rightarrow B) = \frac{\text{supp}(X)}{\text{supp}(A)}$.
Lift is the ratio of observed to expected co-occurrence of antecedent and consequent, measuring rule strength beyond independence. It is computed as $\text{lift}(A \rightarrow B) = \frac{\text{conf}(A \rightarrow B)}{\text{supp}(B)} = \frac{\text{supp}(X)}{\text{supp}(A) \cdot \text{supp}(B)}$.

The reward is designed blending these metrics. If $\text{supp}(X) < min\_supp$, a negative reward (e.g. -1) is returned to penalize infrequent itemsets. Otherwise, for all valid rules $A \rightarrow B$ from $X$, we first compute a rule score:
\[
\text{score}(A \rightarrow B) = w_1 \cdot \text{conf}(A \rightarrow B) + w_2 \cdot \text{lift}(A \rightarrow B)
\]
where $w_1, w_2$ are weight hyper-parameters (e.g. $w_1 = 0.5, w_2 = 0.5$). If $\text{conf}(A \rightarrow B) \geq min\_conf$, we include the rule in a set of valid rules.

The reward is then calculated as:
\begin{equation} \label{eq:RLAR_reward_design}
    \text{reward} = \begin{cases} 
                    -1 & \text{if } \text{supp}(X) < min\_supp, \\
                    \max(\{\text{score}(A \rightarrow B) \mid \text{conf}(A \rightarrow B) \geq min\_conf\}) & \text{if valid rules exist}, \\
                    0 & \text{otherwise}.
                    \end{cases}
\end{equation}
This reward function encourages the agent to find frequent itemsets with high-confidence and high-lift rules, balancing rule reliability and interestingness. To ensure support, confidence and lift have comparable scales, we can normalize their values to be within [0, 1] (e.g. by dividing by their maximum possible values).

\paragraph{Training the DQN} 

Training a decision-making agent (DQN) for ARM involves many episodes, each comprising multiple steps during which the network weights and action rewards are dynamically updated. An episode begins with an empty itemset, represented as a bit-vector $v = [0, 0, \dots, 0]$, and consists of a sequence of actions that modify the itemset, such as adding or removing an item. The episode terminates when a condition (or a composite condition) is met, such as reaching the maximum itemset size ($|X| = m_{\text{max}}$), encountering an itemset with support below the minimum threshold ($\text{supp}(X) < min\_supp$), detecting a low-reward itemset, or completing a fixed number of steps (e.g. 500, as in GIM-RL \cite{GIM2022}). The environment provides rewards as feedbacks based on the quality of the itemset (e.g. computing support, confidence, and lift of the rules resulted from the itemset, see reward design), guiding the DQN to prioritize itemsets that towards yielding high-quality ARs.

The Q-net with parameters $\theta$ and a target network with parameters $\theta^-$ are initialised. The Q-network parameters $\theta$ are initialised randomly, with the initial parameters of the target network set as $\theta^- = \theta$. A replay buffer of size 10000 is employed to store transitions $(s_t, a_t, r_t, s_{t+1})$, from which mini-batches of 32 transitions are sampled to reduce data correlation. For each of the 5000 episodes, the agent observes the current state $s_t$ (itemset and features), selects an action $a_t$ using an \textit{epsilon-greedy} policy \cite{RLbook2018}, applies the action to obtain a new itemset (i.e. the state transition), computes the reward $r_t$, and observes the next state $s_{t+1}$. These transitions are stored in the replay buffer, and the network is updated via gradient descent which minimizes the temporal-difference error \cite{RLbook2018}:
\begin{equation} \label{eq:RLAR_temporal_diff}
    L(\theta) = \mathbb{E} \left[ \left( r_t + \gamma \max_{a'} Q(s_{t+1}, a'; \theta^-) - Q(s_t, a_t; \theta) \right)^2 \right]
\end{equation}
where $\gamma = 0.99$ is a discount factor \cite{RLbook2018}. The target network is updated every 1000 steps to stabilize learning, and training stops after the fixed number of episodes or when a sufficient number of high-quality rules are identified, ensuring effective learning of a policy that prioritizes valuable itemsets.

\paragraph{Exploration and exploitation}
The DQN employs an \textit{epsilon-greedy} policy \cite{RLbook2018} to balance exploration and exploitation. It first selects a random action with probability $\epsilon$, or chooses the action with the highest Q-value, i.e. $a = \arg\max_a Q(s, a; \theta)$. To shift from exploration to exploitation, \(\epsilon\) is decayed over time, for example, from 1.0 to 0.1 over 10000 steps, enabling the agent to focus on high-value itemsets as training progresses. Unlike MCTS, which explicitly uses UCB for exploration, the DQN's epsilon-greedy strategy inherently manages the trade-off between exploring new itemsets and exploiting known high-reward ones, ensuring effective navigation of the itemset space in the RLAR framework.

\paragraph{Rules extraction} After training the DQN, we extract rules by running the trained DQN to generate itemsets (using greedy action selection). For each itemset $X$ with $\text{supp}(X) \geq min\_supp$, we generate rules $A \rightarrow B$, and include rules in the output set $\mathcal{R}$ if $\text{conf}(A \rightarrow B) \geq min\_conf$.

\subsection{The RLAR algorithm}
The main RLAR workflow is outlined in the pseudo code below, with the algorithm presented in Algo.\ref{algo:RLAR}.

{\centering \textbf{RLAR pseudocode}\par}

\begin{lstlisting}
Initialize Q-network Q(s, a; $\theta$), target network Q(s, a; $\theta^-$), replay buffer D
Initialize $\varepsilon$ = 1.0, $\mathcal{R}$ = $\emptyset$
For episode = 1 to max_episodes:
    s = empty itemset, step = 0
    While step < max_steps and not terminal:
        With probability $\varepsilon$: select random action a
        Else: select a = argmax_a Q(s, a; $\theta$)
        Execute a, get new itemset X, compute reward r, next state s'
        Store (s, a, r, s') in D
        Sample mini-batch from D
        Compute target: y = r + $\gamma$ max_a' Q(s', a'; $\theta^-$) for non-terminal s'
        Update $\theta$ by minimizing loss: (y - Q(s, a; $\theta$))^2
        s = s', step += 1
    Decay $\varepsilon$
    Periodically update $\theta^-$ = $\theta$
Extract rules $\mathcal{R}$ from high-reward itemsets using trained Q-network
Return $\mathcal{R}$
\end{lstlisting}

\begin{algorithm}[htbp]
\scriptsize
\caption{RLAR: Reinforcement learning-based association rule mining with DQN}
\label{algo:RLAR}
\textbf{Input:} A set of items $\{1, 2, \dots, M\}$; a set of transactions $\mathcal{T} = \{\mathbf{t}_1, \mathbf{t}_2, \dots, \mathbf{t}_N\}$, where each $\mathbf{t}_j \in \{0, 1\}^M$; a minimum support threshold $min\_supp$; a minimum confidence threshold $min\_conf$; maximum itemset size $m_{\text{max}}$; maximum number of episodes $E_{\text{max}}$; maximum steps per episode $S_{\text{max}}$; replay buffer size $B$; mini-batch size $B_m$; discount factor $\gamma$; initial exploration rate $\varepsilon$; target network update frequency $F$; number of top actions $k$ for rule extraction. \\
\textbf{Output:} A set of association rules $\mathcal{R}$ where each rule $r \in \mathcal{R}$ is of the form $A \rightarrow B$ with $A, B \subseteq \{1, 2, \dots, M\}$, satisfying the support and confidence thresholds.

\vspace{1mm}\hrule\vspace{1mm}

\begin{algorithmic}[1]
\STATE Initialize Q-network $Q(s, a; \theta)$ and target network $Q(s, a; \theta^-)$ with random parameters $\theta$, $\theta^- = \theta$. \hfill\textit{$\mathcal{O}(|\theta|)$}
\STATE Initialize replay buffer $D$ of size $B$, $\mathcal{R} = \emptyset$, $\varepsilon = 1.0$. \hfill\textit{$\mathcal{O}(1)$}
\FOR{$e = 1$ to $E_{\text{max}}$}
    \STATE Initialize state $s$ as empty itemset (i.e. $v = [0, 0, \dots, 0]$ plus any extra feature states), step = 0. \hfill\textit{$\mathcal{O}(M)$}
    \WHILE{step $< S_{\text{max}}$ and not terminal}
        \IF{random value $< \varepsilon$}
            \STATE Select random action $a \in \mathcal{A}_s$. \hfill\textit{$\mathcal{O}(M)$}
        \ELSE
            \STATE Select $a = \arg\max_a Q(s, a; \theta)$. \hfill\textit{$\mathcal{O}(M)$}
        \ENDIF
        \STATE Execute $a$, obtain new itemset $X$, compute reward $r$ (based on support, confidence, lift of all valid rules generated by $X$, with penalty for $\text{supp}(X) < min\_supp$), next state $s'$. \hfill\textit{$\mathcal{O}(N \cdot m_{\text{max}} \cdot 2^{m_{\text{max}}})$}
        \STATE Store transition $(s, a, r, s')$ in $D$. \hfill\textit{$\mathcal{O}(1)$}
        \IF{$|D| \geq B_m$}
            \STATE Sample mini-batch of $B_m$ transitions $\{(s_i, a_i, r_i, s'_i)\}$ from $D$. \hfill\textit{$\mathcal{O}(B_m)$}
            \FOR{each transition $(s_i, a_i, r_i, s'_i)$ in mini-batch}
                \IF{$s'_i$ is terminal}
                    \STATE Set target $y_i = r_i$. \hfill\textit{$\mathcal{O}(1)$}
                \ELSE
                    \STATE Set target $y_i = r_i + \gamma \max_{a'} Q(s'_i, a'; \theta^-)$. \hfill\textit{$\mathcal{O}(M)$}
                \ENDIF
            \ENDFOR
            \STATE Update $\theta$ by minimizing loss: $\frac{1}{B_m} \sum_i (y_i - Q(s_i, a_i; \theta))^2$. \hfill\textit{$\mathcal{O}(B_m \cdot |\theta|)$}
        \ENDIF
        \STATE $s \gets s'$, step $\gets$ step + 1. \hfill\textit{$\mathcal{O}(1)$}
    \ENDWHILE
    \STATE Decay $\varepsilon$ (e.g. $\varepsilon \gets \max(\varepsilon_{\text{min}}, \varepsilon \cdot \text{decay\_rate})$). \hfill\textit{$\mathcal{O}(1)$}
    \IF{$e \mod F = 0$}
        \STATE Update $\theta^- \gets \theta$. \hfill\textit{$\mathcal{O}(|\theta|)$}
    \ENDIF
\ENDFOR
\STATE $\mathcal{R} \gets$ EXTRACT-RULES($Q$, $\mathcal{T}$, $min\_supp$, $min\_conf$, $m_{\text{max}}$, $S_{\text{max}}$, $k$). \hfill\textit{$\mathcal{O}(k \cdot N \cdot 2^{m_{\text{max}}} \cdot S_{\text{max}})$}
\RETURN $\mathcal{R}$. \hfill\textit{$\mathcal{O}(1)$}
\vspace{0.3cm}
\STATEx
\STATE \textbf{Function} EXTRACT-RULES($Q$, $\mathcal{T}$, $min\_supp$, $min\_conf$, $m_{\text{max}}$, $S_{\text{max}}$, $k$):
\begin{ALC@g}
    \STATE Initialize $\mathcal{R} \gets \emptyset$, trajectories = $\emptyset$, step = 0.
    \STATE Initialize state $s$ as empty itemset (i.e. $v = [0, 0, \dots, 0]$ plus any extra feature states). \hfill\textit{$\mathcal{O}(M)$}
    \STATE Add $(s, \emptyset)$ to trajectories. \hfill\textit{$\mathcal{O}(1)$}
    \WHILE{step $< S_{\text{max}}$ and trajectories $\neq \emptyset$}
        \STATE Initialize new\_trajectories = $\emptyset$.
        \FOR{each $(s, X)$ in trajectories}
            \STATE Select top-$k$ actions $\{a_1, \dots, a_k\}$ with highest $Q(s, a; \theta)$. \hfill\textit{$\mathcal{O}(M \cdot \log k)$}
            \FOR{each $a_i$ in $\{a_1, \dots, a_k\}$}
                \STATE Execute $a_i$, obtain itemset $X'$, next state $s'$. \hfill\textit{$\mathcal{O}(N \cdot m_{\text{max}})$}
                \IF{$|X'| \leq m_{\text{max}}$}
                    \STATE Compute $supp = \frac{1}{N} \sum_{j=1}^N \mathbb{I}(\mathbf{t}_j[l] = 1 \ \forall l \in X')$.
                    \IF{$supp \geq min\_supp$}
                        \FOR{each $(A, B)$ split of $X'$ with $B \neq \emptyset$}
                            \STATE Compute $p_A = \frac{1}{N} \sum_{j=1}^N \mathbb{I}(\mathbf{t}_j[l] = 1 \ \forall l \in A)$.
                            \STATE Compute $conf = \frac{supp}{p_A}$.
                            \IF{$conf \geq min\_conf$}
                                \STATE Add $(A \rightarrow B)$ to $\mathcal{R}$. \hfill\textit{$\mathcal{O}(N \cdot m_{\text{max}})$}
                            \ENDIF
                        \ENDFOR
                        \STATE Add $(s', X')$ to new\_trajectories. \hfill\textit{$\mathcal{O}(1)$}
                    \ENDIF
                \ENDIF
            \ENDFOR
        \ENDFOR
        \STATE trajectories $\gets$ new\_trajectories, step $\gets$ step + 1. \hfill\textit{$\mathcal{O}(1)$}
    \ENDWHILE
    \RETURN $\mathcal{R}$. \hfill\textit{$\mathcal{O}(1)$}
\end{ALC@g}
\end{algorithmic}
\end{algorithm}

\paragraph{Complexities} 
The computational complexity of the RLAR algorithm is primarily driven by the training phase (Lines 2–30) and the rule extraction phase (Lines 33–60). In the training phase, the algorithm iterates over $E_{\text{max}}$ episodes, with each episode comprising up to $S_{\text{max}}$ steps. For each step, action selection (Lines 6–9) involves either a random action ($\mathcal{O}(1)$) or evaluating Q-values for $M$ actions using the neural network ($\mathcal{O}(M)$). The reward computation and state transition (Line 11) require calculating the support of an itemset by scanning $N$ transactions and evaluating up to $m_{\text{max}}$ items, costing $\mathcal{O}(N \cdot m_{\text{max}})$, with additional rule generation for confidence and lift adding $\mathcal{O}(N \cdot 2^{m_{\text{max}}} \cdot m_{\text{max}})$ due to iterating over possible antecedent-consequent splits. The replay step (Lines 13–23) samples a mini-batch of $B_m$ transitions from a buffer of size $B$, computing targets and updating the network with $|\theta|$ parameters, costing $\mathcal{O}(B_m \cdot |\theta|)$. Adding up these, over $E_{\text{max}} \cdot S_{\text{max}}$ steps, the training complexity is $\mathcal{O}(E_{\text{max}} \cdot S_{\text{max}} \cdot (M + N \cdot m_{\text{max}} \cdot 2^{m_{\text{max}}} + B_m \cdot |\theta|))$. In the rule extraction phase (Lines 33–60), the algorithm explores up to $S_{\text{max}}$ steps, selecting the top-$k$ actions (Line 40) with a cost of $\mathcal{O}(M \cdot \log k)$ due to sorting. For each action, support and rule computations (Lines 41–50) cost $\mathcal{O}(N \cdot m_{\text{max}} \cdot 2^{m_{\text{max}}})$, leading to a total complexity of $\mathcal{O}(k \cdot S_{\text{max}} \cdot (M \cdot \log k + N \cdot m_{\text{max}} \cdot 2^{m_{\text{max}}}))$. The overall computational cost is $\mathcal{O}(E_{\text{max}} \cdot S_{\text{max}} \cdot (M + N \cdot m_{\text{max}} \cdot 2^{m_{\text{max}}} + B_m \cdot |\theta|) + k \cdot S_{\text{max}} \cdot (M \cdot \log k + N \cdot m_{\text{max}} \cdot 2^{m_{\text{max}}}))$. The exponential term $2^{m_{\text{max}}}$ appears twice in the formula, as we scan through the itemsets, constrained by $2^{m_{\text{max}}}$, and evaluate all its generated rules, in both the agent training (Line 11) and rule extraction (Line 43-50) stages. The overall cost is dominated by $\mathcal{O}(E_{\text{max}} \cdot S_{\text{max}} \cdot (M + N \cdot m_{\text{max}} \cdot 2^{m_{\text{max}}} + B_m \cdot |\theta|))$.

The memory complexity of RLAR arises from storing the dataset, neural networks, replay buffer, and extracted rules. The transaction matrix $\mathcal{T}$ requires $\mathcal{O}(N \cdot M)$ to store $N$ transactions with $M$ items. The Q-network and target network each store $|\theta|$ parameters, e.g. $|\theta| \approx (M + 3) \cdot 128 + 128 \cdot 128 + 128 \cdot M$ for an input layer of size $M + 3$ (size of state features), three hidden layers of 128 neurons, and an output layer of size $M$, resulting in $\mathcal{O}(M \cdot 128)$. The replay buffer stores e.g. $B = 10000$ transitions, each containing two states ($M + 3$), an action (scalar), reward (scalar), and terminal flag, costing $\mathcal{O}(B \cdot M)$. During rule extraction, trajectories are bounded by $k \cdot S_{\text{max}}$, each holding a state and itemset ($\mathcal{O}(M)$), while the rule set $\mathcal{R}$ can grow up to $\mathcal{O}(2^{m_{\text{max}}} \cdot m_{\text{max}})$ for itemsets of size up to $m_{\text{max}}$. The total memory complexity is $\mathcal{O}(N \cdot M + B \cdot M + |\theta| + k \cdot S_{\text{max}} \cdot M + 2^{m_{\text{max}}} \cdot m_{\text{max}})$, dominated by the transaction matrix and replay buffer for large $N$ and $B$. With $B = 10000$, $m_{\text{max}} = 10$, and typical $N$ and $M$, memory usage remains manageable, scaling linearly with dataset size and buffer capacity, though the rule set can become significant with large $m_{\text{max}}$.

\subsection{RLAR experiment}

\subsubsection{Experimental setup}

The RLAR method was tested on the same \textit{Synthetic 1} dataset which consists of 1000 transactions involving 10 items, where each transaction is represented as a binary vector indicating item presence or absence, generated using a multivariate normal distribution with an RBF kernel and positive latent values. The dataset includes a feature matrix $X \in \mathbb{R}^{10 \times 10}$ sampled from a standard normal distribution, though the feature matrix was not utilized in RLAR. The experiment employed a DQN framework with following parameters: maximum episodes $E_{\text{max}} = 1000$, maximum steps per episode $S_{\text{max}} = 100$, a replay buffer size $B = 10000$, minibatch size $B_m = 32$, discount factor $\gamma = 0.99$, initial exploration rate $\epsilon = 1.0$ exponentially decaying to $\epsilon_{\text{min}} = 0.1$ over 1000 steps, target network is updated every 1000 steps, and a maximum itemset size $m_{\text{max}} = 10$ is used. Performance metrics, including runtime, memory usage, number of frequent itemsets, and number of rules, were evaluated across minimum support thresholds of 0.1, 0.2, 0.3, 0.4, and 0.5, with a fixed minimum confidence threshold of 0.5. The experiment was executed on the same laptop as before.

\subsubsection{Results}

The RLAR experiment on the \textit{Synthetic 1} dataset showcases the efficacy of the DQN approach in extracting association rules, as detailed in Table.\ref{tab:rlar_metrics}. At a minimum support of 0.1, RLAR required 5337.17 seconds with a memory usage of 433.24 MB, producing 79 frequent itemsets and 1106 rules. As the support threshold increased, runtime exhibited variability, with a notable decrease to 716.99 seconds at 0.5, while memory usage fluctuated, reaching 96.75 MB at 0.5, indicating adaptive computational resource allocation with stricter constraints. The maximum number of frequent itemsets (148) and rules (2436) occurred at a support of 0.4, highlighting the method's capacity to identify a diverse set of patterns. The cumulative reward plots, corresponding to minimum support values of 0.1, 0.2, 0.3, 0.4, and 0.5 (as shown in Fig.\ref{fig:five_cumulative_rewards_plot}), illustrate the agent's learning trajectory, featuring an initial sharp reward increase (from approximately 115 to 130 within the first 400 episodes in Fig.\ref{fig:five_cumulative_rewards_plot}(a)) followed by a stabilization phase with minor oscillations, led by the epsilon-greedy policy shifting from broad exploration to focused exploitation.

The cumulative reward trends provide critical insights into the RLAR agent's learning behavior across different support thresholds. For a minimum support of 0.1, the reward rises steeply within the initial 200 episodes, before showing fluctuations and a gradual stabilization after 400 episodes. Accident decline in the cumulative reward is possibly due to excessive exploration or convergence toward less optimal strategies. At 0.4, the cumulative reward also stabilizes after 400 episodes, reflecting sustained rule quality, whereas at 0.5, variation of the episode-wise rewards is large, suggesting a reduction in discoverable itemsets due to the elevated threshold. These dynamics are supported by the DQN's experience replay mechanism and periodic target network updates, which promote learning consistency but may introduce variability from initial random action selections. For example, in later the rule analysis (Table.\ref{tab:rlar_rules_item2}), the top rule (ranked by confidence) with item\_2 on RHS, i.e. $[0, 3, 4, 6, 9] \to [2]$, has a confidence of 0.9838 and support of 0.485 (backed by 485 transactions among all the synthesized 1000 transaction records), confirms the agent's proficiency in discovering and prioritizing high-quality rules, despite the substantial computational effort required.

\begin{figure}[htbp]
    \centering

    \begin{minipage}[b]{0.32\textwidth}
        \centering
        \includegraphics[width=\linewidth]{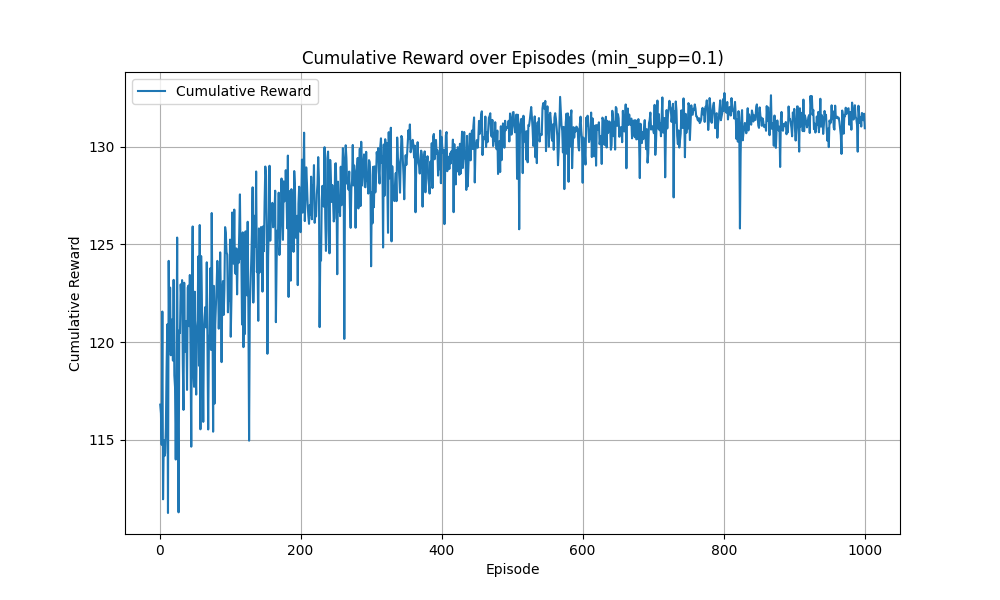}
        \caption*{(a) Min support: 0.1}
    \end{minipage}
    \hfill
    \begin{minipage}[b]{0.32\textwidth}
        \centering
        \includegraphics[width=\linewidth]{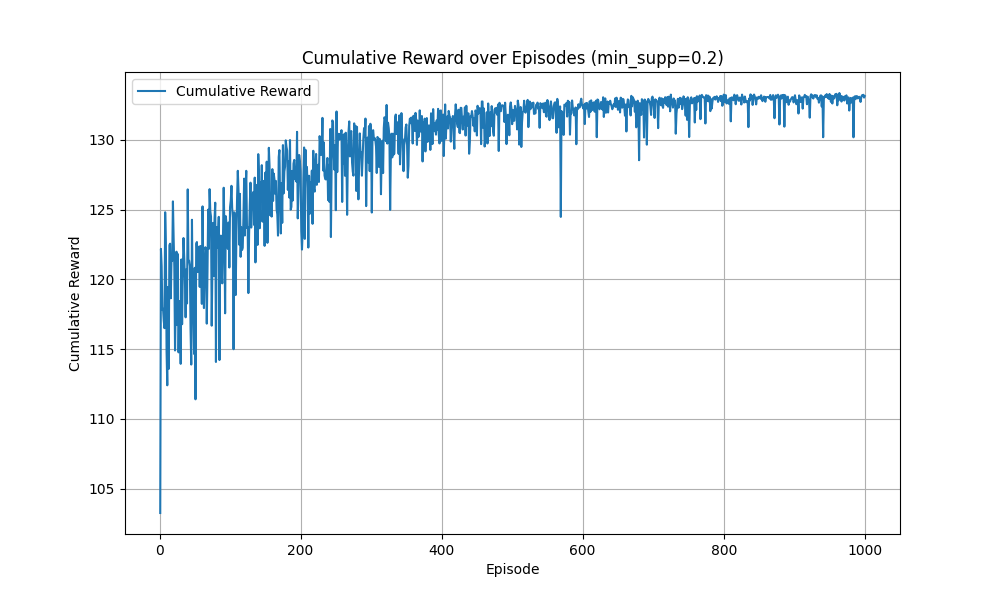}
        \caption*{(b) Min support: 0.2}
    \end{minipage}
    \hfill
    \begin{minipage}[b]{0.32\textwidth}
        \centering
        \includegraphics[width=\linewidth]{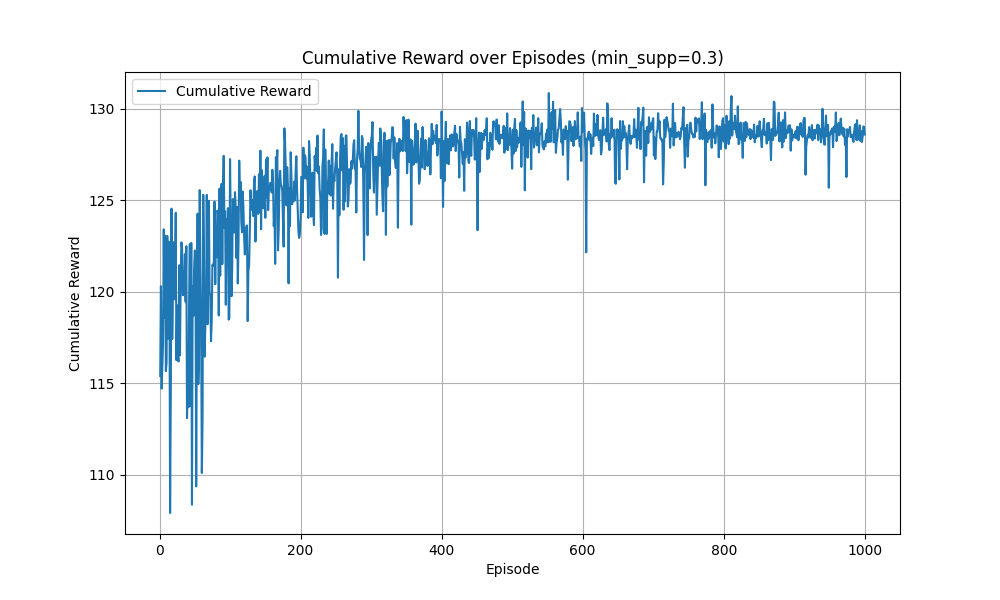}
        \caption*{(c) Min support: 0.3}
    \end{minipage}

    \vspace{0.5em} 

    \begin{minipage}[b]{0.32\textwidth}
        \centering
        \includegraphics[width=\linewidth]{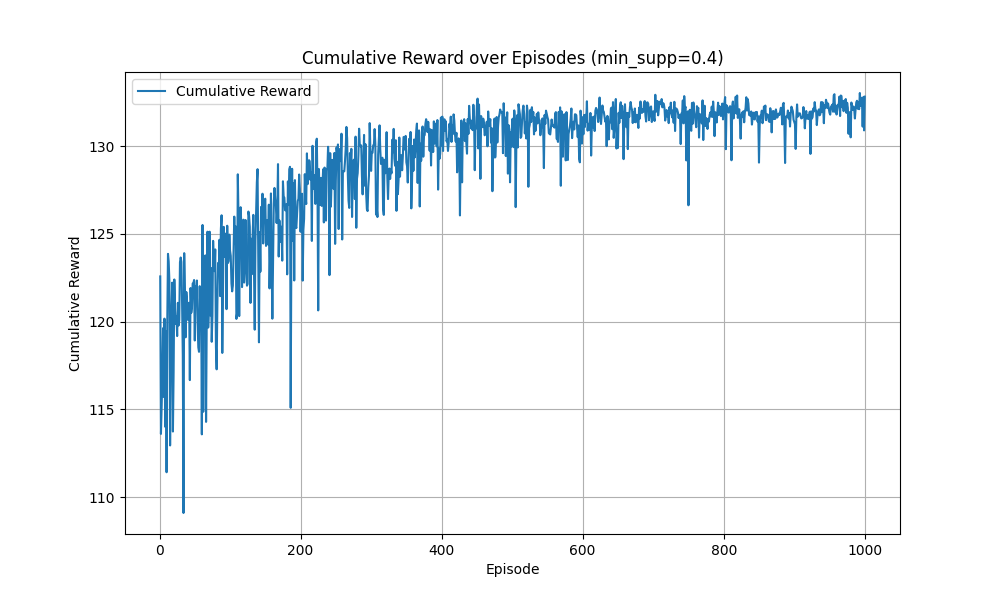}
        \caption*{(d) Min support: 0.4}
    \end{minipage}
    \begin{minipage}[b]{0.32\textwidth}
        \centering
        \includegraphics[width=\linewidth]{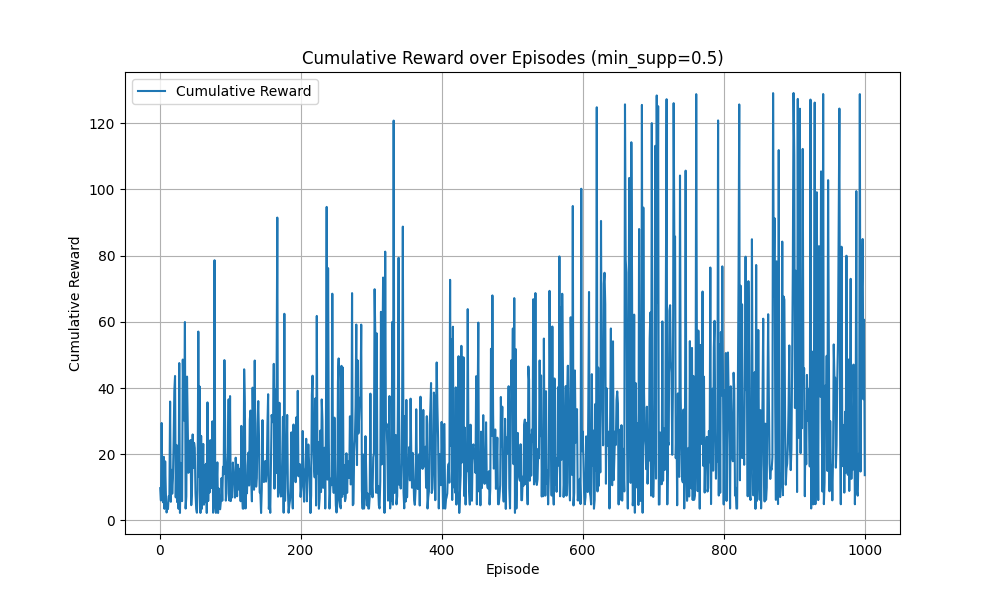}
        \caption*{(e) Min support: 0.5}
    \end{minipage}

    \caption{Cumulative reward \textit{vs} episode index for the 5 min support scenarios.}
    \label{fig:five_cumulative_rewards_plot}
\end{figure}

\begin{table}[H]
\centering
\scriptsize
\begin{threeparttable}
\caption{Performance metrics of RLAR on \textit{Synthetic 1} dataset}
\label{tab:rlar_metrics}
\begin{tabular}{p{1.6cm} p{1.6cm} p{1.8cm} p{2.3cm} p{1cm}}
\toprule
Min Support & Runtime (s) & Memory (MB) & Frequent itemsets & Rules \\
\midrule
0.1 & 5337.17 & 433.24 & 79 & 1106 \\
0.2 & 5803.99 & 1.25 & 97 & 1454 \\
0.3 & 3451.94 & 0.5 & 98 & 1394 \\
0.4 & 4046.33 & 1.25 & 148 & 2436 \\
0.5 & 716.99 & 96.75 & 145 & 2382 \\
\bottomrule
\end{tabular}
\begin{tablenotes}
\item[1] Total number of transactions: 1000.
\item[2] Minimum confidence used: 0.5.
\item[3] Runtime and memory usage include the DQN training and rule extraction phases.
\end{tablenotes}
\end{threeparttable}
\end{table}

\begin{table}[H]
\centering
\scriptsize
\begin{threeparttable}
\caption{Top 10 association rules mined by \textbf{RLAR} (ranked in descending order by \textit{\color{red}{support}})}
\label{tab:RLAR_mined_rules_rankedBySupport_minSupport01_Synthetic1}
\begin{tabular}{r >{\raggedright\arraybackslash}p{1.2cm} >{\raggedright\arraybackslash}p{1.2cm} p{0.6cm} p{0.6cm} p{0.6cm}}
\toprule
\# & Antecedent & Consequent & \color{red}{Supp} & Conf & Lift \\
\midrule
5   & [item\_5] & [item\_7] & 0.662 & 0.9118 & 1.2700 \\
6   & [item\_7] & [item\_5] & 0.662 & 0.9220 & 1.2700 \\
11  & [item\_7] & [item\_8] & 0.662 & 0.9220 & 1.2877 \\
12  & [item\_8] & [item\_7] & 0.662 & 0.9246 & 1.2877 \\
233 & [item\_0] & [item\_5] & 0.660 & 0.9141 & 1.2591 \\
234 & [item\_5] & [item\_0] & 0.660 & 0.9091 & 1.2591 \\
431 & [item\_0] & [item\_8] & 0.656 & 0.9086 & 1.2690 \\
432 & [item\_8] & [item\_0] & 0.656 & 0.9162 & 1.2690 \\
3   & [item\_5] & [item\_8] & 0.643 & 0.8857 & 1.2370 \\
4   & [item\_8] & [item\_5] & 0.643 & 0.8980 & 1.2370 \\
\bottomrule
\end{tabular}
\begin{tablenotes}
\item[1] Total number of transactions: 1000.
\item[2] Minimum support used in mining: 0.1.
\item[3] Minimum confidence used: 0.5.
\item[4] Items correspond to synthetic dataset indices.
\end{tablenotes}
\end{threeparttable}
\end{table}

\begin{table}[H]
\centering
\scriptsize
\begin{threeparttable}
\caption{Top 10 association rules mined by \textbf{RLAR} (ranked in descending order by \textit{\color{red}{confidence}})}
\label{tab:RLAR_mined_rules_rankedByConfidence_minSupport01_Synthetic1}
\begin{tabular}{r >{\raggedright\arraybackslash}p{4cm} >{\raggedright\arraybackslash}p{1.2cm} p{0.6cm} p{0.6cm} p{0.6cm}}
\toprule
\# & Antecedent & Consequent & Supp & \color{red}{Conf} & Lift \\
\midrule
797  & [item\_1, item\_4, item\_5, item\_8] & [item\_7] & 0.529 & 0.9981 & 1.3901 \\
827  & [item\_1, item\_3, item\_5, item\_8] & [item\_7] & 0.546 & 0.9964 & 1.3877 \\
491  & [item\_3, item\_4, item\_5, item\_8] & [item\_7] & 0.523 & 0.9962 & 1.3875 \\
708  & [item\_3, item\_5, item\_8, item\_9] & [item\_7] & 0.536 & 0.9944 & 1.3850 \\
1013 & [item\_0, item\_3, item\_4, item\_8] & [item\_7] & 0.522 & 0.9943 & 1.3848 \\
1073 & [item\_1, item\_3, item\_4, item\_8] & [item\_7] & 0.517 & 0.9923 & 1.3821 \\
521  & [item\_3, item\_4, item\_5, item\_9] & [item\_7] & 0.505 & 0.9921 & 1.3818 \\
1104 & [item\_0, item\_3, item\_8, item\_9] & [item\_7] & 0.537 & 0.9908 & 1.3799 \\
302  & [item\_3, item\_4, item\_8]         & [item\_7] & 0.536 & 0.9908 & 1.3799 \\
858  & [item\_5, item\_6, item\_8, item\_9] & [item\_7] & 0.533 & 0.9907 & 1.3798 \\
\bottomrule
\end{tabular}
\begin{tablenotes}
\item[1] Total number of transactions: 1000.
\item[2] Minimum support used in mining: 0.1.
\item[3] Minimum confidence used: 0.5.
\item[4] Items correspond to synthetic dataset indices.
\end{tablenotes}
\end{threeparttable}
\end{table}

\begin{table}[H]
\centering
\scriptsize
\begin{threeparttable}
\caption{Top 10 association rules mined by \textbf{RLAR} (ranked in descending order by \textit{\color{red}{lift}})}
\label{tab:RLAR_mined_rules_rankedByLift_minSupport01_Synthetic1}
\begin{tabular}{r >{\raggedright\arraybackslash}p{3cm} >{\raggedright\arraybackslash}p{3cm} p{0.6cm} p{0.6cm} p{0.6cm}}
\toprule
\# & Antecedent & Consequent & Supp & Conf & \color{red}{Lift} \\
\midrule
604  & [item\_5, item\_9]               & [item\_3, item\_4, item\_8] & 0.501 & 0.8392 & 1.5512 \\
607  & [item\_3, item\_4, item\_8]      & [item\_5, item\_9]          & 0.501 & 0.9261 & 1.5512 \\
596  & [item\_3, item\_4]               & [item\_5, item\_8, item\_9] & 0.501 & 0.8805 & 1.5447 \\
615  & [item\_5, item\_8, item\_9]      & [item\_3, item\_4]          & 0.501 & 0.8789 & 1.5447 \\
511  & [item\_3, item\_4, item\_7]      & [item\_5, item\_9]          & 0.505 & 0.9165 & 1.5352 \\
508  & [item\_5, item\_9]               & [item\_3, item\_4, item\_7] & 0.505 & 0.8459 & 1.5352 \\
500  & [item\_3, item\_4]               & [item\_5, item\_7, item\_9] & 0.505 & 0.8875 & 1.5329 \\
519  & [item\_5, item\_7, item\_9]      & [item\_3, item\_4]          & 0.505 & 0.8722 & 1.5329 \\
1032 & [item\_0, item\_3, item\_4]      & [item\_8, item\_9]          & 0.501 & 0.9347 & 1.5323 \\
1031 & [item\_8, item\_9]               & [item\_0, item\_3, item\_4] & 0.501 & 0.8213 & 1.5323 \\
\bottomrule
\end{tabular}
\begin{tablenotes}
\item[1] Total number of transactions: 1000.
\item[2] Minimum support used in mining: 0.1.
\item[3] Minimum confidence used: 0.5.
\item[4] Items correspond to synthetic dataset indices.
\end{tablenotes}
\end{threeparttable}
\end{table}

\begin{table}[H]
\centering
\scriptsize
\begin{threeparttable}
\caption{Top 10 association rules mined by \textbf{RLAR} with item 2 on RHS (min support = 0.1, rules ranked by \color{red}{confidence})}
\label{tab:rlar_rules_item2}
\begin{tabular}{r >{\raggedright\arraybackslash}p{2.8cm} p{1.2cm} p{0.8cm} p{0.9cm} >{\raggedright\arraybackslash}p{1.2cm} >{\raggedright\arraybackslash}p{5.5cm}}
\toprule
\# & Rule & Confidence & Support & Support Count & Example Trans. Indices & Example Transactions \\
\midrule
1 & [0, 3, 4, 6, 9] $\to$ [2] & 0.9838 & 0.485 & 485 & [1, 2, 3] & [[0, 1, 2, 3, 4, 5, 6, 7, 8, 9], [0, 1, 2, 3, 4, 5, 6, 7, 8, 9], [0, 1, 2, 3, 4, 5, 6, 7, 8, 9]] \\
2 & [0, 3, 4, 6, 8, 9] $\to$ [2] & 0.9836 & 0.480 & 480 & [1, 2, 3] & [[0, 1, 2, 3, 4, 5, 6, 7, 8, 9], [0, 1, 2, 3, 4, 5, 6, 7, 8, 9], [0, 1, 2, 3, 4, 5, 6, 7, 8, 9]] \\
3 & [0, 1, 4, 6, 8, 9] $\to$ [2] & 0.9835 & 0.476 & 476 & [1, 2, 3] & [[0, 1, 2, 3, 4, 5, 6, 7, 8, 9], [0, 1, 2, 3, 4, 5, 6, 7, 8, 9], [0, 1, 2, 3, 4, 5, 6, 7, 8, 9]] \\
4 & [4, 6, 7, 8, 9] $\to$ [2] & 0.9821 & 0.495 & 495 & [1, 2, 3] & [[0, 1, 2, 3, 4, 5, 6, 7, 8, 9], [0, 1, 2, 3, 4, 5, 6, 7, 8, 9], [0, 1, 2, 3, 4, 5, 6, 7, 8, 9]] \\
5 & [4, 5, 6, 8, 9] $\to$ [2] & 0.9820 & 0.492 & 492 & [1, 2, 3] & [[0, 1, 2, 3, 4, 5, 6, 7, 8, 9], [0, 1, 2, 3, 4, 5, 6, 7, 8, 9], [0, 1, 2, 3, 4, 5, 6, 7, 8, 9]] \\
6 & [0, 4, 6, 8, 9] $\to$ [2] & 0.9820 & 0.490 & 490 & [1, 2, 3] & [[0, 1, 2, 3, 4, 5, 6, 7, 8, 9], [0, 1, 2, 3, 4, 5, 6, 7, 8, 9], [0, 1, 2, 3, 4, 5, 6, 7, 8, 9]] \\
7 & [0, 4, 5, 6, 8, 9] $\to$ [2] & 0.9818 & 0.485 & 485 & [1, 2, 3] & [[0, 1, 2, 3, 4, 5, 6, 7, 8, 9], [0, 1, 2, 3, 4, 5, 6, 7, 8, 9], [0, 1, 2, 3, 4, 5, 6, 7, 8, 9]] \\
8 & [0, 4, 6, 7, 8, 9] $\to$ [2] & 0.9818 & 0.485 & 485 & [1, 2, 3] & [[0, 1, 2, 3, 4, 5, 6, 7, 8, 9], [0, 1, 2, 3, 4, 5, 6, 7, 8, 9], [0, 1, 2, 3, 4, 5, 6, 7, 8, 9]] \\
9 & [1, 4, 5, 6, 9] $\to$ [2] & 0.9817 & 0.483 & 483 & [1, 2, 3] & [[0, 1, 2, 3, 4, 5, 6, 7, 8, 9], [0, 1, 2, 3, 4, 5, 6, 7, 8, 9], [0, 1, 2, 3, 4, 5, 6, 7, 8, 9]] \\
10 & [4, 5, 6, 9] $\to$ [2] & 0.9803 & 0.498 & 498 & [1, 2, 3] & [[0, 1, 2, 3, 4, 5, 6, 7, 8, 9], [0, 1, 2, 3, 4, 5, 6, 7, 8, 9], [0, 1, 2, 3, 4, 5, 6, 7, 8, 9]] \\
\bottomrule
\end{tabular}
\begin{tablenotes}
\item[1] Total number of transactions: 1000.
\item[2] Minimum support used in mining: 0.1.
\item[3] Minimum confidence used: 0.5.
\item[4] Items correspond to synthetic dataset indices.
\end{tablenotes}
\end{threeparttable}
\end{table}

\subsubsection{Comparison with other methods}

A comparative evaluation of RLAR against GPAR, BARM, MAB-ARM, and traditional algorithms (Apriori, FP-Growth, Eclat) on the \textit{Synthetic 1} dataset reveals distinct performance profiles. Table.\ref{tab:comparison_runtime} (runtime) and Table.\ref{tab:comparison_memory} (memory usage), alongside Tables.\ref{tab:comparison_itemsets} (no. frequent itemsets) and.\ref{tab:comparison_rules} (no. rules), provide a comprehensive overview. RLAR's runtime of 5337.17 seconds at a minimum support of 0.1 significantly surpasses GPAR variants (0.133–10.179 seconds), BARM (1.1821 seconds), MAB-ARM (1.6599 seconds), Apriori (0.483 seconds), FP-Growth (2.967 seconds), and Eclat (0.556 seconds), reflecting the extensive iterative process of reinforcement learning across 1000 episodes. Memory usage for RLAR peaks at 433.24 MB at 0.1, exceeding all other methods, including Eclat's maximum of 18.149 MB, due to the DQN model's storage requirements, replay buffer, and rule extraction overhead \footnote{As pointed out in MAB-ARM memory analysis, variations in memory usage up to 100 MB is statistically insignificant to be verified.}. RLAR generates fewer frequent itemsets (79 at 0.1) and rules (1106 at 0.1) compared to GPAR RBF/shifted RBF (1013 itemsets, 54036 rules), Apriori/FP-Growth/Eclat (1023–1033 itemsets, 57002 rules), and MAB-ARM (1000 itemsets, 50372 rules), but comparable to GPAR neural network (140 itemsets, 322 rules), NTK (305 itemsets, 1485 rules), and BARM (876 itemsets, 12029 rules). Interestingly, in RLAR, the numbers of frequent itemsets identified and rules mined increase as support increases.

This unusual trend, where the number of frequent itemsets rises from 79 at a support of 0.1 to a peak of 148 at 0.4, and the number of rules increases from 1106 to 2436 over the same range, can be attributed to several factors. First, we trained 5 agents for searching frequent itemsets and mining high reward (blending confidence and lift metrics) rules, 1 agent for each min support level. The DQN's adaptive learning policy may effectively refine its focus on higher-support itemsets as the threshold increases, potentially identifying more robust patterns that meet the stricter criteria, a behavior facilitated by the experience replay mechanism and target network updates. Second, the top-$k$ action selection strategy (with $k = 3$ in our experiment) during rule extraction likely promotes the exploration of diverse item combinations, allowing RLAR to uncover additional valid rules as the support threshold narrows the search space to more prevalent associations. Third, the synthetic nature of the \textit{Synthetic 1} dataset, with its 1000 transactions and 10 items, may contain a distribution where higher-support itemsets correlate with increased rule generation, possibly due to the presence of strongly associated item clusters that become more apparent at intermediate support levels (e.g. 0.4). This contrasts with traditional methods like Apriori, where the number of itemsets and rules typically decreases with increasing support (i.e. more stricter filter). Finally, this trend also evidences the DQN's ability to leverage its 1000-episode training ($E_{\text{max}} = 1000$) to adaptively prioritize rules that emerge under varying support conditions, a capability that requires further investigation with larger datasets.

Rule quality analysis further differentiates RLAR from its peers. Table.\ref{tab:rlar_rules_item2} displays the top 10 RLAR rules with item 2 as the consequent, ordered by confidence at a minimum support of 0.1. The highest-ranking rule, $[0, 3, 4, 6, 9] \to [2]$ (with confidence 0.9838, support 0.485, count 485 out of 1000 transaction records), matches the empirical frequency (count/1000 = 0.485), avoiding the support underestimation seen in probabilistic methods such as BARM (e.g. support 0.1916 \textit{vs} count/1000 = 0.496 for the top confidence rule $[1, 4, 6, 9] \to [2]$ in Table.\ref{tab:BARM_analysis_minSupport01_Synthetic1}) and GPAR variants (e.g. support 0.3600 \textit{vs} count/1000 = 0.488 for $[0, 1, 3, 4, 8] \to [2, 7]$ for Rule $[0, 1, 3, 4, 8] \to [2, 7]$ in Table.\ref{tab:RBF_GPAR_analysis_minSupport01_Synthetic1} (RBF) and Table.\ref{tab:shifted_RBF_GPAR_analysis_minSupport01_Synthetic1} (shifted RBF)). This empirical accuracy aligns with Apriori (Table.\ref{tab:Apriori_analysis_minSupport01_Synthetic1}), FP-Growth (Table.\ref{tab:FP-Growth_analysis_minSupport01_Synthetic1}), and MAB-ARM (Table.\ref{tab:MABARM_analysis_minSupport01_Synthetic1}), which rely on observed frequencies (Eclat and BARM, as observed in Table.\ref{tab:Eclat_analysis_minSupport01_Synthetic1} and Table.\ref{tab:BARM_analysis_minSupport01_Synthetic1} respectively, however, over-estimate the supports of their mined rules. Also plotted in Fig.\ref{fig:support_vs_count}). However, RLAR's lift values, such as 1.5512 for the top rule $[5, 9] \to [3, 4, 8]$ in Table.\ref{tab:RLAR_mined_rules_rankedByLift_minSupport01_Synthetic1}, are moderate compared to GPAR RBF/shifted RBF's peak lift of 12.1212 for their Rule $[0, 8, 6, 9] \to [1, 2, 3, 4, 7]$ (Table.\ref{tab:RBF_GPAR_mined_rules_rankedByLift_minSupport01_Synthetic1} and Table.\ref{tab:shifted_RBF_GPAR_mined_rules_rankedByLift_minSupport01_Synthetic1}), neural net GPAR's 6.4103 for Rule $[8, 6, 7] \to [5]$ in Table.\ref{tab:nn_GPAR_mined_rules_rankedByLift_minSupport01_Synthetic1}, NTK GPAR's 8.5470 for Rule $[8, 1, 4] \to [2, 5, 6]$ in Table.\ref{tab:NTK_GPAR_mined_rules_rankedByLift_minSupport01_Synthetic1}, and comparable to Apriori's 1.7982 for Rule $[6, 0, 1, 9, 7] \to [2, 3, 8, 5, 4]$ in Table.\ref{tab:Apriori_mined_rules_rankedByLift_minSupport01_Synthetic1}, FP-Growth's 1.7982 for Rule $[6, 0, 1, 9, 7] \to [2, 3, 8, 5, 4]$ in Table.\ref{tab:FP-Growth_mined_rules_rankedByLift_minSupport01_Synthetic1}, Eclat's 1.8152 for Rule $[2, 0, 1, 3, 8, 4] \to [6, 7, 5]$ in Table.\ref{tab:Eclat_mined_rules_rankedByLift_minSupport01_Synthetic1}, and MAB-ARM's 1.7612 for Rule $[2, 3, 4, 5] \to [0, 1, 6, 9]$ in Table.\ref{tab:MABARM_mined_rules_rankedByLift_minSupport01_Synthetic1}.
BARM's minimal lift, i.e. 1.0001 for Rule $[0, 2, 3, 6] \to [4, 7]$ in Table.\ref{tab:BARM_mined_rules_rankedByLift_minSupport01_Synthetic1}, was due to its item independence assumption.

\begin{table}[H]
\centering
\tiny
\begin{threeparttable}
\caption{Comparison: runtime (s) on \textit{Synthetic 1} dataset} 
\label{tab:comparison_runtime}
\begin{tabular}{p{0.9cm} p{0.6cm} p{0.6cm} p{0.6cm} p{0.8cm} p{0.6cm} p{0.6cm} p{0.6cm} p{0.6cm} p{0.6cm} p{0.6cm} p{0.6cm}}
\toprule
Min Support & RLAR & GPAR (RBF) & GPAR (shifted RBF) & GPAR (neural net) & GPAR (NTK) & BARM & MAB-ARM & Apriori & FP-Growth & Eclat \\
\midrule
0.1 & 5337.17 & 10.179 & 8.930 & 0.133 & 0.449 & 1.1821 & 1.6599 & 0.483 & 2.967 & 0.556 \\
0.2 & 5803.99 & 8.448 & 8.697 & 0.046 & 0.077 & 0.2314 & 1.6036 & 0.505 & 2.734 & 0.555 \\
0.3 & 3451.94 & 3.295 & 2.658 & 0.017 & 0.030 & 0.0734 & 1.6251 & 0.565 & 2.622 & 0.575 \\
0.4 & 4046.33 & 0.215 & 0.174 & 0.002 & 0.016 & 0.0481 & 1.6126 & 0.570 & 2.700 & 0.558 \\
0.5 & 716.99 & 0.013 & 0.034 & 0.005 & 0.003 & 0.0451 & 0.9390 & 0.211 & 2.019 & 0.635 \\
\bottomrule
\end{tabular}
\end{threeparttable}
\end{table}

\begin{table}[H]
\centering
\tiny
\begin{threeparttable}
\caption{Comparison: memory usage (MB) on \textit{Synthetic 1} dataset} 
\label{tab:comparison_memory}
\begin{tabular}{p{0.9cm} p{0.6cm} p{0.6cm} p{0.6cm} p{0.8cm} p{0.6cm} p{0.6cm} p{0.6cm} p{0.6cm} p{0.6cm} p{0.6cm} p{0.6cm}}
\toprule
Min Support & RLAR & GPAR (RBF) & GPAR (shifted RBF) & GPAR (neural net) & GPAR (NTK) & BARM & MAB-ARM & Apriori & FP-Growth & Eclat \\
\midrule
0.1 & 433.24 & 0.0 & 0.0 & 0.0 & 0.0 & 0.0 & 0.0 & 0.0 & 0.063 & 18.149 \\
0.2 & 1.25 & 0.0 & 0.0 & 0.0 & 0.0 & 0.0 & 13.625 & 0.0 & 0.0 & 0.0 \\
0.3 & 0.5 & 0.0 & 0.0 & 0.0 & 0.0 & 0.0 & 0.625 & 0.0 & 0.0 & 1.438 \\
0.4 & 1.25 & 0.0 & 0.0 & 0.0 & 0.0 & 0.0 & 6.375 & 0.0 & 0.0 & 7.969 \\
0.5 & 96.75 & 0.0 & 0.0 & 0.0 & 0.0 & 0.0 & 0.125 & 0.0 & 0.0 & 0.156 \\
\bottomrule
\end{tabular}
\end{threeparttable}
\end{table}

\begin{table}[H]
\centering
\tiny
\begin{threeparttable}
\caption{Comparison: no. of frequent itemsets on \textit{Synthetic 1} dataset} 
\label{tab:comparison_itemsets}
\begin{tabular}{p{0.9cm} p{0.6cm} p{0.6cm} p{0.6cm} p{0.8cm} p{0.6cm} p{0.6cm} p{0.6cm} p{0.6cm} p{0.6cm} p{0.6cm} p{0.6cm}}
\toprule
Min Support & RLAR & GPAR (RBF) & GPAR (shifted RBF) & GPAR (neural net) & GPAR (NTK) & BARM & MAB-ARM & Apriori & FP-Growth & Eclat \\
\midrule
0.1 & 79 & 1013 & 1013 & 140 & 305 & 876 & 1000 & 1023 & 1023 & 1033 \\
0.2 & 97 & 981 & 982 & 43 & 67 & 385 & 1000 & 1023 & 1023 & 1033 \\
0.3 & 98 & 533 & 536 & 9 & 11 & 165 & 1000 & 1023 & 1023 & 1033 \\
0.4 & 148 & 67 & 65 & 0 & 2 & 45 & 1000 & 1023 & 1023 & 1033 \\
0.5 & 145 & 1 & 2 & 0 & 0 & 44 & 736 & 746 & 746 & 1033 \\
\bottomrule
\end{tabular}
\begin{tablenotes}
\item[1] Total number of transactions: 1000.
\item[2] Numbers are counts of frequent itemsets identified.
\item[3] Minimum confidence used in filtering candidate rules: 0.5.
\end{tablenotes}
\end{threeparttable}
\end{table}

\begin{table}[H]
\centering
\tiny
\begin{threeparttable}
\caption{Comparison: no. of rules on \textit{Synthetic 1} dataset} 
\label{tab:comparison_rules}
\begin{tabular}{p{0.9cm} p{0.6cm} p{0.6cm} p{0.6cm} p{0.8cm} p{0.6cm} p{0.6cm} p{0.6cm} p{0.6cm} p{0.6cm} p{0.6cm} p{0.6cm}}
\toprule
Min Support & RLAR & GPAR (RBF) & GPAR (shifted RBF) & GPAR (neural net) & GPAR (NTK) & BARM & MAB-ARM & Apriori & FP-Growth & Eclat \\
\midrule
0.1 & 1106 & 54036 & 54036 & 322 & 1485 & 12029 & 50372 & 57002 & 57002 & 57002 \\
0.2 & 1454 & 51479 & 51328 & 59 & 159 & 3024 & 50372 & 57002 & 57002 & 57002 \\
0.3 & 1394 & 19251 & 16067 & 18 & 26 & 802 & 50372 & 57002 & 57002 & 57002 \\
0.4 & 2436 & 782 & 530 & 0 & 4 & 90 & 50372 & 57002 & 57002 & 57002 \\
0.5 & 2382 & 2 & 8 & 0 & 0 & 88 & 20948 & 20948 & 20948 & 57002 \\
\bottomrule
\end{tabular}
\begin{tablenotes}
\item[1] Total number of transactions: 1000.
\item[2] Numbers are counts of rules generated.
\item[3] Minimum confidence used in filtering candidate rules: 0.5.
\end{tablenotes}
\end{threeparttable}
\end{table}

\begin{figure}[H]
    \centering
    \includegraphics[width=1.0\textwidth]{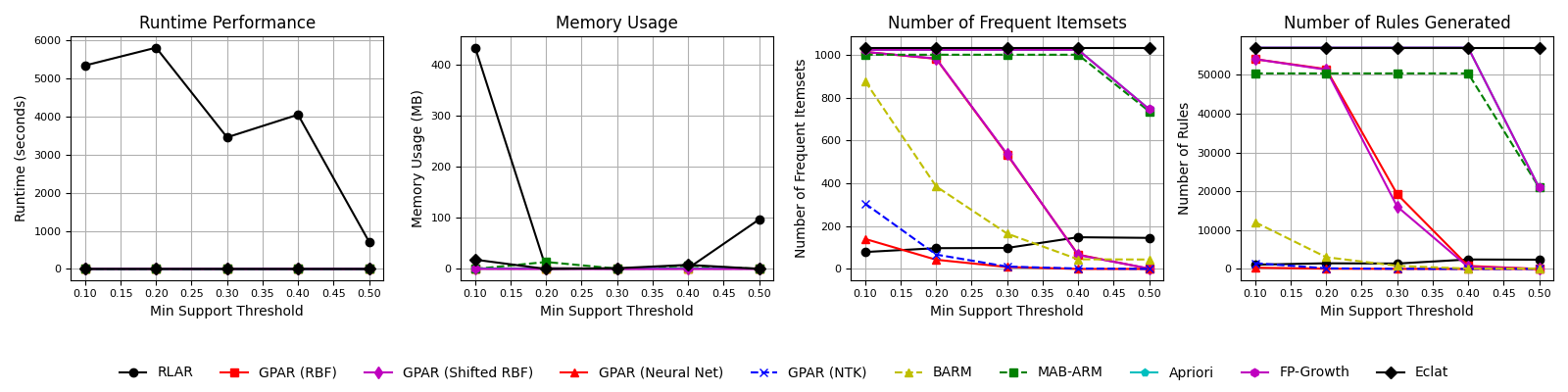}
    \caption{Performances of 10 algorithms on \textit{Synthetic 1} dataset across min support thresholds.}
    \label{fig:synthetic1_performance_comparison_10methods}
\end{figure}

\subsubsection{Discussion}

The DQN-based RLAR framework presents several strengths. In contrast to search-based methods such as MAB and MCTS, which require repeated exploration and rule evaluations, RLAR develops a generalizable policy, adaptable to target, specific rule types through a carefully crafted reward mechanism. The composite reward function, integrating support, confidence, and lift (Eq.\ref{eq:RLAR_reward_design}), is designed to prioritize the discovery of high-quality association rules. The scalability of this DQN approach suits large datasets, potentially surpassing exhaustive methods such as Apriori in flexibility (i.e. adaptive policy learning). Nonetheless, challenges such as training instability can be addressed with experience replay, target network stabilization, and precise hyperparameter tuning. The issue of reward sparsity, where high-quality rules are scarce, can be mitigated by adjusting the reward function to offer incremental rewards for frequent itemsets. The computational overhead from neural network training may be alleviated through optimized data structures and GPU acceleration.

RLAR's heavy computational cost and memory demands, particularly evident at 433.24 MB and 5337.17 seconds at a minimum support of 0.1, make it less ideal for large-scale applications compared to classic methods such as Apriori, FP-Growth, and Eclat, which offer lower runtimes (e.g. 0.483-2.967 seconds) and memory usage (e.g. 0.0-18.149 MB). However, RLAR's adaptive policy learning, as evidenced by the converged cumulative reward plots across support levels 0.1-0.4 in Fig.\ref{fig:five_cumulative_rewards_plot}, provides a strategic advantage over static methods, with potential for discovering unique patterns through refinements such as adjusting the maximum itemset size $m_{max}$, number of episode $E_{\text{max}}$, number of steps in each episode $S_{max}$, or enhancing pruning techniques \footnote{The computational and memory complexities can be exponentially reduced by constraining $m_{max}$ and linearly decreased by reducing $E_{\text{max}}$ and $S_{max}$, with theoretical evidence from complexity analysis and experimental evidence in our trials.}. Unlike probabilistic approaches such as GPAR and BARM, which may underestimate support due to smoothing effects, RLAR's empirical validation during rule extraction aligns with Apriori, FP-Growth and MAB-ARM's precision. Future improvements, e.g. integration of feature-based priors or optimization of the epsilon decay rate to balance exploration and exploitation, could position RLAR to compete with GPAR's capability to identify complex patterns while maintaining empirical accuracy. For applications requiring high-confidence rules with precise support, such as targeted marketing, RLAR remains a feasible choice given sufficient computational resources, though traditional methods are more efficient for resource-constrained analyses.

\paragraph{Top-$k$ actions} In the RLAR experiment, the adoption of a top-$k$ action selection strategy (with $k = 3$) during the rule extraction phase, as opposed to the greedy single best action ($k = 1$), enhances the identification of diverse, high-quality frequent itemsets and association rules. Unlike the greedy method, which relies exclusively on the highest Q-value predicted by the trained DQN and risks overlooking alternative rules due to potential noise from the 1000-episode training limit ($E_{\text{max}} = 1000$), the top-$k$ approach introduces controlled exploration. This enables the agent to assess multiple promising actions, capturing a wider variety of itemsets that meet the minimum support and confidence criteria. This top-$k$ method strikes a balance between exploration and computational efficiency, promoting diverse rule generation while addressing the constraints of a single-trajectory greedy approach.

\paragraph{Sparse rewards} Reward sparsity in RLAR refers to the scarcity (rare or infrequent) of high-quality rewards, i.e. those associated with discovering itemsets that generate valuable association rules (e.g. rules with high support, confidence, and lift),  within the vast combinatorial space of possible itemsets due to the limited number of itemsets that produce valuable association rules, posing a challenge to efficient DQN training. I RLAR, the majority of explored itemsets may yield low or negative rewards (e.g. -1 for itemsets with support below the minimum threshold, as specified in Eq.\ref{eq:RLAR_reward_design}. The reward design in Eq.\ref{eq:RLAR_reward_design} assigns a positive reward based on a weighted combination of confidence and lift for itemsets with support $\geq min\_supp$, a negative reward (-1) for those below, and zero otherwise. Since only a small fraction of the $2^M$ possible itemsets (where $M$ is the number of items) meet the minimum support and confidence thresholds, high-quality rewards are sparse, particularly in large or diverse datasets.), making it difficult for the agent to consistently identify rewarding actions. The combinatorial nature of association rule mining exacerbates reward sparsity. With $M=10$ items, there are $2^{10} = 1024$ possible itemsets, but only a subset (e.g. 79–148 itemsets at support thresholds 0.1–0.5, Table.\ref{tab:rlar_metrics}) meet the criteria for high-quality rules. The \textit{Synthetic 1} dataset’s specific structure, generated with a multivariate normal distribution and RBF kernel, may further limit the number of frequent itemsets, intensifying sparsity.

Sparse rewards complicate the DQN’s training process because the agent frequently encounters uninformative (zero or negative) feedback, leading to slower learning and potential training instability. This is observed in the RLAR experiment’s cumulative reward plots (Figure~\ref{fig:five_cumulative_rewards_plot}), where reward fluctuations and occasional declines (e.g. at support 0.5) suggest challenges in converging to optimal policies due to infrequent high-reward states. Reward sparsity may be alleviated by introducing incremental rewards for frequent itemsets to provide more frequent feedback, even if they do not immediately yield high-confidence rule. Additionally, hybrid reward functions incorporating domain knowledge or heuristic-guided exploration could steer the agent toward promising itemsets, reducing the impact of sparsity. Adjusting the maximum itemset size ($m_{\text{max}}$) or using optimized data structures (e.g. bit-vectors) can also limit the search space, increasing the likelihood of encountering rewarding states.

\paragraph{Strategies for optimising RLAR} To enhance the efficiency of the DQN within the RLAR framework, several optimization strategies are employed. Itemsets and transactions are represented as bit-vectors, enabling swift support calculations through bitwise operations. Early pruning of itemsets falling below the minimum support threshold leverages the anti-monotonicity property, which can minimizes the exploration of less promising candidates. Furthermore, training multiple DQNs concurrently allows simultaneous exploration of the itemset space, accelerating the detection of high-quality association rules while preserving computational practicality.

\section{Discussion} \label{sec:discussion}

\subsection{Probabilistic approaches to AR mining}

\paragraph{GPAR: overview} 
GPAR introduces a probabilistic approach to ARM by leveraging Gaussian processes (GPs) to uncover relationships between items in transactional data. GPs, typically used for continuous data, are adapted to the discrete scenario of AR mining, defined by a mean function and a covariance function (kernel) that model a distribution over functions. In GPAR, items are represented by feature vectors (e.g. [size, shape, color, price, category]) - they form a feature matrix which are used to predict the latent variables $\mathbf{z}$. For each transaction, the latent vector $\mathbf{z} \sim \mathcal{N}(\mathbf{0}, K)$, where the kernel induced covariance matrix $K$ captures item dependencies (with higher covariance between items that co-occur frequently), combining prior beliefs and evidence from data. These latent variables are then quantized, with sign indicating the presence of an item (e.g. item $k$ is present if $z_k > 0$), to match the observed binary transaction data. GPAR therefore transforms the traditionally unsupervised ARM problem into a supervised learning task by modeling the relationship between the feature vectors $\mathbf{x}_k$ and the latent variables $z_k$.

GPAR incorporates human knowledge through feature vector design and kernel selection, combining data-driven inference. By representing each item with attributes and computing co-occurrence probabilities through marginalization, GPAR can handle uncertainty and incorporate prior knowledge, providing deeper insights into item associations. Once the GP model is trained, it can be used to make probabilistic inference about rules with items within or beyond the universal itemset $\mathcal{I}$. By extending the GP framework to capture co-occurrences instead of isolated predictions, GPAR bridges the gap between probabilistic modeling and effective association rule discovery. This methodology represents a shift from individual item prediction to understanding complex, multi-item dependencies within transactional datasets. GPAR is best suited for datasets with small number of items or specialized applications (e.g. dynamic environments) where uncertainty quantification is desired or items are dynamically updated. 

\paragraph{Comparison with traditional ARM methods}
Integrating Gaussian processes with AR mining leverages their strengths to address traditional AR's limitations, particularly in handling uncertainty and incorporating prior knowledge.
Traditional AR mining techniques, such as Apriori, FP-Growth, and Eclat, rely on frequency counts and face critical limitations, including sensitivity to threshold selection, rule redundancy, and inability to capture uncertainty and rarity adequately. \cite{Gonzalez2020BRM} argued that frequentist thresholds (e.g. subjective min support threshold) often lead to significant yet infrequent associations. In traditional AR, the framework defines items and itemsets $\mathcal{I}=\{i_1, i_2, \dots, i_M\}$, with each $i_k$ as a binary variable, and rules are mined purely based on frequency counts without considering item attributes such as size, shape, color, or price. In contrast, GPAR introduces feature vectors to represent items, modeling the probability surface as a Gaussian process whose posterior accounts for both distances between input item feature vectors and information from transactions. This principled estimation of rule probabilities brings together extra info such prior knowledge about item similarities and correlations, item attributes, and transaction data to make more informed mining. GPAR's probabilistic nature enhances rule quality by modeling latent item relationships and non-linear dependencies, such as how item attributes influence co-occurrence, beyond simple frequency counts.

\paragraph{Discrete approximation}
When estimating the kernel via MLE, we use the thresholding $i_k = \mathbb{I}(z_k > 0)$ which simplifies the complex relationship between observed binary transactions and latent variables. This approximation, and the resulting approximation of $p(\mathbf{t}_j | \mathbf{z})$ via $z_k$, is a practical solution to the computational intractability of exact likelihood computation, enabling efficient parameter estimation by leveraging the Gaussian process’s structure. It is also theoretically grounded in probabilistic modeling - it aligns with common practices in binary classification with Gaussian processes, such as \textit{probit regression}, where latent variables are thresholded to produce binary outputs. For example, in binary linear classifiers such as GLM \cite{Boyd_Vandenberghe_2018}, the predicted label is often $\text{sign}(z_k)$, and here, $i_k = \mathbb{I}(z_k > 0)$ serves a similar role, bridging the gap between continuous and discrete. However, this simplified mapping may introduce bias, particularly for sparse transactions.

\paragraph{Co-occurrence modeling using Gaussian process}
Traditional GP regression and classification aim to predict individual target variables $z_k$ (the latent membership variable $z$ in our case) given input feature vectors $\mathbf{x}_k \in \mathbb{R}^d$. GP defines a distribution over functions, which allows for predictions with uncertainty estimates at new test points (i.e. new items in our case). This classical framework is useful for tasks that require independent predictions for each input. However, in the context of AR mining, single-point predictions are insufficient - ARM aims to identify statistically significant patterns of items that appear together across transactions \footnote{For example, understanding that customers who purchase 'bread' and 'butter' are also likely to purchase 'milk' involves capturing dependencies among multiple items. Predicting the appearance of 'bread', 'butter' or 'milk' alone doesn't help in decision-making.}. Our objective is therefore to identify co-occurrence patterns (measured by their joint probabilities) among items within transactions.

Classical GP regression only estimates the likelihood of individual item occurrences (e.g. 'bread' or 'milk'), but it does not capture their joint probability distribution. Thus, predicting $z_k$ alone provides limited insight, as ARM fundamentally concerns the likelihood of multiple items co-occurring (which is useful in decision-making such as goods import and layout in stores). We therefore extend GP to co-occurrence estimation through: (1) Feature-based representation of items. Each item $k \in \mathcal{I}$ is mapped to a feature vector $\mathbf{x}_k \in \mathbb{R}^d$, encapsulating attributes such as price, category, or size. These features inform the GP's covariance structure, allowing the model to generalize co-occurrence patterns even to previously unseen items. (2) Joint distribution estimation. Rather than predicting $z_k$ in isolation, GPAR estimates the joint probability $p(z_i, z_j, z_k)$, or more generally $p(\mathbf{z}_I)$ for $I \subseteq \mathcal{I}$. This is evaluated by integrating over the GP's latent space, capturing dependencies and higher-order interactions among multiple items. (3) Monte Carlo sampling for marginalization. To compute co-occurrence probabilities, GPAR performs Monte Carlo sampling (Step 8 in Algo.\ref{algo:GPAR}). For a candidate itemset $I$, the co-occurrence probability is estimated as $p(I) = \mathbb{E}[\mathbb{I}(z_s > 0)]$, where $\mathbb{I}$ is an indicator function that checks if all sampled latent variables $z_s$ are positive, corresponding to the joint activation of the itemset. The Monte Carlo estimation in GPAR is grounded in probabilistic sampling of the latent GP model. For each candidate itemset $I$, the GP models the latent variables $z_s$ as Gaussian-distributed, i.e. $z_s \sim \mathcal{N}(0, K_I)$, where $K_I$ is the submatrix of the RBF kernel computed for the features of the items in $I$. By sampling $z_s$ multiple times and checking for co-activation (all elements $z_s > 0$), GPAR estimates the co-occurrence probability $p(I) = \frac{1}{S} \sum_{s=1}^S \mathbb{I}(z_s > 0)$.

\paragraph{The probabilistic approach to uncertainty quantification} 
Uncertainty is represented in GPAR using distributions. As GP is a probabilistic model, it naturally provides uncertainty measure. For example, the covariance matrix of the GP model represents the similarity between the feature vectors of items. When the GP is trained via maximum likelihood (Eq.\ref{eq:GP_training_MLE}), this covariance matrix encodes uncertainty about the relationships between items, conditioned on the available data. Also, probability is used in the inference stage. GPAR performs rule inference (i.e. estimating the co-occurrence probability of sub-itemsets) via Monte Carlo sampling from a marginalised posterior (i.e. the multivariate Gaussian distribution after GP training), these posterior samples represent a range of possible values for the latent variables, they represent information from observed data, capture variability and enhance robustness: if co-occurrence is consistent across samples, confidence in the association is stronger.  

Uncertainty quantification is particularly important for sparse datasets in which many itemsets may have limited co-occurrence information. The GP's uncertainty estimates provide a way to account for missing data by propagating uncertainty rather than assuming zero co-occurrence. It helps generate more robust rule because it does not rely solely on observed, empirical frequencies, but instead, builds a predictive model which models the underlying distribution of co-occurrence. For example, if two items have only co-occurred twice, the GP still provides a probabilistic measure of their joint activity, rather than declaring it purely absent.

\paragraph{Kernel design} 
Representing items as feature vectors allow modelling the similarities and dissimilarities among items, taking into account these further associations between items may enhance the mining performance. However, care should be taken to craft kernels that reflect the dependency relations among substitute and complementary items. For example, using common feature representations such as [category, size, shape, color, price], papers and beers have very different feature vectors and they are naturally distant as quantified by a radial basis function or linear kernel. However, their likelihood of co-occurrence can be high. Therefore, the classic kernels may not be applicable in GPAR.

On \textit{Synthetic 1}, GPAR with neural net kernel and NTK offer fast inference (faster than Apriori, FP-Growth, and Eclat, see Table.\ref{tab:synthetic1_comparison_runtime}) and negligible memory usages (Table.\ref{tab:synthetic1_comparison_memory}). GPAR with RBF/shifted RBF kernel provide richer patterns (less than Apriori, FP-Growth, and Eclat, see Table.\ref{tab:synthetic1_comparison_frequentItemsets} and Table.\ref{tab:synthetic1_comparison_noRules}) at the cost of higher computational time. Traditional methods Apriori, FP-Growth, and Eclat offer a balance of speed and pattern discovery, making them competitive alternatives for ARM on synthetic datasets.

\paragraph{Alternative strategies to mitigate high cost of rule inference} In GPAR, rule inference is performed by computing the probability over the positive quadrant ( Eq.\ref{eq:rule_inference_via_integral_of_marginalised_joint}) using the marginalised Gaussian posterior. In high dimensions, this integral is not analytically derivable; instead, we employ Monte Carlo sampling (Eq.\ref{eq:GPAR_marginal_posterior_MC_approx} and Step 8 in Algo.\ref{algo:GPAR}), which is numerically tractable but computationally expensive, to approximately compute this integral. There are alternative strategies to calculate the high dimensional integral $p(I)$ in Eq.\ref{eq:rule_inference_via_integral_of_marginalised_joint}, including: (1) approximating with univariate marginals. We can approximate the joint probability $p(z_i > 0 \; \forall i \in I)$ by assuming independence between the latent variables and using the univariate marginal probabilities, which factorises the high-dimensional integral into products of univariate Gaussian integrals. However, similar to the item dependency-free BARM, the independence assumption degrades the performance of GPAR as well. (2) Quadrature methods such as Gauss-Hermite. Use numerical quadrature to approximate the multivariate integral. (3) Cubature methods such as spherical-radial cubature can be used to approximate the multivariate integral more efficiently than quadrature, leveraging the structure of the Gaussian distribution. (4) Sampling via particle-based variational inference (ParVI). We can use ParVI such as SMC \cite{Doucet2001}, SVGD \cite{Liu2016SVGD} (also in Appendix.\ref{app:SVGD}), EParVI \cite{EParVI2024}, SPH-ParVI \cite{SPH_ParVI2024}, MPM-ParVI \cite{MPM_ParVI2024}, etc, to generate samples from the GP posterior, then compute the probability. Efficiencies and accuracies of these methods remain to be explored though. (5) Analytical computation using CDF for small itemsets. For small itemsets ($|I| \leq 2$), computing the probability analytically using the bivariate normal CDF is more efficient than sampling, therefore, we can use a hybrid analytical-sampling scheme to generate samples from the GP posterior. (6) Importance sampling can be used to improve the efficiency of Monte Carlo sampling by focusing samples in regions of interest (where $\mathbf{z} > 0$).

Each of these alternative methods aims to reduce the computational burden while leveraging the trained GP’s covariance structure, but they vary in accuracy, complexity, and suitability for different itemset sizes. The analytical CDF approach (strategy 5) is suitable for small itemsets ($|I| \leq 2$), offering exact probabilities with minimal computational cost ($\mathcal{O}(1)$), making it ideal for early stages of GPAR where such itemsets are prevalent; however, it is intractable for larger itemsets. Importance sampling (strategy 6) and SMC (strategy 4) enhance Monte Carlo sampling by focusing on regions of interest, potentially reducing the number of samples needed, but they introduce complexity in implementation and require careful tuning to avoid high variance. Methods like quadrature (strategy 2) and cubature (strategy 3) scale poorly with itemset size due to exponential or polynomial growth in evaluations, while univariate marginals (strategy 1) oversimplifies correlations and drops off the learned covariance structure - a key advantage of using GP, leading to inaccurate approximations despite their theoretical efficiency. SVGD (strategy 4) can converge to a good approximation of the target distribution with fewer particles than Monte Carlo sampling requires samples, potentially with improved accuracy, as it (implicitly) minimizes the KL divergence to match the target, and evolves particles towards high-probability regions of $\mathcal{N}(0, K_I)$. It can potentially reduce variance compared to naive Monte Carlo sampling, while improving robustness by exploiting the covariance structure $K_I$ from the trained GP. Further, SVGD is deterministic and provides more flexibility due to its non-parametric nature, which is good for approximating complex distributions.

\paragraph{GPAR: pros and cons}
GPAR offers several benefits including: 
(1) GPAR enables modelling item dependency. Traditional association rule (AR) mining relies on frequency counts from the transaction matrix $\mathcal{T}$ to compute probabilities and identify patterns, treating items as discrete entities without considering attribute-based dependency. In contrast, GPAR directly calculates the probabilities from the learned GP model - it encodes attributes into feature vectors and employs a Gaussian process with a kernel function to capture similarities between items. The GP posterior is obtained by taking into account the distance between input feature vectors, considering human beliefs (incorporating item correlations into kernel design) and by absorbing the information from transactions, and the probability of each rule is obtained from the posterior. This principled way of estimating the rule probability enables GPAR to uncover latent patterns, such as low co-occurrence probabilities for similar or substitute items (e.g. dairy products), and high co-occurrence probabilities for complementary items (e.g. beer and diapers) that traditional methods might overlook due to low frequency. 
(2) The probabilistic GPAR framework naturally and explicitly handles uncertainty in noisy or incomplete datasets and provides uncertainty quantification (i.e. credible intervals). This is particularly useful in domains such as healthcare or finance, where data may be scarce and the quality varies. 
(3) Prior knowledge can be incorporated through wise feature design and kernel selection. The kernel (covariance) function encodes domain knowledge, such as known similarities between items based on attributes, enhancing model generalization (as the GP model generalizes based on the features, not just the transaction records). For example, in retail, items with similar sizes or colors might be more (un)likely to co-occur, which can inform store layouts. 
(4) The flexibility and robustness of GPAR enable it to capture complex patterns. GPs can model non-linear relationships, potentially capturing more nuanced associations than traditional methods, such as rare but significant patterns in niche markets, which could be missed by support thresholds as used by traditional ARM methods. Flexibility is introduced in GPAR via handling uncertainty explicitly, which potentially helps discover rare but significant patterns such as unusual pairings in specialized retail or medical symptom-disease associations. GPAR is robust due to its principled handling of uncertainty (e.g. missing data, prior knowledge representation, etc). GPAR represents items using feature vectors (e.g. [size, color, shape, price]), allowing it to uncover deeper, latent patterns and correlations based on item similarities that traditional frequency-based methods might miss. 
(5) Another significant advantage of GPAR is that it facilitates new item co-occurrence estimation. The feature-based representation overcomes the static inference limitation of traditional methods, and enables GP to handle new items seamlessly - we only need to augment the covariance matrix of GP, without training the GP model from scratch. Traditional unsupervised AR mining approaches are based on empirical counting, and are not directly extendable to new items - the absence of a predictive model limits their adaptability to new data; to generate rules involving a new item, they will have to reprocess the entire dataset.

However, there is a trade-off for using the GP’s probabilistic framework: it also induces significant computational challenges, with a dominating complexity of $\mathcal{O}(2^M (M^3 + SM^2))$ for itemset enumeration and marginal probability evaluations, requiring approximations such as Monte Carlo sampling due to the lack of closed-form solutions for high-dimensional Gaussian integrals \footnote{GPAR’s computational complexity (cubic in $M$) and probabilistic nature make it less accessible compared to traditional methods’ linear to quadratic scaling (Table.\ref{tab:gpar_comparison}). GPAR is also less scalable due to its lack of pruning as compared to Apriori. One can (marginally) improve the posterior sampling efficiency using a hybrid sampling scheme. For example, for $|I| \leq 2$, we use analytical approximations thanks to the availability of low-dimensional Gaussian CDF; for $|I| > 2$, use MCMC or SVGD to approximate the posterior and estimate the probability.}. Monte Carlo sampling the GP posterior can be slower than direct counting, especially for large itemsets. Also, accuracy of the Monte Carlo estimate relies on the number of samples $S$. A small $S$ (e.g. 100) may lead to noisy estimates, especially for large number of items (i.e. high dimensions), and rare itemsets with small occurrence probabilities. Further, GPAR requires both access to the feature matrix and transaction records, which may not be simultaneously available in real-world applications. Even when available in data holders such as supermarkets, they may be distributed in different databases, and some aggregation efforts are needed to link the product features and transaction records together. Additionally, careful feature engineering (for crafting the feature matrix $X = [\mathbf{x}_1, \mathbf{x}_2, \dots, \mathbf{x}_M]^\top$) and kernel design may be demanded to effectively incorporate domain knowledge, limiting its scalability for large datasets. Another disadvantage is the lower interpretability of GPAR results, due to its complex probabilistic outputs, compared to the straightforward ``if-then'' rules of traditional methods, which are more intuitive as they rely on frequency counts, as discussed below.

\paragraph{GPAR: interpretability}
GPAR’s lower interpretability stems from its probabilistic, latent variable-based approach, contrasting with the straightforward, frequency-driven \textit{if-then} statements of traditional ARM methods. Traditional frequency-based methods excel in interpretability due to their clear, frequency-driven statements. For example, a rule such as \{bread\} → \{milk\} (i.e. 'if bread, then milk') with 60\% \textit{confidence} and 15\% \textit{support} straightforwardly communicates actionable insights: 60\% of transactions involving bread also include milk, occurring in 15\% of all transactions. Such simplicity and clarity align closely with common statistical literacy, making these rules easily understandable and actionable in real world application such as retail, even without advanced statistical knowledge. 

In contrast, GPAR provides probabilistic outputs along with uncertainty estimates. GPAR's probabilistic rules, such as reporting a 0.75 probability with a credible interval for {bread, milk} co-occurrence, are less intuitive and require stakeholders to interpret continuous latent variables and statistical measures rather than simple frequency counts. Consequently, additional tools and training may be needed to e.g. visualize latent variable relationships for effective use. For example, a retailer might struggle to translate GPAR’s probabilistic rule into store layout decisions without additional analysis, whereas Apriori’s frequency-based rule is immediately actionable. Although GPAR is powerful in handling uncertainty and incorporating prior knowledge, it lowers interpretability and requires greater technical expertise, such as familiarity with Gaussian processes and statistical modeling techniques. This trade-off highlights the need for user-friendly interfaces or educational resources to bridge the interpretability gap, ensuring GPAR’s potential benefits in uncertainty handling and prior knowledge integration are realized in practice.

\paragraph{GPAR: implementation considerations}
When implementing GPAR, there are several practical considerations: (1) Python libraries for probabilistic models. \textit{GPyTorch} \cite{GPyTorch2018} provides a highly efficient and modular implementation of GPs, with GPU acceleration. \textit{GPy} \cite{GPy2014} and \textit{GPflow} \cite{GPflow2017} also offer convenient implementation of GPs. (2) Choice of kernel and attributes. The kernel function (e.g. RBF) and the choice of item attributes significantly affect results. Domain knowledge is required to select appropriate attributes and kernels, and feature engineering may be needed to ensure meaningful representations. (3) Scalability. Traditional ARM algorithms scale well with large datasets, with complexities linear to quadratic in the number of transactions. GPAR’s computational cost, generally cubic in $M$ for GP inference and sampling (rule inference), limits its applicability to small to medium-sized datasets. To reduce GP inference complexity, one can use sparse variational Gaussian processes (SVGP) with re-sampled inducing points.

\paragraph{Applications of GPAR}
GPAR works well for smaller datasets where we have detailed item attributes, it handles uncertainty and uses prior knowledge about item similarities. However, for large datasets with many items, GPAR can be slow as evaluating rule probabilities involves Monte Carlo approximation which is costive. Traditional methods such as Apriori are faster for big data but don’t capture uncertainty. One of the benefits of GPAR being, it can find rare but significant patterns, such as unusual item combinations in niche markets, by updating beliefs in these patterns, which standard methods might miss. Therefore, GPAR with feature vectors is particularly suitable for: (1) small to medium-sized datasets (e.g. $M \leq 15$), where computational resources are not a constraint, and probabilistic insights are valuable, such as small retail analyses. (2) specialized domains, where uncertainty quantification is critical, such as medical diagnosis (symptom-disease associations) or financial risk assessment, or where items are dynamically updated. (3) Exploratory analysis for uncovering novel patterns or testing hypotheses, especially when prior knowledge can guide the model through item attributes. However, for large-scale applications such as market basket analysis in supermarkets, traditional methods are likely more efficient. The trade-off is between the richness of probabilistic modeling and computational feasibility.

\paragraph{BARM: overview}
BARM represents a general probabilistic framework for AR mining by adopting a fully Bayesian approach to model item co-occurrence probabilities. Unlike traditional methods such as Apriori or FP-Growth, which rely on frequency counts, BARM leverages Bayesian inference to incorporate prior knowledge on item presence probabilities and a correlation structure informed by item features (there is also an item dependency-free version of BARM, see Section.\ref{sec:BARM_variant}), enhancing flexibility and robustness. This probabilistic approach offers the most principled and general framework for quantifying uncertainty, providing full posterior distributions over co-occurrence probabilities, which is particularly valuable for interpreting rare or uncertain rules, such as unusual purchasing patterns or symptom-diagnosis associations. Compared to the Gaussian process-based GPAR, BARM avoids the computationally intensive $\mathcal{O}(M^3)$ covariance matrix operations, achieving a complexity of $\mathcal{O}(S_{\text{MCMC}} \cdot N \cdot M^2 \cdot d + 2^M \cdot S \cdot M^2)$ when considering pairwise correlations, or $\mathcal{O}(S_{\text{MCMC}} \cdot N \cdot M + 2^M \cdot S \cdot M)$ otherwise. The dependency-free variant further reduces this to $\mathcal{O}(N \cdot M + 2^M \cdot S \cdot M)$ by exploiting prior-likelihood (Beta-Bernoulli) conjugacy (after removing the correlation adjustment factor), eliminating the need for MCMC sampling and feature matrices. However, BARM’s reliance on MCMC sampling in its full form can be computationally expensive, particularly for large datasets, and its performance on the Synthetic 1 dataset revealed a systematic underestimation of support probabilities compared to empirical frequencies (e.g. the observed discrepancy between a support of 0.1916 and an empirical count of 496/1000 for the top rule in Table.\ref{tab:BARM_analysis_minSupport01_Synthetic1}). This smoothing effect, inherent to probabilistic modeling, may mislead users about the true frequency of itemsets, posing a challenge for applications requiring precise frequency estimates.

Despite its strengths, such as robust uncertainty quantification through full posterior distributions, flexible incorporation of prior knowledge, and the ability to model item dependencies with feature-informed correlation structures, rare rules mining, continuous posterior updating (e.g. Bayesian online learning), etc, BARM faces challenges that desire future investigation. The computational cost of MCMC sampling, especially when considering item dependencies, limits scalability for high-dimensional or large-scale datasets, suggesting a need for more efficient approximate inference methods, such as advanced variational inference techniques to accelerate computations. The dependency-free BARM, while computationally efficient, sacrifices the ability to capture real-world item correlations, leading to near-trivial lift values (e.g. 1.0001) and weaker rule associations, as seen in the Synthetic 1 experiments. This trade-off highlights the importance of developing hybrid approaches that balance computational efficiency with dependency modeling. Additionally, the underestimation of support probabilities, a common issue in probabilistic methods, could be addressed by calibrating posterior distributions to align more closely with empirical frequencies, possibly through post-processing techniques or alternative prior specifications. Future research could also explore adaptive prior selection to mitigate bias in small datasets and enhance user-friendly interfaces to improve the interpretability of BARM’s probabilistic outputs, ensuring its practical utility in diverse applications such as market basket analysis or medical diagnostics.

\subsection{Reinforced approaches to AR mining}

The MAB-ARM (Section.\ref{sec:MAB_ARM}, MCTS (Section.\ref{subsec:MCTS-ARM}) and RLAR (Section.\ref{sec:RLAR}) frameworks represent the reinforced approaches to association rule mining, using reward-guided strategies to navigate the combinatorial space of itemsets. MAB-ARM adapts the multi-armed bandit (MAB) framework, treating itemsets as arms and employing an UCB strategy to balance exploration and exploitation, prioritizing itemsets with high co-occurrence probabilities. MCTS uses UCB for efficient exploration and dynamic reward updates for adaptive learning. RLAR uses a deep Q-network (DQN) to learn a policy that optimizes itemset selection based on a composite reward function incorporating support, confidence and lift. These methods contrast with traditional frequency-based methods (Apriori, FP-Growth, Eclat) and probabilistic approaches (GPAR and BARM), offering dynamic exploration mechanisms suited for discovering high-quality rules, particularly in scenarios requiring targeted exploration of rare patterns.

\paragraph{MAB-ARM: overview}

MAB-ARM features dynamic scheduling of exploration of the itemset space, i.e. using the UCB and pruning strategy, it can efficiently explore the itemset space \footnote{Although the total number of evaluations $T_{\text{max}}=2^M$ is used in our experiments. The need to evaluate all \(2^M\) possible itemsets can be reduced by limiting evaluations to a fixed number \(T_{\text{max}}\) in Algo.\ref{algo:MAB-ARM}.}. This results in a computational complexity of \(\mathcal{O}(T_{\text{max}} \cdot (\sum_{m=m_{\text{min}}}^{m_{\text{max}}} \mathcal{C}_m^M + N \cdot M))\), which is generally more efficient than GPAR’s \(\mathcal{O}(2^M \cdot (M^3 + S M^2))\) and comparable to BARM’s \(\mathcal{O}(S_{\text{MCMC}} \cdot N \cdot M + 2^M \cdot S \cdot M)\). On the Synthetic 1 dataset, MAB-ARM generated 1013 frequent itemsets and 57024 rules at a support threshold of 0.1 (Table.\ref{tab:MABARM_metrics_Synthetic1_secondRun}), closely matching Apriori’s output (1023 itemsets in Table.\ref{tab:synthetic1_comparison_frequentItemsets}, 57,002 rules in Table.\ref{tab:synthetic1_comparison_noRules}) while maintaining competitive runtime (2.3471 seconds \textit{vs} Apriori’s 0.483 seconds in Table.\ref{tab:synthetic1_comparison_runtime}) . Its reliance on empirical frequencies ensures accurate support estimation, avoiding the underestimation observed in GPAR and BARM (e.g. BARM’s support of 0.1916 \textit{vs} count/1000 = 0.496 for rule \([1, 4, 6, 9] \to [2]\), Table.\ref{tab:BARM_analysis_minSupport01_Synthetic1}). The improved MAB-ARM variant (Algo.\ref{algo:EMAB-ARM}), incorporating the Apriori principle for associative probability updates, further enhances efficiency by reducing redundant evaluations. This makes MAB-ARM particularly suitable for large datasets where computational efficiency and precise frequency-based insights are desired, such as market basket analysis.

\paragraph{RLAR: overview}

RLAR (Section.\ref{sec:RLAR}) leverages reinforcement learning to learn a generalizable policy for itemset selection. In our design, RLAR uses a deep Q-network (DQN) to navigate the itemset space, where states represent itemsets (encoded as bit-vectors with additional features such as support and confidence), and actions involve adding or removing items. The composite reward function, defined in Eq.\ref{eq:RLAR_reward_design}, blends support, confidence, and lift: it assigns a positive reward based on a weighted combination of confidence and lift for itemsets with support \(\geq min\_supp\), a negative reward (-1) for those below, and zero otherwise. This design enables the discovery of high-quality rules, as evidenced by the top rule \([0, 3, 4, 6, 9] \to [2]\) with confidence 0.9838 and support 0.485 on the Synthetic 1 dataset (Table.\ref{tab:rlar_rules_item2}). RLAR’s empirical support calculation aligns with observed frequencies, matching MAB-ARM and traditional methods while avoiding the smoothing effects of GPAR and BARM (Fig.\ref{fig:support_vs_count}). The DQN’s adaptability, demonstrated by increasing itemset and rule counts at higher support thresholds (Table.\ref{tab:rlar_metrics}), suggests its ability to refine focus on robust patterns. The top-\(k\) action selection strategy (e.g. \(k=3\)) during rule extraction enhances diversity in rule discovery, offering a strategic advantage over single-trajectory methods. Unlike the MCTS approach, which uses tree-based exploration with UCB, DQN offers a scalable, learning-based solution suitable for large datasets. RLAR’s scalability to large datasets, supported by bit-vector representations and pruning, positions it as a flexible alternative to exhaustive methods such Apriori, particularly for applications requiring adaptive rule discovery. By carefully designing the reward function and optimizing the training process, RLAR can provide a robust tool for ARM, complementing traditional methods.

\paragraph{Issues and challenges}
Unlike GPAR and BARM’s kernel-based correlation structures, RLAR methods such as MAB-ARM and DQN have limited ability to model item dependencies as they lack notion of item dependency. This results in moderate lift values, e.g. 1.7612 for RUle $[2, 3, 4, 5] \to [0, 1, 6, 9]$ in Table.\ref{tab:MABARM_mined_rules_rankedByLift_minSupport01_Synthetic1} and 1.5512 for Rule $[item\_5, item\_9] \to [item\_3, item\_4, item\_8]$ in Table.\ref{tab:RLAR_mined_rules_rankedByLift_minSupport01_Synthetic1}, compared to GPAR’s peak lift of 12.1212 (Table.\ref{tab:RBF_GPAR_mined_rules_rankedByLift_minSupport01_Synthetic1}). RLAR, while powerful, incurs significant computational overhead, with runtimes up to 5337.17 seconds and memory usage peaking at 433.24 MB at min support of 0.1 (Table.\ref{tab:rlar_metrics}), far exceeding MAB-ARM, GPAR, BARM, and traditional methods (Table.\ref{tab:comparison_runtime}). Its training instability at high support level (0.5), evidenced by reward fluctuations in Fig.\ref{fig:five_cumulative_rewards_plot}, poses further challenges. The unusual increase in itemsets and rules at higher support thresholds (Table.\ref{tab:rlar_metrics}) may suggest dataset-specific behavior or overfitting to the Synthetic 1 dataset’s structure, requiring further investigation.

\paragraph{Potential improvements}
For MAB-ARM, integrating feature-based priors or item similarity metrics could enhance dependency modeling, bridging the gap with GPAR and BARM while maintaining empirical accuracy. Adaptive \(T_{\text{max}}\) adjustment based on dataset size or itemset frequency could optimize exploration, reducing memory spikes and redundant evaluations. Optimized pruning strategies, such as leveraging itemset hierarchies more aggressively, could further improve efficiency. For RLAR, mitigating computational overhead involves employing GPU acceleration, optimizing data structures (e.g. bit-vectors), and reducing \(E_{\text{max}}\) or \(S_{\text{max}}\) to lower complexity \footnote{We suggest reduce the number of steps $S_{\text{max}}$ in each episode, rather than reducing the number of episodes $E_{\text{max}}$ ton ensure convergence of cumulative rewards and therefore agent training.} from \(\mathcal{O}(E_{\text{max}} \cdot S_{\text{max}} \cdot (M + N \cdot m_{\text{max}} \cdot 2^{m_{\text{max}}} + B_m \cdot |\theta|))\). Reward sparsity could be addressed through tailored reward designs and exploration strategies, e.g. incremental rewards for frequent itemsets or hybrid reward functions incorporating domain knowledge. Tuning the epsilon decay rate could enhance exploration-exploitation balance, while integrating feature-based state representations can enhance rule discovery (e.g. encouraging rule diversity), aligning RLAR closer to GPAR’s dependency-based complex pattern discovery.

\paragraph{Comparison with probabilistic approaches}
Compared to GPAR and BARM, MAB-ARM (including MCTS-ARM) offers distinct advantages in empirical accuracy and efficiency; RLAR stands out in empirical accuracy and adaptivity. MAB-ARM is fast while RLAR is slow (Table.\ref{tab:comparison_runtime}). However, both of them lack item dependency modeling, which degrades their lift values compared to GPAR’s high-lift rules. RLAR’s DQN-based policy learning generates fewer rules (1106 at 0.1, Table.\ref{tab:comparison_rules}) than GPAR with RBF kernel (54036) and MAB-ARM (50372) but achieves comparable confidence (e.g. 0.9838 for the top rule in Table.\ref{tab:RLAR_mined_rules_rankedByConfidence_minSupport01_Synthetic1} \textit{vs} MAB-ARM’s 0.9981 in Table.\ref{tab:MABARM_mined_rules_rankedByConfidence_minSupport01_Synthetic1}) and empirical support accuracy. GPAR and BARM’s dependency modelling and uncertainty quantification are advantageous. GPAR’s dependency modeling produces superior lift but at higher computational cost and with support underestimation. Future hybrid approaches combining MAB-ARM’s efficiency, RLAR’s adaptability, and GPAR/BARM’s dependency modeling and uncertainty notion could yield a robust reinforced ARM framework for diverse applications.

\paragraph{ARM with complex data structure} Traditional ARM methods relies on binary attributes for simplicity and computational efficiency. To use these algorithms one typically convert attributes into a binary format using \textit{discretization} or \textit{binarization}, which leads to an explosion of number of attributes (i.e. increase in dimensionality) and potential loss of information. Adapting these methods to accommodate complex data types, such as multi-categorical or numerical attributes, remains a challenge. Researchers have proposed methods for mining association rules directly from non-binary datasets. Such approaches preserve the richness of the original data and reduce dimensionality by capturing complex relationships without binarization. For example, techniques for numerical attributes include tailored discretization strategies or algorithms that process continuous data directly. Srikant et al. \cite{Srikant1997} introduced methods, such as \textit{Cumulate} and \textit{EstMerge}, for mining generalized association rules. These techniques extend traditional AR mining to handle hierarchical and quantitative attributes, enabling the discovery of sophisticated patterns without binarization.

\section{Conclusion} \label{sec:conclusion}

This work presents 4 novel approaches to association rule mining (ARM): Gaussian process-based association rule mining (GPAR), Bayesian association rule mining (BARM), multi-armed bandit-based association rule mining (MAB-ARM), and reinforcement learning-based association rule mining (RLAR), complementing traditional frequency-based methods such as Apriori, FP-Growth and Eclat. GPAR transforms ARM into a probabilistic, supervised learning framework by employing a Gaussian process (GP) to model item co-occurrence probabilities, utilizing feature vectors to capture latent relationships and enabling inference for new items without retraining. Experiments on synthetic datasets (e.g. \textit{Synthetic 1}) and real-world data (e.g. \textit{UK Accidents}) demonstrate GPAR’s capacity to identify complex, rare, and high-lift rules, such as those linking severe accidents to fine weather, although it also incurs significant computational overhead compared to traditional methods such as Apriori. BARM adopts a fully Bayesian approach, modeling item presence probabilities with priors and optional correlation structures, offering robust uncertainty quantification through posterior distributions, as evidenced by its performance on the \textit{Synthetic 1} dataset, where it generated a substantial number of frequent, meaningful itemsets and rules. MAB-ARM, including its Monte Carlo tree search (MCTS) companion, leverages the upper confidence bound (UCB) strategy to efficiently explore the itemset space, achieving a comparable number of frequent itemsets and rules on \textit{Synthetic 1} dataset comparing to traditional ARM methods, and with a runtime that outperforms GPAR in scalability. RLAR employs a deep Q-network (DQN) to learn a generalizable policy, excelling in adaptive rule discovery with a notable number of frequent itemsets and rules at higher support thresholds on \textit{Synthetic 1} dataset, although it requires considerable computational resources as well. Collectively, these methods advance ARM by introducing probabilistic and reinforced paradigms, with GPAR and BARM enhancing pattern richness through dependency modeling, MAB-ARM and RLAR offering efficiency and adaptability, respectively.

The novelty of these approaches lies in their distinct working mechanisms compared to traditional frequency-based methods, as well as their contributions to different aspects of AR mining and their adaptability to diverse application scenarios. GPAR’s integration of Gaussian processes with feature-based representations enables dynamic and continuous inference, a significant improvement over the static nature of traditional methods. BARM’s Bayesian framework provides a structured approach to modeling item probabilities with priors and correlation structures, generalising the probabilistic framework and offering a robust alternative to frequency-driven techniques. Both GPAR and BARM feature principled prior knowledge encoding and uncertainty quantification, going beyond the empirical nature of traditional algorithms, which may help them identify rare and useful patterns even in small datasets. MAB-ARM’s UCB-driven exploration and MCTS’s tree-based search introduce adaptive exploration strategies, while RLAR’s DQN-based learning offers a policy-driven solution, contrasting with the exhaustive searches of Apriori and others. Therefore, the primary advantages of these methods include GPAR and BARM's ability and flexibility to incorporate prior knowledge and handle uncertainty due to their probabilistic nature, MAB-ARM’s UCB-based efficient exploration, and RLAR’s adaptive policy learning. However, significant disadvantages also exist: GPAR and BARM can suffer from high computational complexity (although the conjugate version of BARM is fast), limiting scalability; MAB-ARM and RLAR lack notion of item dependency; RLAR’s extensive agent training time poses practical constraints. Interpretability also varies, with traditional methods as well as MAB-ARM and RLAR offering intuitive frequency-based rules, while GPAR and BARM’s probabilistic outputs require advanced statistical understanding. Overall, these innovations enrich the methodological landscape of ARM, enhancing its flexibility and robustness, particularly in discovering rare or complex patterns and working with small datasets, although they come with trade-offs in computational cost and interpretability.
 
\paragraph{Future work}
Future research can be directed on enhancing the scalability of GPAR and BARM through approximations, such as sparse Gaussian processes or variational inference, to reduce computational complexity (e.g. from cubic to linear or quadratic in the number of items). For MAB-ARM, integrating feature-based priors for better informed exploration and using adaptive number of iterations could improve dependency modeling and further enhance efficiency, while RLAR could benefit from GPU acceleration and optimized reward functions to address reward sparsity and training instability. Extending these methods to handle non-binary data types, such as hierarchical or numerical attributes, using techniques such as those proposed by Srikant et al. \cite{Srikant1997}, could broaden their applicability to complex datasets. Additionally, developing hybrid approaches that combine the strengths of probabilistic (GPAR, BARM) and reinforced (MAB-ARM, RLAR) methods, e.g. integrating dependency modeling with UCB exploration, may yield a robust framework for diverse applications such as dynamic retail (e.g. market basket analysis in supermarkets) and financial environments, as well as medical diagnostics where efficiency, insight and notation of risk (represented by uncertainties) are paramount.

\paragraph{Data and code availability} The data and codes used in this work can be found on this \href{https://github.com/YongchaoHuang/ARM}{Github repository}.

\bibliographystyle{plain}
\bibliography{references}

\newpage
\appendix

\section{Association rule learning: more details} \label{app:AR}

\subsection{AR: background and concepts}

Association rules (ARs \cite{Agrawal1993mining}) are a statistical method used in data mining for finding item relationships (e.g. co-occurrences) from large datasets such as transactional data (e.g. supermarket purchase records). It is used to find significant co-occurrences, and widely applied in retail, healthcare, and finance for strategic decision-making. For example, it is extensively used in market basket analysis, where retailers identify product pairings for promotions or store layouts. In healthcare, it can uncover symptom-disease associations, and in finance, it aids in fraud detection or customer segmentation. It is also used in ads recommendations. A famous case is the 'beer and diapers' anecdote \footnote{The 'Beer and Diapers' story is debatable - it serves more as a folk tale in data mining than an empirical case study.} \cite{Berry1997Book}, where association mining revealed when young men buying diapers also bought beer, highlighting how seemingly unrelated items can show strong correlations (not causation though), offering insights for business strategies.

ARs are defined as \textit{if-then statements} in the form of implication \{$A$\} → \{$B$\}. For example, a rule stating that if a customer buys bread, they are likely to buy milk, is expressed as \{bread\} → \{milk\}. This method falls under unsupervised learning, as it does not require labeled data, focusing instead on uncovering hidden patterns from raw transactional data. AR learning involves two phases: \textit{identifying frequent itemsets} and \textit{generating association rules}. It starts by finding groups of items, called \textit{frequent itemsets}, that appear together frequently in the data. For example, if many people buy bread and milk together, that can be a frequent itemset. Once frequent itemsets are identified, association rules are generated by splitting each itemset into an antecedent X and a consequent Y, forming rules like \{$A$\} → \{$B$\} (e.g. 'if bread, then milk'). The strength of these rules is evaluated using \textit{confidence} (definition comes later) which indicates how often the rule is true (e.g. how many times people buy milk when they buy bread). Rules must meet a minimum confidence threshold to be considered valid, ensuring reliability. Other metrics such as \textit{support}, implying how often the itemset appears in all transactions, and \textit{lift} which compares the rule's strength to what we'd expect if items were bought independently (i.e. usefulness of the rule), also help decide if the mined rule is useful \footnote{Beyond support and confidence, additional metrics such as \textit{conviction} and \textit{leverage} can also evaluate rule strength. Conviction measures the ratio of expected to observed frequency of X without , while leverage compares the actual support to expected under independence. These metrics help in filtering out less interesting rules.}.

Algorithms with varying mechanisms and efficiencies can be used to identify frequent itemsets. For example, the \textit{Apriori} \cite{Agrawal1994fast} algorithm uses a bottom-up approach, starting with single items and iteratively building larger itemsets while pruning those below the support threshold \footnote{For example, user can specify a '\textit{minimum-interest-level}' - a measure which helps prune a large number of redundant rules (40–60\% of all the rules pruned on real-life datasets as reported in \cite{Srikant1997}).}; the \textit{frequent pattern growth} (FP-Growth \cite{Han2000mining}) algorithm uses a tree structure (FP-tree) to find patterns more efficiently for large datasets, and the \textit{Equivalence class transformation} (Eclat \cite{Zaki2000scalable}) algorithm employs a depth-first search strategy, leveraging equivalence classes for memory efficiency.  

\paragraph{A trivial example}
Consider a synthesized supermarket sales data, shown in Table.\ref{tab:transaction_data}, which contains 5 transaction records. This transaction table only \footnote{Item features such as size, shape, color, price, etc, are not recorded - they may be stored in another database and not used in traditional AR mining.} records the presence or absence of each item. In the table, each column (milk, bread, etc.) is a binary attribute (variable) representing the appearance of a specific item across all transactions. Each row is a transaction (i.e. record of a purchase event), encoded as a binary vector. A '1' means the item was present in the transaction, and a '0' means the item was absent. The scanning of this matrix (i.e. across all transactions and over all items) to find frequent patterns among the 1s is the essence of association rule mining. The binary attribute representation is important, forming the core of how association rule mining works. 

\begin{table}[H]
\centering
\caption{An example transaction (sales) dataset}
\label{tab:transaction_data}
\tiny
\setlength{\tabcolsep}{3pt}
\renewcommand{\arraystretch}{1.1}
\begin{tabular}{p{1.5cm} p{0.6cm} p{0.6cm} p{0.6cm} p{0.6cm} p{0.6cm} p{0.8cm} p{0.6cm}}
\toprule
\textbf{Transaction ID} & \textbf{milk} & \textbf{bread} & \textbf{butter} & \textbf{eggs} & \textbf{beer} & \textbf{diapers} & \textbf{fruit} \\
\midrule
T1 & 1 & 1 & 0 & 0 & 0 & 0 & 1 \\
T2 & 0 & 0 & 1 & 1 & 0 & 0 & 1 \\
T3 & 0 & 0 & 0 & 0 & 1 & 1 & 0 \\
T4 & 1 & 1 & 1 & 1 & 0 & 0 & 1 \\
T5 & 0 & 1 & 0 & 0 & 0 & 0 & 0 \\
\bottomrule
\end{tabular}
\end{table}

As a data analyser, we want to discover the hidden patterns from this transactions to better plan product layout, supply chain and to grow the businesses. For example, if we find that customers buying diapers often buy beer, we can suggest the supermarket to place them together to boost sales \footnote{This 'diapers and beer' example is illustrative, though its empirical basis is debated.}. In practice, association rules are generated using user-specified thresholds of \textit{support} (fraction of transactions containing the itemset) and \textit{confidence} (conditional probability of the rule). For example, the rule \{diapers\} → \{beer\} would need high support and confidence to be actionable, helping inform product placement and promotions. By examining such patterns (rules) across numerous datasets such as transaction records, businesses can make informed decisions about product placements, promotions, and inventory management, enhancing customer satisfaction and profitability.

\paragraph{Exemplifying the concepts}

Here we exemplify the definitions of some concepts used in AR \cite{Agrawal1993mining, Pei2009} using the trivial example.

\begin{itemize}
    \item \textit{Item and itemset}:  an \textit{itemset} is a collection of \textit{one or more items} from $\mathcal{I}$ - the largest itemset. For example, \{milk, bread\} (a 2-itemset) or \{butter, bread, milk\} (a 3-itemset). The largest itemset $\mathcal{I}= \{i_1, i_2, \ldots, i_7\}$ = \{milk, bread, butter, eggs, beer, diapers, fruit\} defines the universe of items (i.e. the universe itemset), representing \textit{distinct} items $i_k$ that customers can buy or of consideration. $\mathcal{I}$ is a set of binary attributes (variables); each element $i_k \in \mathcal{I}$ is an \textit{item} described by a literal - it is a binary \textit{attribute} or \textit{variable}, representing individual products (or services) available for purchase. For example, $i_1 = \text{milk}$ takes value 0 or 1, meaning that in any transaction, this item is either present (1) or absent (0). This binary representation is good for computational efficiency.
    
    \item \textit{Transactions}: the transaction database, or simply \textit{transactions}, is denoted by $\mathcal{T}$, which is a collection of records of individual purchase events, detailing the items bought together by a customer during a single shopping trip. Each transaction $t \in T$ is a binary \textit{vector} (or \textit{list}, indexable) over the set $\mathcal{I}$, represented as a binary vector. For example, Transaction 1 in Table.\ref{tab:transaction_data} is \{milk, bread, fruit\}, and it can be encoded as $\mathbf{t}_1 = [1, 1, 0, 0, 0, 0, 1]$, meaning that in this transaction, this customer bought items $i_1, i_2, i_7$, but not others \footnote{Note the length difference between the literal representation of a transaction and its encoded vector representation.}. In the transaction database, each transaction is represented as an $M$-vector (where $M = |\mathcal{I}|$, i.e. the number of unique items), with each element taking binary value (1 for being present, 0 for absent). A transaction $t \in T$ satisfies \cite{Agrawal1993mining} an itemset X if $t[k] = 1$ for all items $i_k$ in X.

    \item \textit{Support} is defined as the ratio of transactions containing the itemset to the total number of transactions:
    \begin{equation}
        \text{\textit{support}}(A) = \frac{\text{\textit{no. of transactions containing }} A}{\text{\textit{total number of transactions}}}
    \end{equation}
    support measures how frequently the itemset appears in the dataset. For example, if A=\{milk,bread\}, and 2 transactions out of 5 total transactions include both bread and milk, the support is $2/5 = 40 \%$.
    A \textit{frequent itemset} is a collection of items that appear together in a significant number of transactions, determined by a minimum support threshold.
    \item \textit{Confidence} is the conditional probability that a transaction containing the antecedent also contains the consequent, calculated as the ratio of transactions containing both to those containing the antecedent:
    \begin{equation}
        \text{\textit{confidence}}(A \rightarrow B) = p(B \mid A) = \frac{\text{\textit{support}}(A \cup B)}{\text{\textit{support}}(A)}
    \end{equation}
    confidence measures the proportion of transactions containing A that also contain B. It indicates the likelihood that itemset $B$ is purchased when itemset $A$ is purchased. For example, we note from Table.\ref{tab:transaction_data} that, $\text{\textit{support}}(\text{milk} \cup \text{bread})$=2/5=0.4, $\text{\textit{support}}(\text{milk})$=2/5=0.4, therefore $\text{\textit{confidence}}(\text{milk} \rightarrow \text{bread})$ = 1.0, meaning that $100\%$ of transactions with milk also have bread (the other way round is NOT true), the confidence of the rule '$\text{milk} \rightarrow \text{bread}$' is $100\%$.
    AR algorithms such as Apriori rely on thresholds of support and confidence to prune infrequent itemsets.
    \item \textit{Lift} is used to assess the rule's usefulness, calculated as:
    \begin{equation}
        \text{\textit{lift}}(A \rightarrow B) = \frac{\text{\textit{confidence}}(A \rightarrow B)}{\text{\textit{support}}(B)} = \frac{\text{\textit{support}}(A \cup B)}{\text{\textit{support}}(A) \times \text{\textit{support}}(B)}
    \end{equation}
    lift assesses the strength of a rule over the random occurrence of itemset $B$, given itemset $A$. A rule with a lift of $1.0$ indicates that the antecedent and consequent occur independently, meaning there's no meaningful association between them and no rule can be inferred. A lift greater than 1 indicates a positive correlation, suggesting the rule is more significant than expected under independence. For example, from Table.\ref{tab:transaction_data} hints that, $\text{\textit{lift}}(A \rightarrow B)=\frac{2/5}{2/5 \times 3/5} \approx 1.65$, indicating a positive correlation and implying that the rule is $65\%$ more likely than random chance, suggesting an actionable insight that placing bread and milk near each other might increase sales.
\end{itemize}

\subsection{Traditional AR mining algorithms} \label{app:traditional_AR_mining_algos}

AR mining has traditionally relied on frequentist techniques that identify rules based on observed frequency counts in datasets. Here we give more details about the 3 commonly used algorithms in AR, i.e. Apriori, FP-Growth, Eclat. The seminal Apriori algorithm proposed by Agrawal et al. \cite{Agrawal1994fast} leverages a bottom-up approach by generating candidate itemsets and pruning those failing to meet a minimum support threshold. FP-Growth \cite{Han2000mining} improved efficiency by eliminating candidate generation through a compact FP-tree structure, while Eclat \cite{Zaki2000scalable} introduced depth-first searching and intersection of transaction lists for frequent itemset discovery. As in the main texts, $N$ denotes the number of transactions and $M$ the number of items.

\paragraph{Apriori}
The Apriori algorithm \cite{Agrawal1994fast} is a foundational method in data mining for discovering frequent itemsets and generating association rules. It operates on the Apriori principle that if an itemset is frequent, all its subsets must also be frequent. The algorithm employs a breadth-first, level-wise search strategy to identify frequent itemsets by generating candidate sets and pruning those that do not meet a minimum support threshold. This process continues iteratively, expanding itemsets by one item at a time, until no new frequent itemsets are found. Subsequently, association rules are derived from these frequent itemsets, retaining those with confidence above a specified threshold.

\begin{itemize}
    \item Initialize with frequent itemsets of size 1. Scan the dataset to find all items with support greater than the minimum support threshold, denoted as $\text{min\_sup}$. Support is calculated as:
       \[
       \text{support}(A) = \frac{\text{no. of transactions containing } A}{\text{total number of transactions}}
       \]
    Form the set of frequent 1-itemsets, $L_1$.
    
    \item Generate candidate itemsets. For each level $k$, generate candidate itemsets of size $k+1$ by joining frequent itemsets of size $k$, $L_k$, using the Apriori property. Prune candidates whose subsets are not in $L_k$.
    
    \item Count support. Scan the dataset to count the support of each candidate itemset, retaining those with support $\geq \text{min\_sup}$ to form $L_{k+1}$.
    
    \item Iterate. Repeat until no new frequent itemsets are found, i.e. $L_{k+1} = \emptyset$.
    
    \item Generate rules. For each frequent itemset $A$, generate all possible rules $A \setminus \{$B$\} \rightarrow B$, where $B \subseteq A$. Calculate confidence:
       \[
       \text{confidence}(A \setminus \{B\} \rightarrow B) = \frac{\text{support}(A)}{\text{support}(A \setminus \{B\})}
       \]
   Retain rules with confidence $\geq$ $\text{min\_sup}$.
\end{itemize}

Time complexity: in the worst case, Apriori generates all possible itemsets, leading to $\mathcal{O}(N \times M \times 2^M)$, where $N$ is the number of transactions and $M$ the number of items. This arises from scanning the dataset $N$ transactions for each candidate, and the number of candidates can be up to $2^M - 1$. However, pruning reduces this in practice, making it often $\mathcal{O}(N \times M \times C)$, where $C$ is the number of candidate itemsets after pruning.
Space complexity: requires storing candidate itemsets, which can be $\mathcal{O}(2^M)$ in the worst case, though pruning helps reduce memory usage.

\paragraph{Frequent pattern growth (FP-Growth)} 
FP-Growth \cite{Han2000mining} addresses the inefficiencies of Apriori by eliminating the need for candidate generation and compressing the dataset into a Frequent Pattern Tree (FP-Tree, a compact representation of the dataset \footnote{The FP-Tree compresses the dataset, reducing the need for multiple dataset scans. The mining process leverages the tree structure to find patterns efficiently.}) and mining it using a divide-and-conquer strategy. It is efficient for large datasets, especially when memory is a constraint. FP-Growth is noted for its ability to handle dense datasets better than Apriori, reducing runtime significantly.

\begin{itemize}
    \item Build FP-Tree. Construct the FP-Tree by scanning the dataset twice. First, find frequent 1-itemsets with support $\geq$ min\_sup. Second, build the tree by inserting transactions as paths, with nodes representing items and counts reflecting frequency. Use a header table for quick access to frequent items.
    \item Mine frequent itemsets. Use a recursive \textit{divide-and-conquer} approach:
        \begin{itemize}
            \item Start with the least frequent item in the header table.
            \item For each frequent item, construct a conditional FP-Tree based on paths leading to it, and recursively mine for frequent itemsets.
            \item Combine results to form all frequent itemsets without generating candidates.
        \end{itemize}
    \item Generate association rules. From the frequent itemsets, generate rules by splitting each itemset into antecedent and consequent, computing confidence, and retaining rules with confidence $\geq$ $min\_conf$.
\end{itemize}

Time complexity: building the FP-Tree is $\mathcal{O}(N \times M)$, as it requires two scans of the dataset. Mining the tree depends on the tree's structure, typically $\mathcal{O}(M^2)$ for dense datasets, but can be better for sparse data. Overall, its time complexity is $\mathcal{O}(N \times M + M^2)$, with the mining phase being more efficient than Apriori due to no candidate generation.
Space complexity: the space required is proportional to the size of the FP-Tree, which can be $\mathcal{O}(N \times M)$ in the worst case, but is usually more compact due to compression.

\paragraph{Equivalence class transformation (Eclat)} 

Eclat \cite{Zaki2000scalable} algorithm utilizes a depth-first search strategy and a vertical data format to find frequent itemsets. By leveraging transaction ID lists (tidsets) for each item, Eclat computes the intersection \footnote{Eclat represents the dataset in a vertical format, where each item is associated with a list of transaction IDs in which it appears. Frequent itemsets are discovered by intersecting these tidsets, and itemsets with support above the minimum threshold are considered frequent.} of these tidsets to determine the support of itemsets, which can be more efficient than the horizontal data layout used by Apriori, especially for sparse datasets. It is memory-efficient, particularly for sparse datasets, as it avoids generating all possible candidates, relying instead on equivalence classes. 

\begin{itemize}
    \item Convert to Binary Matrix. Transform the dataset into a binary matrix where each row is a transaction and each column is an item, or equivalently, maintain tidsets for each item (list of transaction IDs containing the item).
    \item Depth-first search with equivalence classes. Start with single items and their tidsets. For each frequent item, form equivalence classes by joining with other items, computing the intersection of tidsets to find larger frequent itemsets:
        \begin{itemize}
            \item If tidset(A) and tidset(B) are the transaction IDs for items A and B, the tidset for {A, B} is their intersection.
            \item Prune itemsets with support \footnote{Support of itemset X: $\text{support}(A) = \frac{|\text{tidset}(A)|}{N}$, where $\text{tidset}(A)$ is the intersection of tidsets of items in X, and $N$ is the total number of transactions.} below $\text{min\_sup}$.
        \end{itemize}
    \item Generate association rules. From frequent itemsets, generate rules by splitting into antecedent and consequent, computing confidence, and retaining rules with $\text{confidence} \geq min\_conf$.
\end{itemize}

The time complexity of Eclat depends on the size of the tidsets and the number of items. Intersecting tidsets costs $\mathcal{O}(N)$ in the worst case, and with $M$ items, the number of pairs is $\mathcal{O}(M^2)$, leading to $\mathcal{O}(M^2 \times N)$. However, for sparse datasets, where tidsets are smaller, Eclat often outperforms Apriori. Space complexity: requires storing tidsets for each item, which can be $\mathcal{O}(N \times M)$, efficient for sparse data due to smaller tidset sizes.

\paragraph{Comparison}

Apriori, FP-Growth, and Eclat each offer unique approaches to association rule mining, with varying efficiencies depending on the dataset characteristics. Apriori is straightforward but may struggle with scalability due to candidate generation. FP-Growth enhances efficiency by compressing the dataset into an FP-Tree and eliminating candidate generation, making it suitable for large, dense datasets. Eclat, with its vertical data format and depth-first search, is particularly effective for sparse datasets. A comparison of these 3 traditional AR mining algorithms is made in Table.\ref{tab:traditional_AR_mining_algorithms}, which highlights that while Apriori is straightforward, its scalability is limited by the exponential growth in candidate itemsets. FP-Growth and Eclat offer improvements, with FP-Growth excelling in dense datasets and Eclat in sparse ones.

Other algorithms, such as OPUS \footnote{OPUS is an efficient search algorithm for exploring the space of conjunctive patterns, supporting rapid rule discovery without requiring predefined, monotone or anti-monotone constraints such as minimum support.} \cite{Webb2000} and ASSOC \cite{Hijek1984}, offer different approaches to association rule mining. OPUS does not require monotone constraints like minimum support, allowing for more flexible rule discovery. ASSOC, a GUHA \cite{Hijek1984,Rauch2017} method \footnote{The ASSOC procedure is an implementation of the GUHA method, which automates hypothesis generation based on empirical data. It mines generalized association rules using fast bitstring operations, enabling the discovery of complex patterns beyond traditional association rules.}, uses bitstring operations to mine generalized association rules, providing a broader scope of analysis. These alternatives offer flexibility but are less commonly discussed in standard applications.

\begin{table}[H]
\centering
\caption{Comparison of classic AR mining algorithms}
\label{tab:traditional_AR_mining_algorithms}
\tiny
\setlength{\tabcolsep}{3pt}
\renewcommand{\arraystretch}{1.1}
\begin{tabular}{p{2.7cm} p{3.5cm} p{3.5cm} p{3.5cm}}
\toprule
\textbf{Aspect} & \textbf{Apriori} & \textbf{FP-Growth} & \textbf{Eclat} \\
\midrule
\textbf{Approach} & breadth-first, bottom-up, level-wise search, candidate generation & Tree-based, divide-and-conquer & Depth-first, tidset intersection, equivalence classes \\
\textbf{Time Complexity} & Moderate, $\mathcal{O}(N \times M \times 2^M)$ & High, no candidates, $\mathcal{O}(N \times M + M^2)$ & High, vertical data, $\mathcal{O}(M^2 \times N)$ (sparse) \\
\textbf{Space Complexity} & High, due to candidates, $\mathcal{O}(2^M)$ & Lower than Apriori, $\mathcal{O}(N \times M)$ & Lower, memory-efficient, $\mathcal{O}(N \times M)$ \\
\textbf{Efficiency for large number of items $M$} & Low, due to exponential growth & High, compact tree structure & High for sparse data \\
\textbf{Scalability} & Limited by candidate generation & Better, avoids candidates & Better for sparse datasets \\
\textbf{Use cases} & Small to medium datasets & Large, dense datasets & Sparse datasets \\
\bottomrule
\end{tabular}
\end{table}

\section{Gaussian process: more details} \label{app:GP_more_details}

\paragraph{Problem setup and notation}
Let $f: \mathbb{R}^d \to \mathbb{R}$ be a noise-free, latent function we want to infer. We observe a dataset \footnote{In this work we focus on single-output GP, i.e. the output label is scalar-valued.}:
\[
D = \{( \mathbf{x}_i, y_i ) \}_{i=1}^n
\]
where $\mathbf{x}_i \in \mathbb{R}^d$ are input (attribute) vectors (also called input indices) and $y_i \in \mathbb{R}$ are noisy observations of the form:
\[
y_i = f(\mathbf{x}_i) + \epsilon_i, \quad \epsilon_i \sim \mathcal{N}(0, \sigma_n^2)
\]
with noise variance $\sigma_n^2 \in \mathbb{R}_{>0}$. Let $X \in \mathbb{R}^{n \times d}$ be the matrix of training inputs and $\mathbf{y} \in \mathbb{R}^n$ the corresponding vector of outputs (labels). We aim to learn a GP and predict the latent function values $\mathbf{f}_* \in \mathbb{R}^{n_*}$ at test inputs $X_* \in \mathbb{R}^{n_* \times d}$.

\paragraph{GP Prior}
A Gaussian process is defined by a mean function $m(\mathbf{x})$ and a positive definite covariance function (kernel) $k(\mathbf{x}, \mathbf{x}')$, i.e.
\[
f(\mathbf{x}) \sim \mathcal{GP}(m(\mathbf{x}), k(\mathbf{x}, \mathbf{x}'))
\]
Note that, zero mean is generally assumed: if $f(\mathbf{x}) \sim \mathcal{GP}(m(\mathbf{x}), k(\mathbf{x}, \mathbf{x}'))$, then $f(\mathbf{x})  - m(\mathbf{x}) \sim \mathcal{GP}(0, k(\mathbf{x}, \mathbf{x}'))$.

A common kernel choice is the radial basis function (RBF):
\begin{equation} \label{eq:RBF_kernel}
    k(\mathbf{x}, \mathbf{x}') = \sigma_f^2 \exp\left(-\frac{\|\mathbf{x} - \mathbf{x}'\|^2}{2\ell^2}\right)
\end{equation}
with signal variance $\sigma_f^2$ and length-scale $\ell$. These are hyper-parameters learned from data during GP training (via e.g. maximizing the marginal likelihood \cite{Rasmussen2006GPbook} - see later). Some literature also ignore the variance magnitude $\sigma_f^2$. This standard RBF kernel is stationary, isotropic, and positive definite.

Let's define the key kernel matrices which will be used in our analysis:
\begin{itemize}
    \item $K = k(X, X) \in \mathbb{R}^{n \times n}$: training covariance matrix
    \item $K_* = k(X, X_*) \in \mathbb{R}^{n \times n_*}, K_*^\top = k(X_*,X) \in \mathbb{R}^{n_* \times n}$: cross-covariance between train and test
    \item $K_{**} = k(X_*, X_*) \in \mathbb{R}^{n_* \times n_*}$: test covariance matrix
    \item $I \in \mathbb{R}^{n \times n}$: identity matrix
\end{itemize}

Including observation noise (assuming the noises are i.i.d Gaussian with zero means), the GP prior over $[\mathbf{y}; \mathbf{f}_*] \in \mathbb{R}^{n + n_*}$ is \cite{Rasmussen2006GPbook,Murphy2012}:
\[
\begin{bmatrix}
\mathbf{y} \\
\mathbf{f}_*
\end{bmatrix}
\sim
\mathcal{N} \left( 
\begin{bmatrix}
\mathbf{m} \\
\mathbf{m}_*
\end{bmatrix},
\begin{bmatrix}
K + \sigma_n^2 I & K_* \\
K_*^\top & K_{**}
\end{bmatrix}
\right).
\]
where $\mathbf{m}^\top=[m(\mathbf{x}_1),m(\mathbf{x}_2),...,m(\mathbf{x}_n)]^\top$ and $\mathbf{m}_*^\top=[m(\mathbf{x}_{1*}),m(\mathbf{x}_{2*}),...,m(\mathbf{x}_{n_*})]^\top$ are the vector of means.

\paragraph{Posterior predictive distribution}
Conditioning on the observed data $(X, \mathbf{y})$, the posterior distribution of $\mathbf{f}_*$ given $X_*$ is \cite{Rasmussen2006GPbook}:
\[
\mathbf{f}_* \mid X, \mathbf{y}, X_* \sim \mathcal{N}(\bar{\mathbf{f}}_*, \operatorname{cov}(\mathbf{f}_*))
\]
where
\begin{align}
\bar{\mathbf{f}}_* &= \mathbf{m}_* + K_*^\top (K + \sigma_n^2 I)^{-1} (\mathbf{y} - \mathbf{m}) \label{eq:gp_mean}\\
\operatorname{cov}(\mathbf{f}_*) &= K_{**} - K_*^\top (K + \sigma_n^2 I)^{-1} K_* \label{eq:gp_var}
\end{align}

\paragraph{Posterior predictive for a single test point}
Let $\mathbf{x}_* \in \mathbb{R}^d$ be a single test input. Define \footnote{$\mathbf{k}_*$ can be extracted from $K_*$.}:
\[
\mathbf{k}_* = k(X, \mathbf{x}_*) = [k(\mathbf{x}_1,\mathbf{x}_*),k(\mathbf{x}_2,\mathbf{x}_*),...,k(\mathbf{x}_n,\mathbf{x}_*)] \in \mathbb{R}^n, \quad k_{**} = k(\mathbf{x}_*, \mathbf{x}_*) \in \mathbb{R}
\]
then the posterior mean and variance become \cite{Rasmussen2006GPbook,Murphy2012}:
\begin{equation} \label{eq:posterior_predictive_signle_test_point}
    \begin{aligned}
    \mu(\mathbf{x}_*) &= m_* + \mathbf{k}_*^\top (K + \sigma_n^2 I)^{-1} (\mathbf{y} - \mathbf{m}), \\
    \sigma^2(\mathbf{x}_*) &= k_{**} - \mathbf{k}_*^\top (K + \sigma_n^2 I)^{-1} \mathbf{k}_*
    \end{aligned}
\end{equation}
This expresses the predictive distribution as:
\[
f(\mathbf{x}_*) \mid X, \mathbf{y}, \mathbf{x}_* \sim \mathcal{N}(\mu(\mathbf{x}_*), \sigma^2(\mathbf{x}_*))
\]

The predictive variance in Eq.\ref{eq:posterior_predictive_signle_test_point} depends only on the input locations and kernel choice, not on the observed values \(\mathbf{y}\). The mean, however, is influenced by \(\mathbf{y}\), and reflects both the prior assumptions and the data.
Note that, the mean prediction in Eq.\ref{eq:posterior_predictive_signle_test_point} is a linear combination of the observed labels $\mathbf{y}$, which is sometimes referred to as a \textit{linear predictor} \cite{Rasmussen2006GPbook} or \textit{interpolator} \cite{Murphy2012}. If the mean function is zero, it can also be viewed as a linear combination of kernel evaluations at the test point \footnote{The fact that GP can be represented in terms of a (possibly infinite) number of basis functions in Eq.\ref{eq:GP_linear_predictor} is one manifestation of the \textit{representer theorem} \cite{Rasmussen2006GPbook}.} \cite{Rasmussen2006GPbook,Murphy2012}:
\begin{equation} \label{eq:GP_linear_predictor}
    \mu(\mathbf{x}_*) = \mathbf{k}_*^\top \boldsymbol{\alpha} = \sum_{i=1}^n \alpha_i k(\mathbf{x}_i, \mathbf{x}_*), \quad
    \text{where } \boldsymbol{\alpha} = (K + \sigma_n^2 I)^{-1} \mathbf{y}
\end{equation}

\paragraph{GP training}
Training a GP model entails two key steps: selecting appropriate functional forms for the mean and covariance functions, and tuning their associated hyper-parameters $\boldsymbol{\theta}$, e.g. the signal variance magnitude $\sigma_f^2$, length scale $\ell$ and noise variance $\sigma_n^2$. This process is often referred to as GP training or model selection, and it is carried out by maximizing the log marginal likelihood of the observed data under the GP prior. 

The marginal likelihood \footnote{The term \textit{marginal likelihood} refers to the fact that we have marginalized over the latent function $\mathbf{f}$. While the GP prior is defined in the space of functions, the marginal likelihood reflects how well the kernel (parameterized by $\boldsymbol{\theta}$) explains the observed data. It is sometimes referred to as the evidence, and forms the basis for model selection and hyperparameter tuning in a Bayesian setting. Note that, in Bayesian modelling, there isn't necessarily a notion of loss function.} is the result of integrating out the latent function values $\mathbf{f}$, yielding the marginal distribution of the noisy targets $\mathbf{y}$:
\[
p(\mathbf{y} \mid X, \boldsymbol{\theta}) = \int p(\mathbf{y} \mid \mathbf{f}) p(\mathbf{f} \mid X, \boldsymbol{\theta}) \, d\mathbf{f}
\]
where $p(\mathbf{f} \mid X, \boldsymbol{\theta}) = \mathcal{N}(\mathbf{m}, K)$ is the GP prior over function values at the training inputs, and $p(\mathbf{y} \mid \mathbf{f}) = \mathcal{N}(\mathbf{f}, \sigma_n^2 I)$ models Gaussian noise.

This integral results in a closed-form Gaussian \cite{Rasmussen2006GPbook}:
\[
\mathbf{y} \mid X, \boldsymbol{\theta} \sim \mathcal{N}(\mathbf{m}, K + \sigma_n^2 I)
\]
Therefore, the log marginal likelihood becomes \cite{Rasmussen2004GP,Rasmussen2006GPbook}:
\begin{equation} \label{eq:GP_log_likelihood}
\log p(\mathbf{y} \mid X, \boldsymbol{\theta}) = 
    -\frac{1}{2} (\mathbf{y}-\mathbf{m})^\top K_y^{-1} (\mathbf{y}-\mathbf{m})
    - \frac{1}{2} \log |K_y| 
    - \frac{n}{2} \log 2\pi
\end{equation}
where:
\begin{itemize}
    \item $K = k(X, X) \in \mathbb{R}^{n \times n}$ is the noise-free kernel matrix;
    \item $K_y = K + \sigma_n^2 I \in \mathbb{R}^{n \times n}$ is the covariance matrix of the noisy observations;
    \item $\boldsymbol{\theta}$ denotes all kernel hyper-parameters (including noise variance $\sigma_n^2$).
\end{itemize}
Eq.\ref{eq:GP_log_likelihood} can also be obtained directly by observing $\mathbf{y} \sim \mathcal{N}(\mathbf{m}, K + \sigma_n^2 I)$. We see that, the log marginal likelihood in Eq.\ref{eq:GP_log_likelihood} consists of 3 terms:
\begin{itemize}
  \item a data fit term: $-\frac{1}{2} (\mathbf{y}-\mathbf{m})^\top K_y^{-1} (\mathbf{y}-\mathbf{m})$
  \item a complexity penalty term: $-\frac{1}{2} \log |K_y|$
  \item a normalization constant (not quite interesting to the optimisation problem): $-\frac{n}{2} \log 2\pi$
\end{itemize}
The marginal likelihood reflects a trade-off between model fit and complexity. A model that fits the data well but is too complex (e.g. overly flexible kernels with high variance or too-short length scales) is penalized via the log determinant term.

The hyper-parameters $\boldsymbol{\theta}$, including those defining the mean function (each denoted by $\theta_m$), the kernel (e.g. length-scale $\ell$, signal variance magnitude $\sigma_f^2$, each denoted by $\theta_k$) and noise variance $\sigma_n^2$, can be learned by maximizing the log marginal likelihood in Eq.\ref{eq:GP_log_likelihood}:
\[
\boldsymbol{\theta}^* = \arg\max_{\boldsymbol{\theta}} \log p(\mathbf{y} \mid X, \boldsymbol{\theta})
\]
This optimisation is typically performed using gradient-based methods, since the derivatives of the log marginal likelihood with respect to the hyper-parameters can be computed analytically (see Chapter 5 of \cite{Rasmussen2006GPbook}). The gradient of the log marginal likelihood \textit{w.r.t.} $\theta_m$ and $\theta_k$, for example, can be derived as \cite{Rasmussen2004GP}:
\begin{align*}
\frac{\partial \log p(\mathbf{y} \mid X, \boldsymbol{\theta})}{\partial \theta_m} &= -(\mathbf{y} - \boldsymbol{\mu})^\top \boldsymbol{\Sigma}^{-1} \frac{\partial \mathbf{m}}{\partial \theta_m} \\
\frac{\partial \log p(\mathbf{y} \mid X, \boldsymbol{\theta})}{\partial \theta_k} &= \frac{1}{2} \mathrm{trace} \left( \boldsymbol{\Sigma}^{-1} \frac{\partial \boldsymbol{\Sigma}}{\partial \theta_k} \right) + \frac{1}{2} (\mathbf{y} - \boldsymbol{\mu})^\top \frac{\partial \boldsymbol{\Sigma}}{\partial \theta_k} \boldsymbol{\Sigma}^{-1} \frac{\partial \boldsymbol{\Sigma}}{\partial \theta_k} (\mathbf{y} - \boldsymbol{\mu})
\end{align*}
Maximizing the marginal likelihood aligns the GP model with the observed data in a principled way: we penalize overly complex models to avoid overfitting without the need of a separate validation set.

\section{GPAR: sampling marginal posterior to estimate support} \label{app:marginal_posterior_sampling}

Here we extend the discussion in Section.\ref{sec:GPAR}, focusing on sampling the marginal posterior for estimating the support of an itemset. In GPAR, the latent variables $\mathbf{z} = [z_1, z_2, \dots, z_M]^\top$ represent the presence or absence of $M$ items, the GP posterior distribution $\mathbf{z} \sim \mathcal{N}(\mathbf{0},K)$, where $K$ is a kernel matrix capturing dependencies between items based on their features, is again a multivariate normal (MVN), and any subset of these variables corresponding to an itemset is also multivariate Gaussian. For example, for a specific itemset $I$, the subset of latent variables $\mathbf{z}_I$ is also Gaussian: $\mathbf{z}_I \sim \mathcal{N}(\mathbf{0}, K_I)$, with $K_I$ being the submatrix of $K$ corresponding to the items in $I$. $\mathcal{N}(\mathbf{0}, K_I)$ is the marginal density of the GP posterior - thanks to the Gaussian distribution \footnote{Gaussian distribution has some nice properties such as marginal, conditional densities are still Gaussian, and if variables are independent, the MVN density can be decomposed into products of univariate Gaussian densities, etc.}, marginalisation is simply extracting the corresponding components from the mean vector and covariance matrix to form a sub Gaussian.

To evaluate the quality of a rule, we first calculate the support of the itemset, i.e. the co-occurrence probability, and then proceed to calculating the confidence of the rule if the support meets a minimum threshold. The support of an itemset $I$ is the probability that all items in $I$ are present:
\begin{equation} \label{eq:sub_itemset_support}
    support(I) = p(\text{all } z_k > 0 \text{ for } k \in I) = \int_{0}^{\infty} \cdots \int_{0}^{\infty} \mathcal{N}(\mathbf{z}_I | \mathbf{0}, K_I) \, d\mathbf{z}_I
\end{equation}
which is the probability that all $z_k$ values for $k \in I$ are positive (we count all $z_k$ with positive sign as presence), which requires integrating the multivariate Gaussian density over a hyper-rectangle (i.e. the positive orthant for those variables).

However, calculating the integral in Eq.\ref{eq:sub_itemset_support}, or the co-occurrence probability of the itemset $I$, cannot be done analytically in general - unlike 1D Gaussian distribution, the \textit{cumulative density function} (CDF) of a multivariate normal does not have a closed form \footnote{Extending the univariate CDF to the multivariate case is impractical, as the multivariate Gaussian CDF over a hyper-rectangle (e.g. $(0, \infty)^{|I|}$) lacks a closed-form solution when variables are dependent. Without extra efforts, numerical integration for $>4$ dimensions is in general hard \cite{Genz1992numerical}.}. Let's probe into this. For a single item, say $z_k$, the probability $p(z_k > 0)$ is straightforward because it involves a univariate Gaussian. Since $z_k \sim \mathcal{N}(0, K_{kk})$, where $K_{kk}$ is the variance of $z_k$, we can use the Gaussian CDF:
\[
p(z_k > 0) = 1 - \Phi\left( \frac{0}{\sqrt{K_{kk}}} \right) = 1 - \Phi(0) = 1 - 0.5 = 0.5
\]
where $\Phi(\cdot)$ is the standard normal CDF. This has a closed-form solution because it’s a one-dimensional integral, and the Gaussian CDF is well-defined and computable.

Consider an itemset with more than one item, say $I = \{k, l\}$. We need $p(z_k > 0, z_l > 0)$, which involves the joint distribution $(z_k, z_l) \sim \mathcal{N}(\mathbf{0}, K_I)$, where:
\[
K_I = \begin{bmatrix} K_{kk} & K_{kl} \\ K_{lk} & K_{ll} \end{bmatrix}
\]
Here, $K_{kl} = K_{lk}$ is the covariance between $z_k$ and $z_l$, reflecting their dependency. The probability becomes:
\[
p(z_k > 0, z_l > 0) = \int_{0}^{\infty} \int_{0}^{\infty} \mathcal{N}([z_k, z_l] | \mathbf{0}, K_I) \, dz_k dz_l
\]
which is the CDF of a bivariate Gaussian over the positive quadrant. Unlike the univariate case, there’s no simple closed-form expression for this integral when $K_{kl} \neq 0$ (i.e. when the variables are correlated). If $z_k$ and $z_l$ were independent ($K_{kl} = 0$), the probability would factorize:
\[
p(z_k > 0, z_l > 0) = p(z_k > 0) \cdot p(z_l > 0) = 0.5 \cdot 0.5 = 0.25.
\]
However, in GPAR, the posterior kernel $K$ typically introduces correlations between items (maybe sparse but not diagonal), so $K_I$ is not diagonal \footnote{The key issue is the dependencies encoded in $K_I$. In GPAR, the kernel (e.g. a radial basis function or linear kernel) ensures that items with similar features have correlated latent variables. This correlation prevents the joint probability from being a simple product of individual probabilities, unlike the independent case. As the size of the itemset $|I|$ increases, the dimensionality of the integral grows, and the dependencies make it impossible to solve analytically.}, and the above factorization doesn’t generally hold. 

Since an analytical solution isn’t available, we have to compute $p(\text{all } z_k > 0 \text{ for } k \in I)$ via numerical routines such as Monte Carlo (MC) sampling (fortunately there are standard ways to sample from an MVN \cite{Rasmussen2006GPbook}, although the cost being cubic to the dimension due to Cholesky decomposition). The MC sampling procedure involves:
(1) drawing samples $\mathbf{z}_s \sim \mathcal{N}(\mathbf{0}, K_I)$;
(2) checking, for each sample, whether all components $z_{s,k} > 0$ for $k \in I$.
(3) estimating the probability as the fraction of samples satisfying this condition.
This method is flexible, scalable, and can handle correlations naturally by sampling from the full multivariate distribution. While it’s approximate and computationally intensive, it avoids the intractable integration problem.

\section{GPAR: kernel designs} \label{app:kernel_designs}

This appendix extends the discussion of custom kernel design prescribed in Section.\ref{sec:custom_kernel_design}. Kernels originate from functional analysis and kernel theory. In the context of GP, a kernel $k(x,y)$ turns a distance or metrics into similarity (covariance). We first define a metric (or distance) space \cite{Alpay2023}:

\begin{definition}
Let $X \subseteq \mathbb{R}^d$. A function $d : X \times X \to [0, \infty)$ is called a \emph{metric} (or \emph{a distance function}) if for all $\mathbf{x}_i, \mathbf{x}_j, \mathbf{x}_k \in X$, the following hold:
\begin{align}
\text{(1)}\quad & d(\mathbf{x}_i, \mathbf{x}_j) = 0 \quad \Longleftrightarrow \quad \mathbf{x}_i = \mathbf{x}_j \nonumber \\
\text{(2)}\quad & d(\mathbf{x}_i, \mathbf{x}_j) = d(\mathbf{x}_j, \mathbf{x}_i) \nonumber \\
\text{(3)}\quad & d(\mathbf{x}_i, \mathbf{x}_j) \leq d(\mathbf{x}_i, \mathbf{x}_k) + d(\mathbf{x}_k, \mathbf{x}_j). \nonumber
\end{align}
\end{definition}

For example, the \textit{Euclidean distance}: $d(\mathbf{x}_i, \mathbf{x}_j) = \|\mathbf{x}_i - \mathbf{x}_j\|_2$, \textit{Manhattan distance}: $d(\mathbf{x}_i, \mathbf{x}_j) = \|\mathbf{x}_i - \mathbf{x}_j\|_1$, \textit{Mahalanobis distance}, etc.

There are positive definite and negative definite kernels (negative definite kernels are also called conditionally negative definite kernels). Note that, negative definite kernels are not the opposite of positive definite kernels \cite{Alpay2023}. In the following, we exchangeably use the terminology \textit{function} and \textit{kernel}, although function here refers in general to a smaller class of kernels. In many literature, the term 'positive definite (PD) kernel' is equivalent to 'positive semi-definite (PSD) kernel' in matrix algebra (i.e. the resulting kernel matrix is PSD).

\begin{definition}[Positive semi-definite (PSD) kernel\footnote{More generally, one allows $\mathbf{a} \in \mathbb{C}^n$, and the PSD condition becomes: $\sum_{i,j} \overline{c_i} k(x_i, x_j) c_j \geq 0$ \cite{Alpay2023}.} \cite{Alpay2023}]
A function \( k(\mathbf{x}, \mathbf{x}') \) is \textbf{PSD} if for any finite set of points \( \{\mathbf{x}_1, \ldots, \mathbf{x}_n\} \), the kernel matrix \( K \) with entries \( K_{ij} = k(\mathbf{x}_i, \mathbf{x}_j) \) is symmetric and satisfies:
\[
\sum_{i,j} a_i a_j k(\mathbf{x}_i, \mathbf{x}_j) \geq 0 \quad \text{for all } \mathbf{a} \in \mathbb{R}^n.
\]
\end{definition}
Note that PSD (or PD) allows the possibility that $\sum_{i,j} a_i a_j k(\mathbf{x}_i, \mathbf{x}_j) = 0$ for some non-zero $\{a_i\}$. Strictly positive definite refers to the condition that $\sum_{i,j} a_i a_j k(\mathbf{x}_i, \mathbf{x}_j) > 0 \quad \text{for all } a_i \in \mathbb{R}^n \text{ and } a_i \neq 0$.

\begin{definition}[Conditionally negative definite (CND) function \footnote{\textit{'Conditional'} refers to the 'zero-sum' condition $\sum_i a_i = 0$. Strictly speaking, one needs to distinguish 'positive definite' and 'conditional positive definite', 'negative definite' and 'conditional negative definite', see \cite{Scholkopf2002KernelLearning}.} \cite{Alpay2023}]
\label{def:CND}
A symmetric function \( \phi(\mathbf{x}, \mathbf{x}') \) is \textbf{CND} if:
\[
\sum_{i,j} a_i a_j \phi(\mathbf{x}_i, \mathbf{x}_j) \leq 0 \quad \text{for all } \mathbf{a} \in \mathbb{R}^n \text{ such that } \sum_i a_i = 0.
\]
\end{definition}

\begin{example} \label{example:Euclidean_dist_CND}
Prove the squared Euclidean distance $\phi(\mathbf{x}_i, \mathbf{x}_j) = \|\mathbf{x}_i - \mathbf{x}_j\|^2$ is CND. 
\begin{proof}
   As per Definition.\ref{def:CND}, a function $d(\mathbf{x}, \mathbf{y})$ is CND if $\sum_{i,j=1}^n a_i a_j \phi(\mathbf{x}_i, \mathbf{x}_j) \leq 0$ holds for any set of points $\{\mathbf{x}_1, \mathbf{x}_2, \dots, \mathbf{x}_n\}$ and any coefficients $\mathbf{a} = [a_1, a_2, \dots, a_n]^\top \in \mathbb{R}^n$ satisfying $\sum_{i=1}^n a_i = 0$. In order to show $\phi(\mathbf{x}_i, \mathbf{x}_j) = \|\mathbf{x}_i - \mathbf{x}_j\|^2$ is CND, we need to compute:
        \[ \sum_{i,j=1}^n a_i a_j \|\mathbf{x}_i - \mathbf{x}_j\|^2 \]
    where $\sum_{i=1}^n a_i = 0$. \\
    Expanding the distance: 
    \[ \phi(\mathbf{x}_i, \mathbf{x}_j) =
    \|\mathbf{x}_i - \mathbf{x}_j\|^2 = \mathbf{x}_i^\top \mathbf{x}_i + \mathbf{x}_j^\top \mathbf{x}_j - 2 \mathbf{x}_i^\top \mathbf{x}_j \]
    The first term:
    \[ \sum_{i,j=1}^n a_i a_j \mathbf{x}_i^\top \mathbf{x}_i = \sum_{i=1}^n a_i \mathbf{x}_i^\top \mathbf{x}_i \sum_{j=1}^n a_j = 0 \]
    as $\sum_{i=1}^n a_i = 0$.
    Similarly, the second term:
    \[ \sum_{i,j=1}^n a_i a_j \mathbf{x}_j^\top \mathbf{x}_j = \sum_{j=1}^n a_j \mathbf{x}_j^\top \mathbf{x}_j \sum_{i=1}^n a_i = 0 \]
    The third term:
    \[ \sum_{i,j=1}^n a_i a_j (-2 \mathbf{x}_i^\top \mathbf{x}_j) = -2 \sum_{i,j=1}^n a_i a_j \mathbf{x}_i^\top \mathbf{x}_j = -2 \left( \sum_{i=1}^n a_i \mathbf{x}_i \right)^\top \left( \sum_{j=1}^n a_j \mathbf{x}_j \right) = -2 \left\| \sum_{i=1}^n a_i \mathbf{x}_i \right\|^2 \leq 0 \]
    Therefore,
    \begin{equation*}
    \begin{aligned}
    \sum_{i,j=1}^n a_i a_j \|\mathbf{x}_i - \mathbf{x}_j\|^2 
    &= \sum_{i,j=1}^n a_i a_j \left( \mathbf{x}_i^\top \mathbf{x}_i + \mathbf{x}_j^\top \mathbf{x}_j - 2 \mathbf{x}_i^\top \mathbf{x}_j \right) \\
    &= 0 + 0 - 2 \left\| \sum_{i=1}^n c_i \mathbf{x}_i \right\|^2 = -2 \left\| \sum_{i=1}^n c_i \mathbf{x}_i \right\|^2 \leq 0
    \end{aligned}
    \end{equation*}
\end{proof}
\end{example}

The connection between PSD and CND is noted:

\begin{theorem}[Connection between PSD and CND \cite{Schoenberg1938metric,Berg1984harmonic,Scholkopf2002KernelLearning,Alpay2023}]
\label{theorem:connection_PSD_and_CND}
Let $X$ be a set and let $d(\mathbf{x}, \mathbf{x}')$ be a complex-valued function defined on $X \times X$. 
Then, $d(\mathbf{x}, \mathbf{x}')$ is \textbf{negative definite} if and only if one of the following equivalent conditions holds for ALL $t > 0$:
\begin{enumerate}
    \item $e^{-t d(\mathbf{x}, \mathbf{x}')}$ is \textbf{positive definite}.
    \item $\frac{1}{1 + t d(\mathbf{x},\mathbf{x}')}$ is \textbf{positive definite}.
\end{enumerate}
\end{theorem}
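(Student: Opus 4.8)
The plan is to establish the first equivalence (conditional negative definiteness of $d$ $\iff$ $e^{-td}$ positive definite for all $t>0$) in both directions, and then obtain the second equivalence from it via an elementary integral (resolvent) representation. Throughout I would assume $d$ is Hermitian (symmetric in the real case, as in Example.\ref{example:Euclidean_dist_CND}) and use the standing convention $d(\mathbf{x},\mathbf{x}')\ge 0$, which holds for a genuine metric and makes $1+td>0$ meaningful.

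The easy direction, $e^{-td}$ positive definite for every $t>0$ implies $d$ is CND, goes by a limiting argument on finite matrices. Fix points $\mathbf{x}_1,\dots,\mathbf{x}_n$ and coefficients $c_1,\dots,c_n$ with $\sum_i c_i = 0$. Since the all-ones kernel is annihilated by zero-sum coefficients, $\sum_{i,j} c_i\overline{c_j}\cdot 1 = \bigl|\sum_i c_i\bigr|^2 = 0$, positivity of $e^{-td}$ gives $\sum_{i,j} c_i\overline{c_j}\,\frac{e^{-t\,d(\mathbf{x}_i,\mathbf{x}_j)}-1}{t}\ge 0$ for all $t>0$. Letting $t\to 0^+$ and using $\frac{e^{-ts}-1}{t}\to -s$ pointwise over the finite index set yields $\sum_{i,j} c_i\overline{c_j}\,d(\mathbf{x}_i,\mathbf{x}_j)\le 0$, i.e. conditional negative definiteness per Definition.\ref{def:CND}.

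For the converse, $d$ CND implies $e^{-td}$ positive definite for all $t>0$, I would use the Schoenberg linearization. Fix a base point $\mathbf{x}_0$ and set $\varphi(\mathbf{x},\mathbf{y}) := d(\mathbf{x},\mathbf{x}_0) + \overline{d(\mathbf{y},\mathbf{x}_0)} - d(\mathbf{x},\mathbf{y}) - d(\mathbf{x}_0,\mathbf{x}_0)$. The key step: given arbitrary (unconstrained) $c_1,\dots,c_n$, append $\mathbf{x}_0$ with weight $c_0 = -\sum_{i\ge 1} c_i$ so the augmented weights sum to zero; expanding the CND inequality $\sum_{i,j\ge 0} c_i\overline{c_j}\,d(\mathbf{x}_i,\mathbf{x}_j)\le 0$ and collecting terms shows $\sum_{i,j\ge 1} c_i\overline{c_j}\,\varphi(\mathbf{x}_i,\mathbf{x}_j)\ge 0$, so $\varphi$ is positive definite. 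Writing $g(\mathbf{x}) := d(\mathbf{x},\mathbf{x}_0) - \tfrac12 d(\mathbf{x}_0,\mathbf{x}_0)$ one gets the factorization $d(\mathbf{x},\mathbf{y}) = g(\mathbf{x}) + \overline{g(\mathbf{y})} - \varphi(\mathbf{x},\mathbf{y})$, hence $e^{-t\,d(\mathbf{x},\mathbf{y})} = e^{-t g(\mathbf{x})}\,\overline{e^{-t g(\mathbf{y})}}\cdot e^{\,t\varphi(\mathbf{x},\mathbf{y})}$. Since $t\varphi$ is positive definite, so is $e^{t\varphi}$ (expand the exponential as a power series in the entrywise/Schur product and invoke the Schur product theorem together with closure of PD kernels under sums and pointwise limits), and multiplying by the rank-one positive-definite kernel $e^{-t g(\mathbf{x})}\,\overline{e^{-t g(\mathbf{y})}}$ (another Schur product) preserves positive definiteness. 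Thus $e^{-td}$ is positive definite for every $t>0$.

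For the second characterization I would use $\frac{1}{1+t\,d(\mathbf{x},\mathbf{y})} = \int_0^\infty e^{-s}\,e^{-(st)\,d(\mathbf{x},\mathbf{y})}\,ds$: by the first equivalence each integrand $e^{-(st)d}$ is positive definite for $s,t>0$, and positive definiteness survives integration against the positive measure $e^{-s}\,ds$, so $\frac{1}{1+td}$ is positive definite; conversely, if $\frac{1}{1+td}$ is positive definite for all $t>0$, subtracting the zero-sum-annihilated constant kernel $1$ and letting $t\to 0^+$ recovers CND of $d$ exactly as in the easy direction. I expect the main obstacle to be this converse of the first equivalence, specifically the bookkeeping of the Schoenberg linearization and the verification that $e^{t\varphi}$ inherits positive definiteness from $\varphi$ via the Schur product theorem and a limiting argument; everything else is routine limit-passing on finite kernel matrices. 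A secondary point worth flagging is that the clean ``for all $t>0$'' form of the $\frac{1}{1+td}$ condition relies on $d\ge0$; for a general CND function one would restrict to small $t$ or phrase the statement through the resolvent, but this complication does not arise for metric-type kernels.
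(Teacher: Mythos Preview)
The paper does not actually prove this theorem; its entire proof reads ``see e.g.\ \cite{Alpay2023}.'' Your proposal is therefore not comparable to a paper proof but rather to the classical literature the paper cites, and in that respect your outline is correct and standard: the Schoenberg linearization (anchor at a base point $\mathbf{x}_0$, show the resulting $\varphi$ is PSD, then use the factorization $e^{-td}=e^{-tg(\mathbf{x})}\overline{e^{-tg(\mathbf{y})}}e^{t\varphi}$ together with the Schur product theorem and the exponential series) is exactly the argument found in Berg--Christensen--Ressel and related sources. The easy direction via $t\to 0^+$ and the resolvent identity $\tfrac{1}{1+td}=\int_0^\infty e^{-s}e^{-std}\,ds$ for the second equivalence are also the standard moves.

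One small point worth tightening: the theorem as stated allows $d$ to be complex-valued, so your standing assumption $d\ge 0$ (which you correctly flag as needed for the integral representation and for $1+td$ to make sense) is a genuine restriction relative to the statement. The cited references handle this by working with Hermitian $d$ satisfying $\mathrm{Re}\,d \ge 0$ (a consequence of CND when $d(\mathbf{x},\mathbf{x})\ge 0$), which suffices for the Laplace integral to converge; you may want to phrase the hypothesis that way rather than assuming $d$ is a metric.
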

\begin{proof}
see e.g. \cite{Alpay2023}.
\end{proof}

We can apply this theorem to an exponentiated radial function $k(\|\mathbf{x} - \mathbf{x}'\|) = \exp\left( - \phi(\|\mathbf{x} - \mathbf{x}'\|) \right)$ which is symmetric, leading to the conclusion that, $k$ is a PSD kernel if and only if $\phi$ is a CND function:

\begin{corollary}[PSD condition for exponentiated radial functions]
\label{corollary:PSD_and_radial_funcs}
Let $\phi : \mathbb{R}^d \to \mathbb{R}$ be a symmetric, radial function such that $\phi(\mathbf{x}, \mathbf{x}') = \phi(\|\mathbf{x} - \mathbf{x}'\|)$. Then the kernel
\[
k(\mathbf{x}, \mathbf{x}') = \exp\left( -\phi(\|\mathbf{x} - \mathbf{x}'\|) \right)
\]
is positive semi-definite (PSD) and hence defines a valid kernel for all $\mathbf{x}, \mathbf{x}' \in \mathbb{R}^d$ \emph{if and only if} $\phi$ is conditionally negative definite (CND). More generally, the equivalence holds:
\[
\phi \text{ is CND} \quad \Longleftrightarrow \quad \exp(-t \phi) \text{ is PSD for all } t > 0
\]
\end{corollary}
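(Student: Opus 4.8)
The plan is to derive the corollary directly from Theorem~\ref{theorem:connection_PSD_and_CND} by specializing the general function $d(\mathbf{x},\mathbf{x}')$ appearing there to a radial form $\phi(\|\mathbf{x}-\mathbf{x}'\|)$. The first observation I would record is that Theorem~\ref{theorem:connection_PSD_and_CND} already establishes, for an arbitrary complex-valued function on $X\times X$, that negative definiteness of $d$ is equivalent to positive definiteness of $e^{-td}$ for every $t>0$. Since a radial function $\phi(\|\mathbf{x}-\mathbf{x}'\|)$ is symmetric by construction (it depends only on $\|\mathbf{x}-\mathbf{x}'\|=\|\mathbf{x}'-\mathbf{x}\|$), it is a legitimate candidate for the role of $d$ in that theorem, so the chain ``$\phi$ CND $\iff$ $\exp(-t\phi)$ PSD for all $t>0$'' follows immediately. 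The remaining work is to peel off the single case $t=1$ to obtain the headline statement about $k(\mathbf{x},\mathbf{x}')=\exp(-\phi(\|\mathbf{x}-\mathbf{x}'\|))$.

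The forward direction ($\phi$ CND $\Rightarrow$ $k$ PSD) is then a one-line instantiation: apply the ``for all $t>0$'' clause at $t=1$. For the converse ($k$ PSD $\Rightarrow$ $\phi$ CND) I would argue as follows. Suppose $\exp(-\phi)$ is PSD but, for contradiction, $\phi$ is not CND; I want to conclude something that contradicts PSD of $\exp(-\phi)$. The cleanest route is to invoke the full equivalence again: if $\phi$ fails to be CND, then by Theorem~\ref{theorem:connection_PSD_and_CND} there exists some $t_0>0$ for which $\exp(-t_0\phi)$ fails to be PSD — but I must check that this is not vacuous, i.e. that failure at some $t_0$ does not coexist with success at $t=1$. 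Here the key structural fact is that if $\exp(-\phi)$ is PSD then so is $(\exp(-\phi))^{n}=\exp(-n\phi)$ for every positive integer $n$ (the Schur/Hadamard product of PSD matrices is PSD), and more generally $\exp(-s\phi)$ for rational and then, by continuity of the kernel matrix entries in $s$ together with closedness of the PSD cone, for all real $s\ge 0$. So PSD-ness of $\exp(-\phi)$ at $t=1$ already propagates to all $t>0$, which by the theorem forces $\phi$ to be CND. This closure-under-powers argument is the part I would write out carefully.

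The main obstacle, then, is not any deep inequality — Example~\ref{example:Euclidean_dist_CND} already shows how such CND verifications go, and the heavy lifting of the PSD$\leftrightarrow$CND correspondence is imported from Theorem~\ref{theorem:connection_PSD_and_CND} — but rather making the ``specialize $t$ and propagate'' bookkeeping airtight: justifying that PSD at one positive parameter value upgrades to PSD at all positive values (via Hadamard powers plus a density/closedness argument), and being careful that ``radial'' adds nothing beyond ``symmetric'' for the purposes of the theorem so that no extra hypotheses on $\phi$ (continuity, monotonicity) are silently needed. I would also add a short remark connecting back to the running examples: taking $\phi(r)=r^2$, Example~\ref{example:Euclidean_dist_CND} gives CND, hence $k(r)=\exp(-r^2)$ (the RBF kernel, up to the length-scale reparametrization) is PSD, recovering Example~\ref{example:RBF_kernel}; whereas $\phi(r)=(r-d_0)^2$ is not CND (it is not even monotone), consistent with the earlier finding that the shifted RBF kernel fails to be PSD for $d_0\neq 0$.
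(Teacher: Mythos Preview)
Your invocation of Theorem~\ref{theorem:connection_PSD_and_CND} for the ``more generally'' equivalence is exactly what the paper does: its entire proof is the one line ``This follows directly from Theorem~\ref{theorem:connection_PSD_and_CND}, and is a classical result in harmonic analysis,'' with citations. For that part, and for the forward direction of the headline statement (instantiate $t=1$), your plan coincides with the paper.

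Where you go further than the paper is in trying to derive the headline converse ``$\exp(-\phi)$ PSD $\Rightarrow$ $\phi$ CND'' self-containedly, and here there is a genuine gap. Your Schur-product step correctly gives PSD of $\exp(-n\phi)$ for positive \emph{integers} $n$ from PSD of $\exp(-\phi)$, but the next move, ``more generally $\exp(-s\phi)$ for rational $s$,'' does not follow: Hadamard products only push $t$ upward, not downward, so you cannot obtain $\exp(-\phi/2)$ from $\exp(-\phi)$ this way (there is no Hadamard-root operation that preserves PSD). Consequently the continuity/closedness extension to all $s\ge 0$ never gets off the ground for $s<1$. In fact the single-$t$ converse is false in general: not every PSD kernel with positive entries is infinitely divisible, so there exist $\phi$ with $\exp(-\phi)$ PSD but $\exp(-t_0\phi)$ not PSD for some $t_0\in(0,1)$, hence $\phi$ not CND. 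The headline biconditional should therefore be read as shorthand for the all-$t$ version (which is precisely Theorem~\ref{theorem:connection_PSD_and_CND}), and that is all the paper actually claims to prove.
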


\begin{proof}
This follows directly from Theorem.\ref{theorem:connection_PSD_and_CND}, and is a classical result in harmonic analysis (see, e.g. \cite{Schoenberg1938metric, Berg1984harmonic, Alpay2023}).
\end{proof}

\noindent
This corollary connects distance-like functions to PSD kernels via exponentiation: exponentiating a CND function gives a PSD function. Conversely, if $\phi$ is not CND, then $k(r) = \exp(-\phi(r))$ may fail to be PSD, and the resulting kernel matrix may not be valid. Therefore, when designing kernels using exponentiated functions, we must check whether the base function $\phi$ is CND. Note that, radiality alone is not sufficient for a function to be CND, i.e. a radial function $\phi(\mathbf{x})=\phi(\|\mathbf{x}\|)$ is not automatically conditionally negative definite; additional structural properties must hold.

\begin{example} \label{example:RBF_kernel}
The RBF kernel $k(\mathbf{x}, \mathbf{x}') = \exp\left(-\frac{\|\mathbf{x} - \mathbf{x}'\|^2}{2\ell^2}\right)$ follows Corollary.\ref{corollary:PSD_and_radial_funcs}. Example.\ref{example:Euclidean_dist_CND} confirms that $\|\mathbf{x}_i - \mathbf{x}_j\|^2$ is CND, as the sum is non-positive under the constraint $\sum a_i = 0$. 
Therefore, as per Corollary.\ref{corollary:PSD_and_radial_funcs}, the RBF kernel is PSD.
\end{example}

A \textit{completely monotone function} is a smooth function that alternates signs in its derivatives in a specific way \footnote{One has distinguish \textit{absolutely monotonic} and \textit{completely monotonic} functions. Both have derivatives of all orders and express strong monotonicity properties. However, an absolutely monotonic function has all derivatives non-negative, implying the function and its derivatives are all monotonically increasing; a completely monotonic function has derivatives that alternate in sign, implying the function and its derivatives alternate between being monotonically decreasing and increasing.}:

\begin{definition}[Completely monotone function \cite{Merkle2012,Sendov2015new}] (total or complete monotonicity)
A function $f$ defined on $(0, \infty)$ is called \emph{completely monotone} or \emph{totally monotone} if it is infinitely differentiable (i.e. has derivatives of all orders) and satisfies:
\[
(-1)^n f^{(n)}(t) > 0 \quad \text{for all } t > 0 \text{ and } n \in \mathbb{N}_0
\]
\end{definition}
That said, $f(t) >0$ (convex), $f'(t) < 0$ (concave first derivative), $f''(t) > 0$, $f^{(3)}(t) < 0$, and so on. This implies that each completely monotone function on $(0, \infty)$ is always positive, always decreasing, concave up and down in alternating directions at each derivative. It encodes long-range decay and non-oscillatory behavior. 

Complete monotonicity is deeply connected to Laplace transforms, probability theory, and positive definite kernels. In real analysis, \textit{Bernstein's theorem} \cite{Bernstein1928} states that every real-valued function on the half-line $(0, \infty)$ that is totally monotone is a mixture of exponential functions:

\begin{theorem}[Bernstein \cite{Merkle2012,Sendov2015new}]
\label{theorem:Bernsteins}
A function $f$ is completely monotone on $(0, \infty)$ if and only if it is the Laplace transform of a unique, non-negative Radon measure $\mu$ on $[0, \infty)$, i.e.
\[
f(t) = \int_0^\infty e^{-ts} \, d\mu(s), \quad \mu \geq 0
\]
for all $t>0$.
\end{theorem}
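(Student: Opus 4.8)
\begin{proofsketch}
The plan is to prove the two implications separately, the converse being the substantive half; throughout I read complete monotonicity with non-strict inequalities $(-1)^n f^{(n)}(t)\ge 0$, the convention compatible with the stated representation (strict positivity fails, e.g., for $\mu$ a point mass at $0$). \emph{Sufficiency ($\Leftarrow$).} Given $f(t)=\int_0^\infty e^{-ts}\,d\mu(s)$ with $\mu\ge 0$ Radon, I would first check $f$ is finite and $C^\infty$ on $(0,\infty)$: fixing $t_0>0$, for $t\ge t_0$ each $(-s)^n e^{-ts}$ is dominated by $\bigl(s^n e^{-t_0 s/2}\bigr) e^{-t_0 s/2}$, whose first factor is bounded and whose second factor is $\mu$-integrable since $f(t_0/2)<\infty$; differentiation under the integral is therefore valid, giving $(-1)^n f^{(n)}(t)=\int_0^\infty s^n e^{-ts}\,d\mu(s)\ge 0$.

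\emph{Necessity ($\Rightarrow$).} Here I would follow the inversion-plus-compactness route of Feller and Post--Widder. First reduce to bounded $f$: if $f(0^+)=+\infty$, replace $f$ by $f_\varepsilon(t):=f(t+\varepsilon)$, still completely monotone with $f_\varepsilon(0^+)=f(\varepsilon)<\infty$, and assemble a representing measure for $f$ from those of the $f_\varepsilon$ by letting $\varepsilon\downarrow 0$ at the end. For bounded $f$, construct approximating measures $\mu_n$ on $[0,\infty)$ supported on a grid $\{ka_n\}_{k\ge 0}$ with $a_n\to 0$, placing at $ka_n$ a mass proportional to $(-1)^k f^{(k)}(c_{n,k})/k!$ for suitable evaluation points $c_{n,k}$ --- exactly the masses appearing in the Post--Widder inversion formula --- so that $\int_0^\infty e^{-ts}\,d\mu_n(s)$ is an explicit finite-difference approximation of $f(t)$. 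Complete monotonicity of \emph{every} order is precisely what forces each such mass to be non-negative, whence $\mu_n\ge 0$, and one also obtains $\mu_n([0,\infty))\le f(0^+)$ uniformly in $n$. I would then establish tightness of $\{\mu_n\}$ on $[0,\infty)$ and apply Helly's selection theorem to extract a subsequence $\mu_{n_j}$ converging vaguely to a non-negative Radon measure $\mu$; a dominated-convergence argument using $0\le e^{-ts}\le 1$ together with the uniform mass bound then yields $\int_0^\infty e^{-ts}\,d\mu(s)=\lim_j\int_0^\infty e^{-ts}\,d\mu_{n_j}(s)=f(t)$ for every $t>0$.

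\emph{Uniqueness.} If non-negative Radon measures $\mu,\nu$ give the same Laplace transform $\int_0^\infty e^{-ts}\,d\mu(s)=\int_0^\infty e^{-ts}\,d\nu(s)$ for all $t>0$, substituting $u=e^{-t}$ shows the pushforwards of $\mu$ and $\nu$ under $s\mapsto e^{-s}$ agree against all polynomials in $u$ on $[0,1]$; Stone--Weierstrass density then forces $\mu=\nu$. Equivalently, this is injectivity of the Laplace transform on measures (Lerch's theorem).

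The main obstacle is the necessity direction, and within it two points are delicate. First, verifying that the Post--Widder-type masses are genuinely non-negative and that their total mass is uniformly controlled: this is where the whole hierarchy $(-1)^k f^{(k)}\ge 0$, for all $k$, is used --- convexity of $f$ alone is nowhere near sufficient. Second, the passage to the limit: securing tightness of $\{\mu_n\}$ so that no mass escapes to $+\infty$ (the reason the $\varepsilon$-truncation is introduced when $f(0^+)=\infty$) and justifying the interchange of limit and integral so as to recover $f$ exactly, not merely up to a residual defect.
\end{proofsketch}
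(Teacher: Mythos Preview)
The paper does not actually prove this theorem: its proof reads in full ``see e.g.\ \cite{Merkle2012}.'' Your sketch therefore goes well beyond what the paper offers, and the route you outline --- differentiation under the integral for sufficiency, Post--Widder inversion combined with Helly selection for necessity, and a Stone--Weierstrass/Lerch argument for uniqueness --- is one of the standard, correct approaches to Bernstein's theorem.

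One point worth flagging: you rightly note that the representation as stated is only compatible with the \emph{non-strict} convention $(-1)^n f^{(n)}(t)\ge 0$, since e.g.\ $\mu=\delta_0$ gives $f\equiv 1$ with vanishing derivatives. The paper's own Definition of complete monotonicity uses strict inequalities $(-1)^n f^{(n)}(t)>0$, so under the paper's literal definition the ``if'' direction of the theorem is false as written; your reading is the one that makes the statement true, and it is the standard one in the references the paper cites.
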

\begin{proof}
see e.g. \cite{Merkle2012}.
\end{proof}

\begin{figure} [H]
    \centering
    \includegraphics[width=0.5\linewidth]{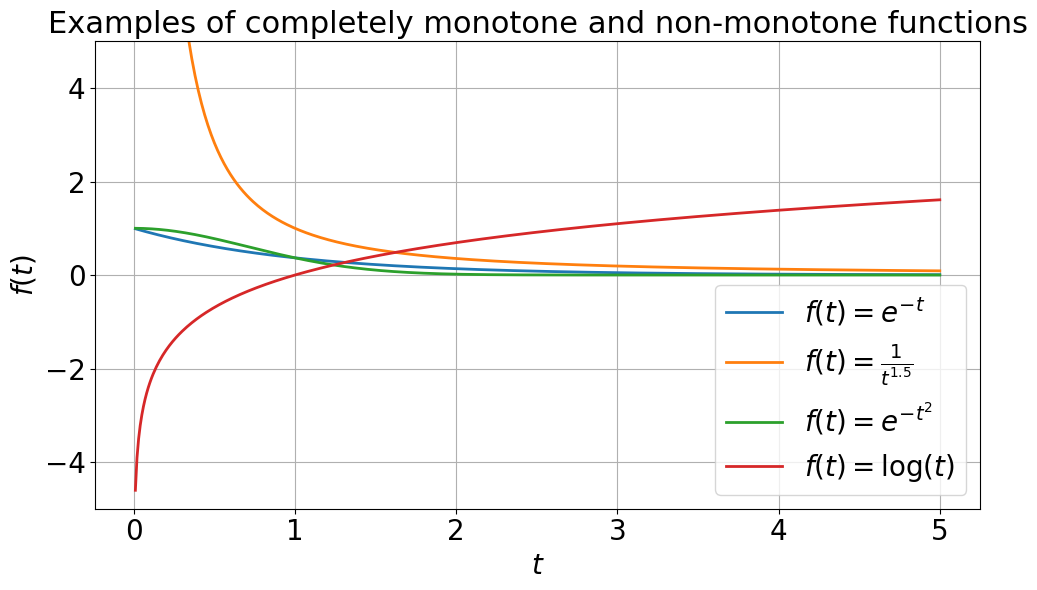}
    \caption{Four examples of completely monotone and not completely monotone functions.}
    \label{fig:completely_monotone_and_non_funcs}
\end{figure} 

We present 4 examples of completely monotone and not completely monotone functions in Fig.\ref{fig:completely_monotone_and_non_funcs}, in which $e^{-t}$ and $\frac{1}{t^\alpha}$ (for $\alpha > 0$) are completely monotone, $\log t$ and $\exp(-t^2)$ are not completely monotone on $(0, \infty)$.

\begin{example}
 $f(t) = e^{-t}$ is completely monotone because it all derivatives alternate sign:
\[
  f(t) = e^{-t},\quad f'(t) = -e^{-t},\quad f''(t) = e^{-t},\quad f^{(3)}(t) = -e^{-t},\ \dots
\] 
which gives 
\[
  (-1)^n f^{(n)}(t) = e^{-t} > 0 \quad \forall t > 0, \; n \in \mathbb{N}
\]
In fact, it is the Laplace transform of the Dirac delta at 1:
\[
\int_0^\infty e^{-ts} \delta(s - 1)\, ds = e^{-t}
\]
\end{example}
\begin{example}
$f(t) = \frac{1}{t^\alpha}$ for $\alpha > 0$ is also completely monotone on $(0, \infty)$, for example $\alpha = 1$,
\[
f(t) = \frac{1}{t},\quad f'(t) = -\frac{1}{t^2},\quad f''(t) = \frac{2}{t^3},\ \dots
\]
So $(-1)^n f^{(n)}(t) > 0$ for all $n$ and $t > 0$. In fact, it can be represented as the Laplace transform of a Gamma-related measure:
\[
\frac{1}{t^\alpha} = \frac{1}{\Gamma(\alpha)} \int_0^\infty s^{\alpha - 1} e^{-ts} ds
\]
\end{example}
\begin{example}
$f(t) = \log t$ is not completely monotone even though as its derivatives:
\[
f'(t) = \frac{1}{t} > 0,\quad f''(t) = -\frac{1}{t^2} < 0,\quad f^{(3)}(t) = \frac{2}{t^3} > 0
\]
alternate in sign, but its function values are not always positive: 
\[
f(t) = \log t \not\geq 0 \quad \text{as } t \to 0^+
\]
and it grows without bound, so it cannot be written as a Laplace transform of a finite, non-negative Radon measure. 
\end{example}
\begin{example}
$\exp(-t^2)$ is not completely monotone as it violates the derivative condition. Its first derivative:
\[
f'(t) = \frac{d}{dt} e^{-t^2} = -2t e^{-t^2}
\Rightarrow f'(t) < 0 \text{ for } t > 0
\]
satisfies the sign condition for $n = 1$. Its second derivative:
\[
f''(t) = \frac{d}{dt}(-2t e^{-t^2}) = -2e^{-t^2} + 4t^2 e^{-t^2} = (4t^2 - 2)e^{-t^2}
\]
violates the requirement that $f''(t) \geq 0$ for all $t > 0$ because for $t < \frac{1}{\sqrt{2}}$, $f''(t) < 0$, for $t > \frac{1}{\sqrt{2}}$, $f''(t) > 0$. 
\end{example}

In kernel methods, harmonic analysis, and Gaussian processes, \textit{Bochner’s theorem} and \textit{Schoenberg’s theorem} are two cornerstones in the theory of positive definite functions. They both describe when a function defines a positive definite kernel \footnote{Schoenberg’s theorem is often viewed as a radial analogue of Bochner’s theorem - a special case of Bochner’s theorem under spherical symmetry.}: Bochner’s theorem applies to stationary kernels $k(\boldsymbol{\tau}) = k(\mathbf{x} - \mathbf{x}')$, while Schoenberg’s theorem handles radial kernels $k(\mathbf{x}, \mathbf{x}') = f(\|\mathbf{x} - \mathbf{x}'\|)$. Bochner’s theorem builds the equivalence between stationary positive definite kernels and Fourier transforms of finite positive measures, while Schoenberg’s theorem explains radial positive definite kernels as Laplace transforms of finite positive measures.  

\textit{Bochner's theorem} \cite{Bochner1955harmonic} states that, a continuous, positive definite, stationary kernel can be represented as the inverse Fourier transform of a positive finite measure, i.e. the covariance function and the spectral density are Fourier duals of each other \cite{Rasmussen2006GPbook}:

\begin{theorem}[Bochner \cite{Bochner1955harmonic,Rasmussen2006GPbook}]
\label{theorem:Bochners}
Let $k : \mathbb{R}^d \to \mathbb{C}$ be a continuous function. Then $k$ is the covariance function of a weakly stationary, mean-square continuous, complex-valued stochastic process on $\mathbb{R}^d$ if and only if there exists a positive finite measure $\mu$ on $\mathbb{R}^d$ such that
\[
k(\boldsymbol{\tau}) = \int_{\mathbb{R}^d} e^{2\pi i \mathbf{s} \cdot \boldsymbol{\tau}} \, d\mu(\mathbf{s}),
\]
where $\boldsymbol{\tau} = \mathbf{x} - \mathbf{x}'$. If $\mu$ admits a density $S(\mathbf{s})$, called the \emph{spectral density}, then
\[
k(\boldsymbol{\tau}) = \int_{\mathbb{R}^d} S(\mathbf{s}) e^{2\pi i \mathbf{s} \cdot \boldsymbol{\tau}} \, d\mathbf{s}, \qquad 
S(\mathbf{s}) = \int_{\mathbb{R}^d} k(\tau) e^{-2\pi i \mathbf{s} \cdot \boldsymbol{\tau}} \, d\boldsymbol{\tau}.
\]
\end{theorem}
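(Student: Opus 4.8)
The statement is an ``if and only if'', so the plan is to treat the two implications separately. The sufficiency direction (a function of the stated integral form is a legitimate stationary covariance) is a short, direct computation; the necessity direction (every continuous positive‑definite $k$ arises this way) carries all the weight, and I would prove it by Gaussian regularisation of the Fourier transform followed by a weak‑$*$ compactness argument. Throughout I will freely use that the covariance function of a weakly stationary, mean‑square continuous second‑order process is exactly a continuous positive semi‑definite function of the increment, so the probabilistic statement and the analytic one are interchangeable.

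\emph{Sufficiency.} Suppose $k(\boldsymbol{\tau}) = \int_{\mathbb{R}^d} e^{2\pi i \mathbf{s}\cdot\boldsymbol{\tau}}\, d\mu(\mathbf{s})$ with $\mu$ a finite positive measure. Continuity of $k$ follows from dominated convergence using $|e^{2\pi i \mathbf{s}\cdot\boldsymbol{\tau}}|=1$ and $\mu(\mathbb{R}^d)<\infty$, and one reads off $k(\mathbf{0})=\mu(\mathbb{R}^d)$ and $k(-\boldsymbol{\tau})=\overline{k(\boldsymbol{\tau})}$. Positive semi‑definiteness is the one‑line identity
\[
\sum_{j,l=1}^n c_j \overline{c_l}\, k(\mathbf{x}_j-\mathbf{x}_l) \;=\; \int_{\mathbb{R}^d} \Bigl| \sum_{j=1}^n c_j e^{2\pi i \mathbf{s}\cdot\mathbf{x}_j} \Bigr|^{2} d\mu(\mathbf{s}) \;\ge\; 0 ,
\]
and a (complex‑valued, Gaussian) process with covariance $k$ then exists by Kolmogorov's extension theorem, with mean‑square continuity inherited from continuity of $k$ at the origin.

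\emph{Necessity.} Now assume $k$ is continuous and positive semi‑definite. From $|k(\boldsymbol{\tau})|\le k(\mathbf{0})$ and $k(-\boldsymbol{\tau})=\overline{k(\boldsymbol{\tau})}$, $k$ is bounded, so for each $\epsilon>0$ the Gaussian‑damped transform
\[
f_\epsilon(\mathbf{s}) \;=\; \int_{\mathbb{R}^d} k(\boldsymbol{\tau})\, e^{-\pi\epsilon^2\|\boldsymbol{\tau}\|^2}\, e^{-2\pi i \mathbf{s}\cdot\boldsymbol{\tau}}\, d\boldsymbol{\tau}
\]
is well defined, continuous and bounded. The plan is then: (i) show $f_\epsilon\ge 0$ pointwise, by rewriting $f_\epsilon(\mathbf{s})$ as a double integral $\iint k(\mathbf{u}-\mathbf{v})\,\overline{g(\mathbf{u})}\,g(\mathbf{v})\,d\mathbf{u}\,d\mathbf{v}$ with $g(\mathbf{u})=e^{2\pi i\mathbf{s}\cdot\mathbf{u}}G(\mathbf{u})$ for a Gaussian $G$ tuned so that the auto‑correlation of $G$ produces the factor $e^{-\pi\epsilon^2\|\boldsymbol{\tau}\|^2}$, and noting that such double integrals are limits of the non‑negative quadratic forms appearing in positive‑definiteness; (ii) observe that $d\mu_\epsilon:=f_\epsilon(\mathbf{s})\,d\mathbf{s}$ is a positive measure with $\widehat{\mu_\epsilon}(\boldsymbol{\tau})=k(\boldsymbol{\tau})e^{-\pi\epsilon^2\|\boldsymbol{\tau}\|^2}$ by Fourier inversion applied to $k(\boldsymbol{\tau})e^{-\pi\epsilon^2\|\boldsymbol{\tau}\|^2}\in L^1$, and in particular of total mass $\mu_\epsilon(\mathbb{R}^d)=k(\mathbf{0})$ independent of $\epsilon$ (obtained by pairing $f_\epsilon$ against a shrinking Gaussian and using continuity of the integrand at $\mathbf{0}$); (iii) extract by Banach--Alaoglu / Helly a subsequence $\mu_{\epsilon_n}$ converging weakly to a positive measure $\mu$, and identify $\widehat{\mu}(\boldsymbol{\tau})=\lim_n k(\boldsymbol{\tau})e^{-\pi\epsilon_n^2\|\boldsymbol{\tau}\|^2}=k(\boldsymbol{\tau})$. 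The spectral‑density formulas then follow: when $\mu$ has a Lebesgue density $S$ the Fourier/inverse‑Fourier pair holds in the $L^1$ sense, and the inversion $S(\mathbf{s})=\int k(\boldsymbol{\tau})e^{-2\pi i \mathbf{s}\cdot\boldsymbol{\tau}}\,d\boldsymbol{\tau}$ is valid whenever in addition $k\in L^1(\mathbb{R}^d)$.

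\emph{The main obstacle.} The delicate point is step (iii): weak‑$*$ limits of measures against $C_0(\mathbb{R}^d)$ may lose mass to infinity, which would break the identification $\widehat{\mu_{\epsilon_n}}(\boldsymbol{\tau})\to\widehat{\mu}(\boldsymbol{\tau})$ since one must pair against the bounded but non‑vanishing function $\mathbf{s}\mapsto e^{2\pi i\mathbf{s}\cdot\boldsymbol{\tau}}$. I would remove this by proving tightness of $\{\mu_\epsilon\}$ directly from continuity of $k$ at the origin: averaging $1-\operatorname{Re}\widehat{\mu_\epsilon}(\boldsymbol{\tau}) = 1-\operatorname{Re} k(\boldsymbol{\tau})e^{-\pi\epsilon^2\|\boldsymbol{\tau}\|^2}$ over a small ball $\|\boldsymbol{\tau}\|\le\rho$ and applying Fubini yields a lower bound of the form $c_\rho\,\mu_\epsilon\bigl(\{\|\mathbf{s}\|>C/\rho\}\bigr)$, while the left‑hand side is uniformly small in $\epsilon$ for $\rho$ small because $k(\mathbf{0})-\operatorname{Re} k(\boldsymbol{\tau})$ is small near $\mathbf{0}$. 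This is the standard, mildly technical estimate that makes the compactness argument go through; once it is in place, everything else reduces to routine Fourier analysis and the quadratic‑form manipulations of steps (i)--(ii).
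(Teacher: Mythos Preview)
Your proof plan is correct and follows one of the standard modern routes to Bochner's theorem: the sufficiency direction is the familiar one-line positivity computation, and for necessity you regularise by a Gaussian, check non-negativity of the resulting transform via the quadratic-form interpretation of positive-definiteness, obtain a family of positive measures of constant total mass $k(\mathbf{0})$, and then pass to the limit after establishing tightness from continuity of $k$ at the origin. Your identification of step~(iii) as the only genuinely delicate point, and your tightness argument via averaging $1-\operatorname{Re}\widehat{\mu_\epsilon}$ over a small ball, are exactly right.

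The paper, however, does not give a proof at all: it simply writes ``see \cite{Bochner1955harmonic}'' and moves on, treating Bochner's theorem as a black-box classical result to be quoted rather than proved. So there is no approach in the paper to compare against---you have supplied substantially more than the paper does. Your Gaussian-damping route is the one found in, e.g., Rudin's \emph{Fourier Analysis on Groups} or Katznelson; an alternative (closer to Bochner's original and to the probabilistic literature the paper cites) goes via Herglotz's theorem on the torus together with a scaling/limiting argument, or via the Riesz representation theorem applied to the positive linear functional that $k$ induces on trigonometric polynomials. Either would also work, but yours is self-contained and arguably cleaner.
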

\begin{proof}
see \cite{Bochner1955harmonic}.
\end{proof}

\textit{Schoenberg’s theorem} complements Bochner’s theorem for radial functions:

\begin{theorem}[Schoenberg \cite{Schoenberg1938metric,Ressel1976,Khoshnevisan2005}]
\label{theorem:Schoenbergs}
Let $k : \mathbb{R}_+ \to \mathbb{R}_+$ be a continuous function. Then the following are equivalent:

\begin{enumerate}
    \item The function $\mathbf{x} \mapsto k(\|\mathbf{x}\|_n)$ is positive semi-definite on $\mathbf{x} \in \mathbb{R}^n$ for all $n \in \mathbb{N}$.
    
    \item The function $t \mapsto k(\sqrt{t})$ is the Laplace transform of a finite measure $\nu$ on $t \in \mathbb{R}_+$, i.e.
    \[
    k(\sqrt{t}) = \int_0^\infty e^{-t s} d\nu(s)
    \]
\end{enumerate}
\end{theorem}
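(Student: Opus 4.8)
The plan is to prove the two implications separately, with $(2)\Rightarrow(1)$ being a straightforward mixture argument and $(1)\Rightarrow(2)$ requiring a genuine infinite-dimensional limiting argument. For the easy direction, suppose $k(\sqrt{t}) = \int_0^\infty e^{-ts}\,d\nu(s)$ with $\nu\ge 0$ finite. Substituting $t=\|\mathbf{x}-\mathbf{x}'\|^2$ gives $k(\|\mathbf{x}-\mathbf{x}'\|)=\int_0^\infty e^{-s\|\mathbf{x}-\mathbf{x}'\|^2}\,d\nu(s)$. By Example \ref{example:Euclidean_dist_CND} the squared Euclidean distance is CND on $\mathbb{R}^n$ for every $n$, so by Corollary \ref{corollary:PSD_and_radial_funcs} each Gaussian $\mathbf{x}\mapsto e^{-s\|\mathbf{x}-\mathbf{x}'\|^2}$ is PSD on every $\mathbb{R}^n$. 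Then for any finite $\{\mathbf{x}_i\}\subset\mathbb{R}^n$ and $\{a_i\}\subset\mathbb{R}$, Tonelli's theorem permits exchanging the finite sum with the integral,
\[
\sum_{i,j} a_i a_j\, k(\|\mathbf{x}_i-\mathbf{x}_j\|) = \int_0^\infty \Bigl(\sum_{i,j} a_i a_j\, e^{-s\|\mathbf{x}_i-\mathbf{x}_j\|^2}\Bigr) d\nu(s) \ge 0,
\]
as the bracket is nonnegative for $\nu$-a.e.\ $s$; continuity of $\mathbf{x}\mapsto k(\|\mathbf{x}\|_n)$ is inherited from continuity of $k$.

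For $(1)\Rightarrow(2)$, I would first normalize. If $k(0)=0$, then for distinct $\mathbf{x},\mathbf{x}'$ the $2\times 2$ Gram matrix $\bigl(\begin{smallmatrix}0 & k(r)\\ k(r) & 0\end{smallmatrix}\bigr)$ being PSD forces $k\equiv 0$, for which the claim is trivial; otherwise rescale so $k(0)=1$. For each fixed $n$, a continuous radial PSD function on $\mathbb{R}^n$ with value $1$ at the origin is the characteristic function of a rotation-invariant probability measure on $\mathbb{R}^n$; this is the radial form of Bochner's theorem (Theorem \ref{theorem:Bochners}) obtained by averaging over the orthogonal group. Disintegrating over the radius yields a probability measure $\sigma_n$ on $[0,\infty)$ with
\[
k(r) = \int_0^\infty \Omega_n(ru)\,d\sigma_n(u), \qquad \Omega_n(\rho) := \int_{S^{n-1}} e^{\,i\rho\,\langle\theta,e_1\rangle}\,d\theta,
\]
where $\Omega_n$ is, up to normalization, a Bessel function. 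The crucial point is that the \emph{same} $k$ admits such a representation for \emph{every} $n$, so I can bring in the concentration of the uniform measure on high-dimensional spheres: $\sqrt{n}\,\langle\theta,e_1\rangle \Rightarrow \mathcal{N}(0,1)$, whence $\Omega_n(\rho\sqrt{n}) \to e^{-\rho^2/2}$ uniformly on compacts.

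The hard part is converting this pointwise-in-$n$ family into one limiting representation. I would push forward $\sigma_n$ under $u\mapsto u/\sqrt{n}$ to get measures $\tilde\sigma_n$, so that $k(r)=\int_0^\infty \Omega_n\bigl(r\sqrt n \cdot \tfrac{u}{\sqrt n}\bigr)\,d\tilde\sigma_n(u)$, then argue the family $\{\tilde\sigma_n\}$ is tight (a Prokhorov-type estimate bounding $\tilde\sigma_n([R,\infty))$ in terms of $1-\mathrm{Re}\,k$ on a small interval, using only continuity of $k$ at the origin), extract a weakly convergent subsequence $\tilde\sigma_{n_j}\to\nu$ with $\nu$ finite, and pass to the limit using the uniform-on-compacts convergence of the kernels together with a tail estimate ruling out escape of mass at infinity. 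This produces $k(r)=\int_0^\infty e^{-r^2 v^2/2}\,d\nu(v)$; substituting $s=v^2/2$ and $t=r^2$ gives $k(\sqrt t)=\int_0^\infty e^{-ts}\,d\rho(s)$ for the corresponding pushforward $\rho$, with $\rho([0,\infty))=k(0)=1$ finite. I expect this tightness/weak-compactness step, combined with making the interchange of limits rigorous and controlling the rescaled Bessel kernels $\Omega_n$, to be the genuine obstacle; the remaining ingredients are either bookkeeping or standard citations (Bochner, sphere concentration).

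Finally, uniqueness of the representing measure follows at once from injectivity of the Laplace transform on finite Borel measures. An alternative packaging would route everything through Bernstein's theorem (Theorem \ref{theorem:Bernsteins}) by first showing that $(1)$ is equivalent to $t\mapsto k(\sqrt t)$ being completely monotone — the easy half again supplied by Corollary \ref{corollary:PSD_and_radial_funcs} applied to each Gaussian component, and the hard half being the identical dimension-limit argument expressed in terms of derivative sign patterns — after which Bernstein supplies the Laplace representation directly.
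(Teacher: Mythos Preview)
Your proposal is correct and, in fact, substantially more detailed than what the paper offers: the paper does not prove this theorem at all but simply writes ``see \cite{Schoenberg1938metric}.'' Your outline --- the mixture-of-Gaussians direction for $(2)\Rightarrow(1)$, and for $(1)\Rightarrow(2)$ the per-dimension Bochner/Bessel representation followed by the high-dimensional sphere concentration limit $\Omega_n(\rho\sqrt n)\to e^{-\rho^2/2}$ with a tightness/weak-compactness argument --- is precisely Schoenberg's original 1938 argument that the citation points to, so your approach coincides with the referenced proof.
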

\begin{proof}
see \cite{Schoenberg1938metric}.
\end{proof}

Combining \textit{Schoenberg’s theorem} and \textit{Bernstein’s theorem}, we arrive at the conclusion that, a function $k(\mathbf{x}, \mathbf{x}') = f(\|\mathbf{x} - \mathbf{x}'\|)$ defines a \textit{positive definite kernel} on $\mathbb{R}^n$ for all $n$ if and only if $f(\sqrt{t})$ is a \textit{completely monotone function} on $(0, \infty)$, that is, if $k(\|\mathbf{x} - \mathbf{x}'\|)$ is PSD in all dimensions, then $k(\sqrt{t})$ must be \textit{completely monotone}. 

\textit{Mercer’s theorem} allows expressing the covariance function, under certain conditions, in terms of its eigenfunctions and eigenvalues:

\begin{theorem}[Mercer \cite{Konig1986eigenvalue,Rasmussen2006GPbook}]
\label{theorem:mercer}
Let $(X, \mu)$ be a finite measure space, and let $k \in L_\infty(X^2, \mu^2)$ be a kernel such that the associated integral operator
\[
T_k : L_2(X, \mu) \to L_2(X, \mu), \qquad (T_k f)(\mathbf{x}) = \int_X k(\mathbf{x}, \mathbf{x}') f(\mathbf{x}') \, d\mu(\mathbf{x}')
\]
is positive definite. Let $\{ \phi_i \}_{i=1}^\infty \subset L_2(X, \mu)$ denote the orthonormal eigenfunctions of $T_k$, associated with eigenvalues $\{ \lambda_i \}_{i=1}^\infty$, where each $\lambda_i > 0$.

Then:
\begin{enumerate}
    \item The eigenvalues $\{\lambda_i\}$ are absolutely summable, i.e. $\sum_{i=1}^\infty \lambda_i < \infty$.
    \item The kernel admits the expansion:
    \[
    k(\mathbf{x}, \mathbf{x}') = \sum_{i=1}^\infty \lambda_i \phi_i(\mathbf{x}) \overline{\phi_i(\mathbf{x}')}
    \]
    where the series converges absolutely and uniformly $\mu^2$-almost everywhere.
\end{enumerate}
\end{theorem}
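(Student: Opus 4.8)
The plan is to prove Mercer's theorem via the spectral theory of compact self-adjoint operators, with the key non-trivial step being the upgrade from square-summability to absolute summability of the eigenvalues, which is where positivity is used. First I would check that $T_k$ is a well-defined, bounded, compact, self-adjoint operator on $L_2(X,\mu)$. Since $\mu$ is finite and $k \in L_\infty(X^2,\mu^2)$, we have $k \in L_2(X^2,\mu^2)$, so $T_k$ is Hilbert--Schmidt and hence compact; positivity of the operator (together with the implicit symmetry of $k$) gives self-adjointness. The spectral theorem for compact self-adjoint operators then produces a countable orthonormal system $\{\phi_i\}$ of eigenfunctions with real eigenvalues $\lambda_i \to 0$, and the hypothesis that $T_k$ is positive definite forces $\lambda_i > 0$, while any $\psi$ orthogonal to every $\phi_i$ lies in $\ker T_k$.

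Next I would prove $\sum_i \lambda_i < \infty$, which is the heart of the statement, since square-summability $\sum_i \lambda_i^2 < \infty$ is automatic from Hilbert--Schmidt but strictly weaker. Set $k_n(\mathbf{x},\mathbf{x}') = \sum_{i=1}^n \lambda_i \phi_i(\mathbf{x})\overline{\phi_i(\mathbf{x}')}$. The operator $T_k - T_{k_n}$ is precisely $T_k$ compressed to the orthogonal complement of $\mathrm{span}\{\phi_1,\dots,\phi_n\}$, hence is again a positive operator, so $k - k_n$ is a positive-definite kernel. Evaluating (a suitable representative of) this remainder kernel on the diagonal gives $k(\mathbf{x},\mathbf{x}) - \sum_{i=1}^n \lambda_i|\phi_i(\mathbf{x})|^2 \ge 0$ for $\mu$-a.e.\ $\mathbf{x}$; integrating against $\mu$ (finite, with $k(\mathbf{x},\mathbf{x})$ essentially bounded) yields $\sum_{i=1}^n \lambda_i \le \int_X k(\mathbf{x},\mathbf{x})\,d\mu(\mathbf{x}) < \infty$ uniformly in $n$, so $\sum_i \lambda_i < \infty$ and $g(\mathbf{x}) := \sum_i \lambda_i|\phi_i(\mathbf{x})|^2 \le k(\mathbf{x},\mathbf{x})$ is finite a.e.

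Then I would establish the absolute and ($\mu^2$-a.e.)\ uniform convergence of the Mercer series. For the tails, Cauchy--Schwarz gives
\[
\sum_{i=n+1}^{m} \lambda_i \bigl|\phi_i(\mathbf{x})\overline{\phi_i(\mathbf{x}')}\bigr| \le \Bigl(\sum_{i=n+1}^{m}\lambda_i|\phi_i(\mathbf{x})|^2\Bigr)^{1/2}\Bigl(\sum_{i=n+1}^{m}\lambda_i|\phi_i(\mathbf{x}')|^2\Bigr)^{1/2},
\]
and since $\sum_i \lambda_i|\phi_i(\mathbf{x})|^2 = g(\mathbf{x}) < \infty$ a.e., the right-hand side vanishes as $n \to \infty$ for a.e.\ $(\mathbf{x},\mathbf{x}')$, proving absolute convergence; the uniform statement follows from the monotone bound $g \le k(\cdot,\cdot) \in L_\infty$, a measure-theoretic surrogate for Dini's theorem. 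Finally I would identify the limit: the series defines a Hilbert--Schmidt kernel $\tilde k$ whose operator is $\sum_i \lambda_i \langle \cdot, \phi_i\rangle \phi_i = T_k$ by the spectral decomposition, so $T_{k-\tilde k} = 0$, forcing $k = \tilde k$ in $L_2(X^2,\mu^2)$, i.e.\ $\mu^2$-almost everywhere.

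The main obstacle I expect is the diagonal-evaluation step inside the summability argument: a kernel defined only $\mu^2$-a.e.\ has no canonical diagonal, so one must either assume enough regularity on $k$ to fix a good representative, or argue purely measure-theoretically that positivity of $T_{k-k_n}$ still yields $\int_X (k(\mathbf{x},\mathbf{x}) - k_n(\mathbf{x},\mathbf{x}))\,d\mu(\mathbf{x}) \ge 0$ by testing against functions of the form $\psi \otimes \bar\psi$ and passing to limits. Getting this bound while controlling the essential supremum of the diagonal by $\|k\|_{L_\infty}$ is the delicate point; the remainder is standard compact-operator spectral theory plus Cauchy--Schwarz bookkeeping.
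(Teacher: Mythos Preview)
Your proposal is a careful and essentially correct outline of the standard proof of Mercer's theorem via the spectral theory of compact self-adjoint operators, and you have correctly identified the one genuinely delicate point (making sense of diagonal positivity for an $L_\infty$ kernel defined only $\mu^2$-a.e.). However, the paper does not actually give its own proof: its proof environment consists solely of ``See e.g.\ \cite{Konig1986eigenvalue}.'' So there is nothing substantive in the paper to compare your argument against; your sketch is precisely the kind of argument one finds in the cited reference, and in that sense it matches the paper's intent.
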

\begin{proof}
    See e.g. \cite{Konig1986eigenvalue}.
\end{proof}

For a stationary covariance function $k$ on \( \mathbb{R}^d \), the Mercer expansion is replaced by the Bochner representation:

\begin{example}[Bochner \textit{vs} Mercer expansions \cite{Rasmussen2006GPbook}]
The statement of Mercer's theorem applies to kernels defined with respect to a finite measure \( \mu \). However, if we replace this with Lebesgue measure and consider a stationary covariance function, then by Bochner’s theorem we obtain \cite{Rasmussen2006GPbook}:

\begin{equation*}
k(\mathbf{x} - \mathbf{x}') 
= \int_{\mathbb{R}^d} e^{2\pi i \mathbf{s} \cdot (\mathbf{x} - \mathbf{x}')} \, d\mu(\mathbf{s}) 
= \int_{\mathbb{R}^d} e^{2\pi i \mathbf{s} \cdot \mathbf{x}} \left( e^{2\pi i \mathbf{s} \cdot \mathbf{x}'} \right)^* \, d\mu(\mathbf{s})
\end{equation*}
in which the complex exponentials $e^{2\pi i \mathbf{s} \cdot \mathbf{x}}$ are the eigenfunctions of a stationary kernel with respect to Lebesgue measure. This expression closely resembles the Mercer expansion:
$
k(\mathbf{x}, \mathbf{x}') = \sum_{i=1}^\infty \lambda_i \phi_i(\mathbf{x}) \overline{\phi_i(\mathbf{x}')}
$
except that the summation over discrete eigenfunctions has been replaced by an integral over the continuous spectral domain $\mathbb{R}^d$.
\end{example}

Table. \ref{tab:kernel_theorems} summarizes the above theorems - they serve as testing rule for a proposed kernel, or guidance for designing new kernels. PSD kernels can be constructed from known PSD components or use kernel composition techniques (e.g. via convolution, spectral mixtures, or positive combinations) \cite{Rasmussen2006GPbook}.

\begin{table}[H]
\centering
\caption{Summary of the theorems}
\label{tab:kernel_theorems}
\scriptsize
\setlength{\tabcolsep}{3pt}
\renewcommand{\arraystretch}{1.1}
\begin{tabular}{p{2.6cm} p{9.5cm}}
\toprule
\textbf{Theorem} & \textbf{Essence} \\
\midrule
\textbf{Theorem.\ref{theorem:connection_PSD_and_CND}} & PSD and CND connectivity via exponentiation and reciprocal transformation \\
\textbf{Corollary.\ref{corollary:PSD_and_radial_funcs}} & \( \exp(-\phi) \) is PSD \( \Longleftrightarrow \) \( \phi \) is CND \\
\textbf{Bernstein's.\ref{theorem:Bernsteins}} & Completely monotone \( \Longleftrightarrow \) Laplace transforms of non-negative measures \\
\textbf{Bochner's.\ref{theorem:Bochners}} & Stationary PSD kernels \( \Longleftrightarrow \) Fourier transforms of positive measures \\
\textbf{Schoenberg's.\ref{theorem:Schoenbergs}} & Radial PSD kernels \( \Longleftrightarrow \) Laplace transforms of positive measures \\
\textbf{Mercer's.\ref{theorem:mercer}} & PSD kernels on compact domains \( \Longrightarrow \) eigenfunction–eigenvalue expansion (discrete spectral decomposition) \\
\bottomrule
\end{tabular}
\end{table}

\subsection{The inverse multiquadratic (IMQ) kernel} \label{app:IMQ_kernel}

One of the kernels which show similar behaviour (i.e. similarity decreases as distance increases) as RBF is the IMQ kernel, which is defined as \footnote{In \cite{Scholkopf2002KernelLearning} pp.69, the IMQ kernel is defined with $\beta=1/2$ and normally instantiated with $c=1$.} \cite{Scholkopf2002KernelLearning,Micchelli1986interpolation}:
\begin{equation} \label{eq:IMQ_kernel}
k(\mathbf{x}_i, \mathbf{x}_j) = \left( \|\mathbf{x}_i - \mathbf{x}_j\|^2 + c^2 \right)^{-\beta}, \quad c > 0, \, \beta > 0
\end{equation}
in which $\|\mathbf{x}_i - \mathbf{x}_j\|^2$ is the Euclidean distance, $c$ is a scale parameter controlling the range of interaction, $\beta$ is a shape parameter (decay rate) controlling how fast similarity decays. 

The IMQ kernel is completely monotonic on $(0,\infty)$, and $\beta > 0$ is necessary for it to be positive definite \cite{Micchelli1986interpolation}. It is symmetric and its value depends only on the relative distance between inputs, it's thus stationary and isotropic \footnote{A kernel is \textit{isotropic} if it depends only on the Euclidean distance between the points, i.e. $\|\mathbf{x}_i-\mathbf{x}_j\|^2$, and not on the direction of the difference. This implies rotational invariance, meaning that rotating both inputs by the same rotation matrix does not affect the kernel’s value. To verify rotational invariance of this radial function, consider a rotation matrix $R$ with the property $R^\top R = I$. The kernel for the rotated inputs is:
\[
k(R\mathbf{x}_i, R\mathbf{x}_j) = \left( \|R\mathbf{x}_i - R\mathbf{x}_j\|^2 + c^2 \right)^{-\beta}
\]
Since $R\mathbf{x}_i - R\mathbf{x}_j = R(\mathbf{x}_i - \mathbf{x}_j)$, the squared norm can be computed as:
\[
\|R(\mathbf{x}_i - \mathbf{x}_j)\|^2 = (R(\mathbf{x}_i - \mathbf{x}_j))^\top R(\mathbf{x}_i - \mathbf{x}_j) = (\mathbf{x}_i - \mathbf{x}_j)^\top R^\top R (\mathbf{x}_i - \mathbf{x}_j) = (\mathbf{x}_i - \mathbf{x}_j)^\top (\mathbf{x}_i - \mathbf{x}_j) = \|\mathbf{x}_i - \mathbf{x}_j\|^2
\]
The squared distance is unchanged under rotation, therefore:
\[
k(R\mathbf{x}_i, R\mathbf{x}_j) = \left( \|\mathbf{x}_i - \mathbf{x}_j\|^2 + c^2 \right)^{-\beta} = k(\mathbf{x}_i, \mathbf{x}_j)
\]
This shows the kernel is rotationally invariant, confirming that it is isotropic.}. The IMQ kernel is strictly positive definite (PD) on $\mathbb{R}^d$ if $\beta < \frac{d + 1}{2}$ \cite{Micchelli1986interpolation}, based on the Bochner’s theorem which links positive definiteness to the non-negativity of the spectral density of the kernel (proof see \cite{Micchelli1986interpolation}). The Fourier transform of $k(r) = (r^2 + c^2)^{-\beta}$ is known and is non-negative when $\beta < \frac{d+1}{2}$, guaranteeing that the kernel matrix is PSD for any finite set of inputs. For example, in 1D, the IMQ kernel is PD if $\beta < 1$; in 3D, if $\beta < 2$, etc.

\subsection{Exponentiated IMQ (EIMQ) kernel}  \label{app:EIMQ_kernel}

Unlike RBF or IMQ kernels, in AR mining we expect an 'optimal distance' behaviour, i.e. the similarity peaks at certain distance, which compromises the co-occurring patterns of substitute and complementary items. Here we focus on how we can achieve an inverse-distance behaviour, i.e. the kernel-output similarity grows as distance increases. To have a valid PSD kernel which preserves this reciprocal distance-like behavior, we combine the idea of (negative) exponentiation used in RBF etc and the idea of reciprocal distance as desired by AR mining, and introduce the exponentiated inverse multiquadratic (EIMQ) kernel:
\begin{equation} \label{eq:EIMQ_kernel}
k(\mathbf{x}_i, \mathbf{x}_j) = \exp \left(- \frac{1}{(\|\mathbf{x}_i - \mathbf{x}_j\|^2 + c^2 )^{\beta}} \right), \quad c > 0, \, \beta > 0
\end{equation}
and two instances shown in Fig.\ref{fig:EIMQ_kernel}. It is empirically observed that, some eigenvalues from these two instances are negative, implying the EIMQ kernel is not PSD.

\begin{figure}[H]
    \centering
    \begin{subfigure}[b]{0.5\linewidth}
        \centering
        \includegraphics[width=\linewidth]{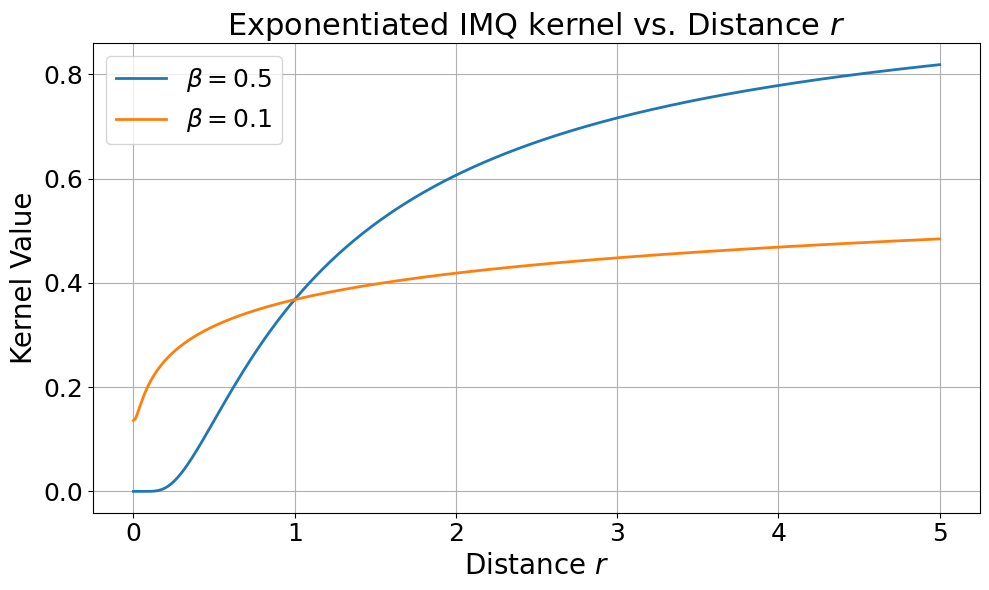}
        \caption{Example EIMQ kernels with $\beta=0.1$ and $\beta=0.5$}
        \label{fig:EIMQ}
    \end{subfigure}
    \hfill
    \begin{subfigure}[b]{0.45\linewidth}
        \centering
        \includegraphics[width=\linewidth]{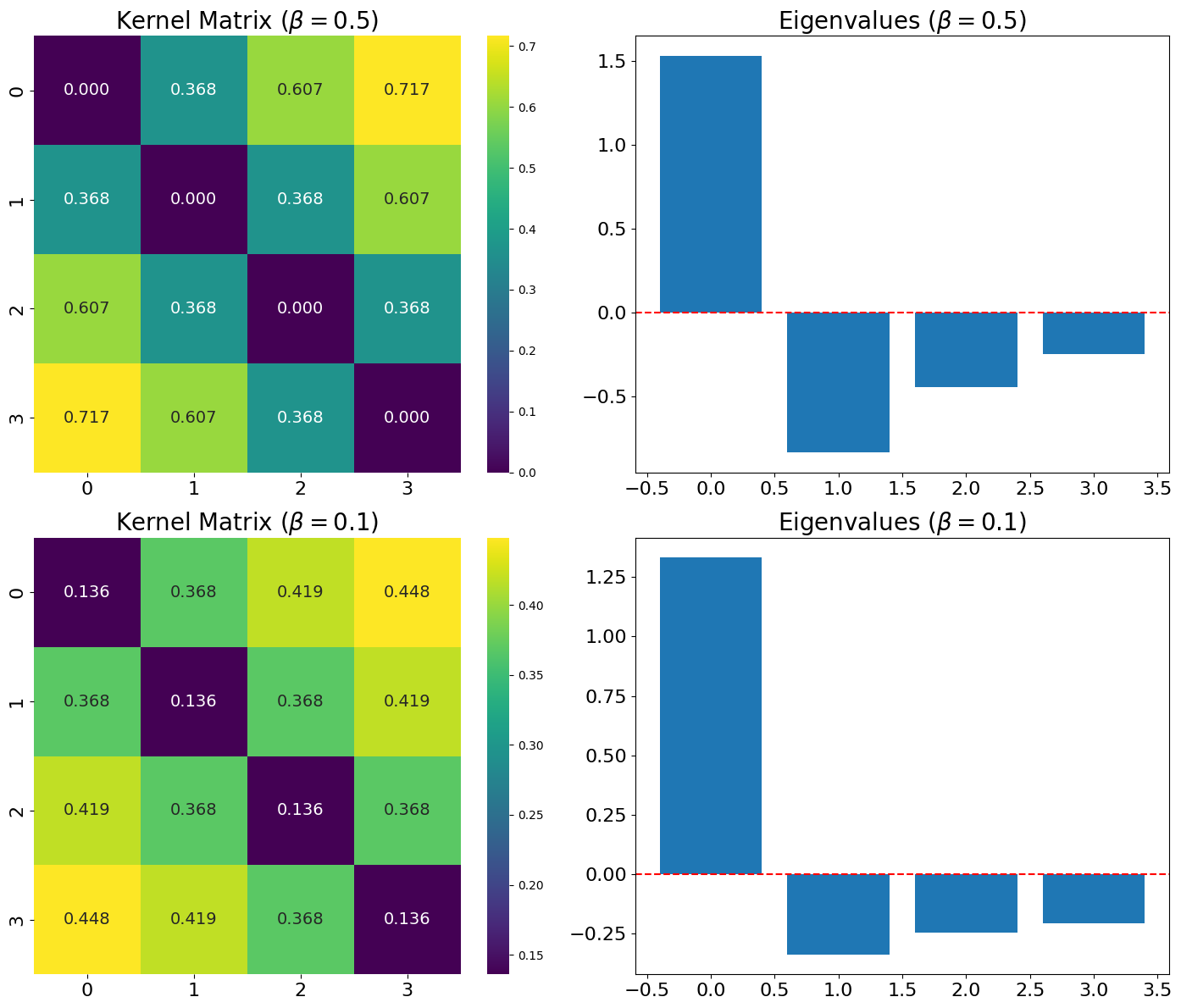}
        \caption{Example covariance matrices and eigenvalues}
        \label{fig:EIMQ_matrix_eigenvalues}
    \end{subfigure}
    \caption{Two examples of the EIMQ kernel, both with $c^2$=0.001. The 4 data points are on 1D real line: [$x_1$ = 0, $x_2$ = 1, $x_3$ = 2, $x_4$ = 3].}
    \label{fig:EIMQ_kernel}
\end{figure}

The EIMQ kernel has a reciprocal of squared distance $r^2 = \|\mathbf{x}_i - \mathbf{x}_j\|^2$, rather than a linear dependence on squared distance $r$. This reciprocal form causes the kernel to decay more slowly for large distances, and sharply increase for small distances. The EIMQ kernel is stationary or  translation-invariant \footnote{The constant $c^2$ doesn't break translation-invariance. Translation invariance or stationarity means that shifting both inputs by the same vector does not change the kernel’s value, i.e. the kernel value depends only on distance $r$, not the absolute locations $\mathbf{x}_i$ and $\mathbf{x}_j$. This property implies the kernel can be written as $k(\mathbf{x},\mathbf{y})=f(\mathbf{x}-\mathbf{y})$ for some function $f$. A kernel like $k(\mathbf{x}_i, \mathbf{x}_j) = \exp(-\|\mathbf{x}_i\|^2 - \|\mathbf{x}_j\|^2)$ or $k(\mathbf{x}_i, \mathbf{x}_j) = \mathbf{x}_i^\top \mathbf{x}_j$ would not be translation-invariant, as they depend on the individual positions, not just the difference.}, i.e. $k(\mathbf{x}+\mathbf{a},\mathbf{y}+\mathbf{a})=k(\mathbf{x},\mathbf{y})$, and isotropic. 

Let's examine Bochner’s theorem for stationary kernels. Bochner’s theorem .\ref{theorem:Bochners} says that a stationary kernel $k(\mathbf{x}_i - \mathbf{x}_j)$ is PSD \textit{iff} its Fourier transform is non-negative. Unfortunately, $\exp (- \frac{1}{(r^2 + c^2)})$ is not a Gaussian in $r$, and its Fourier transform is not necessarily non-negative. In fact, such reciprocal functions can induce negative components in the spectrum, violating PSD, as observed from the numerical counterexamples in Fig.\ref{fig:EIMQ_matrix_eigenvalues}, in which we empirically test positive semi-definiteness of the EIMQ kernel matrix on . 

\paragraph{Compare RBF, IMQ and EIMQ kernels} 

A comparison of the RBF, IMQ, and EIMQ kernel instances is made in Fig.\ref{fig:RBF_IMQ_EIMQ_compare}, from which we observe that, while the RBF kernel decays rapidly (Gaussian-like), focusing on local similarity, the IMQ kernel decays slower than the Gaussian (RBF), meaning it has longer-range influence (i.e. maintaining high similarity over long ranges). At short distances $r \to 0$, IMQ behaves like $(c^2)^{-\beta}$, giving a sharp repulsion if $\beta$ is large; at long distances $r \to \infty$, it decays as $r^{-2\beta}$, which is slower than exponential decay (like in the RBF). IMQ kernel is infinitely differentiable and very smooth, making it suitable for applications requiring smooth interpolation, it is used in particle-based variational inference (ParVI) and Stein variational gradient descent (SVGD \cite{Liu2016SVGD}), as it yields heavy tails, which encourages particles to spread out (avoid mode collapse), and it gives strong repulsive force (the gradient of IMQ) when particles are close \footnote{In SVGD \cite{Liu2016SVGD}, the update for particle $\mathbf{x}_i$ involves:
\[
\phi(\mathbf{x}_i) = \frac{1}{n} \sum_{j=1}^n \left[ k(\mathbf{x}_j, \mathbf{x}_i) \nabla_{\mathbf{x}_j} \log p(\mathbf{x}_j) + \nabla_{\mathbf{x}_j} k(\mathbf{x}_j, \mathbf{x}_i) \right]
\]
The IMQ kernel's gradient term gives strong repulsion:
\[
\nabla_{\mathbf{x}_j} k(\mathbf{x}_j, \mathbf{x}_i) = -2\beta (\mathbf{x}_j - \mathbf{x}_i) \left( \|\mathbf{x}_j - \mathbf{x}_i\|^2 + c^2 \right)^{-\beta - 1}
\]
This sharpens near $\mathbf{x}_i \approx \mathbf{x}_j$, preventing mode collapse in particle approximations.}. Still, similar to the RBF kernel, IMQ advocates for the high co-occurrence of similar items, which may not be ideal for mining complementary items. EIMQ kernel gives higher covariance for distant points, which is desired for complementary items in AR mining; however, the EIMQ kernel is not PSD and thus not valid for GP use.

\begin{figure}[H]
    \centering
    \includegraphics[width=0.40\linewidth]{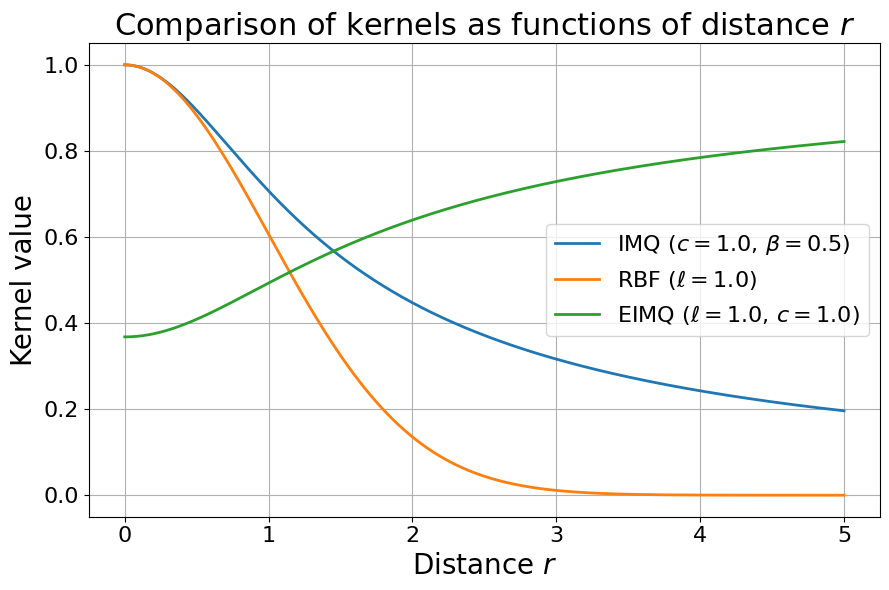}
    \caption{Comparison of RBF, IMQ, EIMQ examples.}
    \label{fig:RBF_IMQ_EIMQ_compare}
\end{figure}

\section{Neural tangent kernel: an analytic example} \label{app:NTK_3layer_NN}

Following our discussion of NTK in Section.\ref{sec:custom_kernel_design}, here we present a fully connected neural network with one hidden layer and ReLU activation, and derive the analytic form the NTK based on this architecture \footnote{Generalizing the NTK to convolutional neural nets, for example, yields the kernel convolutional neural tangent kernel (CNTK \cite{Yang2020NTK,Arora2019NTK}).}.

\paragraph{Network setup} Consider a simple 3-layer neural network with:
\begin{itemize}
    \item Input: $\mathbf{x} \in \mathbb{R}^d$
    \item Hidden layer: $m$ neurons with \textit{ReLU} activation, $\sigma(z) = \max(0, z)$
    \item Output: a single neuron with linear activation
    \item Parameters:
    \begin{itemize}
        \item Input-to-hidden weights: $\mathbf{w}_j \in \mathbb{R}^d$, initialized as $\mathbf{w}_j \sim \mathcal{N}(0, I_d)$ (Gaussian with zero mean and identity covariance)
        \item Hidden-to-output weights: $v_j \in \mathbb{R}$, initialized as $v_j \sim \mathcal{N}(0, 1)$
        \item Biases set to zero for simplicity
    \end{itemize}
\end{itemize}
We use this simple yet powerful perceptron as an example because an infinitely wide neural network with a single hidden layer is an universal approximator \cite{Hornik1993universal,Rasmussen2006GPbook}.
The output of the network is:
\[ f(\mathbf{x}) = \sum_{j=1}^m v_j \cdot \sigma(\mathbf{w}_j^\top \mathbf{x}) \]

The NTK is defined as the inner product of the gradients of the network output with respect to all parameters $\theta = \{\mathbf{w}_1, v_1, \dots, \mathbf{w}_m, v_m\}$:
\[ k(\mathbf{x}, \mathbf{x}') = \left\langle \frac{\partial f(\mathbf{x})}{\partial \theta}, \frac{\partial f(\mathbf{x}')}{\partial \theta} \right\rangle \]
In the infinite-width limit ($m \to \infty$), the NTK becomes its expectation over the random initialization:
\[ k^{NTK}(\mathbf{x}, \mathbf{x}') = \mathbb{E}\left[ k(\mathbf{x}, \mathbf{x}') \right] \]

The gradients depend on the parameters:
\begin{itemize}
    \item Gradient \textit{w.r.t.} $v_j$:
        \[ \frac{\partial f(\mathbf{x})}{\partial v_j} = \sigma(\mathbf{w}_j^\top \mathbf{x}) \]
    \item Gradient \textit{w.r.t.} $\mathbf{w}_j$:
          \[ \frac{\partial f(\mathbf{x})}{\partial \mathbf{w}_j} = v_j \cdot \mathbb{I}(\mathbf{w}_j^\top \mathbf{x} > 0) \cdot \mathbf{x} \]
          where $\mathbb{I}$ is the indicator function (1 if true, 0 otherwise). This indicator function results from the gradient of the RELU activation $\sigma(z) = \max(0, z)$.
\end{itemize}

For a finite network, the NTK is:
\[ k(\mathbf{x}, \mathbf{x}') = \sum_{j=1}^m \left( \sigma(\mathbf{w}_j^\top \mathbf{x}) \sigma(\mathbf{w}_j^\top \mathbf{x}') + v_j^2 \cdot \mathbb{I}(\mathbf{w}_j^\top \mathbf{x} > 0) \mathbb{I}(\mathbf{w}_j^\top \mathbf{x}' > 0) \cdot \mathbf{x}^\top \mathbf{x}' \right) \]

As $m \to \infty$, we compute the expectation over random initialization:
\[ k^{NTK}(\mathbf{x}, \mathbf{x}') = m \cdot \left( \mathbb{E}_{\mathbf{w}}\left[ \sigma(\mathbf{w}^\top \mathbf{x}) \sigma(\mathbf{w}^\top \mathbf{x}') \right] + \mathbb{E}_{\mathbf{w}, v}\left[ v^2 \cdot \mathbb{I}(\mathbf{w}^\top \mathbf{x} > 0) \mathbb{I}(\mathbf{w}^\top \mathbf{x}' > 0) \right] \cdot \mathbf{x}^\top \mathbf{x}' \right) \]
Since $v \sim \mathcal{N}(0, 1)$, $\mathbb{E}[v^2] = 1$, so the second term simplifies:
\[ \mathbb{E}_{\mathbf{w}, v}\left[ v^2 \cdot \mathbb{I}(\mathbf{w}^\top \mathbf{x} > 0) \mathbb{I}(\mathbf{w}^\top \mathbf{x}' > 0) \right] = \mathbb{E}_{\mathbf{w}}\left[ \mathbb{I}(\mathbf{w}^\top \mathbf{x} > 0) \mathbb{I}(\mathbf{w}^\top \mathbf{x}' > 0) \right] \]

Thus:
\begin{equation} \label{eq:NTK_3_layer}
    k^{NTK}(\mathbf{x}, \mathbf{x}') = m \cdot \left\{ \mathbb{E}_{\mathbf{w}}\left[ \sigma(\mathbf{w}^\top \mathbf{x}) \sigma(\mathbf{w}^\top \mathbf{x}') \right] + \mathbb{E}_{\mathbf{w}}\left[ \mathbb{I}(\mathbf{w}^\top \mathbf{x} > 0) \mathbb{I}(\mathbf{w}^\top \mathbf{x}' > 0) \right] \cdot \mathbf{x}^\top \mathbf{x}' \right\}
\end{equation}

In the following, we evaluate the two parts separately.

\subsection{The two correlated Gaussian variables $\mathbf{w}^\top \mathbf{x}$ and $\mathbf{w}^\top \mathbf{x}'$}

Since $\mathbf{w} \sim \mathcal{N}(0, I_d)$, $\mathbf{w}^\top \mathbf{x}$ and $\mathbf{w}^\top \mathbf{x}'$ are jointly Gaussian with zero mean and covariance $\mathbf{x}^\top \mathbf{x}'$. We can see this from the below: $\mathbf{w}$ is a $d$-dimensional random vector distributed as $\mathcal{N}(0, I_d)$, i.e. $\mathbf{w}$ follows a multivariate Gaussian (normal) distribution with mean $\mathbb{E}[\mathbf{w}] = \mathbf{0}$, a zero vector of length $d$, and covariance matrix $I_d$, the $d \times d$ identity matrix. Each component $w_i$ of $\mathbf{w}$ is independent, with mean 0 and variance 1 (since $\text{Var}(w_i) = \mathbb{E}[w_i^2] = 1$). $\mathbf{x}$ and $\mathbf{x}'$ are fixed (non-random) vectors in $\mathbb{R}^d$. $\mathbf{w}^\top \mathbf{x}$ and $\mathbf{w}^\top \mathbf{x}'$ are scalar random variables, representing the dot products of $\mathbf{w}$ with $\mathbf{x}$ and $\mathbf{x}'$, respectively.

A fundamental property of Gaussian random vectors is that linear transformations of Gaussians remain Gaussian. Since $\mathbf{w} \sim \mathcal{N}(0, I_d)$, any linear combination of its components is Gaussian. $\mathbf{w}^\top \mathbf{x} = w_1 x_1 + w_2 x_2 + \cdots + w_d x_d$ is a linear combination of the $w_i$’s, with coefficients $x_i$, same for $\mathbf{w}^\top \mathbf{x}' = w_1 x'_1 + w_2 x'_2 + \cdots + w_d x'_d$. As each $w_i \sim \mathcal{N}(0, 1)$ and the transformation is linear, both $\mathbf{w}^\top \mathbf{x}$ and $\mathbf{w}^\top \mathbf{x}'$ are Gaussian random variables. Moreover, since they are both derived from the same Gaussian vector $\mathbf{w}$, they are jointly Gaussian. This means the pair $[\mathbf{w}^\top \mathbf{x}, \mathbf{w}^\top \mathbf{x}']$ follows a bivariate Gaussian distribution. Let's examine its mean vector and covariance matrix.

For $\mathbf{w}^\top \mathbf{x}$, we have:
  \[
  \mathbb{E}[\mathbf{w}^\top \mathbf{x}] = \mathbb{E}\left[ \sum_{i=1}^d w_i x_i \right]
  \]
  Since expectation is linear, we can bring it inside the sum:
  \[
  \mathbb{E}[\mathbf{w}^\top \mathbf{x}] = \sum_{i=1}^d \mathbb{E}[w_i] x_i
  \]
  Given $\mathbb{E}[w_i] = 0$ (because $\mathbf{w} \sim \mathcal{N}(0, I_d)$), for all $i$ we have:
  \[
  \mathbb{E}[\mathbf{w}^\top \mathbf{x}] = \sum_{i=1}^d 0 \cdot x_i = 0
  \]

Similarly, for $\mathbf{w}^\top \mathbf{x}'$ we have:
  \[
  \mathbb{E}[\mathbf{w}^\top \mathbf{x}'] = \sum_{i=1}^d \mathbb{E}[w_i] x'_i = \sum_{i=1}^d 0 \cdot x'_i = 0
  \]

Therefore, both $\mathbf{w}^\top \mathbf{x}$ and $\mathbf{w}^\top \mathbf{x}'$ have mean 0. This makes sense intuitively: since $\mathbf{w}$ is centered at zero, projecting it onto any fixed direction (via dot product) keeps the result centered at zero. And any finite, linear combination of standard Gaussian random variables still has zero mean.

The covariance between $\mathbf{w}^\top \mathbf{x}$ and $\mathbf{w}^\top \mathbf{x}'$ hints how they vary together. By definition:
\[
\text{Cov}(\mathbf{w}^\top \mathbf{x}, \mathbf{w}^\top \mathbf{x}') = \mathbb{E}[(\mathbf{w}^\top \mathbf{x})(\mathbf{w}^\top \mathbf{x}')] - \mathbb{E}[\mathbf{w}^\top \mathbf{x}] \mathbb{E}[\mathbf{w}^\top \mathbf{x}']
\]
Since we’ve established that $\mathbb{E}[\mathbf{w}^\top \mathbf{x}] = 0$ and $\mathbb{E}[\mathbf{w}^\top \mathbf{x}'] = 0$, this simplifies to:
\[
\text{Cov}(\mathbf{w}^\top \mathbf{x}, \mathbf{w}^\top \mathbf{x}') = \mathbb{E}[(\mathbf{w}^\top \mathbf{x})(\mathbf{w}^\top \mathbf{x}')]
\]

Let’s compute that expectation:
\[
\mathbf{w}^\top \mathbf{x} = \sum_{i=1}^d w_i x_i, \quad \mathbf{w}^\top \mathbf{x}' = \sum_{j=1}^d w_j x'_j
\]
So:
\[
(\mathbf{w}^\top \mathbf{x})(\mathbf{w}^\top \mathbf{x}') = \left( \sum_{i=1}^d w_i x_i \right) \left( \sum_{j=1}^d w_j x'_j \right) = \sum_{i=1}^d \sum_{j=1}^d w_i w_j x_i x'_j
\]
Taking the expectation:
\[
\mathbb{E}[(\mathbf{w}^\top \mathbf{x})(\mathbf{w}^\top \mathbf{x}')] = \mathbb{E}\left[ \sum_{i=1}^d \sum_{j=1}^d w_i w_j x_i x'_j \right] = \sum_{i=1}^d \sum_{j=1}^d x_i x'_j \mathbb{E}[w_i w_j]
\]
Since $\mathbf{w} \sim \mathcal{N}(0, I_d)$, the components $w_i$ are independent and standard normal. If $i = j$, $\mathbb{E}[w_i w_j] = \mathbb{E}[w_i^2] = 1$; if $i \neq j$, $\mathbb{E}[w_i w_j] = \mathbb{E}[w_i] \mathbb{E}[w_j] = 0 \cdot 0 = 0$. That is, all off-diagonal elements in $I_d$ is zero. Thus, all cross-terms where $i \neq j$ vanish, and we’re left with:
\[
\sum_{i=1}^d \sum_{j=1}^d x_i x'_j \mathbb{E}[w_i w_j] = \sum_{i=1}^d x_i x'_i \cdot 1 = \mathbf{x}^\top \mathbf{x}'
\]
Therefore:
\[
\text{Cov}(\mathbf{w}^\top \mathbf{x}, \mathbf{w}^\top \mathbf{x}') = \mathbf{x}^\top \mathbf{x}'
\]
Therefore, the covariance between $\mathbf{w}^\top \mathbf{x}$ and $\mathbf{w}^\top \mathbf{x}'$ is the dot product $\mathbf{x}^\top \mathbf{x}'$, which measures the alignment between the vectors $\mathbf{x}$ and $\mathbf{x}'$.

Since $\mathbf{w}^\top \mathbf{x}$ and $\mathbf{w}^\top \mathbf{x}'$ are jointly Gaussian (as linear transformations of a Gaussian vector), we can fully specify their joint distribution using their means and covariance matrix:

\begin{itemize}
    \item means: $\mathbb{E}(\mathbf{w}^\top \mathbf{x}) = \mathbb{E}(\mathbf{w}^\top \mathbf{x}') = 0$
    \item Covariance matrix:
      \begin{itemize}
          \item variances: $\text{Var}(\mathbf{w}^\top \mathbf{x}) = \mathbb{E}[(\mathbf{w}^\top \mathbf{x})^2] = \sum_{i=1}^d x_i^2 \mathbb{E}[w_i^2] = \sum_{i=1}^d x_i^2 = \|\mathbf{x}\|^2$, $\text{Var}(\mathbf{w}^\top \mathbf{x}') = \|\mathbf{x}'\|^2$
          \item covariance: $\text{Cov}(\mathbf{w}^\top \mathbf{x}, \mathbf{w}^\top \mathbf{x}') = \mathbf{x}^\top \mathbf{x}'$
      \end{itemize}
\end{itemize}
Therefore, we have the joint binary Gaussian distribution:
\begin{equation} \label{eq:NTK_3_layers_1}
    \begin{bmatrix} \mathbf{w}^\top \mathbf{x} \\ \mathbf{w}^\top \mathbf{x}' \end{bmatrix} \sim \mathcal{N} \left( \begin{bmatrix} 0 \\ 0 \end{bmatrix}, \begin{bmatrix} \|\mathbf{x}\|^2 & \mathbf{x}^\top \mathbf{x}' \\ \mathbf{x}^\top \mathbf{x}' & \|\mathbf{x}'\|^2 \end{bmatrix} \right)
\end{equation}

\subsection{Evaluating $\mathbb{E}_{\mathbf{w}}\left[ \mathbb{I}(\mathbf{w}^\top \mathbf{x} > 0) \mathbb{I}(\mathbf{w}^\top \mathbf{x}' > 0) \right]$} 

\begin{tcolorbox}[label={result1}, colback=gray!5!white, colframe=black!75!white, 
                  title={Result 1 (intermediate, NTK of 3-layer MLP)}, breakable, enhanced jigsaw,
                  sharp corners=south, boxrule=0.8pt]

\begin{align*}
\mathbb{E}_{\mathbf{w}}\left[ \mathbb{I}(\mathbf{w}^\top \mathbf{x} > 0)\, \mathbb{I}(\mathbf{w}^\top \mathbf{x}' > 0) \right]
&= p(\mathbf{w}^\top \mathbf{x} > 0,\, \mathbf{w}^\top \mathbf{x}' > 0) \\
&= \frac{1}{4} + \frac{\arcsin(\rho)}{2\pi}
= \frac{1}{4} + \frac{\arcsin\left( \frac{\mathbf{x}^\top \mathbf{x}'}{\|\mathbf{x}\|\, \|\mathbf{x}'\|} \right)}{2\pi} \\
\text{with } \rho = \frac{\mathbf{x}^\top \mathbf{x}'}{\|\mathbf{x}\| \|\mathbf{x}'\|} = \cos\theta.
\end{align*}
\end{tcolorbox}
We prove this in the below.

For notational convenience, let's denote $z_1=\mathbf{w}^\top \mathbf{x}$ and $z_2=\mathbf{w}^\top \mathbf{x}'$, $\mu = \begin{bmatrix} 0 \\ 0 \end{bmatrix}$ and $\Sigma_{2 \times 2} = \begin{bmatrix} \|\mathbf{x}\|^2 & \mathbf{x}^\top \mathbf{x}' \\ \mathbf{x}^\top \mathbf{x}' & \|\mathbf{x}'\|^2 \end{bmatrix}$. The \textit{pdf} of the bivariate normal distribution is:
\[
p(z_1, z_2) = \frac{1}{2\pi \sqrt{|\Sigma|}} \exp\left( -\frac{1}{2}
\begin{bmatrix} z_1 & z_2 \end{bmatrix}
\Sigma^{-1}
\begin{bmatrix} z_1 \\ z_2 \end{bmatrix}
\right)
\]
The determinant
$
|\Sigma| = \|\mathbf{x}\|^2 \|\mathbf{x}'\|^2 - (\mathbf{x}^\top \mathbf{x}')^2
$
and inverse
$
\Sigma^{-1} =
\frac{1}{|\Sigma|} \begin{bmatrix}
\|\mathbf{x}'\|^2 & -\mathbf{x}^\top \mathbf{x}' \\
-\mathbf{x}^\top \mathbf{x}' & \|\mathbf{x}\|^2
\end{bmatrix}
$
Therefore, the \textit{pdf} becomes \footnote{The general form of the pdf of a bivariate Gaussian random vector  
\[
\mathbf{z} = \begin{bmatrix} z_1 \\ z_2 \end{bmatrix} \sim \mathcal{N}(\boldsymbol{\mu}, \Sigma)
\]
with mean vector $\boldsymbol{\mu} = \begin{bmatrix} \mu_1 \\ \mu_2 \end{bmatrix}$ and covariance matrix
\[
\Sigma =
\begin{bmatrix}
\sigma_1^2 & \rho \sigma_1 \sigma_2 \\
\rho \sigma_1 \sigma_2 & \sigma_2^2
\end{bmatrix}
\]
is given by:
\[
p(z_1, z_2) = \frac{1}{2\pi \sigma_1 \sigma_2 \sqrt{1 - \rho^2}} 
\exp\left( -\frac{1}{2(1 - \rho^2)} \left[
\left(\frac{z_1 - \mu_1}{\sigma_1}\right)^2 - 2\rho \left(\frac{z_1 - \mu_1}{\sigma_1}\right)\left(\frac{z_2 - \mu_2}{\sigma_2}\right)
+ \left(\frac{z_2 - \mu_2}{\sigma_2}\right)^2
\right] \right)
\]
in which $\mu_1, \mu_2$ are the means of $z_1$ and $z_2$, $\sigma_1^2, \sigma_2^2$ are the variances of $z_1$, $z_2$, $\rho \in [-1, 1]$ is the correlation coefficient between $z_1$ and $z_2$.
Special cases: if $\rho = 0$, the variables are uncorrelated, and the pdf factorizes, i.e. $p(z_1, z_2) = p(z_1) \cdot p(z_2)$. If $\rho = \pm 1$, the distribution is degenerate (the variables are perfectly linearly dependent).}:
\begin{equation} \label{eq:NTK_3_layers_2}
    p(z_1, z_2) =
    \frac{1}{2\pi \sqrt{ \|\mathbf{x}\|^2 \|\mathbf{x}'\|^2 - (\mathbf{x}^\top \mathbf{x}')^2 }}
    \exp\left( -\frac{1}{2}
    \frac{
    \|\mathbf{x}'\|^2 z_1^2 - 2 (\mathbf{x}^\top \mathbf{x}') z_1 z_2 + \|\mathbf{x}\|^2 z_2^2
    }{
    \|\mathbf{x}\|^2 \|\mathbf{x}'\|^2 - (\mathbf{x}^\top \mathbf{x}')^2
    }
    \right)
\end{equation}

Now we aim to evaluate $p(\mathbf{w}^\top \mathbf{x} > 0, \mathbf{w}^\top \mathbf{x}' > 0) > 0$, given two general bivariate Gaussian variables $z_1=\mathbf{w}^\top \mathbf{x}$ and $z_2=\mathbf{w}^\top \mathbf{x}'$. This can be done by evaluating a standard bivariate Gaussian. We define the standard normal variables: $\tilde{z}_1 = \frac{z_1}{\|\mathbf{x}\|}, \quad \tilde{z}_2 = \frac{z_2}{\|\mathbf{x}'\|}$, where $\tilde{z}_1$ and $\tilde{z}_2$ is standard normal (mean zero, unit variance). This helps transform the joint distribution into a standard bivariate normal:
\begin{equation} \label{eq:standard_bivariate_normal}
    \begin{bmatrix}
    \tilde{z}_1 \\
    \tilde{z}_2
    \end{bmatrix}
    \sim \mathcal{N}\left( \mathbf{0},
    \begin{bmatrix}
    1 & \rho \\
    \rho & 1
    \end{bmatrix}
    \right),
    \quad \text{with } \rho = \text{Corr}(\tilde{z}_1, \tilde{z}_2) = \frac{\text{Cov}(\tilde{z}_1, \tilde{z}_2)}{\sqrt{\text{Var}(\tilde{z}_1)\text{Var}(\tilde{z}_2)}} = \frac{\mathbf{x}^\top \mathbf{x}'}{\|\mathbf{x}\| \|\mathbf{x}'\|} = \cos\theta
\end{equation}
So the joint probability becomes:
\begin{equation} \label{eq:standard_bivariate_Gaussian_quadrant_integral1}
    p(\mathbf{w}^\top \mathbf{x} > 0, \mathbf{w}^\top \mathbf{x}' > 0) = p(z_1>0,z_2>0) = p(\tilde{z}_1 > 0, \tilde{z}_2 > 0)
\end{equation}
The geometric intuition for $p(\tilde{z}_1 > 0, \tilde{z}_2 > 0)$ being, the event $\tilde{z}_1 > 0, \tilde{z}_2 > 0$ means the vector $(\tilde{z}_1, \tilde{z}_2)$ lies in the positive quadrant of $\mathbb{R}^2$. Without correlation ($\rho = 0$), each quadrant has equal probability of $\frac{1}{4}$. A positive correlation ($\rho > 0$) increases the probability of joint positivity, and a negative correlation ($\rho < 0$) reduces this joint probability below $\frac{1}{4}$.

For two correlated standard normal variables $\tilde{z}_1$ and $\tilde{z}_2$, the bivariate normal density is:
\[
f(\tilde{z}_1, \tilde{z}_2) = \frac{1}{2\pi\sqrt{1-\rho^2}}\exp\left(-\frac{\tilde{z}_1^2 - 2\rho \tilde{z}_1 \tilde{z}_2 + \tilde{z}_2^2}{2(1-\rho^2)}\right).
\]
The integral for the probability $p(\tilde{z}_1>0, \tilde{z}_2>0)$ is:
\[
p(\tilde{z}_1>0, \tilde{z}_2>0) = \int_0^\infty \int_0^\infty f(\tilde{z}_1,\tilde{z}_2)\, d\tilde{z}_1 d\tilde{z}_2
\]
Evaluating this integral exactly gives \footnote{We aim to evaluate the a quadrant-restricted double integral of a quadratic exponential form $\int_0^\infty \int_0^\infty \exp(ax^2 + bxy + cy^2) dxdy$. Specifically, we have the following integral \cite{Structural2017}:
\[
L(h, k, \rho) \equiv \frac{1}{2\pi}(1 - \rho^2)^{-1/2} \int_h^\infty \int_k^\infty 
\exp\left[ \frac{-\frac{1}{2}(x^2 + y^2 - 2\rho xy)}{1 - \rho^2} \right] dxdy
\], which can be reduced to \cite{Sheppard1900Quadrature}:
\[
L(h, k, \rho) = \frac{1}{2\pi} \int_{\arccos \rho}^{\pi} 
\exp\left[ -\frac{\frac{1}{2}(h^2 + k^2 - 2hk \cos \theta)}{\sin^2 \theta} \right] d\theta, \quad h, k \geq 0
\]
where $\rho=cos \theta$. Taking $h=k=0$, we have $L(0, 0, \rho)=\frac{1}{2} - \frac{arccos(\rho)}{2 \pi}$. Using the relation between inverse trigonometric functions $arccos (\rho) = \frac{\pi}{2} - arcsin (\rho)$, we arrive at: $L(0, 0, \rho)=\frac{1}{4}+\frac{1}{2\pi} arcsin (\rho)$.
} the closed-form solution:
\begin{equation} \label{eq:standard_bivariate_Gaussian_quadrant_integral2}
    \int_0^\infty \int_0^\infty f(\tilde{z}_1,\tilde{z}_2)\, d\tilde{z}_1 d\tilde{z}_2 = \frac{1}{4} + \frac{\arcsin(\rho)}{2\pi}
\end{equation}
For example, $\rho = 0$ (independent $\tilde{z}_1,\tilde{z}_2$):  
\[
p(\tilde{z}_1>0,\tilde{z}_2>0) = \frac{1}{4} + \frac{\arcsin(0)}{2\pi} = \frac{1}{4}
\]
$\rho = 1$ (perfect positive correlation between $\tilde{z}_1,\tilde{z}_2$):  
\[
p(\tilde{z}_1>0,\tilde{z}_2>0) = \frac{1}{4} + \frac{\arcsin(1)}{2\pi} = \frac{1}{4} + \frac{\pi/2}{2\pi} = \frac{1}{4} + \frac{1}{4} = \frac{1}{2}
\]
$\rho = -1$ (perfect negative correlation between $\tilde{z}_1,\tilde{z}_2$):  
\[
p(\tilde{z}_1>0,\tilde{z}_2>0) = \frac{1}{4} + \frac{\arcsin(-1)}{2\pi} = \frac{1}{4} - \frac{\pi/2}{2\pi} = \frac{1}{4} - \frac{1}{4} = 0
\]

Now we go back to the NTK Eq.\ref{eq:NTK_3_layer}:
\[
    k^{NTK}(\mathbf{x}, \mathbf{x}') = m \cdot \left( \mathbb{E}_{\mathbf{w}}\left[ \sigma(\mathbf{w}^\top \mathbf{x}) \sigma(\mathbf{w}^\top \mathbf{x}') \right] + \mathbb{E}_{\mathbf{w}}\left[ \mathbb{I}(\mathbf{w}^\top \mathbf{x} > 0) \mathbb{I}(\mathbf{w}^\top \mathbf{x}' > 0) \right] \cdot \mathbf{x}^\top \mathbf{x}' \right)
\]
Expectation of the indicator terms reduces to:
\[ \mathbb{E}_{\mathbf{w}}\left[ \mathbb{I}(\mathbf{w}^\top \mathbf{x} > 0) \mathbb{I}(\mathbf{w}^\top \mathbf{x}' > 0) \right] = p(\mathbf{w}^\top \mathbf{x} > 0, \mathbf{w}^\top \mathbf{x}' > 0) \]
For two correlated Gaussians $\mathbf{w}^\top \mathbf{x}$ and $\mathbf{w}^\top \mathbf{x}'$, as in Eq.\ref{eq:NTK_3_layers_1}, given its \textit{pdf} Eq.\ref{eq:NTK_3_layers_2}, we have already known the integral from Eq.\ref{eq:standard_bivariate_Gaussian_quadrant_integral1} and Eq.\ref{eq:standard_bivariate_Gaussian_quadrant_integral2} that:
\begin{equation} \label{eq:standard_bivariate_Gaussian_quadrant_integral3}
    p(\mathbf{w}^\top \mathbf{x} > 0, \mathbf{w}^\top \mathbf{x}' > 0) = \frac{1}{4} + \frac{\arcsin(\rho)}{2\pi}
    = \frac{1}{4} + \frac{\arcsin(\frac{\mathbf{x}^\top \mathbf{x}'}{\|\mathbf{x}\| \|\mathbf{x}'\|})}{2\pi} 
\end{equation}

Next, we proceed to calculate the first term $\mathbb{E}_{\mathbf{w}}\left[ \sigma(\mathbf{w}^\top \mathbf{x}) \sigma(\mathbf{w}^\top \mathbf{x}') \right]$.

\subsection{Evaluating $\mathbb{E}_{\mathbf{w}}\left[ \sigma(\mathbf{w}^\top \mathbf{x}) \sigma(\mathbf{w}^\top \mathbf{x}') \right]$}

\begin{tcolorbox}[label={result2}, colback=gray!5!white, colframe=black!75!white, 
                  title={Result 2 (intermediate, NTK of 3-layer MLP)}, breakable, enhanced jigsaw,
                  sharp corners=south, boxrule=0.8pt]
\begin{equation*}
\mathbb{E}[\sigma(\mathbf{w}^\top \mathbf{x}) \sigma(\mathbf{w}^\top \mathbf{x}')] = \frac{\|\mathbf{x}\| \|\mathbf{x}'\|}{2\pi} \left( \sqrt{1 - \rho^2} + \rho \arccos(-\rho) \right)
\end{equation*}
\end{tcolorbox}
We prove this in the below.

We aim to compute the expectation $\mathbb{E}_{\mathbf{w}}\left[ \sigma(\mathbf{w}^\top \mathbf{x}) \sigma(\mathbf{w}^\top \mathbf{x}') \right]$ for ReLU activation, where $\sigma(z) = \max(0, z)$, and $\mathbf{w}^\top \mathbf{x}$ and $\mathbf{w}^\top \mathbf{x}'$ are correlated Gaussian random variables.$\mathbf{w} \sim \mathcal{N}(0, I_d)$ is a $d$-dimensional Gaussian random vector with mean 0 and identity covariance matrix $I_d$, meaning each component $w_i \sim \mathcal{N}(0, 1)$ and the components are independent. $\mathbf{x}, \mathbf{x}' \in \mathbb{R}^d$ are fixed (non-random) vectors. $\mathbf{w}^\top \mathbf{x}$ and $\mathbf{w}^\top \mathbf{x}'$ are scalar random variables, representing dot products. $\sigma(z) = \max(0, z)$ is the ReLU activation function, which outputs $z$ if $z > 0$ and 0 otherwise.

With $z_1 = \mathbf{w}^\top \mathbf{x}$ and $z_2 = \mathbf{w}^\top \mathbf{x}'$, $\tilde{z}_1 = \frac{z_1}{\|\mathbf{x}\|} = \frac{\mathbf{w}^\top \mathbf{x}}{\|\mathbf{x}\|}$ and $\tilde{z}_2 = \frac{z_2}{\|\mathbf{x}'\|} = \frac{\mathbf{w}^\top \mathbf{x}'}{\|\mathbf{x}'\|}$, we express the expectation in terms of $\tilde{z}_1$ and $\tilde{z}_2$:
\begin{align*}
    &\sigma(z_1) = \max(0, z_1) = \max(0, \|\mathbf{x}\| \tilde{z}_1) = \|\mathbf{x}\| \max(0, \tilde{z}_1) = \|\mathbf{x}\| \sigma(\tilde{z}_1) \\
    &\sigma(z_2) = \|\mathbf{x}'\| \sigma(\tilde{z}_2)
\end{align*}
Therefore:
\begin{equation} \label{eq:unnormalised_expectation}
    \mathbb{E}[\sigma(z_1) \sigma(z_2)] = \mathbb{E}[(\|\mathbf{x}\| \sigma(\tilde{z}_1)) (\|\mathbf{x}'\| \sigma(\tilde{z}_2))] = \|\mathbf{x}\| \|\mathbf{x}'\| \mathbb{E}[\sigma(\tilde{z}_1) \sigma(\tilde{z}_2)]
\end{equation}
Our task reduces to computing $\mathbb{E}[\sigma(\tilde{z}_1) \sigma(\tilde{z}_2)]$, where $\tilde{z}_1$ and $\tilde{z}_2$ are standard Gaussians with correlation $\rho$.

Define $U = \tilde{z}_1$, $V = \tilde{z}_2$, where $[U, V] \sim \mathcal{N}\left( \mathbf{0}, \begin{bmatrix} 1 & \rho \\ \rho & 1 \end{bmatrix} \right)$, with $\rho=Corr(U,V)= \frac{Cov(U,V)}{\|U\| \|V\|}$.
We want to calculate:
\begin{equation} \label{eq:RELU_expectation_double_integral1}
  \mathbb{E}[\sigma(U) \sigma(V)] = \mathbb{E}[\max(0, U) \max(0, V)]
\end{equation}
This expectation is non-zero only when both $U > 0$ and $V > 0$, so we compute the integral over the first quadrant:
\begin{equation}  \label{eq:RELU_expectation_double_integral2}
    \mathbb{E}[\sigma(U) \sigma(V)] = \int_{0}^\infty \int_{0}^\infty u v \cdot p(u, v) \, du \, dv
\end{equation}
where $p(u, v)$ is the probability density function of the bivariate Gaussian:
\begin{equation} \label{eq:pdf_standard_bivariate_Gaussian}
p(u, v) = \frac{1}{2\pi \sqrt{1 - \rho^2}} \exp\left( -\frac{u^2 - 2\rho u v + v^2}{2(1 - \rho^2)} \right)
\end{equation}
In order to evaluate the double integral in Eq.\ref{eq:RELU_expectation_double_integral2}, we switch to polar coordinates. Let $u = r \cos \alpha$, $v = r \sin \alpha$, where $r \geq 0$, $\alpha \in [0, \pi/2]$ (first quadrant). The Jacobian of the transformation is $r$, so $du dv = r dr d\alpha$. The limits are: $r$ from 0 to $\infty$, $\alpha$ from 0 to $\pi/2$.          

Rewrite the exponent: $u^2 + v^2 = r^2 \cos^2 \alpha + r^2 \sin^2 \alpha = r^2$, $uv = (r \cos \alpha)(r \sin \alpha) = r^2 \cos \alpha \sin \alpha = \frac{r^2}{2} \sin 2\alpha$, $u^2 - 2\rho u v + v^2 = r^2 \cos^2 \alpha - 2\rho (r^2 \cos \alpha \sin \alpha) + r^2 \sin^2 \alpha = r^2 (1 - 2\rho \cos \alpha \sin \alpha) = r^2 (1 - \rho \sin 2\alpha)$.
Then the density Eq.\ref{eq:pdf_standard_bivariate_Gaussian} becomes:
\[
p(r \cos \alpha, r \sin \alpha) = \frac{1}{2\pi \sqrt{1 - \rho^2}} \exp\left( -\frac{r^2 (1 - \rho \sin 2\alpha)}{2(1 - \rho^2)} \right)
\]
The expectation Eq.\ref{eq:RELU_expectation_double_integral2} transforms to:
\[
\mathbb{E}[\sigma(U) \sigma(V)] = \int_{0}^{\pi/2} \int_{0}^\infty \left( \frac{r^2}{2} \sin 2\alpha \right) \cdot \frac{1}{2\pi \sqrt{1 - \rho^2}} \exp\left( -\frac{r^2 (1 - \rho \sin 2\alpha)}{2(1 - \rho^2)} \right) \cdot r \, dr \, d\alpha
\]
We separate the integrals over $r$ and $\alpha$:
\begin{equation} \label{eq:RELU_expectation_double_integral3}
  \mathbb{E}[\sigma(U) \sigma(V)] = \frac{1}{2\pi \sqrt{1 - \rho^2}} \int_{0}^{\pi/2} \frac{\sin 2\alpha}{2} d\alpha \cdot 
  \int_{0}^\infty r^3 \cdot  \exp\left( -\frac{r^2 (1 - \rho \sin 2\alpha)}{2(1 - \rho^2)} \right) dr
\end{equation}

\paragraph{Radial integral}
Let $a = \frac{1 - \rho \sin 2\alpha}{2(1 - \rho^2)}$, so the exponent is $-r^2 a$, and the term involves $r^3$:
  \[
  \int_{0}^\infty r^3 \exp(-a r^2) \, dr
  \]
Use the substitution $s = r^2$, $r = \sqrt{s}$, $dr = \frac{1}{2} s^{-1/2} ds$, so $r^3 = s^{3/2}$:
  \[
  \int_{0}^\infty r^3 \exp(-a r^2) \, dr = \int_{0}^\infty s^{3/2} \exp(-a s) \cdot \frac{1}{2} s^{-1/2} ds = \frac{1}{2} \int_{0}^\infty s \exp(-a s) \, ds
  \]
which is a \textit{Gamma} integral:
  \[
  \int_{0}^\infty s \exp(-a s) ds = \frac{1}{a^2}
  \]
So:
  \[
  \int_{0}^\infty r^3 \exp(-a r^2) \, dr = \frac{1}{2 a^2}, \quad \text{ with } a = \frac{1 - \rho \sin 2\alpha}{2(1 - \rho^2)}
  \]

\paragraph{Angular integral} 
The expectation Eq.\ref{eq:RELU_expectation_double_integral3} now becomes:
\begin{equation}
\begin{aligned}
    \mathbb{E}[\sigma(U) \sigma(V)] &= \frac{1}{2\pi \sqrt{1 - \rho^2}} \int_{0}^{\pi/2} \frac{\sin 2\alpha}{2} d\alpha \cdot 
    \int_{0}^\infty r^3 \cdot  \exp\left( -\frac{r^2 (1 - \rho \sin 2\alpha)}{2(1 - \rho^2)} \right) dr \\
    &= \frac{1}{2\pi \sqrt{1 - \rho^2}} \int_{0}^{\pi/2} \frac{\sin 2\alpha}{2} \cdot \left( \frac{1}{2 a^2} \right) \, d\alpha
\end{aligned}
\end{equation}
Substitute $a$:
\[
a = \frac{1 - \rho \sin 2\alpha}{2(1 - \rho^2)}, \quad a^2 = \left( \frac{1 - \rho \sin 2\alpha}{2(1 - \rho^2)} \right)^2
,
\frac{1}{a^2} = \frac{4 (1 - \rho^2)^2}{(1 - \rho \sin 2\alpha)^2}
\]
So the integral is:
\begin{align*}
\mathbb{E}[\sigma(U) \sigma(V)] 
&= \frac{1}{2\pi \sqrt{1 - \rho^2}} \cdot \frac{1}{2} \cdot 4 (1 - \rho^2)^2 \int_{0}^{\pi/2} \frac{\sin 2\alpha}{(1 - \rho \sin 2\alpha)^2} \, d\alpha \cdot \frac{1}{2} \\
&= \frac{1}{2\pi \sqrt{1 - \rho^2}} \cdot (1 - \rho^2)^2 \int_{0}^{\pi/2} \frac{\sin 2\alpha}{(1 - \rho \sin 2\alpha)^2} \, d\alpha
\end{align*}

For the trigonometric integral, we directly present the following result:
\[
\int_0^{\pi/2} \frac{\sin 2\alpha}{(1 - \rho \sin 2\alpha)^2} d\alpha = \frac{2\sqrt{1 - \rho^2} + \rho (\pi + 2 \arcsin \rho)}{2(1 - \rho^2)^{3/2}}
, \quad |\rho| < 1
\],
the derivation of which is presented in Appendix.\ref{app:inverse_trigonometric_integral}. Using this, we arrive at:
\[
\mathbb{E}[\sigma(U) \sigma(V)] = \frac{1}{2\pi} \left( \sqrt{1 - \rho^2} + \rho (\frac{\pi}{2} + \arcsin \rho) \right)
\]
Or equivalently \footnote{Using the complementary angle identity $\arcsin x + \arccos x = \frac{\pi}{2}$, we can write $\arccos(-\rho) = \frac{\pi}{2} - \arcsin(-\rho)$. As $\arcsin(-x) = -\arcsin(x)$, we have $\arccos(-\rho) = \frac{\pi}{2} - (-\arcsin \rho) = \frac{\pi}{2} + \arcsin \rho$.}:
\[
\mathbb{E}[\sigma(U) \sigma(V)] = \frac{1}{2\pi} \left( \sqrt{1 - \rho^2} + \rho \arccos(-\rho) \right)
\]
which is $\mathbb{E}[\sigma(\tilde{z}_1) \sigma(\tilde{z}_2)]$. To obtain $\mathbb{E}[\sigma(z_1) \sigma(z_2)]$, we multiply the above by the scaling factors, as per Eq.\ref{eq:unnormalised_expectation}:
\[
\mathbb{E}[\sigma(z_1) \sigma(z_2)] = \|\mathbf{x}\| \|\mathbf{x}'\| \cdot \mathbb{E}[\sigma(\tilde{z}_1) \sigma(\tilde{z}_2)] = \frac{\|\mathbf{x}\| \|\mathbf{x}'\|}{2\pi} \left( \sqrt{1 - \rho^2} + \rho \arccos(-\rho) \right)
\]
which is \textit{Result 2}.

\paragraph{Final NTK covariance function for the 3-layer network}

Combining .\ref{result1} and .\ref{result2}, the NTK in Eq.\ref{eq:NTK_3_layer} becomes:
\begin{equation}  \label{eq:NN_3layer_NTK}
    k^{NTK}(\mathbf{x}, \mathbf{x}') = m \cdot \left\{ \frac{\|\mathbf{x}\| \|\mathbf{x}'\|}{2\pi} \left( \sqrt{1 - \rho^2} + \rho \arccos(-\rho) \right) + \left( \frac{1}{4} + \frac{\arcsin(\rho)}{2\pi} \right) \mathbf{x}^\top \mathbf{x}' \right\}
\end{equation}
where $\rho = \frac{\mathbf{x}^\top \mathbf{x}'}{\|\mathbf{x}\| \|\mathbf{x}'\|}$.

This symmetric form captures how the network behaves like a kernel method in the infinite-width limit, with the ReLU activation introducing non-linearity into the kernel.

\paragraph{Intuition of NTK} In essence, the NTK’s inner product of weight sensitivity vectors, i.e. $k(\mathbf{x},\mathbf{x}')=\langle\nabla_w f(\mathbf{x};w),\nabla_w f(\mathbf{x}';w)\rangle$, builds a data‑dependent kernel that quantifies how similarly two inputs would respond to an infinitesimal weight update. As gradients capture the local 'geometry' of the model’s function space independently of any labels, $k(\mathbf{x},\mathbf{x}')$ encodes a data‐dependent, label‐agnostic notion of similarity: examples with aligned sensitivities move the network output in concert under gradient descent. During training, this pre‐existing geometric structure governs how errors at one point influence predictions elsewhere, thus determining both the speed and the pattern of learning even before any ground‐truth labels are observed.

\section{Derivation of $\int_0^{\pi/2} \frac{\sin 2\alpha}{(1 - \rho \sin 2\alpha)^2} d\alpha$} \label{app:inverse_trigonometric_integral}

\subsection{Prerequisite 1: integral of reciprocal quadratic function}
We derive the following integral which involves quadratic expressions in the denominator:

\begin{tcolorbox}[colback=gray!5!white, colframe=black!75!white, 
                  title={Result 3 (general)}, breakable, enhanced jigsaw,
                  sharp corners=south, boxrule=0.7pt, width=\textwidth]
\small 
\begin{equation*}
\int \frac{1}{A t^2 + B t + C} dt =
\frac{2}{\sqrt{4AC - B^2}} \arctan\left( \frac{2A t + B}{\sqrt{4AC - B^2}} \right) + C,
\quad \text{if } 4AC - B^2 > 0
\end{equation*}
\end{tcolorbox}

\begin{proof}
We want to evaluate the integral:
$$\int \frac{1}{At^2 + Bt + C} dt, \quad \text{where } 4AC - B^2 > 0$$

First, we complete the square in the denominator. We factor out $A$ from the first two terms:
$$A\left(t^2 + \frac{B}{A}t\right) + C$$
Then complete the square for the expression inside the parentheses:
$$A\left(\left(t + \frac{B}{2A}\right)^2 - \frac{B^2}{4A^2}\right) + C$$

The integral becomes:
$$\int \frac{1}{A\left(t + \frac{B}{2A}\right)^2 + \frac{4AC - B^2}{4A}} dt = \frac{1}{A} \int \frac{1}{\left(t + \frac{B}{2A}\right)^2 + \frac{4AC - B^2}{4A^2}} dt$$

Let $u = t + \frac{B}{2A}$, so $du = dt$. Let $a^2 = \frac{4AC - B^2}{4A^2}$, so $a = \frac{\sqrt{4AC - B^2}}{2|A|}$. Assuming $A > 0$, $a = \frac{\sqrt{4AC - B^2}}{2A}$. The integral becomes:
$$\frac{1}{A} \int \frac{1}{u^2 + a^2} du$$

We know the standard integral:
$$
\int \frac{1}{x^2 + a^2} dx = \frac{1}{a} \arctan\left(\frac{x}{a}\right) + C
$$
which gives:
$$\frac{1}{A} \int \frac{1}{u^2 + a^2} du = \frac{1}{A} \cdot \frac{1}{a} \arctan\left(\frac{u}{a}\right) + C$$

Substitute back $u$ and $a$:
\begin{align*}
\frac{1}{A} \cdot \frac{2A}{\sqrt{4AC - B^2}} \arctan\left(\frac{t + \frac{B}{2A}}{\frac{\sqrt{4AC - B^2}}{2A}}\right) + C
&=\frac{2}{\sqrt{4AC - B^2}} \arctan\left(\frac{\frac{2At + B}{2A}}{\frac{\sqrt{4AC - B^2}}{2A}}\right) + C \\
&=\frac{2}{\sqrt{4AC - B^2}} \arctan\left(\frac{2At + B}{\sqrt{4AC - B^2}}\right) + C
\end{align*}    
\end{proof}

\subsection{Prerequisite 2: integral of reciprocal trigonometric function}
We derive the following result:

\begin{tcolorbox}[colback=gray!5!white, colframe=black!75!white, 
                  title={Result 4 (general)}, breakable, enhanced jigsaw,
                  sharp corners=south, boxrule=0.8pt]

\begin{equation*}
\int_0^{\pi} \frac{1}{1 - \rho \sin \theta} d\theta = \frac{\pi + 2 \arcsin \rho}{\sqrt{1 - \rho^2}} \quad |\rho| < 1
\end{equation*}
\end{tcolorbox}

\begin{proof}
We use the substitution: $t = \tan \frac{x}{2}$, then $x=2$ arctan $t$, and 
\[
dx = \frac{2}{1 + t^2} dt
\]
Limits of integration:
\begin{align*}
x = 0 &\Rightarrow t = 0 \\
x = \pi &\Rightarrow t = \tan\left( \frac{\pi}{2} \right) \to \infty
\end{align*}

We know the trigonometric identity:
\[
\sin x = \frac{2 \tan \frac{x}{2}}{1 + \tan^2 \frac{x}{2}}
= \frac{2t}{1 + t^2}
\]

Substitute $\sin x = \frac{2t}{1 + t^2}$ and $dx = \frac{2}{1 + t^2} dt$:
\begin{align*}
\int_0^{\pi} \frac{1}{1 - \rho \sin x} \, dx
&= \int_0^{\infty} \frac{1}{1 - \rho \cdot \frac{2t}{1 + t^2}} \cdot \frac{2}{1 + t^2} dt \\
&= \int_0^{\infty} \frac{2}{1 + t^2 - 2\rho t} \, dt \\
&= 2 \int_0^{\infty} \frac{1}{t^2 - 2\rho t + 1} \, dt
\end{align*}

Completing the square:
$
t^2 - 2\rho t + 1 = (t - \rho)^2 + 1 - \rho^2
$, let $u = t - \rho$, so $du = dt$. We have:
\begin{align*}
2 \int_0^{\infty} \frac{1}{(t - \rho)^2 + 1 - \rho^2} dt
&= 2 \int_{-\rho}^{\infty} \frac{1}{u^2 + (1 - \rho^2)} \, du \\
&= \frac{2}{1 - \rho^2} \int_{-\rho}^{\infty} \frac{1}{\left( \frac{u}{\sqrt{1 - \rho^2}} \right)^2 + 1} \, du \\
&= \frac{2}{\sqrt{1 - \rho^2}} \left[ \arctan\left( \frac{u}{\sqrt{1 - \rho^2}} \right) \right]_{-\rho}^{\infty}
\end{align*}

\[
= \frac{2}{\sqrt{1 - \rho^2}} \left( \frac{\pi}{2} - \arctan\left( \frac{-\rho}{\sqrt{1 - \rho^2}} \right) \right)
= \frac{2}{\sqrt{1 - \rho^2}} \left( \frac{\pi}{2} + \arctan\left( \frac{\rho}{\sqrt{1 - \rho^2}} \right) \right)
\]

Using $\arcsin \rho = \arctan \left( \frac{\rho}{\sqrt{1 - \rho^2}} \right)$, the result becomes:

\begin{equation*}
\int_0^{\pi} \frac{1}{1 - \rho \sin x} \, dx
= \frac{2}{\sqrt{1 - \rho^2}} \left( \frac{\pi}{2} + \arcsin \rho \right) 
= \frac{\pi + 2 \arcsin \rho}{\sqrt{1 - \rho^2}}
\end{equation*}
\end{proof}

\subsection{Final: $\int_0^{\pi/2} \frac{\sin 2\alpha}{(1 - \rho \sin 2\alpha)^2} d\alpha$}
The method of \textit{differentiation under the integral sign} hints: 
\[
\frac{d}{d\rho} \int_0^{\pi} \frac{1}{1 - \rho \sin \theta} d\theta = \int_0^{\pi} \frac{\sin \theta}{(1 - \rho \sin \theta)^2} d\theta
\]

We have already in \textit{Result 4} the integral on the LHS $\int_0^{\pi} \frac{1}{1 - \rho \sin \theta} d\theta = \frac{\pi + 2 \arcsin \rho}{\sqrt{1 - \rho^2}}$ $(|\rho| < 1)$, therefore, we only need to differentiate it \textit{w.r.t.} $\rho$:
\[
\int_0^{\pi} \frac{\sin \theta}{(1 - \rho \sin \theta)^2} d\theta
=
\frac{d}{d\rho} \left( \frac{\pi + 2 \arcsin \rho}{\sqrt{1 - \rho^2}} \right) 
= \frac{2\sqrt{1 - \rho^2} + \rho (\pi + 2 \arcsin \rho)}{(1 - \rho^2)^{3/2}}, 
\quad |\rho| < 1
\]

To evaluate the target integral $\int_0^{\pi/2} \frac{\sin 2\alpha}{(1 - \rho \sin 2\alpha)^2} d\alpha$, we make the change of variable:
$$
\theta = 2\alpha \Rightarrow d\theta = 2d\alpha,\quad \alpha \in [0, \pi/2] \Rightarrow \theta \in [0, \pi]
$$
We have therefore:
\[
\int_0^{\pi/2} \frac{\sin 2\alpha}{(1 - \rho \sin 2\alpha)^2} d\alpha = \frac{1}{2} \int_0^\pi \frac{\sin \theta}{(1 - \rho \sin \theta)^2} d\theta
= \frac{2\sqrt{1 - \rho^2} + \rho (\pi + 2 \arcsin \rho)}{2(1 - \rho^2)^{3/2}}
, \quad |\rho| < 1
\]

\section{Infinite‑width neural network as NTK regression} \label{app:NN_equivalent_to_NTK}

We use the following notations:
training data $\{(x_i, y_i)\}_{i=1}^n \subset \mathbb{R}^d \times \mathbb{R}$, $K \in \mathbb{R}^{n \times n}$ is the covariance matrix produced by the NTK $\Theta^{(L)}(x_i, x_j)$ evaluated on the training data, i.e. $K_{i, j} = \Theta^{(L)}(x_i, x_j)$. For a testing point $x \in \mathbb{R}^d$, we let $k(x, X) \in \mathbb{R}^n$ be the kernel evaluated between the testing point and the $n$ training points, i.e. $[k(x, X)]_i = \Theta^{(L)}(x, x_i)$. The neural network prediction over the training set at time $t$ is denoted by $\mathbf{u}(t)\;=\;\bigl[f(x_1;\theta(t)),\dots,f(x_n;\theta(t))\bigr]^\top$.

In the limit where every hidden layer’s width tends to infinity (with weights initialized \textit{i.i.d.} and properly scaled), or when the network is sufficiently wide, the network’s neural tangent kernel (NTK) converges to a deterministic, positive–definite kernel $K_\infty$ and remains fixed throughout training.  Under squared‐loss gradient flow, the network’s function $f_t$ then evolves exactly according to the kernel‐gradient dynamics in the reproducing kernel Hilbert space (RKHS) induced by $K_\infty$.  As a result, both the trajectory $f_t$ and its limit $f_\infty$ coincide with those of kernel regression using $K_\infty$. These facts are stated as below.

\begin{lemma}[NTK regression dynamics under gradient flow; Jacot \emph{et al.} \cite{Jacot2020NTK} 2018, \& Arora \emph{et al.} \cite{Arora2019NTK} 2019] \label{lem:NTK_kernel_flow}
Let $\{(x_i,y_i)\}_{i=1}^n\subset\mathbb{R}^d\times\mathbb{R}$ be training data, and consider a fully‐connected neural network $f(x;\theta)$ of width $m$ per hidden layer with weights initialized so that $\theta\sim\mathcal{N}(0,I)$, trained by infinitesimal‐step gradient descent on the squared loss
\[
L(\theta)
=\frac12\sum_{i=1}^n\bigl(f(x_i;\theta)-y_i\bigr)^2.
\]
Define the network output vector at time $t$:
\[
\mathbf{u}(t)\;=\;\bigl[f(x_1;\theta(t)),\dots,f(x_n;\theta(t))\bigr]^\top
\]
and the empirical NTK matrix at time $t$:
\[
K_m(t)\;=\;\bigl[\langle\nabla_\theta f(x_i;\theta(t)),\,\nabla_\theta f(x_j;\theta(t))\rangle\big]_{i,j=1}^n
\]
with initializing limit \footnote{While the NTK is random at initialization and varies during training, in the infinite-width limit it converges to an explicit limiting kernel and it stays constant during training \cite{Jacot2020NTK}.} $K_m(0)\xrightarrow[m\to\infty]{}K_\infty$. \\
Then under gradient flow, the network function $\mathbf{u}(t)$ follows a linear differential equation in the infinite-width limit:
\[
\frac{d\mathbf{u}(t)}{dt}
\;=\;
-\,K_m(t)\,\bigl(\mathbf{u}(t)-\mathbf{y}\bigr)
\]
where $\mathbf{y}=[y_1,\dots,y_n]^\top$.  Moreover, if $K_m(t)\equiv K_\infty$ remains constant (the so‐called 'NTK‐freeze' regime), these dynamics coincide exactly with kernel‐gradient flow in the RKHS of $K_\infty$.
\end{lemma}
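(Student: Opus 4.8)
\begin{proofsketch}
The plan is to split the argument into an \emph{exact finite-width computation}, an \emph{infinite-width concentration step}, and a final \emph{identification with kernel-gradient flow}. Throughout I will assume the NTK facts already recorded in the excerpt (positive-definiteness of the empirical NTK, and its convergence to $\Theta^{(L)}\to\Theta_\infty^{(L)}$ in the wide limit).

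First I would write gradient flow on the parameters, $\dot\theta(t) = -\nabla_\theta L(\theta(t))$, and expand by the chain rule: since $L(\theta) = \tfrac12\sum_i (f(x_i;\theta)-y_i)^2$, we have $\nabla_\theta L(\theta) = \sum_{j=1}^n (f(x_j;\theta)-y_j)\,\nabla_\theta f(x_j;\theta)$. Differentiating the output coordinate $u_i(t) = f(x_i;\theta(t))$ and applying the chain rule once more gives
\begin{align*}
\dot u_i(t) &= \bigl\langle \nabla_\theta f(x_i;\theta(t)),\, \dot\theta(t)\bigr\rangle \\
&= -\sum_{j=1}^n \bigl\langle \nabla_\theta f(x_i;\theta(t)),\, \nabla_\theta f(x_j;\theta(t))\bigr\rangle\,(u_j(t)-y_j),
\end{align*}
which is precisely $\dot{\mathbf u}(t) = -K_m(t)\,(\mathbf u(t)-\mathbf y)$ with $K_m(t)$ the empirical NTK matrix. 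This identity is exact for every finite width; the width has played no role so far.

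Second, I would pass to the infinite-width limit to replace $K_m(t)$ by the deterministic kernel $K_\infty$. This needs two ingredients: (i) at initialization $K_m(0)\to K_\infty$ in probability as $m\to\infty$, a law-of-large-numbers statement over the \textit{i.i.d.}\ weights (cf.\ Prop.~1 of \cite{Jacot2020NTK}); and (ii) the kernel stays essentially frozen along the trajectory, $\sup_{t\ge 0}\|K_m(t)-K_m(0)\|\to 0$. For (ii) I would use that $K_m$ is locally Lipschitz in $\theta$, bound the parameter displacement $\|\theta(t)-\theta(0)\|$ by a Gr\"onwall argument that exploits the positive-definiteness of $K_\infty$ (so the residual $\mathbf u(t)-\mathbf y$ decays exponentially and is integrable in $t$), and conclude the displacement is $O(m^{-1/2})$ in the standard scaling. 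Combining (i)--(ii), $K_m(t)\equiv K_\infty$ in the limit, and the dynamics reduce to the linear system $\dot{\mathbf u}(t) = -K_\infty(\mathbf u(t)-\mathbf y)$ with solution $\mathbf u(t)-\mathbf y = e^{-K_\infty t}(\mathbf u(0)-\mathbf y)$.

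Finally I would match this with kernel-gradient flow in the RKHS $\mathcal H$ of $K_\infty$: functional gradient descent on $\widehat L(f)=\tfrac12\sum_i (f(x_i)-y_i)^2$ gives, by the reproducing property, $\dot f_t = -\sum_i (f_t(x_i)-y_i)\,K_\infty(\cdot,x_i)$; evaluating at any training point $x_j$ — or at a test point $x$ via $[k(x,X)]_i=\Theta^{(L)}(x,x_i)$ — reproduces exactly the same evolution equation for the output vector, so the two flows coincide on the data and share the same limit, the (ridgeless) kernel-regression predictor $f_\infty(x)=k(x,X)^\top K_\infty^{-1}\mathbf y$. I expect the main obstacle to be step (ii): controlling $K_m(t)$ \emph{uniformly in $t$ over the whole half-line} $[0,\infty)$ rather than on a bounded interval, since this is where the spectral gap of $K_\infty$, the initialization scaling, and a careful Gr\"onwall/bootstrap estimate must be combined; the finite-width chain-rule identity and the RKHS identification are otherwise routine.
\end{proofsketch}
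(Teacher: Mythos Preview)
Your proof sketch is correct and follows precisely the standard route of Jacot \emph{et al.}~\cite{Jacot2020NTK} and Arora \emph{et al.}~\cite{Arora2019NTK}: the exact finite-width chain-rule identity for $\dot{\mathbf u}(t)$, the two-ingredient infinite-width argument (initialization convergence plus kernel freeze via a Gr\"onwall/parameter-displacement bound), and the RKHS identification are exactly what those papers do. The paper's own proof is simply a pointer to those references, so your sketch is in fact more detailed than what appears here and matches the cited arguments.
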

\begin{proof}
    see proof of \textit{Theorem 1 \& 2} in \cite{Jacot2020NTK}, \textit{Lemma 3.1} and \textit{Theorem 3.1} in \cite{Arora2019NTK}.
\end{proof}

\begin{theorem}[NTK ’freeze’ and equivalence; Jacot \emph{et al.} \cite{Jacot2020NTK} 2018 \& Arora \emph{et al.} \cite{Arora2019NTK} 2019 \& Yang \cite{Yang2020NTK}] \label{thm:NTKntk_equiv}
  Let $K = K(X,X)$, and
  \[
    K_m(0)\;\xrightarrow[m\to\infty]{}\;K_\infty\;\succ\;0,
    \qquad
    f(x;\theta(0))\approx 0
  \]
  under the usual NTK scaling of a width‑\(m\) network initialized i.i.d.\ Gaussian.  Then, with high probability:
  \begin{enumerate}[label=\emph{(\arabic*)},  
                    leftmargin=*,        
                    labelsep=0.5em,      
                    itemsep=0.75\baselineskip]
    \item Kernel freeze.  
      \(K_m(t)\to K_\infty\) uniformly over \(t\ge0\).
    \item Gradient‑flow equivalence.  
      For every \(t\ge0\) and test point \(x\),
      \[
        f(x;\theta(t))
        =
        (K_\infty(x,X))^{\top} \bigl(I - e^{-t\,K_\infty}\bigr)\,K_\infty^{-1} \mathbf{y}.
      \]
    \item Convergence to kernel regression.  
      As \(t\to\infty\), the prediction of the neural net on this testing point $x$ matches the kernel regression output using this NTK:
      \[
        f_\infty(x)
        =
        \lim_{t\to\infty}f(x;\theta(t))
        = (K_\infty(x,X))^{\top} K_\infty^{-1} \mathbf{y}
      \]
      which is exactly the minimum‑norm interpolant of \(Y\) in the RKHS of \(K_\infty\).
  \end{enumerate}
\end{theorem}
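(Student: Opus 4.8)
The plan is to follow the three stages of the statement in order, using Lemma~\ref{lem:NTK_kernel_flow}, which already supplies the gradient-flow ODE $\dot{\mathbf{u}}(t) = -K_m(t)(\mathbf{u}(t)-\mathbf{y})$ for the training outputs, as the starting point. The entire argument hinges on the ``kernel freeze'' claim~(1); once that is in place, claims~(2) and~(3) reduce to solving a constant-coefficient linear ODE and invoking the representer theorem, so I would establish~(1) first and treat~(2)--(3) as deterministic consequences modulo $o_m(1)$ corrections that the high-probability statement absorbs.

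For step~(1), I would first show that in the NTK scaling the empirical kernel concentrates at initialization: each entry $[K_m(0)]_{ij}$ is an average over the $m$ hidden units of (nearly) i.i.d. contributions, so a law-of-large-numbers / Hoeffding-type bound gives $\|K_m(0)-K_\infty\|_{\mathrm{op}} = O(\mathrm{polylog}(m)/\sqrt{m})$ with high probability; since $K_\infty \succ 0$, for $m$ large this forces $\lambda_{\min}(K_m(0)) \ge \lambda_0/2$ with $\lambda_0 := \lambda_{\min}(K_\infty)$. Then I would run a bootstrapping (continuity) argument over the trajectory: assuming $\lambda_{\min}(K_m(s)) \ge \lambda_0/2$ on $[0,t]$, the ODE makes the residual contract, $\|\mathbf{u}(s)-\mathbf{y}\| \le e^{-\lambda_0 s/2}\,\|\mathbf{u}(0)-\mathbf{y}\|$, so the total parameter displacement $\|\theta(t)-\theta(0)\| \le \int_0^t \|\nabla_\theta L(\theta(s))\|\,ds$ is bounded by $O(1/\sqrt{m})$ \emph{uniformly in $t$}; a local Lipschitz bound on the map $\theta \mapsto \nabla_\theta f(x;\theta)$ near initialization then yields $\|K_m(t)-K_m(0)\|_{\mathrm{op}} = O(1/\sqrt{m})$, which re-establishes the eigenvalue floor with slack and closes the bootstrap, extending it to all $t \ge 0$. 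Together these give $\sup_{t\ge 0}\|K_m(t)-K_\infty\|_{\mathrm{op}} \to 0$ as $m\to\infty$, which is claim~(1).

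For step~(2), I would replace $K_m(t)$ by $K_\infty$ up to the uniform error just controlled and solve $\dot{\mathbf{u}} = -K_\infty(\mathbf{u}-\mathbf{y})$ with $\mathbf{u}(0)=f(X;\theta(0)) \approx 0$ (exact under an antisymmetric initialization, an $o_m(1)$ term otherwise), giving $\mathbf{u}(t) = (I-e^{-tK_\infty})\mathbf{y}$, hence $\mathbf{u}(t)-\mathbf{y} = -e^{-tK_\infty}\mathbf{y}$. For a test point $x$, the chain rule gives $\tfrac{d}{dt}f(x;\theta(t)) = \langle \nabla_\theta f(x;\theta(t)),\dot\theta(t)\rangle = -[K_m(t)(x,X)]^\top(\mathbf{u}(t)-\mathbf{y})$, and the same freeze estimate applied to the cross-kernel lets me swap $K_m(t)(x,X)$ for $K_\infty(x,X)$; integrating with $\int_0^t e^{-sK_\infty}\,ds = K_\infty^{-1}(I-e^{-tK_\infty})$ and $f(x;\theta(0))\approx 0$ produces $f(x;\theta(t)) = (K_\infty(x,X))^\top K_\infty^{-1}(I-e^{-tK_\infty})\mathbf{y}$, which is~(2) up to commuting the factors. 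Letting $t\to\infty$ uses $K_\infty\succ 0$ so $e^{-tK_\infty}\to 0$, giving $f_\infty(x) = (K_\infty(x,X))^\top K_\infty^{-1}\mathbf{y}$; identifying this as the minimum-norm RKHS interpolant is the standard representer-theorem computation — the minimizer of $\|f\|_{\mathcal H}$ subject to $f(x_i)=y_i$ has the form $f=\sum_i \alpha_i K_\infty(\cdot,x_i)$ with $K_\infty\alpha=\mathbf{y}$, i.e. $\alpha = K_\infty^{-1}\mathbf{y}$, proving~(3).

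The main obstacle will be step~(1), and specifically making the bootstrap rigorous and uniform over the \emph{entire} half-line $t\in[0,\infty)$ rather than a fixed finite horizon: one must pin down the meaning of ``width large enough'' and ``with high probability'' so that the parameter-displacement bound and the Jacobian-Lipschitz constant are strong enough that the contraction never breaks down, and one must carry the test-point cross-kernel (whose Jacobian also drifts) through the same estimates. By contrast, the ODE integration in~(2) and the representer-theorem identification in~(3) are routine once the freeze holds, and the $\approx 0$ initial-output approximations only add further $o_m(1)$ terms folded into the probabilistic guarantee.
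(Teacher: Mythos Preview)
Your proposal is correct and aligns with the paper's treatment: the paper does not supply its own proof but simply cites Theorem~2 of Jacot et al., Theorem~3.2 of Arora et al., and Yang, and the surrounding discussion sketches exactly the ODE-based argument you give (kernel freeze in the wide limit, then solve the constant-coefficient linear ODE $\dot{\mathbf{u}}=-K_\infty(\mathbf{u}-\mathbf{y})$ with $\mathbf{u}(0)\approx 0$ to get $\mathbf{u}(t)=(I-e^{-tK_\infty})\mathbf{y}$, then let $t\to\infty$). Your concentration-plus-bootstrap argument for the freeze is precisely the non-asymptotic approach of Arora et al.\ that the paper cites for that step, so your route is the same as the one the paper points to.
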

\begin{proof}
    see proofs of \textit{Theorem 2} in \cite{Jacot2020NTK}, \textit{Theorem 3.2} in \cite{Arora2019NTK}, also relevant discussions: Section.5 in \cite{Jacot2020NTK}, Section.3 in \cite{Arora2019NTK}, Section.2.3 in \cite{Yang2020NTK}.
\end{proof}

Jacot et al. \cite{Jacot2020NTK} establishes that the network function $f_{\theta}$ follows the kernel gradient of the functional cost with respect to the NTK. \textit{Theorem 1} in \cite{Jacot2020NTK} states that in the infinite-width limit, the NTK $\Theta^{(L)}$ converges in probability to a deterministic limiting kernel $\Theta_{\infty}^{(L)}$ at \textit{initialization}; \textit{Theorem 2} states that this NTK stays asymptotically constant during \textit{training}, under certain conditions. This enables studying the training of neural nets in function space instead of parameter space, and convergence of the training can then be related to the positive-definiteness of the limiting NTK \cite{Jacot2020NTK}. For a least-squares regression cost, \cite{Jacot2020NTK} shows that, the network function $f_{t}$ during kernel gradient descent training with a kernel $K$ follows a linear differential equation dynamics, and the solution to which is in exponential decay form (Section.5 in \cite{Jacot2020NTK}).

Arora et al. \cite{Arora2019NTK} give the first non-asymptotic proof showing the equivalence between a fully-trained, sufficiently wide net and the kernel regression predictor using NTK. \textit{Lemma 3.1} in \cite{Arora2019NTK} describes the evolution of the network outputs $u(t)$ on training data during gradient descent: $\frac{d\mathbf{u}(t)}{dt}=-K(t)\cdot(\mathbf{u}(t)-\mathbf{y})$, where $[K(t)]_{i,j} = \langle\frac{\partial f(\theta(t),x_{i})}{\partial\theta},\frac{\partial f(\theta(t),x_{j})}{\partial\theta}\rangle$. In the infinite-width limit, $K(t)$ remains constant and equal to $K(0)$, which converges to a deterministic kernel matrix $K^{}$ (the NTK evaluated on training data). The dynamics then becomes $\frac{d\mathbf{u}(t)}{dt}=-K^{}\cdot(\mathbf{u}(t)-\mathbf{y})$. Solution to this linear ODE, assuming $u(0)=0$, leads to the kernel regression predictor at $t\rightarrow\infty$: $f^{}(x)=K_\infty(x,X)\cdot K^{-1} \mathbf{y}$. Our statement extends this by showing the function $f(x;\theta(t))$ at any time $t$ and for any test point $x$: $f(x;\theta(t)) = (K_\infty(x,X))^{\top} \bigl(I - e^{-t\,K_\infty}\bigr)\,K_\infty^{-1} \mathbf{y}$, which describes the evolution of a neural network's output function $f(x;\theta(t))$ during training via gradient flow, particularly in the infinite-width limit where the NTK becomes constant. \textit{Theorem 3.2} (\textit{equivalence between trained net and kernel regression}) in \cite{Arora2019NTK} rigorously proves that a fully-trained sufficiently wide ReLU neural network is equivalent to the kernel regression predictor using the NTK, i.e. $|f_{nn}(x)-f_{ntk}(x)|\le\epsilon$, where $f_{ntk}(x)=(K(x,X))^{\top} K^{-1} \mathbf{y}$ is the kernel regression predictor.

Yang \cite{Yang2020NTK} also discusses the NTK, $K_{\theta}(x,x^{\prime}) = \langle\nabla_{\theta}f(x;\theta), \nabla_{\theta}f(x^{\prime};\theta)\rangle$, and its convergence to a scaling limit $K_{\infty}$ as widths go to infinity. It proves (\textit{Corollary 2.4} (NTK convergence)) that, for a network of fixed standard architecture (without batchnorm) and under certain conditions, almost surely $K_{\theta_{t}}(\mathcal{X},\mathcal{X})\rightarrow K_{\infty}(\mathcal{X},\mathcal{X})$ for all $0\le t\le T$ in the large width limit, where $\mathcal{X}$ is a finite input set. It also mentions (Section.2.3) the gradient flow equation (i.e. the training dynamics) derived by Jacot et al. \cite{Jacot2020NTK}: $\frac{\partial f_{t}}{\partial t}=-\frac{1}{|\mathcal{X}|}K_{\theta_{t}}(\mathcal{X},\mathcal{X})(f_{t}-f^{*})$ where where $f^{*}$ is the 'ground truth' function, implying linear differential equation dynamics for $f$.

The statement for $f(x;\theta(t))$ is a direct consequence of applying kernel gradient descent with the limiting NTK ($K_\infty$) to a least-squares loss. The term $(I - e^{-t\,K_\infty})$ arises from solving the linear differential equation that governs the function's evolution in this regime, representing the dynamic path from an initial state towards the kernel regression solution $K_\infty(x,X)\,K_\infty^{-1} \mathbf{y}$.

\section{The 3-layer neural network kernel: derivation} \label{app:NN_kernel_derivation}

We briefly derive the 3-layer neural network kernel described in Section.\ref{sec:custom_kernel_design}, following \cite{Rasmussen2006GPbook}. This neural network has a single hidden layer, taking an input vector $\mathbf{x} \in \mathbb{R}^d$ and producing a scalar output $f(\mathbf{x})$. The mapping from input to output is:
\begin{equation} \label{3layer_NN_kernel_input_to_output}
    f(\mathbf{x}) = b + \sum_{j=1}^{m} v_j \sigma(\mathbf{x}; \mathbf{w}_j)
\end{equation}
in which $m$ is again the number of hidden units, $b$ is a bias term. $v_j$ are the weights connecting the $j$-th hidden unit to the output unit. $\sigma(\mathbf{x}; \mathbf{w}_j)$ is the activation function (or \textit{transfer function}) of the $j$-th hidden unit; it is assumed to be bounded. This function depends on the input $\mathbf{x}$ and the weights $\{\mathbf{w}_j\}_{j=1}^m$ connecting the input layer to the $j$-th hidden unit.

To form a Gaussian process, we define prior distributions over the parameters of the neural network. Following \cite{Neal1996}, we assume the bias $b$ and the hidden-to-output weights $v_j$ have independent zero-mean Gaussian priors with variances $\sigma_b^2$ and $\sigma_v^2$, respectively, i.e. $p(b) = \mathcal{N}(b | 0, \sigma_b^2)$, $p(v_j) = \mathcal{N}(v_j | 0, \sigma_v^2)$ for $j=1, \dots, m$. The input-to-hidden weights $\mathbf{w}_j$ for each hidden unit are drawn i.i.d. from some shared distribution $p(\mathbf{w})$.

We want to find the covariance between the function's output at two different input points, $\mathbf{x}$ and $\mathbf{x}'$. This covariance will form our kernel $k(\mathbf{x}, \mathbf{x}')$. The expected value of the function output $f(\mathbf{x})$ over the distribution of weights $\mathbf{w}$ is:
$$\mathbb{E}_{\mathbf{w}}[f(\mathbf{x})] = \mathbb{E}[b] + \sum_{j=1}^{m} \mathbb{E}[v_j] \mathbb{E}_{\mathbf{w}_j}[\sigma(\mathbf{x}; \mathbf{w}_j)]$$
As $\mathbb{E}[b] = 0$ and $\mathbb{E}[v_j] = 0$, we have:
$\mathbb{E}_{\mathbf{w}}[f(\mathbf{x})] = 0$, which means the prior over functions defined by this Bayesian neural network has a zero mean.

The covariance is:
$$\mathbb{E}_{\mathbf{w}}[f(\mathbf{x})f(\mathbf{x}')] = \mathbb{E}_{\mathbf{w}}\left[ \left(b + \sum_{j=1}^{m} v_j \sigma(\mathbf{x}; \mathbf{w}_j)\right) \left(b + \sum_{k=1}^{m} v_k \sigma(\mathbf{x}'; \mathbf{w}_k)\right) \right]$$
expanding the product and noting the independence of $b$, $v_j$ and $\mathbf{w}_j$, we have:
$$\mathbb{E}_{\mathbf{w}}[f(\mathbf{x})f(\mathbf{x}')] = \mathbb{E}[b^2] + \sum_{j=1}^{m} \sum_{k=1}^{m} \mathbb{E}[v_j v_k] \mathbb{E}_{\mathbf{w}_j, \mathbf{w}_k}[\sigma(\mathbf{x}; \mathbf{w}_j)\sigma(\mathbf{x}'; \mathbf{w}_k)]$$

Our assumptions tell $\mathbb{E}[b^2] = \sigma_b^2$. If $j \neq k$, $\mathbb{E}[v_j v_k] = \mathbb{E}[v_j]\mathbb{E}[v_k] = 0 \cdot 0 = 0$. If $j = k$, $\mathbb{E}[v_j^2] = \sigma_v^2$. So the above expression simplifies to (Eq.(4.27) in \cite{Rasmussen2006GPbook}):
$$\mathbb{E}_{\mathbf{w}}[f(\mathbf{x})f(\mathbf{x}')] = \sigma_b^2 + \sum_{j=1}^{m} \sigma_v^2 \mathbb{E}_{\mathbf{w}_j}[\sigma(\mathbf{x}; \mathbf{w}_j)\sigma(\mathbf{x}'; \mathbf{w}_j)]$$

As the $\mathbf{w}_j$ are i.i.d., the expectation $\mathbb{E}_{\mathbf{w}_j}[\sigma(\mathbf{x}; \mathbf{w}_j)\sigma(\mathbf{x}'; \mathbf{w}_j)]$ is the same for all $j$. Let's denote this common expectation as $\mathbb{E}_{\mathbf{w}}[\sigma(\mathbf{x}; \mathbf{w})\sigma(\mathbf{x}'; \mathbf{w})]$, the above simplifies to (Eq.(4.28) in \cite{Rasmussen2006GPbook}):
$$\mathbb{E}_{\mathbf{w}}[f(\mathbf{x})f(\mathbf{x}')] = \sigma_b^2 + m \sigma_v^2 \mathbb{E}_{\mathbf{w}}[\sigma(\mathbf{x}; \mathbf{w})\sigma(\mathbf{x}'; \mathbf{w})]$$
Using the width-dependent variance $\sigma_v^2=\omega^2/m$, we can simplify the above to \cite{Rasmussen2006GPbook}: 
$$\mathbb{E}_{\mathbf{w}}[f(\mathbf{x})f(\mathbf{x}')] = \sigma_b^2 + \omega^2 \mathbb{E}_{\mathbf{w}}[\sigma(\mathbf{x}; \mathbf{w})\sigma(\mathbf{x}'; \mathbf{w})]$$
This scaling is important for the limit as $m \to \infty$. As $m$ increases, the individual contributions of $v_j$ decrease, keeping the overall variance finite.

Since the transfer function $\sigma$ is assumed to be bounded, all moments of the distribution will be bounded. By central limit theorem, as $m \to \infty$, the stochastic process $f(\mathbf{x})$ converges to a Gaussian process \cite{Rasmussen2006GPbook}. The covariance function of this limiting Gaussian process is:
\begin{equation} \label{eq:cov_func_NN}
    k(\mathbf{x}, \mathbf{x}') = \sigma_b^2 + \omega^2 \mathbb{E}_{\mathbf{w}}[\sigma(\mathbf{x}; \mathbf{w})\sigma(\mathbf{x}'; \mathbf{w})]
\end{equation}
which serves as the general form of the covariance function (i.e. the kernel). To obtain a concrete kernel, we need to specify the activation function $\sigma$ and the distribution of the input-to-hidden weights $p(\mathbf{w})$. In the following, we discuss a specific case in which the error function is employed as the activation function \footnote{The \textit{sigmoid kernel} $k(\mathbf{x}, \mathbf{x}') = \tanh(a + b \mathbf{x} \cdot \mathbf{x}')$ has sometimes been proposed \cite{Rasmussen2006GPbook}, but it is never positive definite and therefore not a valid covariance function \cite{Scholkopf2002KernelLearning}}.

\paragraph{Error function (erf) activation} 
If we choose the error function as the activation, i.e.
$$\sigma(z) = \text{erf}(z) = \frac{2}{\sqrt{\pi}} \int_0^z e^{-t^2} dt$$
and the argument of the error function is $z_j = u_{j0} + \sum_{i=1}^d u_{ji} x^i = \mathbf{w}_j \cdot \tilde{\mathbf{x}}$, $j=1,2,...,m$, where $\tilde{\mathbf{x}} = (1, x^1, \dots, x^d)^T$ is an augmented input vector including a bias term (offset $u_{j0}$).
Each weight vector $\mathbf{w}$ is assumed to follow a shared, zero-mean Gaussian distribution: $\mathbf{w} \sim \mathcal{N}(\mathbf{0}, \Sigma)$. The activation becomes $\sigma(\mathbf{x}; \mathbf{w}) = \text{erf}(\mathbf{w} \cdot \tilde{\mathbf{x}})$. The covariance function then involves calculating $\mathbb{E}_{\mathbf{w}}[\text{erf}(\mathbf{w} \cdot \tilde{\mathbf{x}}) \cdot \text{erf}(\mathbf{w} \cdot \tilde{\mathbf{x}}')]$. Williams \cite{Williams1998NNkernel} showed that this expectation leads to the following kernel, often referred to as the 'neural network kernel' or the 'arcsin kernel' (Eq.(4.29) in \cite{Rasmussen2006GPbook}):
\begin{equation} \label{eq:NN_kernel_erf_activation}
    k^{NN}(\mathbf{x}, \mathbf{x}') = \frac{2}{\pi} \sin^{-1}\left(\frac{2\tilde{\mathbf{x}}^T \Sigma \tilde{\mathbf{x}}'}{\sqrt{(1+2\tilde{\mathbf{x}}^T \Sigma \tilde{\mathbf{x}})(1+2\tilde{\mathbf{x}}'^T \Sigma \tilde{\mathbf{x}}')}}\right)
\end{equation}
where the variance $\omega^2$ and the bias variance $\sigma_b^2$ are implicitly absorbed or set to specific values (e.g. $\sigma_b^2=0$ and $\omega^2$ are incorporated into normalization).

Note that the covariance term from the bias, $\sigma_b^2$, can be added separately if needed. The hyperparameter matrix $\Sigma = \text{diag}(\sigma_0^2, \sigma_1^2, \dots, \sigma_D^2)$, assuming independence of the components of $\mathbf{w}$, controls the properties of the kernel: $\sigma_0^2$ controls the variance of $u_0$ (the input bias term for the hidden units), which influences the offset of the activation functions from the origin; $\sigma_i^2$ for $i \geq 1$ controls the variance of the weights for the $i$-th input feature, essentially scaling the input dimensions and determining how quickly the function varies along those dimensions. The parameters in $\Sigma$ act as hyperparameters of the kernel, controlling its behavior.

This neural network kernel Eq.\ref{eq:NN_kernel_erf_activation} is derived by considering a Bayesian neural network with one hidden layer, specific prior distributions on its weights, i.e. Gaussian for output weights and bias, and a chosen Gaussian priors for input-to-hidden weights (where the input $\mathbf{x}$ is augmented with a bias term to become $\tilde{\mathbf{x}}$): $\mathbf{w} \sim \mathcal{N}(\mathbf{0}, \Sigma)$), the specific choice of the error function $\text{erf}(z)$ as the activation $\sigma$, and then taking the limit as the number of hidden units ($m$) goes to infinity. This specific neural network kernel allows Gaussian processes to model functions with properties similar to those learned by these infinite neural networks. 

\paragraph{Dimensions of the input-to-hidden weight vector $\mathbf{w}_i$}
$\{\mathbf{w}_i\}_{i=1}^m$ is the collection of input-to-hidden weight vectors , in which $m$ is the number of neurons in the hidden layer, $\mathbf{w}_i$ is the weight vector for the $i$-th hidden unit. This vector connects all components of the augmented input $\tilde{\mathbf{x}}$ to the $i$-th hidden unit. For the $i$-th hidden unit, the pre-activation (the input to the activation function) is typically a dot product of the weight vector $\mathbf{w}_i$ and the augmented input vector $\tilde{\mathbf{x}}$:
$$z_i = \mathbf{w}_i^T \tilde{\mathbf{x}} = w_{i0} \cdot 1 + w_{i1} x^{(1)} + \dots + w_{id} x^{(d)}$$
For this operation to be defined, the weight vector $\mathbf{w}_i$ must have the same dimension as the augmented input vector $\tilde{\mathbf{x}}$, i.e. $\mathbf{w}_i$ is of dimension $(d+1)$:
\[
\mathbf{w}_i = [w_{i0}, w_{i1}, \dots, w_{id}]^T \in \mathbb{R}^{d+1}
\]
$w_{i0}$ is the bias weight for the $i$-th hidden unit (associated with the constant '1' in $\tilde{\mathbf{x}}$).
$w_{ij}$ (for $j=1, \dots, d$) is the weight connecting the $j$-th original input feature $x_j$ to the $i$-th hidden unit.

\paragraph{Dimensions of the covariance matrix $\Sigma$}
The covariance matrix $\Sigma$ describes the prior distribution from which each hidden-to-output weight vector $\mathbf{w}_i$ is drawn. Specifically, it is assumed that $\mathbf{w}_i \sim \mathcal{N}(\mathbf{0}, \Sigma)$ independently for each hidden unit $i$. Since each $\mathbf{w}_i$ is a $(d+1)$-dimensional random vector, its covariance matrix $\Sigma$ must be a square matrix of dimensions $(d+1) \times (d+1)$, i.e. $\Sigma \in \mathbb{R}^{(d+1) \times (d+1)}$. The elements of $\Sigma$, denoted $\Sigma_{jk}$, represent the covariance between the $j$-th component of $\mathbf{w}_i$ and the $k$-th component of $\mathbf{w}_i$. That is, $\Sigma_{jk} = \text{Cov}(w_{ij}, w_{ik})$.

Commonly, $\Sigma$ is assumed to be diagonal \footnote{The assumption of a diagonal $\Sigma$ implies that the individual weight components within a vector $\mathbf{w}_i$ (e.g. $w_{i0}, w_{i1}, \dots, w_{id}$) are a priori uncorrelated. A more general, non-diagonal $\Sigma$ would allow for prior correlations between these weight components.}:
$$\Sigma = \text{diag}(\sigma_0^2, \sigma_1^2, \dots, \sigma_D^2) = \begin{pmatrix} \sigma_0^2 & 0 & \dots & 0 \\ 0 & \sigma_1^2 & \dots & 0 \\ \vdots & \vdots & \ddots & \vdots \\ 0 & 0 & \dots & \sigma_d^2 \end{pmatrix}$$
where $\sigma_0^2$ is the variance of the bias weight $w_{i0}$ for any hidden unit $i$. It controls the prior uncertainty about the hidden unit's bias; $\sigma_j^2$ (for $j=1, \dots, d$) is the variance of the weight $w_{ij}$ connecting the $j$-th input feature $x_j$ to any hidden unit $i$. It controls the 'relevance' or 'length scale' associated with the $j$-th input dimension. Larger $\sigma_j^2$ allows for stronger influence from input $x_j$.

\paragraph{Neural network kernel \textit{vs} NTK}
Both the NTK (Eq.\ref{eq:NN_3layer_NTK}) and the neural network kernel (Eq.\ref{eq:NN_kernel_erf_activation}) used here are derived using a shallow, 3-layer architecture. However, they are fundamentally different and they describe different aspects of the network. The NTK in Eq.\ref{eq:NN_3layer_NTK} describes the training dynamics of a neural network in the infinite-width limit \cite{Jacot2020NTK}. It captures how the network’s output evolves during gradient descent by considering the gradients of the output with respect to all parameters. It is particularly useful for understanding the behavior of neural networks during training, as it relates to the kernel that governs the network’s evolution under gradient descent. The neural network kernel (Eq.\ref{eq:NN_kernel_erf_activation} or Eq.\ref{eq:NN_kernel_erf_activation}) models the covariance of the outputs of an infinite-width neural network with an erf activation, directly corresponding to a Gaussian process. It is not concerned with training dynamics but rather with the prior distribution over functions induced by the network architecture at initialization.

\section{SVGD as an alternative to Monte Carlo sampling} 
\label{app:SVGD}
Stein variational gradient descent (SVGD) can also be used as an alternative to Monte Carlo sampling for approximating the marginal probability $p(I) = p(z_i > 0 \ \forall i \in I)$ in Eq.\ref{eq:rule_inference_via_integral_of_marginalised_joint}. SVGD is a particle-based variational inference method \cite{Liu2016SVGD} which transports a set of particles (samples) to form the target distribution. At convergence, the particles serve as samples drawn from the target. We can therefore use SVGD to generate samples from the marginalised GP posterior, hopefully improving the efficiency of rule inference.   

\paragraph{SVGD principles}
SVGD is a particle-based, non-parametric variational inference method introduced by Liu et al. \cite{Liu2016SVGD} that approximates a target distribution by iteratively updating a set of particles (samples) using gradient-based updates. Unlike traditional Monte Carlo sampling, which generates independent samples, SVGD uses a deterministic update rule to evolve particles in a way that implicitly minimizes the Kullback-Leibler (KL) divergence between the particle distribution and the target distribution.

In our case case, the target distribution is the marginalised GP posterior $p(\mathbf{z}) = \mathcal{N}(0, K_I)$, where $K_I$ is the submatrix of the kernel matrix $K$ for the items in the itemset $I$, and $\mathbf{z}$ is the vector of latent variables for those items. SVGD starts with a set of particles $\{\mathbf{z}^{(s)}\}_{s=1}^S$ initialised from a proposal (e.g. uniform), where each particle $\mathbf{z}^{(s)}$ is a vector of length $|I|$ (the size of the itemset). SVGD then updates the particles using a gradient-based rule that balances two forces: gradient-based attraction and kernel differential based repulsion. The attractive force moves particles toward regions of high probability under the target distribution, using the gradient of the log-density $\nabla_{\mathbf{z}} \log p(\mathbf{z})$, while the repulsive force prevents particles from collapsing into a single mode, ensuring diversity among particles. The SVGD update rule for a particle $\mathbf{z}^{(s)}$ at iteration $t$ is \cite{Liu2016SVGD}:
\begin{equation} \label{eq:SVGD_dynamics}
   \mathbf{z}^{(s)}_{t+1} = \mathbf{z}^{(s)}_t + \epsilon \cdot \hat{\phi}(\mathbf{z}^{(s)}_t)
\end{equation}
Where $\epsilon$ is the step size, $\hat{\phi}(\mathbf{z})$ is the composite (variational) gradient, empirically computed as:
\[
\hat{\phi}(\mathbf{z}) = \frac{1}{S} \sum_{s=1}^S \left[ k(\mathbf{z}^{(s)}, \mathbf{z}) \nabla_{\mathbf{z}^{(s)}} \log p(\mathbf{z}^{(s)}) + \nabla_{\mathbf{z}^{(s)}} k(\mathbf{z}^{(s)}, \mathbf{z}) \right]
\]
$k(\cdot, \cdot)$ is a chosen kernel (e.g. RBF). $\nabla_{\mathbf{z}} \log p(\mathbf{z})$ is the score function of the target distribution. For a standard multivariate Gaussian target $p(\mathbf{z}) = \mathcal{N}(0, K_I)$, we have
\[
\log p(\mathbf{z}) = -\frac{1}{2} \mathbf{z}^\top K_I^{-1} \mathbf{z} - \frac{1}{2} \log \det(K_I) - \frac{|I|}{2} \log(2\pi),
\quad
\nabla_{\mathbf{z}} \log p(\mathbf{z}) = -K_I^{-1} \mathbf{z}
\]

\paragraph{Estimating the co-occurrence probability $p(I)$ using SVGD}
We first initialise a set of particles (e.g. $S=100$), and move the particles till convergence following the variational gradient descent dynamics in Eq.\ref{eq:SVGD_dynamics}, obtaining a set of particles whose final configuration approximates $\mathcal{N}(0, K_I)$. We can then estimate $p(I)$ as the fraction of particles where $\mathbf{z} > 0$:
\[
p(I) \approx \frac{1}{S} \sum_{s=1}^S \mathbf{1}_{\mathbf{z}^{(s)} > 0}
\]

\paragraph{Why SVGD} SVGD might be a candidate for approximating the GP posterior in GPAR for several reasons: (1) Efficiency. SVGD is deterministic, particles following the SVGD dynamics can converge to a good approximation of the target distribution, fewer particles are required than Monte Carlo sampling, as it actively optimizes the particle positions to match the target distribution. This can reduce the number of samples needed (e.g. $S$ can be smaller than 1000). (2) Accuracy. By implicitly minimizing KL divergence, SVGD provides a more targeted approximation of the posterior, potentially reducing variance compared to naive Monte Carlo sampling. (3) Robustness. SVGD exploits the covariance matrix $K_I$ from the trained GP, ensuring that the learned correlations (via the shifted RBF kernel) are fully utilized. Unlike methods that assume independence (e.g. fully factorized Gaussians in VI, Strategy 5), SVGD captures the full joint distribution by evolving particles in the high-probability regions of $\mathcal{N}(0, K_I)$. (3) Flexibility. SVGD is non-parametric and can approximate complex distributions, making it robust to the non-standard covariance structure introduced e.g. by the shifted RBF kernel.

However, SVGD requires computing gradients of the log-density and kernel functions, as well as iterative updates, which adds implementation complexity compared to Monte Carlo sampling. Each SVGD iteration involves computing gradients and kernel evaluations for all particles, which can be costly for large itemsets or many particles (though the total number of iterations is often small). SVGD also requires tuning the step size $\epsilon$, number of iterations, and a proper choice of kernel $k(\cdot, \cdot)$. Poor choices can lead to slow convergence or poor approximations. 

\paragraph{Implementation} 
Following settings are used in our SVGD implementation:
\begin{itemize}
    \item $S = 100$ particles (fewer than Monte Carlo’s 1000, as SVGD is more efficient per particle).
    \item \text{num\_iterations} = 50, typically a small number of iterations is needed for SVGD updates.
    \item  RBF kernel for the SVGD repulsion term, with median heuristic for the SVGD kernel length scale - a standard choice for robustness \cite{Liu2016SVGD}.
    \item \text{step\_size} = 0.1.
\end{itemize}

\paragraph{Potential impact on efficiency}  SVGD may reduce the number of particles needed, but each iteration is more expensive. Each SVGD iteration computes the score function ($\mathcal{O}(|I|^3)$ for $K_I^{-1}$, done once), kernel evaluations ($\mathcal{O}(S^2 \cdot |I|)$), and gradients ($\mathcal{O}(S^2 \cdot |I|)$), totaling $\mathcal{O}(|I|^3 + S^2 \cdot |I|)$ per iteration. For 50 iterations, the cost is $\mathcal{O}(|I|^3 + 50 \cdot S^2 \cdot |I|)$. With $S = 100$, this is $\mathcal{O}(|I|^3 + 500000 \cdot |I|)$.

As a comparison, the original Monte Carlo sampling approach costs $\mathcal{O}(|I|^3 + S \cdot |I|^2)$, with $S = 1000$, so $\mathcal{O}(|I|^3 + 1000 \cdot |I|^2)$. For $|I| = 5$ \footnote{SVGD is used for $|I| > 2$, as we have analytical formulas for the integral in low dimensions.}, Monte Carlo is $\mathcal{O}(125 + 1000 \cdot 25) = \mathcal{O}(25125)$, while SVGD is $\mathcal{O}(125 + 500000 \cdot 5) = \mathcal{O}(2500125)$, suggesting SVGD is slower per itemset. However, SVGD typically requires fewer particles ($S = 100$) to achieve comparable accuracy, and the overall runtime may be lower due to fewer total evaluations if the number of iterations is tuned.

\section{GPAR experimental setup} \label{app:GPAR_experimental_setup}

This study evaluates 4 association rule mining methods, i.e. GPAR (with its variants), Apriori, FP-Growth, and Eclat, through experiments conducted on three datasets: two synthetic datasets (\textit{Synthetic 1} and \textit{Synthetic 2}) and one real-world dataset (\textit{UK Accidents Dataset} \cite{road_safety_data}). Each method was systematically tested across a range of \textit{minimum support} thresholds, while maintaining a fixed \textit{minimum confidence} threshold.

Experiments on the 2 synthetic datasets, i.e. Synthetic 1 \& 2, were conducted on a laptop with an x86\_64 architecture, powered by an 11th Gen Intel® Core™ i7-1185G7 processor running at 3.00 GHz, equipped with 8 CPUs (4 cores, 2 threads per core, 1 socket), supporting both 32-bit and 64-bit operations, and capable of scaling from 400 MHz to a maximum of 4800 MHz. The CPU features advanced instruction sets like AVX-512, VT-x virtualization, and security mitigations for vulnerabilities such as Spectre and Meltdown. It is supported by a memory hierarchy consisting of 192 KiB L1 data cache, 128 KiB L1 instruction cache, 5 MiB L2 cache, and 12 MiB L3 cache, all across 4 instances except for the L3 cache, which has 1 instance. The system operates on a single NUMA node (node0) with CPUs 0-7, utilizing 39-bit physical and 48-bit virtual address sizes, and follows a Little Endian byte order, providing a robust platform for computational tasks.

Experiments on the real-world dataset, i.e. the \textit{UK Accidents dataset}, were conducted in \textit{Google Colab} utilized a system equipped with an Intel(R) Xeon(R) CPU operating at 2.00GHz, featuring 48 processors across 24 cores with a cache size of 39,424 KB. The CPU supports advanced features such as AVX512 and includes mitigations for vulnerabilities like Spectre and Meltdown. The system has a substantial memory capacity of approximately 350 GB, with around 337 GB free and 346 GB available during operation \footnote{Reported by using command `/proc/meminfo` in Colab.}. No swap memory is configured, and memory usage includes 9.9 GB cached and 924 MB in slab memory, ensuring efficient handling of the computational workload for the experiments.

When generating rules (i.e. rule inference), stetting the values for min\_support and min\_conf corresponds to a balance between rule quantity and quality. The decision, however, depends on the application domain and available computational resources. In literature, min\_support often ranges from 0.01 to 0.3 (1\% to 30\%), depending on the dataset’s density and size. For dense datasets (e.g. market basket data), min\_support is often low, e.g. 0.01 to 0.05, because items are frequently co-occurring; for sparse datasets (e.g. medical or web data), min\_support can be higher, e.g. 0.1 to 0.3, to avoid generating too many rules. We choose min\_support=[0.1, 0.2, 0.3, 0.4, 0.5] for \textit{Synthetic 1} ($M=10$ items, each item is characterised by $10$ features and in total $N=1000$ transaction records), and [0.1, 0.15, 0.2, 0.25, 0.3] for \textit{Synthetic 2} ($M=15$ items, each item is characterised by $10$ features and in total $N=1000$ transaction records), and [0.1, 0.2, 0.3, 0.4, 0.5] for \textit{UK Accidents} dataset ($M=39$ items, each item is characterised by $10$ features and in total $N=5231$ transaction records). High min\_support values such as 0.5 (50\%) can filter out many interesting rules, especially in a dataset with diverse items. A common range for min\_conf in many studies is 0.5 to 0.8. A min\_conf of 0.5 means that the consequent must appear in at least 50\% of the transactions where the antecedent is present, which is a reasonable threshold for identifying reliable patterns without being overly restrictive. Higher values like 0.8 ensure stronger associations but may filter out many potentially interesting rules. Roughly speaking, the value of min\_support roughly controls the number of rules generated, while min\_conf controls the reliability. Together they balance rule quantity and quality. A high min\_conf, for example min\_conf=0.7, strikes a balance between capturing a reasonable number of rules and ensuring they are reliable, that is, the consequent occurs in at least 70\% of the transactions where the antecedent is present.

\subsection{Performance comparison metrics}

For each \textit{minimum support} level, the following performance metrics were recorded to enable a comprehensive comparison across algorithms:

\begin{itemize}
    \item \textit{Runtime (seconds)}: The duration required to generate frequent itemsets and association rules.
    \item \textit{Memory usage (MB)}: The peak memory consumption during execution.
    \item \textit{Number of frequent itemsets}: The count of itemsets with support greater than or equal to the \textit{minimum support} threshold.
    \item \textit{Number of mined association rules}: The count of association rules with confidence greater than or equal to 0.5.
\end{itemize}

For GPAR, frequent itemsets were identified by estimating joint probabilities using Monte Carlo sampling with 1,000 samples, followed by the computation of confidence and lift metrics for itemset partitions to derive association rules. In contrast, Apriori and FP-Growth were implemented using the \textit{Mlxtend} Python library \cite{raschkas_2018_mlxtend}, leveraging their standard pruning techniques to efficiently identify frequent itemsets and rules. Eclat was implemented using a vertical database format, employing recursive intersection of transaction ID sets to discover frequent itemsets.

To assess, interpret and compare the mined rules, we can apply these quantitative metrics along with domain expertise. Each rule has associated support, confidence, and lift metrics. To facilitate comparison, summary tables present the top 10 generated rules, ranked by support, confidence, or lift, alongside plots visualizing runtime and memory usage across varying support thresholds. The following sections detail the performance of the GPAR variants in competition with the three classical methods, providing insights into their relative strengths and trade-offs. We describe the tests on 2 synthetic datasets in Section.\ref{app:synthetic_data_tests}, and tests on 1 real-world dataset in Section.\ref{app:accident_data_tests}, respectively.

Several other critical metrics are also considered for evaluating performance: (1) \textit{Scalability}: the algorithm's performance in terms of time and memory usage as dataset size (number of transactions, items and features) increases or support thresholds decrease. (2) \textit{Generalization}: the ease of extending the algorithm to incorporate augmented itemsets with new, unobserved items. (3) \textit{Interpretation}: the ability to identify meaningful or rare patterns that are interpretable and actionable.

\section{Synthetic data tests} \label{app:synthetic_data_tests}

We aim to compare the performances of GPAR, Apriori, FP-Growth, and Eclat in association rule mining tasks on a fair and diverse basis. To achieve this we designed 2 synthetic datasets, differing in their data generation processes (DGPs) to reflect different consumer behaviours and test the performances of these algorithms in different scenarios.

In generating the first synthetic data, we created a feature matrix $X \in \mathbb{R}^{10 \times 10}$ which represents 10 items with 10 features each, by sampling from a standard normal distribution. A radial basis function (RBF) kernel with a length scale of 10.0 was applied to compute the covariance matrix $K$. Using this covariance, $N=1000$ transactions were generated via a multivariate normal distribution, where each transaction was formed by sampling latent variables $z \sim \mathcal{N}(0, K)$ and retaining items with positive values ($z_i > 0, i=1,2,...,10$). Non-empty transactions were ensured through resampling. 

The second synthetic dataset was generated differently, consisting of 15 items organized into 3 clusters, with each cluster containing 5 items described by 10 features, yielding a feature matrix $X \in \mathbb{R}^{15 \times 10}$. A total of 1,000 transactions are generated, where each transaction is a subset of items deliberately sampled from multiple clusters. 

\subsection{Synthetic data 1}

\subsubsection{GPAR with RBF kernel} 
The maximum likelihood estimated RBF length scale $\ell=6.7078$.

\begin{table}[H]
\centering
\scriptsize
\begin{threeparttable}
\caption{Top 10 association rules mined by \textbf{RBF-based GPAR} (ranked in descending order by \textit{\color{red}{support}})}
\label{tab:RBF_GPAR_mined_rules_rankedBySupport_minSupport01_Synthetic1}
\begin{tabular}{r >{\raggedright\arraybackslash}p{4cm} >{\raggedright\arraybackslash}p{4cm} p{1cm} p{1cm} p{1cm}}
\toprule
\# & Antecedent & Consequent & \color{red}{Supp} & Conf & Lift \\
\midrule
747 & item\_5 & item\_8, item\_6 & 0.5200 & 0.9123 & 2.3392 \\
\midrule
748 & item\_6 & item\_8, item\_5 & 0.5200 & 0.8667 & 2.6289 \\
\midrule
749 & item\_8 & item\_5, item\_6 & 0.5200 & 0.8667 & 2.7778 \\
\midrule
750 & item\_5, item\_6 & item\_8 & 0.5200 & 1.6250 & 2.7013 \\
\midrule
751 & item\_8, item\_5 & item\_6 & 0.5200 & 1.2093 & 2.3810 \\
\midrule
752 & item\_8, item\_6 & item\_5 & 0.5200 & 1.2683 & 2.7368 \\
\midrule
9 & item\_0 & item\_5 & 0.4900 & 0.8305 & 1.7754 \\
\midrule
10 & item\_5 & item\_0 & 0.4900 & 0.9423 & 1.8839 \\
\midrule
57 & item\_3 & item\_8 & 0.4900 & 1.0426 & 1.8148 \\
\midrule
58 & item\_8 & item\_3 & 0.4900 & 0.9423 & 2.2182 \\
\bottomrule
\end{tabular}
\begin{tablenotes}
\item[1] Total number of transactions: 1000
\item[2] Minimum support used in mining: 0.1
\item[3] Items correspond to synthetic dataset indices.
\item[4] Optimised RBF length-scale $\ell=6.7078$.
\end{tablenotes}
\end{threeparttable}
\end{table}

\begin{table}[H]
\centering
\scriptsize
\begin{threeparttable}
\caption{Top 10 association rules mined by \textbf{RBF-based GPAR} (ranked in descending order by \textit{\color{red}{confidence}})}
\label{tab:RBF_GPAR_mined_rules_rankedByConfidence_minSupport01_Synthetic1}
\begin{tabular}{r >{\raggedright\arraybackslash}p{6cm} >{\raggedright\arraybackslash}p{3cm} p{1cm} p{1cm} p{1cm}}
\toprule
\# & Antecedent & Consequent & Supp & \color{red}{Conf} & Lift \\
\midrule
39375 & item\_1, item\_2, item\_3, item\_6, item\_7, item\_9 & item\_0, item\_4 & 0.2500 & 2.2727 & 2.1277 \\
\midrule
41942 & item\_0, item\_1, item\_2, item\_4, item\_5, item\_7, item\_8 & item\_9 & 0.2900 & 2.2308 & 2.7462 \\
\midrule
36022 & item\_1, item\_3, item\_4, item\_5, item\_6, item\_9 & item\_8 & 0.3100 & 2.2143 & 2.2513 \\
\midrule
48098 & item\_2, item\_3, item\_4, item\_5, item\_9 & item\_8, item\_6, item\_7 & 0.3300 & 2.2000 & 3.6264 \\
\midrule
24018 & item\_0, item\_1, item\_3, item\_4, item\_8 & item\_2, item\_7 & 0.3600 & 2.1176 & 2.3256 \\
\midrule
38242 & item\_0, item\_8, item\_2, item\_3 & item\_1, item\_4, item\_5, item\_7 & 0.3300 & 2.0625 & 3.4304 \\
\midrule
48123 & item\_2, item\_4, item\_6, item\_8, item\_9 & item\_3, item\_5, item\_7 & 0.3300 & 2.0625 & 3.3537 \\
\midrule
38572 & item\_0, item\_2, item\_3, item\_5, item\_9 & item\_1, item\_4, item\_7 & 0.3200 & 2.0000 & 3.2389 \\
\midrule
50726 & item\_0, item\_2, item\_3, item\_4, item\_6, item\_7, item\_8, item\_9 & item\_1 & 0.3200 & 2.0000 & 2.8520 \\
\midrule
9962 & item\_9, item\_2, item\_4, item\_5 & item\_7 & 0.4100 & 1.9524 & 4.0675 \\
\bottomrule
\end{tabular}
\begin{tablenotes}
\item[1] Total number of transactions: 1000
\item[2] Minimum support used in mining: 0.1
\item[3] Items correspond to synthetic dataset indices.
\item[4] Optimised RBF length-scale $\ell=6.7078$.
\end{tablenotes}
\end{threeparttable}
\end{table}

\begin{table}[H]
\centering
\scriptsize
\begin{threeparttable}
\caption{Top 10 association rules mined by \textbf{RBF-based GPAR} (ranked in descending order by \textit{\color{red}{lift}})}
\label{tab:RBF_GPAR_mined_rules_rankedByLift_minSupport01_Synthetic1}
\begin{tabular}{r >{\raggedright\arraybackslash}p{5cm} >{\raggedright\arraybackslash}p{5cm} p{1cm} p{1cm} p{1cm}}
\toprule
\# & Antecedent & Consequent & Supp & Conf & \color{red}{Lift} \\
\midrule
50401 & item\_0, item\_8, item\_6, item\_9 & item\_1, item\_2, item\_3, item\_4, item\_7 & 0.3200 & 1.6842 & 12.1212 \\
\midrule
49258 & item\_0, item\_2, item\_4 & item\_1, item\_3, item\_5, item\_6, item\_8, item\_9 & 0.2900 & 0.7073 & 8.7879 \\
\midrule
47999 & item\_3, item\_6, item\_7 & item\_2, item\_4, item\_5, item\_8, item\_9 & 0.3300 & 0.9706 & 8.4615 \\
\midrule
48063 & item\_9, item\_3, item\_4, item\_5 & item\_8, item\_2, item\_6, item\_7 & 0.3300 & 1.1379 & 8.0882 \\
\midrule
6035 & item\_8, item\_5 & item\_0, item\_2, item\_7 & 0.4500 & 0.8491 & 8.0357 \\
\midrule
30142 & item\_4, item\_5, item\_6 & item\_0, item\_9, item\_2, item\_3 & 0.3500 & 0.8750 & 7.7778 \\
\midrule
49479 & item\_0, item\_1, item\_3, item\_5, item\_6 & item\_8, item\_9, item\_2, item\_4 & 0.2900 & 1.2083 & 7.6720 \\
\midrule
40321 & item\_0, item\_1, item\_3, item\_8 & item\_9, item\_2, item\_5, item\_6 & 0.3400 & 1.1724 & 7.3913 \\
\midrule
42338 & item\_8, item\_1, item\_5, item\_6 & item\_0, item\_9, item\_2, item\_7 & 0.2800 & 0.7568 & 7.1429 \\
\midrule
25884 & item\_8, item\_1, item\_5 & item\_0, item\_2, item\_4, item\_7 & 0.4100 & 1.5185 & 6.9728 \\
\bottomrule
\end{tabular}
\begin{tablenotes}
\item[1] Total number of transactions: 1000
\item[2] Minimum support used in mining: 0.1
\item[3] Items correspond to synthetic dataset indices.
\item[4] Optimised RBF length-scale $\ell=6.7078$.
\end{tablenotes}
\end{threeparttable}
\end{table}

\begin{table}[H]
\centering
\scriptsize
\begin{threeparttable}
\caption{Top 10 association rules mined by \textbf{RBF-based GPAR} with item 2 on RHS (min support = 0.1, rules ranked by \color{red}{confidence})}
\label{tab:RBF_GPAR_analysis_minSupport01_Synthetic1}
\begin{tabular}{r >{\raggedright\arraybackslash}p{3cm} p{1cm} p{1cm} p{1cm} >{\raggedright\arraybackslash}p{2cm} >{\raggedright\arraybackslash}p{5cm}}
\toprule
\# & Rule & Conf & Supp (prob) & Supp Count & Example Trans. Indices & Example Transactions \\
\midrule
1 & [0, 1, 3, 4, 8] $\to$ [2, 7] & 2.1176 & 0.3600 & 488 & [1, 2, 3] & [0, 1, 2, 3, 4, 5, 6, 7, 8, 9], [0, 1, 2, 3, 4, 5, 6, 7, 8, 9], [0, 1, 2, 3, 4, 5, 6, 7, 8, 9] \\
\midrule
2 & [1, 5, 6, 7, 8] $\to$ [2, 3] & 1.8333 & 0.3300 & 514 & [1, 2, 3] & [0, 1, 2, 3, 4, 5, 6, 7, 8, 9], [0, 1, 2, 3, 4, 5, 6, 7, 8, 9], [0, 1, 2, 3, 4, 5, 6, 7, 8, 9] \\
\midrule
3 & [0, 1, 4, 5, 8] $\to$ [2, 7] & 1.7826 & 0.4100 & 498 & [1, 2, 3] & [0, 1, 2, 3, 4, 5, 6, 7, 8, 9], [0, 1, 2, 3, 4, 5, 6, 7, 8, 9], [0, 1, 2, 3, 4, 5, 6, 7, 8, 9] \\
\midrule
4 & [1, 3, 5, 9] $\to$ [2] & 1.7368 & 0.3300 & 508 & [1, 2, 3] & [0, 1, 2, 3, 4, 5, 6, 7, 8, 9], [0, 1, 2, 3, 4, 5, 6, 7, 8, 9], [0, 1, 2, 3, 4, 5, 6, 7, 8, 9] \\
\midrule
5 & [3, 5, 6, 8, 9] $\to$ [2, 4, 7] & 1.7368 & 0.3300 & 481 & [1, 2, 3] & [0, 1, 2, 3, 4, 5, 6, 7, 8, 9], [0, 1, 2, 3, 4, 5, 6, 7, 8, 9], [0, 1, 2, 3, 4, 5, 6, 7, 8, 9] \\
\midrule
6 & [9, 3, 4, 6] $\to$ [2, 5] & 1.7222 & 0.3100 & 488 & [1, 2, 3] & [0, 1, 2, 3, 4, 5, 6, 7, 8, 9], [0, 1, 2, 3, 4, 5, 6, 7, 8, 9], [0, 1, 2, 3, 4, 5, 6, 7, 8, 9] \\
\midrule
7 & [0, 1, 3, 6, 7, 8, 9] $\to$ [2, 4] & 1.6842 & 0.3200 & 467 & [1, 2, 3] & [0, 1, 2, 3, 4, 5, 6, 7, 8, 9], [0, 1, 2, 3, 4, 5, 6, 7, 8, 9], [0, 1, 2, 3, 4, 5, 6, 7, 8, 9] \\
\midrule
8 & [0, 1, 4, 6, 7, 8] $\to$ [2] & 1.6667 & 0.4000 & 493 & [1, 2, 3] & [0, 1, 2, 3, 4, 5, 6, 7, 8, 9], [0, 1, 2, 3, 4, 5, 6, 7, 8, 9], [0, 1, 2, 3, 4, 5, 6, 7, 8, 9] \\
\midrule
9 & [0, 3, 5, 6, 9] $\to$ [2, 4] & 1.5909 & 0.3500 & 480 & [1, 2, 3] & [0, 1, 2, 3, 4, 5, 6, 7, 8, 9], [0, 1, 2, 3, 4, 5, 6, 7, 8, 9], [0, 1, 2, 3, 4, 5, 6, 7, 8, 9] \\
\midrule
10 & [0, 1, 4, 5, 6, 9] $\to$ [2, 3, 7] & 1.5882 & 0.2700 & 466 & [1, 2, 3] & [0, 1, 2, 3, 4, 5, 6, 7, 8, 9], [0, 1, 2, 3, 4, 5, 6, 7, 8, 9], [0, 1, 2, 3, 4, 5, 6, 7, 8, 9] \\
\bottomrule
\end{tabular}
\begin{tablenotes}
\item[1] Total number of transactions: 1000
\item[2] Minimum support used in mining: 0.1
\item[3] Items correspond to synthetic dataset indices.
\end{tablenotes}
\end{threeparttable}
\end{table}

\subsubsection{GPAR with shifted RBF kernel}

Maximum likelihood optimised length-scale $\ell=6.7080$, shifted distance $d_0=0$, which justifying the fact that GPAR with RBF and shifted RBF kernels produce almost identical results. Kernel shift is not effective after optimisation using this dataset.

\begin{table}[H]
\centering
\scriptsize
\begin{threeparttable}
\caption{Top 10 association rules mined by \textbf{shifted RBF-based GPAR} (ranked in descending order by \textit{\color{red}{support}})}
\label{tab:shifted_RBF_GPAR_mined_rules_rankedBySupport_minSupport01_Synthetic1}
\begin{tabular}{r >{\raggedright\arraybackslash}p{4cm} >{\raggedright\arraybackslash}p{4cm} p{1cm} p{1cm} p{1cm}}
\toprule
\# & Antecedent & Consequent & \color{red}{Supp} & Conf & Lift \\
\midrule
747 & item\_5 & item\_8, item\_6 & 0.5200 & 0.9123 & 2.3392 \\
\midrule
748 & item\_6 & item\_8, item\_5 & 0.5200 & 0.8667 & 2.6289 \\
\midrule
749 & item\_8 & item\_5, item\_6 & 0.5200 & 0.8667 & 2.7778 \\
\midrule
750 & item\_5, item\_6 & item\_8 & 0.5200 & 1.6250 & 2.7013 \\
\midrule
751 & item\_8, item\_5 & item\_6 & 0.5200 & 1.2093 & 2.3810 \\
\midrule
752 & item\_8, item\_6 & item\_5 & 0.5200 & 1.2683 & 2.7368 \\
\midrule
9 & item\_0 & item\_5 & 0.4900 & 0.8305 & 1.7754 \\
\midrule
10 & item\_5 & item\_0 & 0.4900 & 0.9423 & 1.8839 \\
\midrule
57 & item\_3 & item\_8 & 0.4900 & 1.0426 & 1.8148 \\
\midrule
58 & item\_8 & item\_3 & 0.4900 & 0.9423 & 2.2182 \\
\bottomrule
\end{tabular}
\begin{tablenotes}
\item[1] Total number of transactions: 1000
\item[2] Minimum support used in mining: 0.1
\item[3] Items correspond to synthetic dataset indices.
\item[4] Optimised RBF length-scale $\ell=6.7078$.
\end{tablenotes}
\end{threeparttable}
\end{table}

\begin{table}[H]
\centering
\scriptsize
\begin{threeparttable}
\caption{Top 10 association rules mined by \textbf{shifted RBF-based GPAR} (ranked in descending order by \textit{\color{red}{confidence}})}
\label{tab:shifted_RBF_GPAR_mined_rules_rankedByConfidence_minSupport01_Synthetic1}

\begin{tablenotes}
\item[1] Total number of transactions: 1000
\item[2] Minimum support used in mining: 0.1
\end{tablenotes}
\end{threeparttable}
\end{table}

Comparison of the seven algorithms (four GPAR methods with different kernels, Apriori, FP-Growth, Eclat), using the same \textit{Synthetic 1} data, is listed in Tables \ref{tab:synthetic1_comparison_runtime} to \ref{tab:synthetic1_comparison_noRules} and visualized in Fig.\ref{fig:synthetic1_performance_comparison}. Based on the tables and plots, the following conclusions can be drawn: (1) Runtime. GPAR with the neural network kernel is the fastest among the GPAR variants across all minimum support levels, with runtimes as low as 0.133 seconds at a minimum support of 0.1, followed by GPAR with NTK at 0.449 seconds. In contrast, GPAR with RBF and shifted RBF kernels exhibit significantly higher runtimes, ranging from 8.930 to 10.179 seconds at the same threshold, though their runtimes decrease sharply as the minimum support increases. Apriori and Eclat are consistently fast, with average runtimes below 0.635 seconds across all minimum support levels, while FP-Growth is slower, averaging around 2.019 to 2.967 seconds but still outperforming GPAR with RBF and shifted RBF kernels. (2) Memory usage. All GPAR variants (RBF, shifted RBF, Neural Network, NTK) exhibit negligible memory usage across all minimum support levels, consistently at 0.0 MB. Apriori and FP-Growth also show minimal memory usage, with Apriori at 0.0 MB and FP-Growth peaking at 0.063 MB at a minimum support of 0.1. Eclat, however, has higher memory usage at certain thresholds, peaking at 18.149 MB at a minimum support of 0.1 and 7.969 MB at 0.4, though it drops to 0.0 MB at a minimum support of 0.2. (3) Number of frequent itemsets. GPAR with RBF and shifted RBF kernels generates a comparable and high number of frequent itemsets at lower minimum support levels (1013 at 0.1 for both), significantly outperforming GPAR with Neural Network (140) and NTK (305) kernels at the same threshold. Apriori and FP-Growth consistently generate 1023 frequent itemsets across minimum support levels 0.1 to 0.4, dropping to 746 at 0.5, while Eclat consistently produces the highest number at 1033 across all levels. At higher minimum support levels (0.4–0.5), GPAR with neural network and NTK kernels generates very few to no itemsets (0–2), while RBF and shifted RBF kernels generate 65–67 and 1–2, respectively. (4) Number of rules. GPAR with RBF and shifted RBF kernels generates a high number of rules at lower minimum support levels (54036 at 0.1 for both), far exceeding GPAR with Neural Network (322) and NTK (1485) kernels at the same threshold. Apriori, FP-Growth, and Eclat generate the highest number of rules at 57002 for minimum support levels 0.1 to 0.4, with Eclat maintaining this level even at 0.5, while Apriori and FP-Growth drop to 20948 at 0.5.

The choice of kernel in GPAR significantly impacts its performance. In terms of efficiency, GPAR with the neural network kernel is the most efficient in runtime, making it ideal for scenarios where speed is important, followed closely by GPAR with NTK. However, both generate significantly fewer itemsets and rules, which may limit their utility in discovering comprehensive patterns. GPAR with RBF and shifted RBF kernels, while being the same after optimisation, generates a higher number of itemsets and rules, offering greater utility for pattern discovery but at the cost of substantially higher runtimes. Apriori, FP-Growth, and Eclat are generally faster than most GPAR variants and consistently generate more itemsets and rules, with Eclat being particularly robust, maintaining high rule counts across all minimum support thresholds. Memory usage remains low across all algorithms, with Eclat showing occasional spikes at specific thresholds, making it less memory-efficient compared to others in those cases.

An analysis of the association rules generated by the 7 algorithms (GPAR with RBF, shifted RBF, neural network kernel, NTK, Apriori, FP-Growth, and Eclat) on the \textit{Synthetic 1} dataset reveals notable similarities and differences in the mined rules, particularly in their support, confidence, and lift metrics. When ranked by support, GPAR with RBF and shifted RBF kernels produce identical top rules, such as 'item\_5 → item\_8, item\_6' with a support of 0.5200. Apriori and FP-Growth also share identical top rules by support, such as 'item\_7 → item\_5' with a support of 0.6620, reflecting their algorithmic consistency in identifying frequent itemsets, though these rules differ from GPAR's due to their distinct approaches to pattern discovery. Eclat, while also identifying high-support rules involving items 0–8, focuses on larger itemsets, such as 'item\_6, item\_2, ..., item\_7 → item\_4' with the same support of 0.6620. When ranked by confidence, Apriori, FP-Growth, and Eclat consistently prioritize rules predicting 'item\_7' (e.g. confidence of 0.9981 for Apriori and FP-Growth), suggesting a strong association with item\_7 across these methods, whereas GPAR variants with different kernels highlight diverse consequents, such as 'item\_0, item\_4' for RBF/shifted RBF (confidence 2.2727) and 'item\_7' for NTK (confidence 5.5000). Rules ranked by lift further highlight algorithmic differences: GPAR with RBF/shifted RBF achieves the highest lift (12.1212 for 'item\_0, item\_8, item\_6, item\_9 → item\_1, item\_2, item\_3, item\_4, item\_7'), indicating strong associations in complex itemsets, while NTK prioritizes smaller itemsets with high lift (e.g. 8.5470 for 'item\_8, item\_1, item\_4 → item\_2, item\_5, item\_6'). For rules with item\_2 on the RHS, GPAR with NTK emphasizes high-confidence rules involving item\_8 (e.g. confidence 2.5000 for '[8, 1, 4, 6] → [2, 5]'), while Apriori, FP-Growth, and Eclat consistently identify rules with item\_6 and item\_1 (confidence 0.9855), reflecting a shared focus on these items. Overall, while traditional algorithms like Apriori, FP-Growth, and Eclat exhibit high consistency in rule patterns, GPAR's kernel choice significantly influences the diversity and strength of the associations uncovered, with NTK favoring high-confidence predictions and RBF/shifted RBF excelling in identifying high-lift, complex patterns.

\begin{table}[H]
\centering
\scriptsize
\begin{threeparttable}
\caption{Runtime performance (in \textit{seconds}) of different AR mining algorithms (\textit{Synthetic 1})}
\label{tab:synthetic1_comparison_runtime}
\begin{tabular}{p{1cm} p{1.5cm} p{1.5cm} p{1.5cm} p{1.5cm} p{1.5cm} p{1.5cm} p{1.5cm}}
\toprule
Min Support & GPAR (RBF) & GPAR (shifted RBF) & GPAR (neural net) & GPAR (NTK) & Apriori (AVG) & FP-Growth (AVG) & Eclat (AVG) \\
\midrule
0.1 & 10.179 & 8.930 & 0.133 & 0.449 & 0.483 & 2.967 & 0.556 \\
0.2 & 8.448 & 8.697 & 0.046 & 0.077 & 0.505 & 2.734 & 0.555 \\
0.3 & 3.295 & 2.658 & 0.017 & 0.030 & 0.565 & 2.622 & 0.575 \\
0.4 & 0.215 & 0.174 & 0.002 & 0.016 & 0.570 & 2.700 & 0.558 \\
0.5 & 0.013 & 0.034 & 0.005 & 0.003 & 0.211 & 2.019 & 0.635 \\
\bottomrule
\end{tabular}
\begin{tablenotes}
\item[1] Total number of transactions: 1000.
\item[2] Runtimes are measured in seconds.
\item[3] Minimum support values range from 0.1 to 0.5.
\end{tablenotes}
\end{threeparttable}
\end{table}

\begin{table}[H]
\centering
\scriptsize
\begin{threeparttable}
\caption{Memory usage (in \textit{MB}) of different AR mining algorithms (\textit{Synthetic 1})}
\label{tab:synthetic1_comparison_memory}
\begin{tabular}{p{1cm} p{1.5cm} p{1.5cm} p{1.5cm} p{1.5cm} p{1.5cm} p{1.5cm} p{1.5cm}}
\toprule
Min Support & GPAR (RBF) & GPAR (shifted RBF) & GPAR (neural net) & GPAR (NTK) & Apriori (AVG) & FP-Growth (AVG) & Eclat (AVG) \\
\midrule
0.1 & 0.0 & 0.0 & 0.0 & 0.0 & 0.0 & 0.063 & 18.149 \\
0.2 & 0.0 & 0.0 & 0.0 & 0.0 & 0.0 & 0.0 & 0.0 \\
0.3 & 0.0 & 0.0 & 0.0 & 0.0 & 0.0 & 0.0 & 1.438 \\
0.4 & 0.0 & 0.0 & 0.0 & 0.0 & 0.0 & 0.0 & 7.969 \\
0.5 & 0.0 & 0.0 & 0.0 & 0.0 & 0.0 & 0.0 & 0.156 \\
\bottomrule
\end{tabular}
\begin{tablenotes}
\item[1] Total number of transactions: 1000.
\item[2] Memory usage is measured in megabytes (MB).
\item[3] Minimum support values range from 0.1 to 0.5.
\end{tablenotes}
\end{threeparttable}
\end{table}

\begin{table}[H]
\centering
\scriptsize
\begin{threeparttable}
\caption{No. of frequent itemsets generated by different AR mining algorithms (\textit{Synthetic 1})}
\label{tab:synthetic1_comparison_frequentItemsets}
\begin{tabular}{p{1cm} p{1.5cm} p{1.5cm} p{1.5cm} p{1.5cm} p{1.5cm} p{1.5cm} p{1.5cm}}
\toprule
Min Support & GPAR (RBF) & GPAR (shifted RBF) & GPAR (neural net) & GPAR (NTK) & Apriori (AVG) & FP-Growth (AVG) & Eclat (AVG) \\
\midrule
0.1 & 1013 & 1013 & 140 & 305 & 1023 & 1023 & 1033 \\
0.2 & 981 & 982 & 43 & 67 & 1023 & 1023 & 1033 \\
0.3 & 533 & 536 & 9 & 11 & 1023 & 1023 & 1033 \\
0.4 & 67 & 65 & 0 & 2 & 1023 & 1023 & 1033 \\
0.5 & 1 & 2 & 0 & 0 & 746 & 746 & 1033 \\
\bottomrule
\end{tabular}
\begin{tablenotes}
\item[1] Total number of transactions: 1000.
\item[2] Numbers are counts of frequent itemsets generated.
\item[3] Minimum support values range from 0.1 to 0.5.
\end{tablenotes}
\end{threeparttable}
\end{table}

\begin{table}[H]
\centering
\scriptsize
\begin{threeparttable}
\caption{No. of rules generated by different AR mining algorithms (\textit{Synthetic 1})}
\label{tab:synthetic1_comparison_noRules}
\begin{tabular}{p{1cm} p{1.5cm} p{1.5cm} p{1.5cm} p{1.5cm} p{1.5cm} p{1.5cm} p{1.5cm}}
\toprule
Min Support & GPAR (RBF) & GPAR (shifted RBF) & GPAR (neural net) & GPAR (NTK) & Apriori (AVG) & FP-Growth (AVG) & Eclat (AVG) \\
\midrule
0.1 & 54036 & 54036 & 322 & 1485 & 57002 & 57002 & 57002 \\
0.2 & 51479 & 51328 & 59 & 159 & 57002 & 57002 & 57002 \\
0.3 & 19251 & 16067 & 18 & 26 & 57002 & 57002 & 57002 \\
0.4 & 782 & 530 & 0 & 4 & 57002 & 57002 & 57002 \\
0.5 & 2 & 8 & 0 & 0 & 20948 & 20948 & 57002 \\
\bottomrule
\end{tabular}
\begin{tablenotes}
\item[1] Total number of transactions: 1000.
\item[2] Rules are counts of association rules generated.
\item[3] Minimum support values range from 0.1 to 0.5.
\end{tablenotes}
\end{threeparttable}
\end{table}

\begin{figure}[H]
\centering
\includegraphics[width=1.0\textwidth]{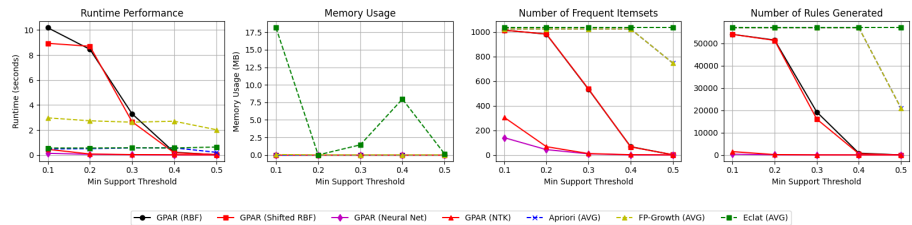}
\caption{Compare the performances of the 7 algorithms.}
\label{fig:synthetic1_performance_comparison}
\end{figure}

\subsection{Synthetic data 2}

\paragraph{Data generation} We aim to construct a controlled, interpretable dataset that emulates real-world transactional patterns while facilitating a fair evaluation of association rule (AR) mining algorithms. The synthetic dataset consists of two matrices: a binary transaction matrix 
$\mathcal{T}_{1000 \times 15}$, which records the presence of 15 items across 1,000 transactions, and a feature matrix $X_{15 \times 10}$, which characterizes each of the 15 items with 10 features. These features represent attributes such as shape, size, price, color, ingredients, manufacturing characteristics, and other relevant properties, encoded as distinct numeric values. The 15 items are generated from 3 Gaussian clusters in the feature space (see Fig.\ref{fig:synthetic2_3clusters}):
\begin{itemize}
    \item Cluster 1 (food, items 0-4): centroid at $\boldsymbol{\mu_1} = [0, 0, \dots, 0]$.
    \item Cluster 2 (drinks, items 5-9): centroid at $\boldsymbol{\mu_2} = [5, 0, \dots, 0]$.
    \item Cluster 3 (daily consumables, items 10-14): centroid at $\boldsymbol{\mu_3} = [20, 20, \dots, 20]$.
\end{itemize}
reflecting different goods categories in a supermarket: food (items 0-4, e.g. bread, cereal), drinks (items 5-9, e.g. water, milk, cola, beer), and daily consumables (items 10-14, e.g. toilet paper, diapers). This clustering ensures that items within the same category exhibit greater similarity in their feature space.  Inter-cluster distances are carefully controlled \footnote{The distance between Cluster 1 and Cluster 2 is: $\sqrt{(5 - 0)^2 + 0 + \dots + 0} = 5$. The distance between Cluster 2 and Cluster 3 is: $\sqrt{(20 - 5)^2 + 20^2 + \dots + 20^2} \approx 61.85$.}: the centroids of the food and drinks clusters are positioned close to each other (distance of 5 units), reflecting their semantic proximity as consumable goods, while the daily consumables cluster is placed farther away (distance $\sim$63 units), emphasizing its distinct category.

This time, transactions are generated using a stratified random sampling approach to ensure diversity in purchasing patterns. Specifically, each transaction contains 2 to 6 items, sampled from 2 or 3 clusters, guaranteeing that items from different categories can be frequently purchased together, ensuring diversity and co-occurrence of complementary products, and enabling the discovery of meaningful cross-category purchasing patterns. This method avoids the bias toward same-cluster items that would result from feature similarity-based sampling, instead promoting cross-cluster co-occurrences that mirror realistic shopping behaviors, such as purchasing food and drinks, beer and diapers together. 

\begin{figure}
    \centering
    \includegraphics[width=0.5\linewidth]{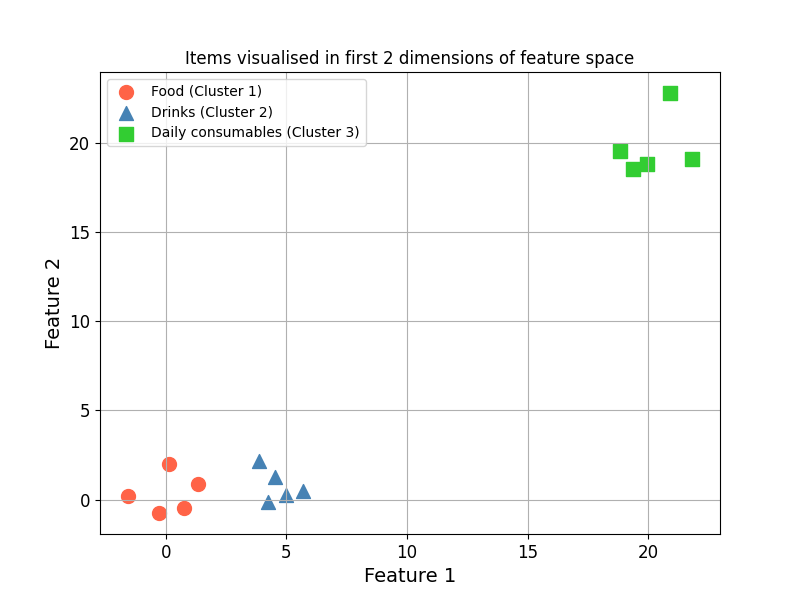}
    \caption{3 Gaussian clusters in 2D feature space for item generation for Synthetic 2 dataset.}
    \label{fig:synthetic2_3clusters}
\end{figure}

\subsubsection{GPAR with RBF kernel} 
The maximum likelihood estimated RBF length scale $\ell=6.7078$.

\begin{table}[H]
\centering
\scriptsize
\begin{threeparttable}
\caption{Top 10 association rules mined by \textbf{RBF-based GPAR} (ranked in descending order by \textit{\color{red}{support}})}
\label{tab:RBF_GPAR_mined_rules_rankedBySupport_minSupport01_Synthetic2}

\begin{tablenotes}
\item[1] Total number of transactions: 1000
\item[2] Minimum support used in mining: 0.1
\end{tablenotes}
\end{threeparttable}
\end{table}

Comparison of the 4 algorithms (RBF-based GPAR, Apriori, FP-Growth, and Eclat) on the Synthetic 2 dataset is detailed in Tables \ref{tab:synthetic2_comparison_runtime} to \ref{tab:synthetic2_comparison_noRules} and illustrated in Fig.\ref{fig:synthetic2_performance_comparison_Synthetic2}. There are some distinct differences in their performances: (1) Runtime: Apriori, FP-Growth, and Eclat demonstrate superior runtime efficiency compared to RBF-based GPAR across all minimum support levels. Apriori achieves the fastest runtimes, averaging 0.004 seconds, followed by FP-Growth at 0.020 seconds, and Eclat at 0.095 seconds. In contrast, RBF-based GPAR exhibits significantly higher runtimes, ranging from 231.274 seconds at a minimum support of 0.1 to 1.565 seconds at 0.3, reflecting its computational intensity in inference time. (2) Memory usage: RBF-based GPAR shows substantial memory usage at a minimum support of 0.1 (276.090 MB), dropping to 0.125 MB at 0.15 and 0.0 MB at higher thresholds. Apriori, FP-Growth, and Eclat consistently exhibit negligible memory usage (0.0 MB) across all levels, making them highly memory-efficient. (3) Number of frequent itemsets: RBF-based GPAR generates a significantly higher number of frequent itemsets at lower minimum support levels (10215 at 0.1), decreasing to 157 at 0.3. Apriori and FP-Growth produce fewer itemsets (24 at 0.1, dropping to 1 at 0.3), while Eclat generates slightly more (40 at 0.1, dropping to 2 at 0.3). (4) Number of rules: RBF-based GPAR generates an exceptionally high number of rules at a minimum support of 0.1 (1200793), decreasing to 1230 at 0.3. In contrast, Apriori and FP-Growth produce far fewer rules (18 at 0.1, dropping to 0 at higher thresholds), while Eclat generates 24 rules at 0.1, also dropping to 0 at higher thresholds.

A key insight from the \textit{Synthetic 2} dataset is the unique capability of RBF-based GPAR to identify association rules with high confidence and support, even in the absence of supporting transactions (e.g. rules like '[0, 3, 4, 6, 8, 12, 13] → [2, 11]' with confidence 13.0000 and support 0.1300, despite a support count of 0). This ability arises from GPAR's kernel-based approach, which leverages item feature similarities in the feature matrix $X_{15 \times 10}$ to infer latent patterns not present in the transaction matrix $\mathcal{T}_{1000 \times 15}$. Traditional methods like Apriori, FP-Growth, and Eclat, which rely solely on observed transactional data, produce fewer rules with lower confidence (e.g. Apriori's top rule '[item\_1] → [item\_2]' with confidence 0.4055). When ranked by lift, RBF-based GPAR achieves exceptionally high values (e.g. 142.8571 for '[item\_2, item\_7, item\_10, item\_12, item\_13] → [item\_9, item\_11, item\_4, item\_14]'), uncovering strong, complex associations, while Apriori, FP-Growth, and Eclat focus on simpler patterns with lower lift (e.g. 1.4632 for '[item\_11] → [item\_12]'). However, the computational cost of RBF-based GPAR, in terms of runtime and memory usage, makes it less practical for applications prioritizing efficiency, where Apriori, FP-Growth, and Eclat offer a better balance of performance and resource usage.

\begin{table}[H]
\centering
\scriptsize
\begin{threeparttable}
\caption{Runtime performance (in \textit{seconds}) of different AR mining algorithms (\textit{Synthetic 2})}
\label{tab:synthetic2_comparison_runtime}
\begin{tabular}{p{1.5cm} p{1.5cm} p{1.5cm} p{1.5cm} p{1.5cm}}
\toprule
Min Support & GPAR (RBF) & Apriori & FP-Growth & Eclat \\
\midrule
0.1  & 231.274 & 0.006 & 0.046 & 0.087 \\
0.15 & 24.032  & 0.004 & 0.014 & 0.092 \\
0.2  & 7.002   & 0.003 & 0.017 & 0.104 \\
0.25 & 3.066   & 0.003 & 0.014 & 0.089 \\
0.3  & 1.565   & 0.004 & 0.009 & 0.104 \\
\bottomrule
\end{tabular}
\begin{tablenotes}
\item[1] Total number of transactions: 1000.
\item[2] Runtimes are measured in seconds.
\item[3] Minimum support values range from 0.1 to 0.3.
\end{tablenotes}
\end{threeparttable}
\end{table}

\begin{table}[H]
\centering
\scriptsize
\begin{threeparttable}
\caption{Memory usage (in \textit{MB}) of different AR mining algorithms (\textit{Synthetic 2})}
\label{tab:synthetic2_comparison_memory}
\begin{tabular}{p{1.5cm} p{1.5cm} p{1.5cm} p{1.5cm} p{1.5cm}}
\toprule
Min Support & GPAR (RBF) & Apriori & FP-Growth & Eclat \\
\midrule
0.1  & 276.090 & 0.0 & 0.0 & 0.0 \\
0.15 & 0.125   & 0.0 & 0.0 & 0.0 \\
0.2  & 0.0     & 0.0 & 0.0 & 0.0 \\
0.25 & 0.0     & 0.0 & 0.0 & 0.0 \\
0.3  & 0.0     & 0.0 & 0.0 & 0.0 \\
\bottomrule
\end{tabular}
\begin{tablenotes}
\item[1] Total number of transactions: 1000.
\item[2] Memory usage is measured in megabytes (MB).
\item[3] Minimum support values range from 0.1 to 0.3.
\end{tablenotes}
\end{threeparttable}
\end{table}

\begin{table}[H]
\centering
\scriptsize
\begin{threeparttable}
\caption{No. of frequent itemsets generated by different AR mining algorithms (\textit{Synthetic 2})}
\label{tab:synthetic2_comparison_frequentItemsets}
\begin{tabular}{p{1.5cm} p{1.5cm} p{1.5cm} p{1.5cm} p{1.5cm}}
\toprule
Min Support & GPAR (RBF) & Apriori & FP-Growth & Eclat \\
\midrule
0.1  & 10215 & 24 & 24 & 40 \\
0.15 & 2779  & 15 & 15 & 30 \\
0.2  & 992   & 15 & 15 & 30 \\
0.25 & 368   & 14 & 14 & 28 \\
0.3  & 157   & 1  & 1  & 2  \\
\bottomrule
\end{tabular}
\begin{tablenotes}
\item[1] Total number of transactions: 1000.
\item[2] Numbers are counts of frequent itemsets generated.
\item[3] Minimum support values range from 0.1 to 0.3.
\end{tablenotes}
\end{threeparttable}
\end{table}

\begin{table}[H]
\centering
\scriptsize
\begin{threeparttable}
\caption{No. of rules generated by different AR mining algorithms (\textit{Synthetic 2})}
\label{tab:synthetic2_comparison_noRules}
\begin{tabular}{p{1.5cm} p{1.5cm} p{1.5cm} p{1.5cm} p{1.5cm}}
\toprule
Min Support & GPAR (RBF) & Apriori & FP-Growth & Eclat \\
\midrule
0.1  & 1200793 & 18 & 18 & 24 \\
0.15 & 123415  & 0  & 0  & 0  \\
0.2  & 26228   & 0  & 0  & 0  \\
0.25 & 5620    & 0  & 0  & 0  \\
0.3  & 1230    & 0  & 0  & 0  \\
\bottomrule
\end{tabular}
\begin{tablenotes}
\item[1] Total number of transactions: 1000.
\item[2] Rules are counts of association rules generated.
\item[3] Minimum support values range from 0.1 to 0.3.
\end{tablenotes}
\end{threeparttable}
\end{table}

\begin{figure}[H]
\centering
\includegraphics[width=0.9\textwidth]{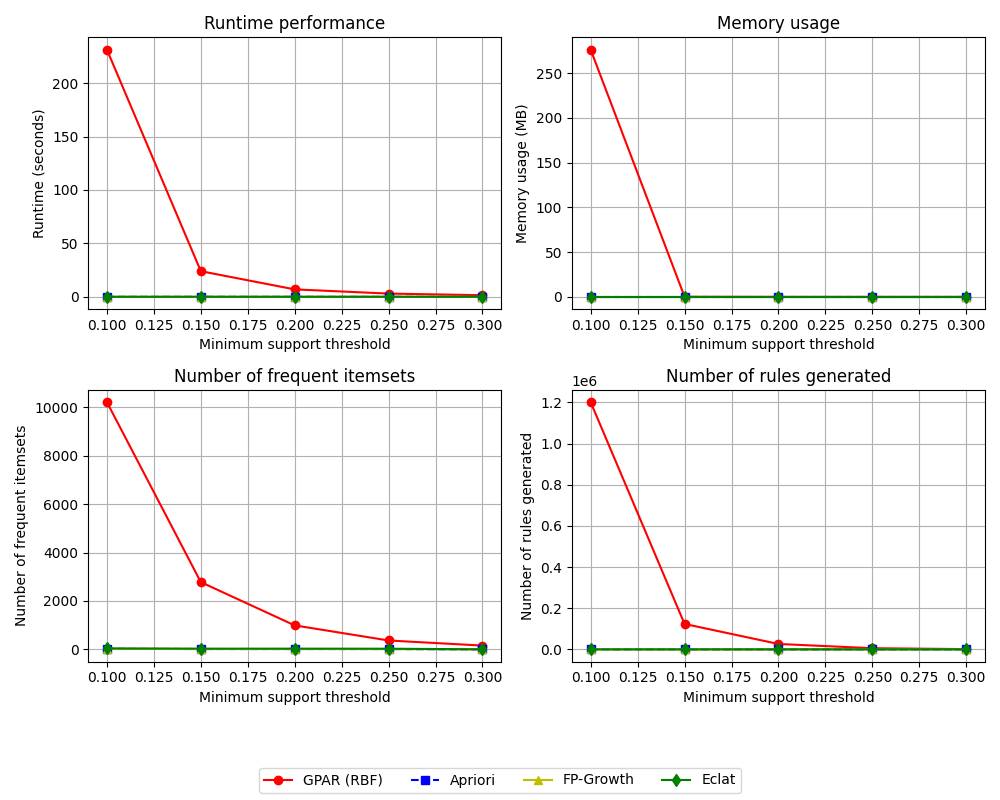}
\caption{Compare the performances of the 4 algorithms (\textit{Synthetic 2}).}
\label{fig:synthetic2_performance_comparison_Synthetic2}
\end{figure}

\section{The \textit{UK Accident} data tests} \label{app:accident_data_tests}
Here we evaluate the performances of the 4 AR mining methods i.e. RBF-based GPAR, Apriori, FP-Growth, Eclat, using a real-world dataset. 

\subsection{Data source}
This study utilizes a real-world dataset of road accident records sourced from the UK government's public data repository \footnote{The dataset is also aggregated and available on Kaggle: \url{https://www.kaggle.com/datasets/silicon99/dft-accident-data/data}.}, specifically the \textit{Road Safety Data} collection maintained by the Department for Transport \cite{road_safety_data}. Known as \textit{STATS19} \footnote{Over the years, the \textit{STATS19} specification has evolved through periodic reviews (e.g. 2005 and 2011 specifications), improving data accuracy (such as increasing coordinate precision from 10 meters to 1 meter) and adapting to new road safety needs, as outlined in the DfT's guidance \cite{stats19_review_2018,stats20_guidance_2005}.}, this dataset has been collected by UK police forces since 1979, capturing detailed information on road accidents involving personal injury. The data used in this experiment spans from 2005 to 2015, covering a total of 1,048,575 records and 32 columns. Each record includes key attributes such as the accident's location (\textit{Longitude}, \textit{Latitude}), date and time (\textit{Date}, \textit{Time}, \textit{Day\_of\_Week}), 
severity (\textit{Accident\_Severity}), number of vehicles and casualties involved (\textit{Number\_of\_Vehicles}, \textit{Number\_of\_Casualties}), 
road conditions (\textit{Road\_Type}, \textit{Speed\_limit}, \textit{Weather\_Conditions}), and additional contextual details (\textit{Junction\_Detail}, \textit{Urban\_or\_Rural\_Area}). Meanings of these variables can be found on \href{https://data.dft.gov.uk/road-accidents-safety-data/Understanding-historical-road-safety-data.docx}{the government public dataset web}.

\subsection{Data pre-processing}
To ensure computational feasibility for GPAR's Monte Carlo sampling-based, marginalised posterior probability estimation, we reduce its size while preserving its representativeness for pattern mining. The dataset was first filtered to include only fatal (\textit{Accident\_Severity} = 1) and serious (\textit{Accident\_Severity} = 2) accidents, focusing on significant incidents likely to yield critical patterns. Non-informative feature such as \textit{Accident\_Index column} was removed, resulting in an intermediate dataset, `\textit{Accidents\_fatal\_serious.csv}`, with 5,231 records and 31 columns, which is used in Section.\ref{app:GPAR_RBF_kernel_results}. In a brief feature engineering practice, time information, e.g. \textit{Hour\_of\_Day}, was extracted as a feature. No missing values found in the dataset.

\subsection{Feature and transaction matrix construction}
The pre-processed dataset was partitioned into two components tailored to comply with GPAR's working mechanism: a feature matrix for GP modeling and a transaction record matrix for itemset mining. Seven categorical columns were selected as \textbf{items} for mining due to their discrete nature and relevance to accident patterns: 
\begin{itemize}
    \item \textit{Accident\_Severity}
    \item \textit{Road\_Type}
    \item \textit{Weather\_Conditions}
    \item \textit{Light\_Conditions}
    \item \textit{Road\_Surface\_Conditions}
    \item \textit{Junction\_Detail}
    \item \textit{Urban\_or\_Rural\_Area}
\end{itemize}

These columns generated 39 unique items for `\textit{Accidents\_fatal\_serious.csv}`, e.g. `\textit{Accident\_Severity}=2`, `\textit{Road\_Type}=6`), as each unique value in a column was encoded as a \textit{distinct} item. The transaction matrix $\mathcal{T} \in \{0, 1\}^{N \times M}$, where $N$ is the number of accidents (rows) and $M$ is the number of items (columns), was constructed by setting $\mathcal{T}_{i,j} = 1$ if the $j$-th item appeared in the $i$-th accident record, and $\mathcal{T}_{i,j} = 0$ otherwise. Each row in $\mathcal{T}_{N \times M}$ represents an accident (similar to a transaction record) recording the appearance of all $M$ items. In our case, we have $\mathcal{T}_{5231 \times 39}$.

For GPAR's GP modeling, 10 columns were selected as input features (also termed input indices) due to their continuous or ordinal nature and relevance to spatial, temporal, and contextual relationships: 
\begin{itemize}
    \item \textit{Longitude}
    \item \textit{Latitude}
    \item \textit{Speed\_limit}
    \item \textit{Number\_of\_Vehicles}
    \item \textit{Day\_of\_Week}
    \item \textit{Hour\_of\_Day}
    \item \textit{Junction\_Control}
    \item \textit{Pedestrian\_Crossing-Human\_Control}
    \item \textit{Pedestrian\_Crossing-Physical\_Facilities}
    \item \textit{1st\_Road\_Class}
\end{itemize}

For each item, a feature vector was computed as the mean of the feature values across all records containing that item, resulting in a feature matrix $X \in \mathbb{R}^{M \times d}$, where $d = 10$ is the number of features. This matrix enables GPAR to model probabilistic relationships between items by leveraging feature similarities, a key distinction from traditional AR mining approaches.

\subsection{RBF-based GPAR \textit{vs} Apriori, FP-Growth, Eclat} \label{app:GPAR_RBF_kernel_results}

The maximum likelihood estimated length scale $\ell=0.2703$. 

\begin{table}[H]
\centering
\scriptsize
\begin{threeparttable}
\caption{Top 10 association rules mined by \textbf{RBF based GPAR} (ranked in descending order by \textit{\color{red}{support}})}
\label{tab:RBF_GPAR_mined_rules_rankedBySupport_minSupport01_Accident}
\begin{tabular}{r >{\raggedright\arraybackslash}p{4cm} >{\raggedright\arraybackslash}p{4cm} p{1cm} p{1cm} p{1cm}}
\toprule
\# & Antecedent & Consequent & Supp & Conf & \color{red}{Lift} \\
\midrule
6 & Accident\_Severity=2 & Weather\_Conditions=1 & 0.4610 & 0.9111 & 1.7182 \\
\midrule
7 & Weather\_Conditions=1 & Accident\_Severity=2 & 0.4610 & 0.9039 & 1.8853 \\
\midrule
25 & Accident\_Severity=2 & Road\_Surface\_Conditions=1 & 0.3500 & 0.6744 & 1.4373 \\
\midrule
26 & Road\_Surface\_Conditions=1 & Accident\_Severity=2 & 0.3500 & 0.6518 & 1.4758 \\
\midrule
4 & Accident\_Severity=2 & Weather\_Conditions=2 & 0.3170 & 0.6084 & 1.2382 \\
\midrule
5 & Weather\_Conditions=2 & Accident\_Severity=2 & 0.3170 & 0.6353 & 1.2315 \\
\midrule
294 & Weather\_Conditions=2 & Weather\_Conditions=1 & 0.3020 & 0.5603 & 1.1896 \\
\midrule
295 & Weather\_Conditions=1 & Weather\_Conditions=2 & 0.3020 & 0.5945 & 1.2662 \\
\midrule
1255 & Accident\_Severity=2 & Weather\_Conditions=1, Road\_Surface\_Conditions=1 & 0.2900 & 0.5765 & 1.9608 \\
\midrule
1256 & Weather\_Conditions=1 & Accident\_Severity=2, Road\_Surface\_Conditions=1 & 0.2900 & 0.5967 & 1.6951 \\
\bottomrule
\end{tabular}
\begin{tablenotes}
\item[1] Total number of accident records: 5231
\item[2] Minimum support used in mining: 0.1
\item[3] Code explanation see later table.
\end{tablenotes}
\end{threeparttable}
\end{table}

\begin{table}[H]
\centering
\scriptsize
\begin{threeparttable}
\caption{Top 10 association rules mined by \textbf{RBF based GPAR} (ranked in descending order by \textit{\color{red}{confidence}})}
\label{tab:RBF_GPAR_mined_rules_rankedByConfidence_minSupport01_Accident}
\begin{tabular}{r >{\raggedright\arraybackslash}p{4cm} >{\raggedright\arraybackslash}p{4cm} p{1cm} p{1cm} p{1cm}}
\toprule
\# & Antecedent & Consequent & Supp & \color{red}{Conf} & Lift \\
\midrule
1 & Accident\_Severity=2, Weather\_Conditions=8, Road\_Type=7 & Weather\_Conditions=1 & 0.1220 & 1.2079 & 1.9717 \\
\midrule
2 & Weather\_Conditions=1, Road\_Surface\_Conditions=2, Weather\_Conditions=7 & Accident\_Severity=2 & 0.1100 & 1.1828 & 1.7332 \\
\midrule
3 & Accident\_Severity=2, Road\_Surface\_Conditions=2, Junction\_Detail=6 & Weather\_Conditions=1 & 0.1240 & 1.1810 & 1.7018 \\
\midrule
4 & Weather\_Conditions=1, Weather\_Conditions=8, Weather\_Conditions=3 & Accident\_Severity=2 & 0.1180 & 1.1800 & 1.8922 \\
\midrule
5 & Accident\_Severity=2, Light\_Conditions=1, Weather\_Conditions=4 & Weather\_Conditions=1 & 0.1140 & 1.1753 & 1.8352 \\
\midrule
6 & Junction\_Detail=7, Weather\_Conditions=1, Road\_Surface\_Conditions=1 & Accident\_Severity=2 & 0.1630 & 1.1727 & 2.0757 \\
\midrule
7 & Accident\_Severity=2, Accident\_Severity=1, Weather\_Conditions=9 & Weather\_Conditions=1 & 0.1040 & 1.1685 & 1.7576 \\
\midrule
8 & Accident\_Severity=2, Road\_Type=2, Junction\_Detail=3 & Weather\_Conditions=1 & 0.1190 & 1.1553 & 1.5351 \\
\midrule
9 & Accident\_Severity=2, Weather\_Conditions=6, Light\_Conditions=6 & Weather\_Conditions=1 & 0.1150 & 1.1500 & 1.7651 \\
\midrule
10 & Accident\_Severity=2, Road\_Type=3, Junction\_Detail=6 & Weather\_Conditions=1 & 0.1440 & 1.1429 & 1.9084 \\
\bottomrule
\end{tabular}
\begin{tablenotes}
\item[1] Total number of accident records: 5231
\item[2] Minimum support used in mining: 0.1
\item[3] Code explanation see later table.
\end{tablenotes}
\end{threeparttable}
\end{table}

\begin{table}[H]
\centering
\scriptsize
\begin{threeparttable}
\caption{Top 10 association rules mined by \textbf{RBF based GPAR} (ranked in descending order by \textit{\color{red}{lift}})}
\label{tab:RBF_GPAR_mined_rules_rankedByLift_minSupport01_Accident}
\begin{tabular}{r >{\raggedright\arraybackslash}p{4cm} >{\raggedright\arraybackslash}p{4cm} p{1cm} p{1cm} p{1cm}}
\toprule
\# & Antecedent & Consequent & Supp & Conf & \color{red}{Lift} \\
\midrule
1 & Accident\_Severity=2, Weather\_Conditions=2, Junction\_Detail=3 & Weather\_Conditions=1, Road\_Surface\_Conditions=1 & 0.1000 & 0.6173 & 2.7236 \\
\midrule
2 & Weather\_Conditions=1, Light\_Conditions=7, Light\_Conditions=6 & Accident\_Severity=2 & 0.1300 & 0.9701 & 2.6912 \\
\midrule
3 & Weather\_Conditions=1, Road\_Type=3 & Accident\_Severity=2, Junction\_Detail=6 & 0.1440 & 0.5926 & 2.6098 \\
\midrule
4 & Weather\_Conditions=2, Weather\_Conditions=1, Weather\_Conditions=5, Road\_Surface\_Conditions=1 & Accident\_Severity=2 & 0.1010 & 1.0745 & 2.6054 \\
\midrule
5 & Accident\_Severity=2, Junction\_Detail=3, Urban\_or\_Rural\_Area=2 & Weather\_Conditions=1 & 0.1230 & 0.9111 & 2.5826 \\
\midrule
6 & Weather\_Conditions=1, Weather\_Conditions=8, Road\_Surface\_Conditions=1 & Accident\_Severity=2, Weather\_Conditions=2 & 0.1040 & 0.6887 & 2.5520 \\
\midrule
7 & Weather\_Conditions=1, Junction\_Detail=6 & Accident\_Severity=2, Road\_Type=3 & 0.1440 & 0.6457 & 2.5273 \\
\midrule
8 & Weather\_Conditions=1, Road\_Surface\_Conditions=5, Road\_Type=2 & Accident\_Severity=2 & 0.1220 & 0.9531 & 2.4792 \\
\midrule
9 & Accident\_Severity=2, Weather\_Conditions=8 & Weather\_Conditions=1, Road\_Type=7 & 0.1220 & 0.5083 & 2.4646 \\
\midrule
10 & Weather\_Conditions=1, Weather\_Conditions=5, Road\_Surface\_Conditions=1 & Accident\_Severity=2, Weather\_Conditions=2 & 0.1010 & 0.8347 & 2.4631 \\
\bottomrule
\end{tabular}
\begin{tablenotes}
\item[1] Total number of accident records: 5231
\item[2] Minimum support used in mining: 0.1
\item[3] Code explanation see later table.
\end{tablenotes}
\end{threeparttable}
\end{table}

\begin{table}[H]
\centering
\scriptsize
\begin{threeparttable}
\caption{Top 10 association rules mined by \textbf{Apriori} (ranked in descending order by \textit{\color{red}{support}})}
\label{tab:Apriori_mined_rules_rankedBySupport_minSupport01_Accident}
\begin{tabular}{r >{\raggedright\arraybackslash}p{4cm} >{\raggedright\arraybackslash}p{4cm} p{1cm} p{1cm} p{1cm}}
\toprule
\# & Antecedent & Consequent & \color{red}{Supp} & Conf & Lift \\
\midrule
1 & Weather\_Conditions=1 & Accident\_Severity=2 & 0.7985 & 0.9280 & 1.0005 \\
\midrule
2 & Accident\_Severity=2 & Weather\_Conditions=1 & 0.7985 & 0.8609 & 1.0005 \\
\midrule
3 & Accident\_Severity=2 & Urban\_or\_Rural\_Area=1 & 0.7790 & 0.8398 & 1.0123 \\
\midrule
4 & Urban\_or\_Rural\_Area=1 & Accident\_Severity=2 & 0.7790 & 0.9389 & 1.0123 \\
\midrule
5 & Road\_Surface\_Conditions=1 & Weather\_Conditions=1 & 0.7514 & 0.9884 & 1.1488 \\
\midrule
6 & Weather\_Conditions=1 & Road\_Surface\_Conditions=1 & 0.7514 & 0.8733 & 1.1488 \\
\midrule
7 & Road\_Type=6 & Accident\_Severity=2 & 0.7325 & 0.9308 & 1.0035 \\
\midrule
8 & Accident\_Severity=2 & Road\_Type=6 & 0.7325 & 0.7897 & 1.0035 \\
\midrule
9 & Weather\_Conditions=1 & Urban\_or\_Rural\_Area=1 & 0.7197 & 0.8364 & 1.0082 \\
\midrule
10 & Urban\_or\_Rural\_Area=1 & Weather\_Conditions=1 & 0.7197 & 0.8675 & 1.0082 \\
\bottomrule
\end{tabular}
\begin{tablenotes}
\item[1] Total number of accident records: 5231
\item[2] Minimum support used in mining: 0.1
\item[3] Code explanation see later table.
\end{tablenotes}
\end{threeparttable}
\end{table}

\begin{table}[H]
\centering
\scriptsize
\begin{threeparttable}
\caption{Top 10 association rules mined by \textbf{Apriori} (ranked in descending order by \textit{\color{red}{confidence}})}
\label{tab:Apriori_mined_rules_rankedByConfidence_minSupport01_Accident}
\begin{tabular}{r >{\raggedright\arraybackslash}p{4cm} >{\raggedright\arraybackslash}p{4cm} p{1cm} p{1cm} p{1cm}}
\toprule
\# & Antecedent & Consequent & Supp & \color{red}{Conf} & Lift \\
\midrule
1 & Road\_Surface\_Conditions=1, Junction\_Detail=6 & Weather\_Conditions=1 & 0.1036 & 0.9982 & 1.1601 \\
\midrule
2 & Weather\_Conditions=2 & Road\_Surface\_Conditions=2 & 0.1025 & 0.9981 & 4.4016 \\
\midrule
3 & Road\_Type=6, Road\_Surface\_Conditions=1, Light\_Conditions=1, Junction\_Detail=3 & Weather\_Conditions=1 & 0.1732 & 0.9945 & 1.1558 \\
\midrule
4 & Road\_Surface\_Conditions=1, Light\_Conditions=1, Junction\_Detail=3 & Weather\_Conditions=1 & 0.2002 & 0.9943 & 1.1556 \\
\midrule
5 & Accident\_Severity=2, Road\_Surface\_Conditions=1, Light\_Conditions=1, Road\_Type=6, Junction\_Detail=3 & Weather\_Conditions=1 & 0.1635 & 0.9942 & 1.1555 \\
\midrule
6 & Road\_Surface\_Conditions=1, Light\_Conditions=1, Urban\_or\_Rural\_Area=1, Road\_Type=6, Junction\_Detail=3 & Weather\_Conditions=1 & 0.1591 & 0.9940 & 1.1553 \\
\midrule
7 & Road\_Surface\_Conditions=1, Light\_Conditions=1, Accident\_Severity=2, Junction\_Detail=3 & Weather\_Conditions=1 & 0.1872 & 0.9939 & 1.1551 \\
\midrule
8 & Road\_Surface\_Conditions=1, Light\_Conditions=1, Junction\_Detail=3, Urban\_or\_Rural\_Area=1 & Weather\_Conditions=1 & 0.1836 & 0.9938 & 1.1550 \\
\midrule
9 & Accident\_Severity=2, Road\_Surface\_Conditions=1, Light\_Conditions=1, Urban\_or\_Rural\_Area=1, Road\_Type=6, Junction\_Detail=3 & Weather\_Conditions=1 & 0.1511 & 0.9937 & 1.1549 \\
\midrule
10 & Accident\_Severity=2, Road\_Surface\_Conditions=1, Light\_Conditions=1, Urban\_or\_Rural\_Area=1, Junction\_Detail=3 & Weather\_Conditions=1 & 0.1730 & 0.9934 & 1.1546 \\
\bottomrule
\end{tabular}
\begin{tablenotes}
\item[1] Total number of accident records: 5231
\item[2] Minimum support used in mining: 0.1
\item[3] Code explanation see later table.
\end{tablenotes}
\end{threeparttable}
\end{table}

\begin{table}[H]
\centering
\scriptsize
\begin{threeparttable}
\caption{Top 10 association rules mined by \textbf{Apriori} (ranked in descending order by \textit{\color{red}{lift}})}
\label{tab:Apriori_mined_rules_rankedByLift_minSupport01_Accident}
\begin{tabular}{r >{\raggedright\arraybackslash}p{4cm} >{\raggedright\arraybackslash}p{4cm} p{1cm} p{1cm} p{1cm}}
\toprule
\# & Antecedent & Consequent & Supp & Conf & \color{red}{Lift} \\
\midrule
1 & Weather\_Conditions=2 & Road\_Surface\_Conditions=2 & 0.1025 & 0.9981 & 4.4016 \\
\midrule
2 & Urban\_or\_Rural\_Area=2 & Junction\_Detail=0 & 0.1084 & 0.6364 & 1.6109 \\
\midrule
3 & Weather\_Conditions=1, Light\_Conditions=1, Accident\_Severity=2, Junction\_Detail=3 & Road\_Type=6, Road\_Surface\_Conditions=1, Urban\_or\_Rural\_Area=1 & 0.1511 & 0.7439 & 1.4388 \\
\midrule
4 & Road\_Surface\_Conditions=1, Junction\_Detail=3, Accident\_Severity=2 & Road\_Type=6, Light\_Conditions=1, Weather\_Conditions=1, Urban\_or\_Rural\_Area=1 & 0.1511 & 0.5579 & 1.4345 \\
\midrule
5 & Light\_Conditions=1, Weather\_Conditions=1, Junction\_Detail=3 & Road\_Type=6, Road\_Surface\_Conditions=1, Accident\_Severity=2, Urban\_or\_Rural\_Area=1 & 0.1511 & 0.6930 & 1.4202 \\
\midrule
6 & Road\_Surface\_Conditions=1, Junction\_Detail=3 & Accident\_Severity=2, Light\_Conditions=1, Urban\_or\_Rural\_Area=1, Road\_Type=6, Weather\_Conditions=1 & 0.1511 & 0.5208 & 1.4134 \\
\midrule
7 & Light\_Conditions=1, Weather\_Conditions=1, Junction\_Detail=3 & Road\_Type=6, Road\_Surface\_Conditions=1, Urban\_or\_Rural\_Area=1 & 0.1591 & 0.7298 & 1.4116 \\
\midrule
8 & Road\_Surface\_Conditions=1, Junction\_Detail=3 & Road\_Type=6, Light\_Conditions=1, Weather\_Conditions=1, Urban\_or\_Rural\_Area=1 & 0.1591 & 0.5485 & 1.4102 \\
\midrule
9 & Road\_Surface\_Conditions=1, Light\_Conditions=1, Accident\_Severity=2, Junction\_Detail=3 & Road\_Type=6, Weather\_Conditions=1, Urban\_or\_Rural\_Area=1 & 0.1511 & 0.8020 & 1.3987 \\
\midrule
10 & Weather\_Conditions=1, Accident\_Severity=2, Junction\_Detail=3 & Road\_Type=6, Road\_Surface\_Conditions=1, Light\_Conditions=1, Urban\_or\_Rural\_Area=1 & 0.1511 & 0.5051 & 1.3978 \\
\bottomrule
\end{tabular}
\begin{tablenotes}
\item[1] Total number of accident records: 5231
\item[2] Minimum support used in mining: 0.1
\item[3] Code explanation see later table.
\end{tablenotes}
\end{threeparttable}
\end{table}

\begin{table}[H]
\centering
\scriptsize
\begin{threeparttable}
\caption{Top 10 association rules mined by \textbf{FP-Growth} (ranked in descending order by \textit{\color{red}{support}})}
\label{tab:FPGrowth_mined_rules_rankedBySupport_minSupport01_Accident}
\begin{tabular}{r >{\raggedright\arraybackslash}p{4cm} >{\raggedright\arraybackslash}p{4cm} p{1cm} p{1cm} p{1cm}}
\toprule
\# & Antecedent & Consequent & \color{red}{Supp} & Conf & Lift \\
\midrule
1 & Weather\_Conditions=1 & Accident\_Severity=2 & 0.7985 & 0.9280 & 1.0005 \\
\midrule
2 & Accident\_Severity=2 & Weather\_Conditions=1 & 0.7985 & 0.8609 & 1.0005 \\
\midrule
3 & Accident\_Severity=2 & Urban\_or\_Rural\_Area=1 & 0.7790 & 0.8398 & 1.0123 \\
\midrule
4 & Urban\_or\_Rural\_Area=1 & Accident\_Severity=2 & 0.7790 & 0.9389 & 1.0123 \\
\midrule
5 & Road\_Surface\_Conditions=1 & Weather\_Conditions=1 & 0.7514 & 0.9884 & 1.1488 \\
\midrule
6 & Weather\_Conditions=1 & Road\_Surface\_Conditions=1 & 0.7514 & 0.8733 & 1.1488 \\
\midrule
7 & Road\_Type=6 & Accident\_Severity=2 & 0.7325 & 0.9308 & 1.0035 \\
\midrule
8 & Accident\_Severity=2 & Road\_Type=6 & 0.7325 & 0.7897 & 1.0035 \\
\midrule
9 & Weather\_Conditions=1 & Urban\_or\_Rural\_Area=1 & 0.7197 & 0.8364 & 1.0082 \\
\midrule
10 & Urban\_or\_Rural\_Area=1 & Weather\_Conditions=1 & 0.7197 & 0.8675 & 1.0082 \\
\bottomrule
\end{tabular}
\begin{tablenotes}
\item[1] Total number of accident records: 5231
\item[2] Minimum support used in mining: 0.1
\item[3] Code explanation see later table.
\end{tablenotes}
\end{threeparttable}
\end{table}

\begin{table}[H]
\centering
\scriptsize
\begin{threeparttable}
\caption{Top 10 association rules mined by \textbf{FP-Growth}  (ranked in descending order by \textit{\color{red}{confidence}})}
\label{tab:FPGrowth_mined_rules_rankedByConfidence_minSupport01_Accident}
\begin{tabular}{r >{\raggedright\arraybackslash}p{4cm} >{\raggedright\arraybackslash}p{4cm} p{1cm} p{1cm} p{1cm}}
\toprule
\# & Antecedent & Consequent & Supp & \color{red}{Conf} & Lift \\
\midrule
1 & Road\_Surface\_Conditions=1, Junction\_Detail=6 & Weather\_Conditions=1 & 0.1036 & 0.9982 & 1.1601 \\
\midrule
2 & Weather\_Conditions=2 & Road\_Surface\_Conditions=2 & 0.1025 & 0.9981 & 4.4016 \\
\midrule
3 & Road\_Type=6, Road\_Surface\_Conditions=1, Light\_Conditions=1, Junction\_Detail=3 & Weather\_Conditions=1 & 0.1732 & 0.9945 & 1.1558 \\
\midrule
4 & Road\_Surface\_Conditions=1, Light\_Conditions=1, Junction\_Detail=3 & Weather\_Conditions=1 & 0.2002 & 0.9943 & 1.1556 \\
\midrule
5 & Accident\_Severity=2, Road\_Surface\_Conditions=1, Light\_Conditions=1, Road\_Type=6, Junction\_Detail=3 & Weather\_Conditions=1 & 0.1635 & 0.9942 & 1.1555 \\
\midrule
6 & Road\_Surface\_Conditions=1, Light\_Conditions=1, Urban\_or\_Rural\_Area=1, Road\_Type=6, Junction\_Detail=3 & Weather\_Conditions=1 & 0.1591 & 0.9940 & 1.1553 \\
\midrule
7 & Road\_Surface\_Conditions=1, Light\_Conditions=1, Accident\_Severity=2, Junction\_Detail=3 & Weather\_Conditions=1 & 0.1872 & 0.9939 & 1.1551 \\
\midrule
8 & Road\_Surface\_Conditions=1, Light\_Conditions=1, Junction\_Detail=3, Urban\_or\_Rural\_Area=1 & Weather\_Conditions=1 & 0.1836 & 0.9938 & 1.1550 \\
\midrule
9 & Accident\_Severity=2, Road\_Surface\_Conditions=1, Light\_Conditions=1, Urban\_or\_Rural\_Area=1, Road\_Type=6, Junction\_Detail=3 & Weather\_Conditions=1 & 0.1511 & 0.9937 & 1.1549 \\
\midrule
10 & Accident\_Severity=2, Road\_Surface\_Conditions=1, Light\_Conditions=1, Urban\_or\_Rural\_Area=1, Junction\_Detail=3 & Weather\_Conditions=1 & 0.1730 & 0.9934 & 1.1546 \\
\bottomrule
\end{tabular}
\begin{tablenotes}
\item[1] Total number of accident records: 5231
\item[2] Minimum support used in mining: 0.1
\item[3] Code explanation see later table.
\end{tablenotes}
\end{threeparttable}
\end{table}

\begin{table}[H]
\centering
\scriptsize
\begin{threeparttable}
\caption{Top 10 association rules mined by \textbf{FP-Growth}  (ranked in descending order by \textit{\color{red}{confidence}})}
\label{tab:FPGrowth_mined_rules_rankedByLift_minSupport01_Accident}
\begin{tabular}{r >{\raggedright\arraybackslash}p{4cm} >{\raggedright\arraybackslash}p{4cm} p{1cm} p{1cm} p{1cm}}
\toprule
\# & Antecedent & Consequent & Supp & Conf & \color{red}{Lift} \\
\midrule
1 & Weather\_Conditions=2 & Road\_Surface\_Conditions=2 & 0.1025 & 0.9981 & 4.4016 \\
\midrule
2 & Urban\_or\_Rural\_Area=2 & Junction\_Detail=0 & 0.1084 & 0.6364 & 1.6109 \\
\midrule
3 & Weather\_Conditions=1, Light\_Conditions=1, Accident\_Severity=2, Junction\_Detail=3 & Road\_Type=6, Road\_Surface\_Conditions=1, Urban\_or\_Rural\_Area=1 & 0.1511 & 0.7439 & 1.4388 \\
\midrule
4 & Road\_Surface\_Conditions=1, Junction\_Detail=3, Accident\_Severity=2 & Road\_Type=6, Light\_Conditions=1, Weather\_Conditions=1, Urban\_or\_Rural\_Area=1 & 0.1511 & 0.5579 & 1.4345 \\
\midrule
5 & Light\_Conditions=1, Weather\_Conditions=1, Junction\_Detail=3 & Road\_Type=6, Road\_Surface\_Conditions=1, Accident\_Severity=2, Urban\_or\_Rural\_Area=1 & 0.1511 & 0.6930 & 1.4202 \\
\midrule
6 & Road\_Surface\_Conditions=1, Junction\_Detail=3 & Accident\_Severity=2, Light\_Conditions=1, Urban\_or\_Rural\_Area=1, Road\_Type=6, Weather\_Conditions=1 & 0.1511 & 0.5208 & 1.4134 \\
\midrule
7 & Light\_Conditions=1, Weather\_Conditions=1, Junction\_Detail=3 & Road\_Type=6, Road\_Surface\_Conditions=1, Urban\_or\_Rural\_Area=1 & 0.1591 & 0.7298 & 1.4116 \\
\midrule
8 & Road\_Surface\_Conditions=1, Junction\_Detail=3 & Road\_Type=6, Light\_Conditions=1, Weather\_Conditions=1, Urban\_or\_Rural\_Area=1 & 0.1591 & 0.5485 & 1.4102 \\
\midrule
9 & Road\_Surface\_Conditions=1, Light\_Conditions=1, Accident\_Severity=2, Junction\_Detail=3 & Road\_Type=6, Weather\_Conditions=1, Urban\_or\_Rural\_Area=1 & 0.1511 & 0.8020 & 1.3987 \\
\midrule
10 & Weather\_Conditions=1, Accident\_Severity=2, Junction\_Detail=3 & Road\_Type=6, Road\_Surface\_Conditions=1, Light\_Conditions=1, Urban\_or\_Rural\_Area=1 & 0.1511 & 0.5051 & 1.3978 \\
\bottomrule
\end{tabular}
\begin{tablenotes}
\item[1] Total number of accident records: 5231
\item[2] Minimum support used in mining: 0.1
\item[3] Code explanation see later table.
\end{tablenotes}
\end{threeparttable}
\end{table}

\begin{table}[H]
\centering
\scriptsize
\begin{threeparttable}
\caption{Top 10 association rules mined by \textbf{Eclat} (ranked in descending order by \textit{\color{red}{support}})}
\label{tab:Eclat_mined_rules_rankedBySupport_minSupport01_Accident}
\begin{tabular}{r >{\raggedright\arraybackslash}p{4cm} >{\raggedright\arraybackslash}p{4cm} p{1cm} p{1cm} p{1cm}}
\toprule
\# & Antecedent & Consequent & \color{red}{Supp} & Conf & Lift \\
\midrule
1 & Weather\_Conditions=1 & Accident\_Severity=2 & 0.7985 & 0.9280 & 1.0005 \\
\midrule
2 & Accident\_Severity=2 & Weather\_Conditions=1 & 0.7985 & 0.8609 & 1.0005 \\
\midrule
3 & Accident\_Severity=2 & Urban\_or\_Rural\_Area=1 & 0.7790 & 0.8398 & 1.0123 \\
\midrule
4 & Urban\_or\_Rural\_Area=1 & Accident\_Severity=2 & 0.7790 & 0.9389 & 1.0123 \\
\midrule
5 & Road\_Type=6, Road\_Surface\_Conditions=1, Weather\_Conditions=1 & Accident\_Severity=2 & 0.7514 & 1.0000 & 1.0781 \\
\midrule
6 & Road\_Type=6, Road\_Surface\_Conditions=1, Accident\_Severity=2 & Weather\_Conditions=1 & 0.7514 & 1.2576 & 1.4616 \\
\midrule
7 & Road\_Type=6, Accident\_Severity=2, Weather\_Conditions=1 & Road\_Surface\_Conditions=1 & 0.7514 & 1.1067 & 1.4558 \\
\midrule
8 & Road\_Surface\_Conditions=1, Accident\_Severity=2, Weather\_Conditions=1 & Road\_Type=6 & 0.7514 & 1.0000 & 1.2707 \\
\midrule
9 & Road\_Type=6, Road\_Surface\_Conditions=1 & Accident\_Severity=2, Weather\_Conditions=1 & 0.7514 & 1.2576 & 1.5750 \\
\midrule
10 & Road\_Type=6, Weather\_Conditions=1 & Road\_Surface\_Conditions=1, Accident\_Severity=2 & 0.7514 & 1.1067 & 1.5597 \\
\bottomrule
\end{tabular}
\begin{tablenotes}
\item[1] Total number of accident records: 5231
\item[2] Minimum support used in mining: 0.1
\item[3] Code explanation see later table.
\end{tablenotes}
\end{threeparttable}
\end{table}

\begin{table}[H]
\centering
\scriptsize
\begin{threeparttable}
\caption{Top 10 association rules mined by \textbf{Eclat} (ranked in descending order by \textit{\color{red}{confidence}})}
\label{tab:Eclat_mined_rules_rankedByConfidence_minSupport01_Accident}
\begin{tabular}{r >{\raggedright\arraybackslash}p{4cm} >{\raggedright\arraybackslash}p{4cm} p{1cm} p{1cm} p{1cm}}
\toprule
\# & Antecedent & Consequent & Supp & \color{red}{Conf} & Lift \\
\midrule
2654 & Road\_Type=3, Road\_Surface\_Conditions=1, Accident\_Severity=2 & Weather\_Conditions=1 & 0.7514 & 6.6050 & 7.6765 \\
\midrule
2657 & Road\_Type=3, Road\_Surface\_Conditions=1 & Accident\_Severity=2, Weather\_Conditions=1 & 0.7514 & 6.6050 & 8.2721 \\
\midrule
5213 & Road\_Type=3, Road\_Surface\_Conditions=1 & Weather\_Conditions=1 & 0.7514 & 6.6050 & 7.6765 \\
\midrule
2699 & Road\_Type=3, Accident\_Severity=2, Urban\_or\_Rural\_Area=1 & Weather\_Conditions=1 & 0.7197 & 6.5121 & 7.5685 \\
\midrule
2703 & Road\_Type=3, Urban\_or\_Rural\_Area=1 & Accident\_Severity=2, Weather\_Conditions=1 & 0.7197 & 6.5121 & 8.1557 \\
\midrule
5234 & Road\_Type=3, Urban\_or\_Rural\_Area=1 & Weather\_Conditions=1 & 0.7197 & 6.5121 & 7.5685 \\
\midrule
2655 & Road\_Type=3, Accident\_Severity=2, Weather\_Conditions=1 & Road\_Surface\_Conditions=1 & 0.7514 & 5.9726 & 7.8564 \\
\midrule
2658 & Road\_Type=3, Weather\_Conditions=1 & Road\_Surface\_Conditions=1, Accident\_Severity=2 & 0.7514 & 5.9726 & 8.4174 \\
\midrule
5214 & Road\_Type=3, Weather\_Conditions=1 & Road\_Surface\_Conditions=1 & 0.7514 & 5.9726 & 7.8564 \\
\midrule
2676 & Road\_Type=3, Accident\_Severity=2, Urban\_or\_Rural\_Area=1 & Road\_Surface\_Conditions=1, Weather\_Conditions=1 & 0.6511 & 5.8910 & 7.8397 \\
\bottomrule
\end{tabular}
\begin{tablenotes}
\item[1] Total number of accident records: 5231
\item[2] Minimum support used in mining: 0.1
\item[3] Code explanation see later table.
\end{tablenotes}
\end{threeparttable}
\end{table}

\begin{table}[H]
\centering
\scriptsize
\begin{threeparttable}
\caption{Top 10 association rules mined by \textbf{Eclat} (ranked in descending order by \textit{\color{red}{lift}})}
\label{tab:Eclat_mined_rules_rankedByLift_minSupport01_Accident}
\begin{tabular}{r >{\raggedright\arraybackslash}p{4cm} >{\raggedright\arraybackslash}p{4cm} p{1cm} p{1cm} p{1cm}}
\toprule
\# & Antecedent & Consequent & Supp & Conf & \color{red}{Lift} \\
\midrule
1 & Road\_Type=6, Road\_Surface\_Conditions=2, Accident\_Severity=2, Weather\_Conditions=1 & Light\_Conditions=1, Junction\_Detail=0, Urban\_or\_Rural\_Area=1 & 0.2866 & 2.8230 & 9.8493 \\
\midrule
2 & Road\_Type=6, Road\_Surface\_Conditions=2, Weather\_Conditions=1 & Junction\_Detail=0, Light\_Conditions=1, Accident\_Severity=2, Urban\_or\_Rural\_Area=1 & 0.2866 & 2.8230 & 9.8493 \\
\midrule
3 & Road\_Surface\_Conditions=2, Accident\_Severity=2, Weather\_Conditions=1 & Road\_Type=6, Light\_Conditions=1, Junction\_Detail=0, Urban\_or\_Rural\_Area=1 & 0.2866 & 2.8230 & 9.8493 \\
\midrule
4 & Road\_Surface\_Conditions=2, Weather\_Conditions=1 & Accident\_Severity=2, Light\_Conditions=1, Urban\_or\_Rural\_Area=1, Road\_Type=6, Junction\_Detail=0 & 0.2866 & 2.8230 & 9.8493 \\
\midrule
5 & Road\_Type=6, Road\_Surface\_Conditions=2, Accident\_Severity=2, Weather\_Conditions=1 & Junction\_Detail=0, Urban\_or\_Rural\_Area=1 & 0.2866 & 2.8230 & 9.8493 \\
\midrule
6 & Road\_Type=6, Road\_Surface\_Conditions=2, Weather\_Conditions=1 & Junction\_Detail=0, Accident\_Severity=2, Urban\_or\_Rural\_Area=1 & 0.2866 & 2.8230 & 9.8493 \\
\midrule
7 & Road\_Surface\_Conditions=2, Accident\_Severity=2, Weather\_Conditions=1 & Road\_Type=6, Junction\_Detail=0, Urban\_or\_Rural\_Area=1 & 0.2866 & 2.8230 & 9.8493 \\
\midrule
8 & Road\_Surface\_Conditions=2, Weather\_Conditions=1 & Road\_Type=6, Junction\_Detail=0, Accident\_Severity=2, Urban\_or\_Rural\_Area=1 & 0.2866 & 2.8230 & 9.8493 \\
\midrule
9 & Road\_Surface\_Conditions=2, Accident\_Severity=2, Weather\_Conditions=1 & Light\_Conditions=1, Junction\_Detail=0, Urban\_or\_Rural\_Area=1 & 0.2866 & 2.8230 & 9.8493 \\
\midrule
10 & Road\_Surface\_Conditions=2, Weather\_Conditions=1 & Junction\_Detail=0, Light\_Conditions=1, Accident\_Severity=2, Urban\_or\_Rural\_Area=1 & 0.2866 & 2.8230 & 9.8493 \\
\bottomrule
\end{tabular}
\begin{tablenotes}
\item[1] Total number of accident records: 5231
\item[2] Minimum support used in mining: 0.1
\item[3] Code explanation see later table.
\end{tablenotes}
\end{threeparttable}
\end{table}

\begin{table}[H]
\centering
\scriptsize
\caption{Codes for \textbf{items} from the \textit{Accident} dataset}
\label{tab:item_codes_Accident}
\begin{tabular}{>{\raggedright\arraybackslash}p{4cm} >{\raggedright\arraybackslash}p{2cm} >{\raggedright\arraybackslash}p{8cm}}
\toprule
Item & Code & Label \\
\midrule
Accident\_Severity & 1 & Fatal \\
Accident\_Severity & 2 & Serious \\
Accident\_Severity & 3 & Slight \\
Accident\_Severity & -1 & Data missing or out of range \\
\midrule
Road\_Type & 1 & Roundabout \\
Road\_Type & 2 & One way street \\
Road\_Type & 3 & Dual carriageway \\
Road\_Type & 6 & Single carriageway \\
Road\_Type & 7 & Slip road \\
Road\_Type & 9 & Unknown \\
Road\_Type & 12 & One way street/Slip road \\
Road\_Type & -1 & Data missing or out of range \\
\midrule
Weather\_Conditions & 1 & Fine no high winds \\
Weather\_Conditions & 2 & Raining no high winds \\
Weather\_Conditions & 3 & Snowing no high winds \\
Weather\_Conditions & 4 & Fine + high winds \\
Weather\_Conditions & 5 & Raining + high winds \\
Weather\_Conditions & 6 & Snowing + high winds \\
Weather\_Conditions & 7 & Fog or mist \\
Weather\_Conditions & 8 & Other \\
Weather\_Conditions & 9 & Unknown \\
Weather\_Conditions & -1 & Data missing or out of range \\
\midrule
Light\_Conditions & 1 & Daylight \\
Light\_Conditions & 4 & Darkness - lights lit \\
Light\_Conditions & 5 & Darkness - lights unlit \\
Light\_Conditions & 6 & Darkness - no lighting \\
Light\_Conditions & 7 & Darkness - lighting unknown \\
Light\_Conditions & -1 & Data missing or out of range \\
\midrule
Road\_Surface\_Conditions & 1 & Dry \\
Road\_Surface\_Conditions & 2 & Wet or damp \\
Road\_Surface\_Conditions & 3 & Snow \\
Road\_Surface\_Conditions & 4 & Frost or ice \\
Road\_Surface\_Conditions & 5 & Flood over 3cm. deep \\
Road\_Surface\_Conditions & 6 & Oil or diesel \\
Road\_Surface\_Conditions & 7 & Mud \\
Road\_Surface\_Conditions & -1 & Data missing or out of range \\
Road\_Surface\_Conditions & 9 & Unknown (self reported) \\
\midrule
Junction\_Detail & 0 & Not at junction or within 20 metres \\
Junction\_Detail & 1 & Roundabout \\
Junction\_Detail & 2 & Mini-roundabout \\
Junction\_Detail & 3 & T or staggered junction \\
Junction\_Detail & 5 & Slip road \\
Junction\_Detail & 6 & Crossroads \\
Junction\_Detail & 7 & More than 4 arms (not roundabout) \\
Junction\_Detail & 8 & Private drive or entrance \\
Junction\_Detail & 9 & Other junction \\
Junction\_Detail & -1 & Data missing or out of range \\
Junction\_Detail & 99 & Unknown (self reported) \\
\midrule
Urban\_or\_Rural\_Area & 1 & Urban \\
Urban\_or\_Rural\_Area & 2 & Rural \\
Urban\_or\_Rural\_Area & 3 & Unallocated \\
Urban\_or\_Rural\_Area & -1 & Data missing or out of range \\
\bottomrule
\end{tabular}
\end{table}

\begin{table}[H]
\centering
\scriptsize
\caption{Codes for \textbf{feature variables} from the \textit{Accident} dataset}
\label{tab:feature_codes_Accident}
\begin{tabular}{>{\raggedright\arraybackslash}p{5.5cm} >{\raggedright\arraybackslash}p{1cm} >{\raggedright\arraybackslash}p{7cm}}
\toprule
Variable & Code & Label \\
\midrule
Longitude & - & Numeric coordinate (Null if not known) \\
\midrule
Latitude & - & Numeric coordinate (Null if not known) \\
\midrule
Speed\_limit & 20 & 20 mph \\
Speed\_limit & 30 & 30 mph \\
Speed\_limit & 40 & 40 mph \\
Speed\_limit & 50 & 50 mph \\
Speed\_limit & 60 & 60 mph \\
Speed\_limit & 70 & 70 mph \\
Speed\_limit & -1 & Data missing or out of range \\
Speed\_limit & 99 & Unknown (self reported) \\
\midrule
Number\_of\_Vehicles & - & Numeric count of vehicles \\
\midrule
Day\_of\_Week & 1 & Sunday \\
Day\_of\_Week & 2 & Monday \\
Day\_of\_Week & 3 & Tuesday \\
Day\_of\_Week & 4 & Wednesday \\
Day\_of\_Week & 5 & Thursday \\
Day\_of\_Week & 6 & Friday \\
Day\_of\_Week & 7 & Saturday \\
\midrule
Hour\_of\_Day & - & Hour from 00 to 23 (Null if not known) \\
\midrule
Junction\_Control & 0 & Not at junction or within 20 metres \\
Junction\_Control & 1 & Authorised person \\
Junction\_Control & 2 & Auto traffic signal \\
Junction\_Control & 3 & Stop sign \\
Junction\_Control & 4 & Give way or uncontrolled \\
Junction\_Control & -1 & Data missing or out of range \\
Junction\_Control & 9 & Unknown (self reported) \\
\midrule
Pedestrian\_Crossing-Human\_Control & 0 & None within 50 metres \\
Pedestrian\_Crossing-Human\_Control & 1 & Control by school crossing patrol \\
Pedestrian\_Crossing-Human\_Control & 2 & Control by other authorised person \\
Pedestrian\_Crossing-Human\_Control & -1 & Data missing or out of range \\
Pedestrian\_Crossing-Human\_Control & 9 & Unknown (self reported) \\
\midrule
Pedestrian\_Crossing-Physical\_Facilities & 0 & No physical crossing facilities within 50 metres \\
Pedestrian\_Crossing-Physical\_Facilities & 1 & Zebra \\
Pedestrian\_Crossing-Physical\_Facilities & 4 & Pelican, puffin, toucan or similar non-junction pedestrian light crossing \\
Pedestrian\_Crossing-Physical\_Facilities & 5 & Pedestrian phase at traffic signal junction \\
Pedestrian\_Crossing-Physical\_Facilities & 7 & Footbridge or subway \\
Pedestrian\_Crossing-Physical\_Facilities & 8 & Central refuge \\
Pedestrian\_Crossing-Physical\_Facilities & -1 & Data missing or out of range \\
Pedestrian\_Crossing-Physical\_Facilities & 9 & Unknown (self reported) \\
\midrule
1st\_Road\_Class & 1 & Motorway \\
1st\_Road\_Class & 2 & A(M) \\
1st\_Road\_Class & 3 & A \\
1st\_Road\_Class & 4 & B \\
1st\_Road\_Class & 5 & C \\
1st\_Road\_Class & 6 & Unclassified \\
1st\_Road\_Class & -1 & Data missing or out of range \\
\bottomrule
\end{tabular}
\end{table}

Comparison of the 4 methods, i.e. RBF-based GPAR, Apriori, FP-Growth, and Eclat, is presented in Table.\ref{tab:accident_comparison_runtime} to Table.\ref{tab:accident_comparison_noRules}. RBF-based GPAR exhibits significantly higher computational demands, with a runtime of 1027.79 seconds and memory usage of 341.23 MB at a minimum support of 0.1, though both metrics decrease sharply at higher thresholds (e.g. 0.10 seconds and 0.0 MB at 0.5). In contrast, Apriori, FP-Growth, and Eclat are far more efficient, with runtimes ranging from 0.01 to 3.47 seconds and memory usage peaking at 22.95 MB for Eclat at 0.1, dropping to 0.0 MB at higher thresholds. Regarding the number of frequent itemsets and rules, RBF-based GPAR generates the most at lower thresholds (10366 itemsets and 16869 rules at 0.1), but this number diminishes significantly at higher thresholds (0 itemsets and rules at 0.4 and 0.5). Apriori and FP-Growth consistently produce fewer itemsets (277 at 0.1, dropping to 39 at 0.5) and rules (2052 at 0.1, dropping to 170 at 0.5), while Eclat generates the highest among the traditional methods (479 itemsets and 5794 rules at 0.1, reducing to 69 itemsets and 602 rules at 0.5).

Analysis of the mined rules reveals distinct strengths in pattern discovery. When ranked by support, RBF-based GPAR identifies rules with strong associations between accident severity and weather conditions (e.g. 'Accident\_Severity=2 → Weather\_Conditions=1' with support 0.4610), reflecting its ability to leverage feature similarities in the feature matrix $X_{39 \times 10}$. Apriori, FP-Growth, and Eclat, relying solely on transactional data, consistently identify high-support rules involving similar items (e.g. Apriori identifies 'Weather\_Conditions=1 → Accident\_Severity=2' with support 0.7985), but Eclat emphasizes larger itemsets (e.g. 'Road\_Type=6, Road\_Surface\_Conditions=1, Weather\_Conditions=1 → Accident\_Severity=2' with support 0.7514). When ranked by confidence, RBF-based GPAR achieves higher confidence scores (e.g. 'Accident\_Severity=2, Weather\_Conditions=8, Road\_Type=7 → Weather\_Conditions=1 with confidence 1.2079), capturing nuanced patterns, while Apriori and FP-Growth prioritize rules with near-perfect confidence (e.g. Apriori identifies 'Road\_Surface\_Conditions=1, Junction\_Detail=6 → Weather\_Conditions=1' with confidence 0.9982). Eclat excels in confidence for rules involving specific road types (e.g. 'Road\_Type=3, Road\_Surface\_Conditions=1, Accident\_Severity=2 → Weather\_Conditions=1' with confidence 6.6050). For lift, Eclat identifies the strongest associations (e.g. 'Road\_Type=6, Road\_Surface\_Conditions=2, Accident\_Severity=2, Weather\_Conditions=1 → Light\_Conditions=1, Junction\_Detail=0, Urban\_or\_Rural\_Area=1" with lift 9.8493), followed by Apriori and FP-Growth (e.g. 'Weather\_Conditions=2 → Road\_Surface\_Conditions=2' with lift 4.4016), while RBF-based GPAR achieves a maximum lift of 2.7236. Overall, while RBF-based GPAR uncovers more complex and diverse patterns, its computational cost is also high, which may make it less practical compared to the efficiency of Apriori, FP-Growth, and Eclat in real-world applications.

\begin{table}[H]
\centering
\scriptsize
\begin{threeparttable}
\caption{Runtime performance (in \textit{seconds}) of different AR mining algorithms (\textit{Accident})}
\label{tab:accident_comparison_runtime}
\begin{tabular}{p{2cm} p{1.5cm} p{1.5cm} p{2cm} p{1.5cm}}
\toprule
Min support & GPAR & Apriori & FP-Growth & Eclat \\
\midrule
0.1 & 1027.79 & 0.08 & 3.47 & 0.56 \\
0.2 & 18.24 & 0.03 & 1.85 & 0.57 \\
0.3 & 1.59 & 0.02 & 0.87 & 0.46 \\
0.4 & 0.09 & 0.01 & 0.66 & 0.45 \\
0.5 & 0.10 & 0.01 & 0.44 & 0.45 \\
\bottomrule
\end{tabular}
\begin{tablenotes}
\item[1] Total number of accident records: 5231.
\item[2] Runtimes are measured in seconds.
\item[3] Minimum support values range from 0.1 to 0.5.
\end{tablenotes}
\end{threeparttable}
\end{table}

\begin{table}[H]
\centering
\scriptsize
\begin{threeparttable}
\caption{Memory usage (in \textit{MB}) of different AR mining algorithms (\textit{Accident})}
\label{tab:accident_comparison_memory}
\begin{tabular}{p{2cm} p{1.5cm} p{1.5cm} p{2cm} p{1.5cm}}
\toprule
Min Support & GPAR & Apriori & FP-Growth & Eclat \\
\midrule
0.1 & 341.23 & 20.48 & 1.18 & 22.95 \\
0.2 & 0.00 & 0.00 & 0.00 & 0.00 \\
0.3 & 0.00 & 0.00 & 0.00 & 0.00 \\
0.4 & 0.00 & 0.00 & 0.00 & 0.00 \\
0.5 & 0.00 & 0.00 & 0.00 & 0.00 \\
\bottomrule
\end{tabular}
\begin{tablenotes}
\item[1] Total number of accident records: 5231.
\item[2] Memory usage is measured in megabytes (MB).
\item[3] Minimum support values range from 0.1 to 0.5.
\end{tablenotes}
\end{threeparttable}
\end{table}

\begin{table}[H]
\centering
\scriptsize
\begin{threeparttable}
\caption{No. of frequent itemsets generated by different AR mining algorithms (\textit{Accident})}
\label{tab:accident_comparison_frequentItemsets}
\begin{tabular}{p{1.5cm} p{1.5cm} p{1.5cm} p{1.5cm} p{1.5cm}}
\toprule
Min support & GPAR & Apriori & FP-Growth & Eclat \\
\midrule
0.1 & 10366 & 277 & 277 & 479 \\
0.2 & 770 & 153 & 153 & 235 \\
0.3 & 4 & 75 & 75 & 97 \\
0.4 & 0 & 55 & 55 & 69 \\
0.5 & 0 & 39 & 39 & 69 \\
\bottomrule
\end{tabular}
\begin{tablenotes}
\item[1] Total number of accident records: 5231.
\item[2] Numbers are counts of frequent itemsets generated.
\item[3] Minimum support values range from 0.1 to 0.5.
\end{tablenotes}
\end{threeparttable}
\end{table}

\begin{table}[H]
\centering
\scriptsize
\begin{threeparttable}
\caption{No. of rules generated by different AR mining algorithms (\textit{Accident})}
\label{tab:accident_comparison_noRules}
\begin{tabular}{p{2cm} p{1.5cm} p{1.5cm} p{1.5cm} p{1.5cm}}
\toprule
Min Support & GPAR & Apriori & FP-Growth & Eclat \\
\midrule
0.1 & 16869 & 2052 & 2052 & 5794 \\
0.2 & 834 & 987 & 987 & 2661 \\
0.3 & 8 & 566 & 566 & 692 \\
0.4 & 0 & 370 & 370 & 602 \\
0.5 & 0 & 170 & 170 & 602 \\
\bottomrule
\end{tabular}
\begin{tablenotes}
\item[1] Total number of accident records: 5231.
\item[2] Rules are counts of association rules generated.
\item[3] Minimum support values range from 0.1 to 0.5.
\end{tablenotes}
\end{threeparttable}
\end{table}

\begin{figure}[H]
\centering
\includegraphics[width=0.9\textwidth]{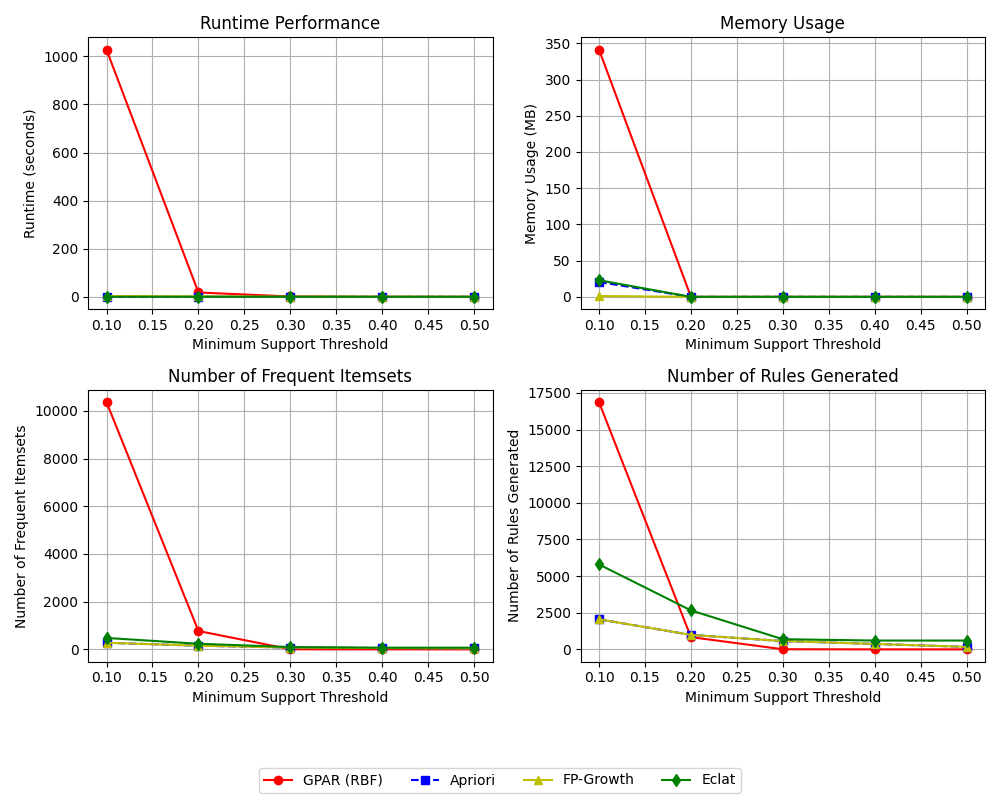}
\caption{Compare the performances of the 4 algorithms (\textit{UK Accident}).}
\label{fig:synthetic2_performance_comparison_UKAccident}
\end{figure}

\section{Comparison: GPAR \textit{vs} BARM} \label{app:compare_GPAR_and_BARM}

Here we compare GPAR (Gaussian process-based association rule mining) and BARM (Bayesian association rule mining), in terms of model, training, computational complexity and uncertainty representation.

\paragraph{Modelling} 
GPAR uses a Gaussian process (GP) to model latent variables $\mathbf{z}$, where each item’s presence in a transaction is determined by a threshold function ($\mathbf{t}_j[i] = 1$ if $z_i > 0$, else 0), with posterior inferred by attribute similarities in feature space; BARM directly models the presence probabilities $p_i$ for each item $i$ as random variables with priors, e.g. $p_i \sim \text{Beta}(\alpha_i, \beta_i)$. Both GPAR and BARM can incorporate item dependencies using a correlation function e.g. $k(\mathbf{x}_i, \mathbf{x}_j) = \exp(||\mathbf{x}_i - \mathbf{x}_j||^2 / \ell^2)$. A kernel (covariance) function $k(\mathbf{x}_i, \mathbf{x}_j)$, e.g. a RBF kernel, computes the covariance matrix $K \in \mathbb{R}^{M \times M}$, capturing dependencies between items based on their feature vectors $\mathbf{x}$. This correlation, output by the data optimised kernel function, is used in the Gaussian process model in GPAR to generate the posterior covariance; while in BARM, it can be used to correct the likelihood (Eq.\ref{eq:BARM_likelihood}) in the posterior inference and the probability in rule inference (Eq.\ref{eq:BARM_subset_prob_estimate_via_MCMC_posterior_samples}). In GPAR the kernel hyper-parameter $\ell$ is point estimated, BARM puts a prior over it, e.g. $\ell \sim \text{Gamma}(a, b)$. However, the notion of a covariance structure is not essential for BARM, see a dependency-free version BARM in Section.\ref{sec:BARM_variant}. Another difference lies in the way how the kernel $k(\cdot,\cdot)$ is optimised: GPAR optimises the kernel hyper-parameters (e.g. length-scale $\ell$) via maximum likelihood (Eq.\ref{eq:GP_training_MLE}) (i.e. adjust the hyper-parameters to match data fitting); in BARM, the correlation adjustment factor $g(\{\rho_{ij}\})$, as used in likelihood adjustment (Eq.\ref{eq:BARM_likelihood}) and rule inference adjustment (Eq.\ref{eq:BARM_subset_prob_estimate_via_MCMC_posterior_samples}) is assumed a known priori (no training happens), either derived from a Gaussian copula or a simplified pairwise model in the feature space. Accurately modeling the correlation adjustment factor $g(\cdot)$ could be challenging. 

\paragraph{Training and inference} 
GPAR optimizes kernel hyperparameters (e.g. length scale $\ell$) by minimizing the negative loglikelihood of the transaction data over $T$ iterations, producing a point estimate for $\ell$. Co-occurrence probabilities $p(I)$ for an itemset $I$ are estimated via Monte Carlo sampling from a multivariate Gaussian $\mathcal{N}(0, K_I)$, where $K_I$ is the submatrix of $K$ corresponding to items in $I$.
BARM employs Bayesian inference via MCMC sampling to draw $S_{\text{MCMC}}$ samples from the posterior $p(\{p_i\}, \ell | \mathcal{T}, X)$. co-occurrence probabilities $p(I)$ are computed by averaging over $S$ posterior samples, i.e. $p(I) = \frac{1}{S} \sum_{s=1}^{S} p(I | \{p_i^{(s)}\}, \ell^{(s)})$, capturing the full posterior distribution of probabilities.

\paragraph{Complexity} 
In GPAR, the initial GP inference step has a complexity of $\mathcal{O}(T (M^2d + NM^2 + M^3))$ where $T$ is the number of optimisation iterations, $N$ number of transactions, $M$ the number of items This cost is dominated by covariance matrix operations. In the rule inference stage, probability estimation per itemset costs $\mathcal{O}(m^3 + S m^2)$, where $m = |I|$ and $S$ is the number of Monte Carlo samples used in estimating the marginalised joint probability $p(I)$. The overall complexity of GPAR, combining GP inference and marginal density evaluation, is $\mathcal{O}(M^2d + T(M^2d + NM^2 + M^3) + \sum_{m=2}^M \mathcal{C}^M_m \times (m^3 + Sm^2))$, which is dominated by the last term $\mathcal{O}(2^M (M^3 + SM^2))$.

In BARM with item dependency, if we pre-compute the pairwise correlation adjustment factor $g(\cdot)$, then MCMC sampling costs $\mathcal{O}(S_{\text{MCMC}} \cdot N \cdot M)$, and probability estimation per itemset costs $\mathcal{O}(S \cdot m)$, which is generally less expensive per itemset than GPAR’s Monte Carlo sampling due to the absence of Cholesky decomposition ($\mathcal{O}(m^3)$). The total cost of BARM is $\mathcal{O}(S_{\text{MCMC}} \cdot N \cdot M + 2^M \cdot S \cdot M)$ (ignoring the cost of computing the adjustment factor), or $\mathcal{O}(S_{\text{MCMC}} \cdot N \cdot M^2 \cdot d + 2^M \cdot S \cdot M^2)$ (considering the cost of computing the adjustment factor).

\paragraph{Uncertainty representation} 
In GPAR, uncertainties for the kernel hyper-parameters are not accounted for; it uses e.g. maximum likelihood to yield point estimates (the optimisation step, i.e. Step 2 in Algo.\ref{algo:GPAR}, yields a single optimal $\ell$, which is used to construct a fixed covariance matrix $K$). The co-occurrence probability $p(I)$ for an itemset $I$ is estimated as (Eq.\ref{eq:GPAR_marginal_posterior_MC_approx}):
\[
p(I) = \frac{1}{S} \sum_{s=1}^{S} \mathbb{I}(\mathbf{z}_{s,I} > 0), \quad \mathbf{z}_{s,I} \sim \mathcal{N}(0, K_I)
\]
where $K_I$ is fixed based on the optimized $\ell$. The uncertainty in $p(I)$ arises solely from the Monte Carlo sampling variability (i.e. the variance across the $S$ samples of $\mathbf{z}_s$). For rare rules (e.g. itemsets with low $p(I)$), the Monte Carlo estimate may have high variance if $S$ is insufficient.

BARM uses a fully Bayesian approach, treating both the item presence probabilities $\{p_i\}$ and the hyper-parameters (e.g. length scale $\ell$) as random variables with prior distributions. MCMC sampling generates $S_{\text{MCMC}}$ samples from the joint posterior $p(\{p_i\}, \ell | \mathcal{T}, X)$, and the co-occurrence probability $p(I)$ is computed as (Eq.\ref{eq:BARM_cooccurrence_probs}):
\[
p(I) = \frac{1}{S} \sum_{s=1}^{S} p(I | \{p_i^{(s)}\}, \ell^{(s)})
\]
where each $p(I | \{p_i^{(s)}\}, \ell^{(s)})$ is calculated using the sampled probabilities and correlations (Eq.\ref{eq:BARM_subset_prob_estimate_via_MCMC_posterior_samples}):
\[
p(I | \{p_i^{(s)}\}, \ell^{(s)}) \approx \prod_{i \in I} p_i^{(s)} \cdot g(\{\rho_{ij}^{(s)} \mid i, j \in I\})
\]
This way yields a distribution over $p(I)$, as each of the $S$ samples produces a different value of $p(I | \{p_i^{(s)}\}, \ell^{(s)})$, reflecting the uncertainty in both $\{p_i\}$ and $\ell$. The variance of these $S$ values directly quantifies the uncertainty in $p(I)$, providing a richer representation of uncertainty compared to GPAR’s point estimate.

Therefore, BARM provides a full posterior distribution over presence probabilities, offering better uncertainty quantification than GPAR’s point estimates, which is important for interpreting \textit{rare} or \textit{uncertain} rules. Consider an itemset $I = \{\text{bread 1, bread 2}\}$ with a true co-occurrence probability that is very low (e.g. rare rule). GPAR estimates $p(I)$ using Monte Carlo sampling with a fixed $K_I$. If $S = 1000$, the GPAR estimate might be $p(I) = 0.02$, but the uncertainty is only reflected in the sampling error (e.g. standard error $\sqrt{\frac{p(I)(1-p(I))}{S}} \approx 0.0044$). There is no mechanism to account for uncertainty in $\ell$, which could significantly affect $K_I$ and thus $p(I)$. For a rare rule, this lack of uncertainty quantification makes it difficult to assess the reliability of the estimate, potentially leading to \textit{over-confidence} in the mined rule. 

For the same itemset $I = \{\text{bread 1, bread 2}\}$, BARM computes $p(I)$ by averaging over $S = 1000$ posterior samples, each reflecting a different ($\{p_i^{(s)}\}, \ell^{(s)}$). Suppose the resulting values of $p(I | \{p_i^{(s)}\}, \ell^{(s)})$ range from 0.01 to 0.05, with a mean of 0.03 and a standard deviation of 0.01. This distribution over $p(I)$ directly quantifies the uncertainty, showing that the co-occurrence probability is uncertain within the range [0.01, 0.05]. For rare rules, this is important, as it allows the user to assess the reliability (or robustness) of the rule (e.g. a high variance indicates low confidence in the estimate) and make informed decisions about its interpretation or use.

\section{Beta distribution} \label{app:Beta_dist}

\begin{figure} [H]
\centering
\includegraphics[width=0.5\linewidth]{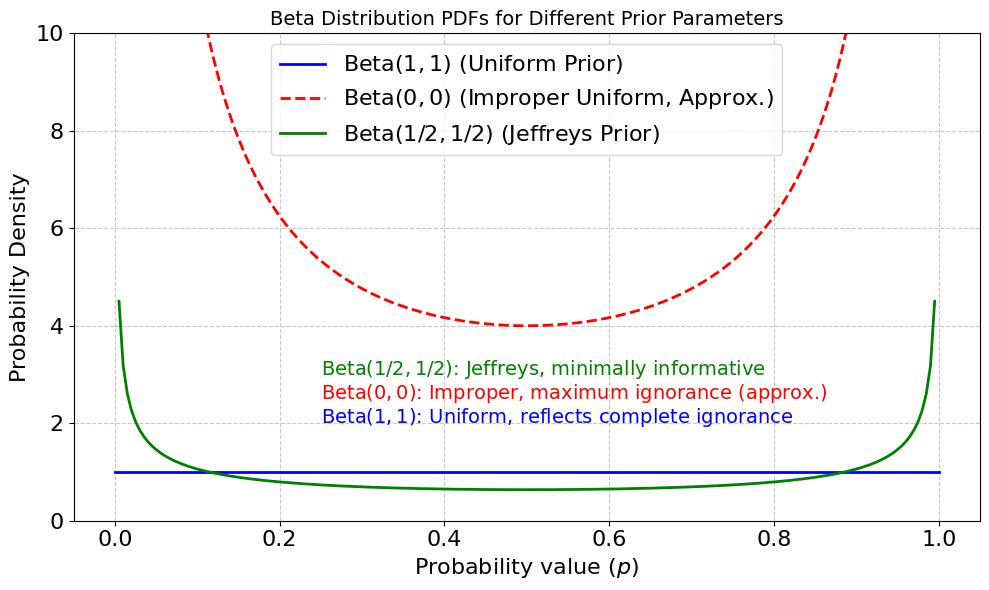}
\caption{Beta distributions.}
\label{fig:beta_distributions}
\end{figure}

Beta distribution, with examples shown in Fig.\ref{fig:beta_distributions}, is a continuous probability distribution defined on the interval $[0, 1]$, which makes it particularly suitable for modeling random variables that represent probabilities or proportions. Its probability density function (\textit{pdf}) is given by:
$$f(x; \alpha, \beta) = \frac{x^{\alpha - 1} (1 - x)^{\beta - 1}}{B(\alpha, \beta)}$$
where $ x \in [0, 1] $, $ \alpha > 0 $ and $ \beta > 0 $ are shape parameters, and $ B(\alpha, \beta) = \frac{\Gamma(\alpha) \Gamma(\beta)}{\Gamma(\alpha + \beta)} $ is the Beta function, serving as a normalizing constant, $\Gamma$ is the gamma function. The cumulative distribution function (\textit{CDF}) is:
$$F(x; \alpha, \beta) = \frac{\int_0^{x} t^{\alpha - 1} (1 - t)^{\beta - 1} dt}{B(\alpha, \beta)}$$
The moments of Beta distribution provide insight into its shape: the mean is $ \mathbb{E}[x] = \frac{\alpha}{\alpha + \beta} $, and variance $ \text{Var}(x) = \frac{\alpha \beta}{(\alpha + \beta)^2 (\alpha + \beta + 1)} $, and higher moments (e.g. skewness, kurtosis) can be derived from the moment-generating function (MGF) or recursive relations, though the mean and variance are most commonly used for prior specification. These statistical properties endow Beta distribution with flexibility to model a wide range of behaviors, from uniform to highly skewed distributions, depending on the values of $ \alpha $ and $ \beta $.

Beta distribution is often used as a Bayesian prior for probability values due to its conjugacy with the Bernoulli likelihood, support on $[0, 1]$, flexibility in encoding prior beliefs (e.g. $ \text{Beta}(1, 1) $, $ \text{Beta}(0, 0) $, $ \text{Beta}(1/2, 1/2) $ for expressing ignorance), analytical tractability, and robustness to small datasets. These properties make it a natural and efficient choice for modeling item presence probabilities \cite{Mackay2003Information}, as used in our BARM framework: it is an ideal choice for the dependency-free BARM framework, because each item’s presence is modeled as an independent Bernoulli trial, and the absence of correlation adjustments ($ g(\cdot) = 1 $) aligns with the assumption of independence. The use of $ \text{Beta}(1, 1) $ to represent ignorance is justified when no prior information about item frequencies is available, providing a neutral starting point that the data can update. The analytical posterior update further supports its practicality, eliminating the computational overhead of MCMC sampling in this simplified model.

\paragraph{Support on the unit interval [0, 1]}
Beta distribution is defined over the interval $[0, 1]$, which is the natural range for probability values. Its probability density function, $ p(x) = \frac{x^{\alpha - 1} (1 - x)^{\beta - 1}}{B(\alpha, \beta)} $ with $ B(\cdot, \cdot) $ being the Beta function, allows it to model any probability distribution flexibly by adjusting the shape parameters $ \alpha > 0$ and $ \beta > 0$. This makes it inherently suitable as a prior for parameters such as the probability of an item’s presence.

\paragraph{Flexibility in representing prior beliefs}
Beta distribution can represent a wide range of prior beliefs through its parameters. For example, with $ \alpha = 1 $ and $ \beta = 1 $ (i.e. $\text{Beta}(1, 1)$), the Beta \textit{pdf} becomes $ p(x) = 1 $ for $ x \in [0, 1] $, which reduces to the uniform distribution, reflecting complete ignorance or a non-informative (vague) prior, assuming all probability values are equally likely; $\text{Beta}(0, 0)$ represents an improper uniform prior with density $ p(x) \propto 1 $, which is not integrable over $[0, 1]$. $\text{Beta}(0, 0)$ is used to express maximum ignorance, though it requires careful handling to ensure a proper posterior. It can be justified in cases where no prior information is available, but its use is less common due to potential numerical instability; With $\alpha = 1/2 $ and $ \beta = 1/2$ (i.e. $\text{Beta}(1/2, 1/2)$), the Beta \textit{pdf} becomes $p(x) \propto x^{-1/2} (1 - x)^{-1/2}$, which is the \textit{Jeffreys prior} for a Bernoulli parameter. This prior is invariant under reparameterization and is often chosen to represent ignorance in a way that is minimally informative while remaining proper, especially for small sample sizes. These options allow the analyst to encode varying degrees of prior knowledge or ignorance, making Beta distribution versatile for different scenarios.

\paragraph{Analytical tractability}
Beta distribution’s CDF and moments (e.g. mean $ \mathbb{E}[x] = \frac{\alpha}{\alpha + \beta} $, variance $ \text{Var}(x) = \frac{\alpha \beta}{(\alpha + \beta)^2 (\alpha + \beta + 1)} $) are analytically computable, which facilitates prior specification and posterior analysis. This tractability is particularly valuable for BARM, where posterior samples can be drawn efficiently from the updated Beta distribution.

\paragraph{Conjugacy with Bernoulli and Binomial likelihoods}
Beta distribution is the conjugate prior for the Bernoulli and binomial likelihoods, which are commonly used to model binary outcomes such as the presence or absence of an item in a transaction. Conjugacy implies that the posterior distribution, after updating with observed data, remains within the same family as the prior distribution, facilitating analytical computation of the posterior parameters. For a Bernoulli likelihood $ p(n | x) = x^{n} (1 - x)^{1 - n} $, combined with a Beta prior $ x \sim \text{Beta}(\alpha, \beta) $, the posterior is $ x | \mathcal{T} \sim \text{Beta}(\alpha + n, \beta + N - n) $, where $ n $ is the number of successes (item presences). This analytical tractability eliminates the need for numerical methods like MCMC, enhancing computational efficiency.

\section{Upper confidence bound (UCB)} \label{app:UCB}

The UCB algorithm is a method employed in multi-armed bandit problems to address the exploration-exploitation dilemma, where an agent must balance selecting actions with the highest known reward (exploitation) and exploring lesser-known actions to improve future decision-making (exploration). Exploitation involves choosing the action with the largest current estimated value, maximizing immediate rewards, while exploration entails selecting non-greedy actions to gather additional information, potentially yielding long-term benefits. A purely greedy strategy may lead to sub-optimal outcomes by overlooking potentially superior actions, necessitating a balanced approach.

UCB resolves this dilemma by incorporating uncertainty into action-value estimates. At time $t$, the algorithm selects the action $A_t$ that maximizes the following expression:

\[
A_t = \arg\max_a \left[ \overset{\text{exploit}}{Q_t(a)} + \overset{\text{explore}}{c \sqrt{\frac{\ln(t)}{N_t(a)}}} \right]
\]
where $Q_t(a)$ is the current estimated value of action $a$, $N_t(a)$ is the number of times action $a$ has been taken, and $c$ is a constant controlling the exploration rate. The term $c \sqrt{\frac{\ln(t)}{N_t(a)}}$ represents the upper confidence bound, encouraging exploration of actions with higher uncertainty (larger confidence intervals) while favoring exploitation of actions with reliable estimates (smaller confidence intervals). This confidence interval, centered around the estimated action value $Q(a)$, reflects the range within which the true action value is likely to lie, with the upper bound guiding optimistic selection under uncertainty.

Initially, UCB explores more to reduce uncertainty across all actions, gradually shifting toward exploitation as confidence in estimates increases. For example, if multiple actions have associated uncertainties, UCB selects the one with the highest upper bound, either maximizing reward if the action is optimal or enhancing knowledge if it is less explored. This is exemplified in Fig.\ref{fig:UCB1}, where the action A with the highest upper bound is chosen.

\begin{figure}[H]
    \centering
    \includegraphics[width=0.45\linewidth]{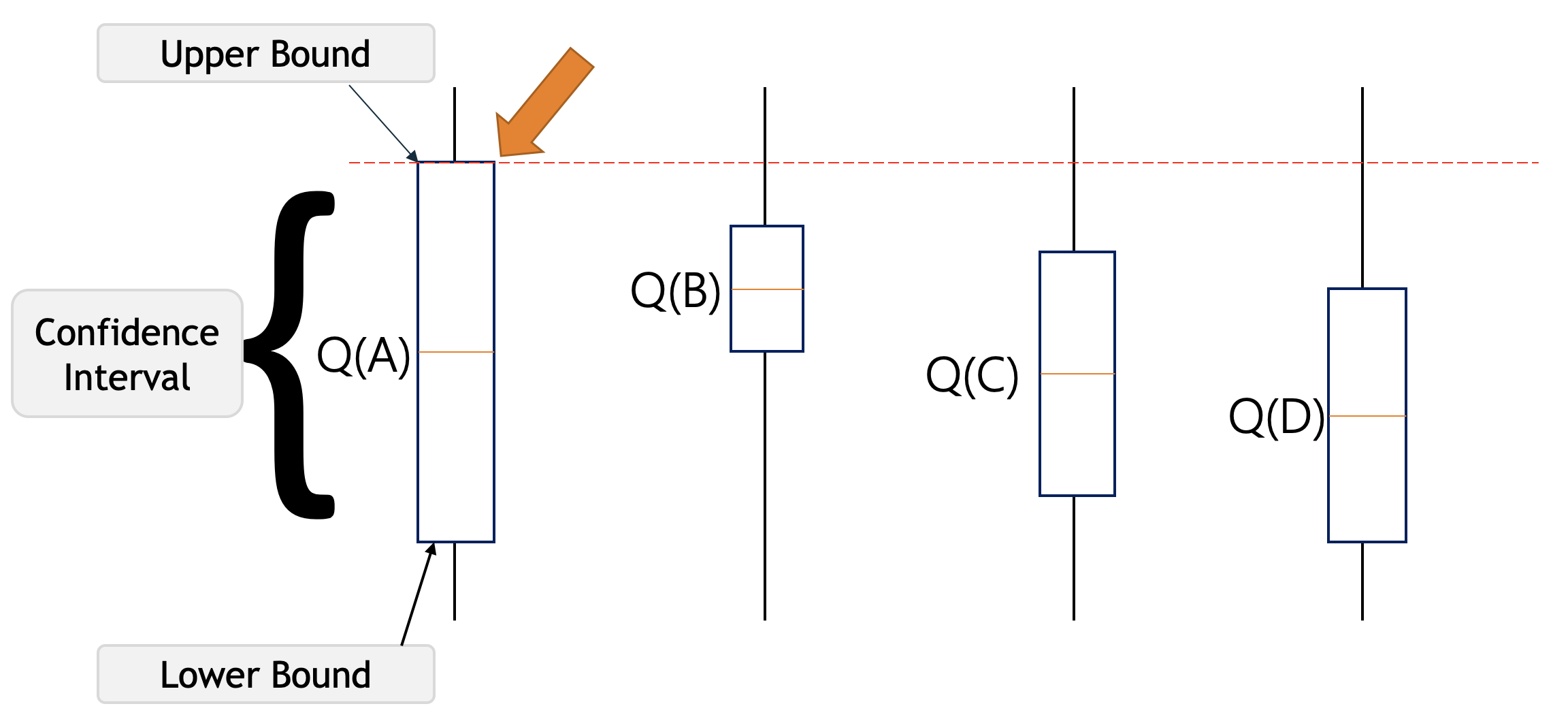}
    \caption{Confidence intervals for actions A, B, C, and D, with UCB selecting action A due to the highest upper bound. Figure taken from \cite{UCB_figures}.}
    \label{fig:UCB1}
\end{figure}

Assume that after selecting action A, the updated state is depicted in  Fig.\ref{fig:UCB2}, where action B now has the highest upper-confidence bound due to its elevated action-value estimate.

\begin{figure}[H]
    \centering
    \includegraphics[width=0.45\linewidth]{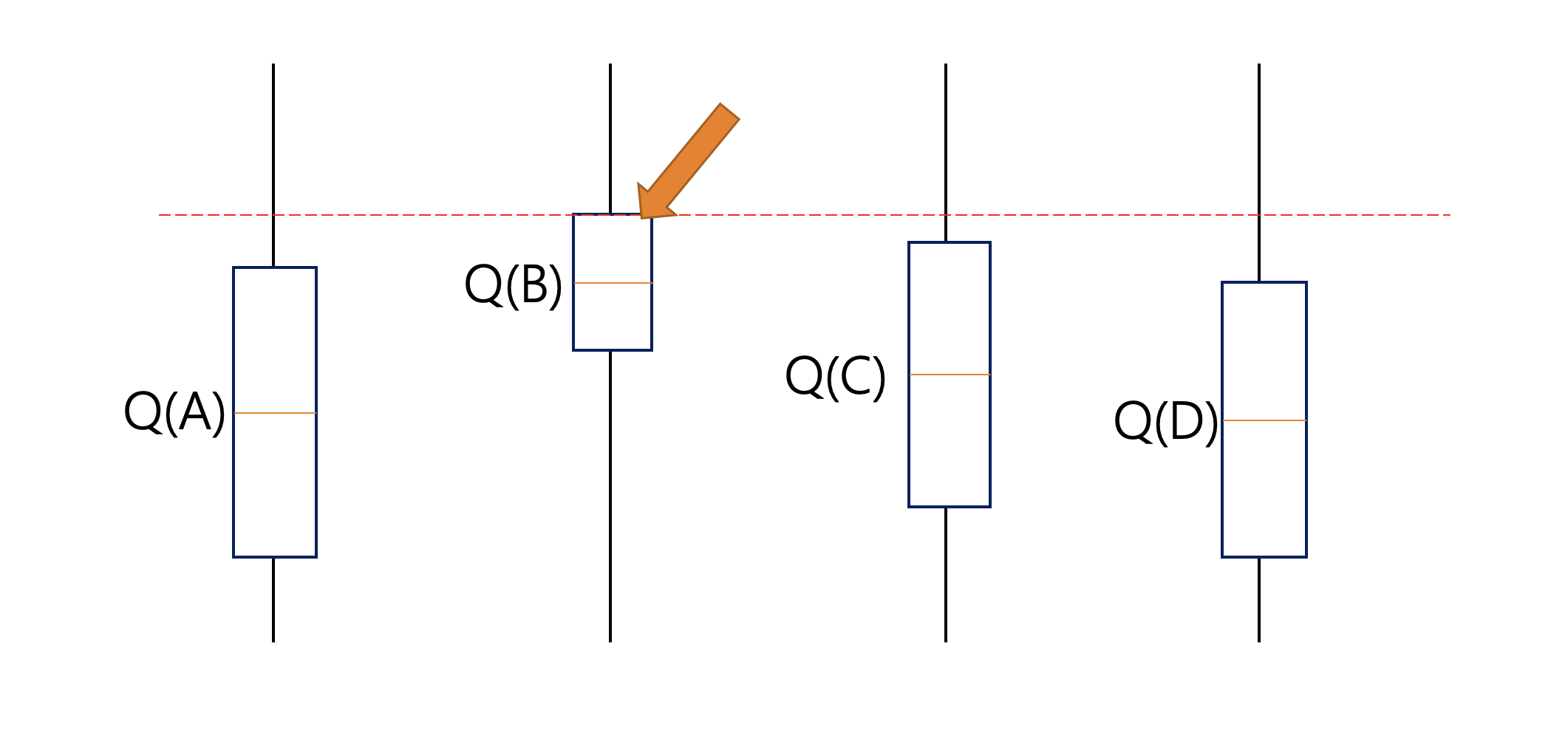}
    \caption{Updated confidence intervals after selecting A, with UCB now selecting action B. Figure taken from \cite{UCB_figures}.}
    \label{fig:UCB2}
\end{figure}

Over time, this approach typically yields higher average rewards compared to alternatives such as \textit{epsilon-greedy} or \textit{optimistic initial values}, as it systematically balances exploration and exploitation based on empirical data.

\section{MAB-ARM: computational complexity} \label{app:MAB-ARM_complexity}

We analyse the costs of the MAB-ARM algorithm in its two main phases: the initialization stage and the main loop.

\paragraph{Initialization stage} 
This stage involves generating all candidate itemsets $\mathcal{I}$ for sizes $m$ from $m_{min}$ to $m_{max}$. The cost is given as $\mathcal{O}(\sum_{m=m_{\text{min}}}^{m_{\text{max}}} \mathcal{C}_m^M)$, where $\mathcal{C}_m^M = \binom{M}{m}$ is the binomial coefficient representing the number of itemsets of size $m$. The total number of itemsets in $\mathcal{I}$ is:
\[
|\mathcal{I}| = \sum_{m=m_{\text{min}}}^{m_{\text{max}}} \binom{M}{m}
\]
This cost depends on the range $[m_{\text{min}}, m_{\text{max}}]$. For example, if $m_{\text{min}} = 2$ and $m_{\text{max}} = M$, it approaches $\mathcal{O}(2^M)$, but typically $m_{max}$ is set to a smaller value to limit the search space.

\paragraph{Main loop} 
The main loop runs for $T_{\text{max}}$ iterations. Each iteration involves: (1) computing UCB scores and selecting an itemset, which costs $\mathcal{O}(|\mathcal{I}|)$; (2) Estimating the co-occurrence probability for the selected itemset, which costs $\mathcal{O}(N \cdot m)$, where $m \leq m_{\text{max}}$, and in the worst case, $m \leq M$. The total cost of the main loop is therefore:
\[
\mathcal{O}(T_{\text{max}} \cdot (|\mathcal{I}| + N \cdot M))
\]
as $m \leq M$, so $\mathcal{O}(N \cdot m)$ is bounded by $\mathcal{O}(N \cdot M)$.

\paragraph{Other costs} 
Initializing counters for each itemset in $\mathcal{I}$ costs $\mathcal{O}(|\mathcal{I}|)$, which is subsumed by the main loop cost. Rule generation and pruning steps within the loop do not add significant complexity beyond the probability estimation.

\paragraph{Overall cost} 
The overall computational cost of MAB-ARM is the sum of the initialization and the main loop costs:
\[
\mathcal{O}\left( \sum_{m=m_{\text{min}}}^{m_{\text{max}}} \mathcal{C}_m^M + T_{\text{max}} \cdot \left( \sum_{m=m_{\text{min}}}^{m_{\text{max}}} \mathcal{C}_m^M + N \cdot M \right) \right)
\]

As the main loop cost typically dominates for large $T_{\text{max}}$, the overall cost can be simplified to:
\[
\mathcal{O}\left( T_{\text{max}} \cdot \left( \sum_{m=m_{\text{min}}}^{m_{\text{max}}} \mathcal{C}_m^M + N \cdot M \right) \right)
\]

\paragraph{Comparison with GPAR and BARM}
For GPAR, the overall cost is upper bounded by $\mathcal{O}(M^2d + T(M^2d + NM^2 + M^3) + 2^M (M^3 + SM^2))$, with $T$ being the number of optimisation iterations for optimising kernel hyper-parameters and $S$ the number of Monte Carlo samples used for rules evaluation. The first two terms correspond to the kernel optimisation stage cost, and the last term corresponds to the cost of rules evaluation. The major cost of GPAR is dominated by the last term $\mathcal{O}(2^M \cdot (M^3 + S M^2))$, i.e. the exponential number of itemsets and the cubic cost of GP operations (Cholesky decomposition of the covariance matrix when sampling).  

For the dependency-free BARM, analytical posterior inference costs $\mathcal{O}(N \cdot M)$ to update the beliefs, the rules evaluation costs $\mathcal{O}(2^M \cdot S \cdot M)$. The overall cost of the dependency-free BARM is upper bounded by $\mathcal{O}(N \cdot M + 2^M \cdot S \cdot M)$, which is dominated by the last term $\mathcal{O}(2^M \cdot S \cdot M)$, which is also exponential in $M$, but with a lower per-itemset cost than GPAR. If non-conjugate prior and likelihood are used, we seek to MCMC sampling for posterior inference, which can be costive. The total cost of BARM would be $\mathcal{O}(S_{\text{MCMC}} \cdot N \cdot M + 2^M \cdot S \cdot M)$, or $\mathcal{O}(S_{\text{MCMC}} \cdot N \cdot M^2 \cdot d + 2^M \cdot S \cdot M^2)$ if considering computing the pairwise correlation adjustment. Depending on the MCMC iterations $S_{\text{MCMC}}$, this MCMC sampling based BARM may or may not be more scalable than GPAR, although it avoids the $\mathcal{O}(M^3)$ cost of GP covariance matrix operations.

For MAB-ARM, the cost is $\mathcal{O}(T_{\text{max}} \cdot (\sum_{m=m_{\text{min}}}^{m_{\text{max}}} \mathcal{C}_m^M + N \cdot M)) \leq \mathcal{O}(T_{\text{max}} \cdot (2^M + N \cdot M))$, with $T_{\text{max}}$ being the number of MAB iterations. $T_{\text{max}}$ can be controlled to be much smaller than $S \cdot M$, $M^3$ or $SM^2$, potentially making MAB-ARM more efficient than GPAR or BARM.

\section{The MCTS-ARM algorithm} \label{app:MCTS-ARM}

Following the discussion in Section.\ref{subsec:MCTS-ARM}, we present the pseudo code for the MCTS-ARM method. The main workflow is outlined in the pseudo codes below, with implementation details provided in Algo.\ref{algo:MCTS-ARM}.

{\centering \textbf{MCTS-ARM pseudocode}\par}
\begin{lstlisting}
initialize tree with root node (empty itemset)
while not stopping_criteria_met:
    node = select_node(root)  # Traverse using UCB
    if not fully_expanded(node):
        expand_node(node)      # Add new child itemsets
        new_node = select_new_child(node)
    else:
        new_node = node
    reward = simulate(new_node)          # Random rollout and evaluate
    backpropagate(new_node, reward)      # Update rewards and visits
extract_rules(tree)                      # Output high-reward itemsets and rules
\end{lstlisting}

\begin{algorithm}[H]
\scriptsize
\caption{MCTS-ARM: Monte Carlo tree search for association rule mining}
\label{algo:MCTS-ARM}
\textbf{Input:} A set of items $\{1, 2, \dots, M\}$; a set of transactions $\mathcal{T} = \{\mathbf{t}_1, \mathbf{t}_2, \dots, \mathbf{t}_N\}$, where each $\mathbf{t}_j \in \{0, 1\}^M$; a minimum support threshold $min\_supp$; a minimum confidence threshold $min\_conf$; maximum number of iterations $T_{\text{max}}$; maximum itemset size $m_{\text{max}}$; UCB exploration parameter $c$. \\
\textbf{Output:} A set of association rules $\mathcal{R}$ where each rule $r \in \mathcal{R}$ is of the form $A \rightarrow B$ with $A, B \subseteq \{1, 2, \dots, M\}$, satisfying the support and confidence thresholds.

\vspace{1mm}\hrule\vspace{1mm}

\begin{algorithmic}[1]
\STATE Initialize an empty set $\mathcal{R} = \emptyset$ to store association rules. \hfill\textit{$\mathcal{O}(1)$}
\STATE Initialize a tree with root node $root$ where $root.itemset = \emptyset$, $root.visits = 0$, $root.total\_reward = 0$, $root.children = \emptyset$. \hfill\textit{$\mathcal{O}(1)$}

\FOR{$t = 1$ to $T_{\text{max}}$}
    \STATE $node \gets$ SELECT-NODE($root$, $c$, $t$). \hfill\textit{$\mathcal{O}(m_{\text{max}} \cdot M)$}
    
    \IF{not FULLY-EXPAND($node$, $m_{\text{max}}$, $\mathcal{T}$, $min\_supp$)}
        \STATE EXPAND-NODE($node$, $\{1, 2, \dots, M\}$, $m_{\text{max}}$, $\mathcal{T}$, $min\_supp$). \hfill\textit{$\mathcal{O}(M \cdot N)$}
        \STATE $new\_node \gets$ SELECT-NEW-CHILD($node$). \hfill\textit{$\mathcal{O}(M)$}
    \ELSE
        \STATE $new\_node \gets node$. \hfill\textit{$\mathcal{O}(1)$}
    \ENDIF
    
    \STATE $reward \gets$ SIMULATE($new\_node$, $\mathcal{T}$, $min\_supp$, $min\_conf$, $m_{\text{max}}$). \hfill\textit{$\mathcal{O}(N \cdot m_{\text{max}} \cdot 2^{m_{\text{max}}})$}
    
    \STATE BACKPROPAGATE($new\_node$, $reward$). \hfill\textit{$\mathcal{O}(m_{\text{max}})$}
\ENDFOR

\STATE $\mathcal{R} \gets$ EXTRACT-RULES($root$, $\mathcal{T}$, $min\_supp$, $min\_conf$). \hfill\textit{$\mathcal{O}(V \cdot 2^{m_{\text{max}}} \cdot N)$}
\RETURN $\mathcal{R}$. \hfill\textit{$\mathcal{O}(1)$}

\vspace{0.3cm}
\STATE \textbf{Function} SELECT-NODE($node$, $c$, $t$):
\begin{ALC@g}
    \WHILE{$node.children \neq \emptyset$}
        \IF{any $child \in node.children$ has $child.visits = 0$}
            \RETURN $child$. \hfill\textit{$\mathcal{O}(|node.children|)$}
        \ENDIF
        \FOR{each $child \in node.children$}
            \STATE Compute $\text{UCB} = child.average\_reward + c \cdot \sqrt{\frac{\ln(t)}{child.visits}}$. \hfill\textit{$\mathcal{O}(1)$}
        \ENDFOR
        \STATE $node \gets \arg\max_{child \in node.children} \text{UCB}$. \hfill\textit{$\mathcal{O}(|node.children|)$}
    \ENDWHILE
    \RETURN $node$. \hfill\textit{$\mathcal{O}(1)$}
\end{ALC@g}
\vspace{0.3cm}

\STATE \textbf{Function} FULLY-EXPAND($node$, $m_{\text{max}}$, $\mathcal{T}$, $min\_supp$):
\begin{ALC@g}
    \IF{$|node.itemset| \geq m_{\text{max}}$}
        \RETURN True. \hfill\textit{$\mathcal{O}(1)$}
    \ENDIF
    \FOR{each $i \in \{1, 2, \dots, M\} \setminus node.itemset$}
        \STATE $new\_itemset \gets node.itemset \cup \{i\}$.
        \STATE Compute $supp = \frac{1}{N} \sum_{j=1}^N \mathbb{I}(\mathbf{t}_j[k] = 1 \ \forall k \in new\_itemset)$.
        \IF{$supp \geq min\_supp$ and $new\_itemset \notin \{child.itemset \mid child \in node.children\}$}
            \RETURN False. \hfill\textit{$\mathcal{O}(N \cdot m_{\text{max}})$}
        \ENDIF
    \ENDFOR
    \RETURN True. \hfill\textit{$\mathcal{O}(M \cdot N \cdot m_{\text{max}})$}
\end{ALC@g}
\vspace{0.3cm}
\end{algorithmic}
\end{algorithm}

\newpage

\setcounter{algorithm}{6} 
\begin{algorithm}[H]
\scriptsize
\caption{MCTS-ARM: Monte Carlo tree search for association rule mining (Cont'd)}
\label{algo:MCTS-ARM-continued}
\ContinuedFloat

\begin{algorithmic}[1]
\STATE \textbf{Function} EXPAND-NODE($node$, $items$, $m_{\text{max}}$, $\mathcal{T}$, $min\_supp$):
\begin{ALC@g}
    \IF{$|node.itemset| < m_{\text{max}}$}
        \FOR{each $i \in items \setminus node.itemset$}
            \STATE $new\_itemset \gets node.itemset \cup \{i\}$.
            \STATE Compute $supp = \frac{1}{N} \sum_{j=1}^N \mathbb{I}(\mathbf{t}_j[k] = 1 \ \forall k \in new\_itemset)$.
            \IF{$supp \geq min\_supp$}
                \STATE Create $new\_child$ with $itemset = new\_itemset$, $visits = 0$, $total\_reward = 0$, $average\_reward = 0$, $parent = node$, $children = \emptyset$.
                \STATE Add $new\_child$ to $node.children$. \hfill\textit{$\mathcal{O}(1)$}
            \ENDIF
        \ENDFOR
    \ENDIF \hfill\textit{$\mathcal{O}(M \cdot N \cdot m_{\text{max}})$}
\end{ALC@g}
\vspace{0.3cm}

\STATE \textbf{Function} SELECT-NEW-CHILD($node$):
\begin{ALC@g}
    \STATE Select $child \in node.children$ with $child.visits = 0$ (randomly if multiple).
    \RETURN $child$. \hfill\textit{$\mathcal{O}(|node.children|)$}
\end{ALC@g}
\vspace{0.3cm}

\STATE \textbf{Function} SIMULATE($node$, $\mathcal{T}$, $min\_supp$, $min\_conf$, $m_{\text{max}}$):
\begin{ALC@g}
    \STATE $current\_itemset \gets node.itemset$.
    \WHILE{$|current\_itemset| < m_{\text{max}}$}
        \STATE Randomly select $i \in \{1, 2, \dots, M\} \setminus current\_itemset$.
        \STATE $current\_itemset \gets current\_itemset \cup \{i\}$.
        \STATE Compute $supp = \frac{1}{N} \sum_{j=1}^N \mathbb{I}(\mathbf{t}_j[k] = 1 \ \forall k \in current\_itemset)$.
        \IF{$supp < min\_supp$}
            \RETURN 0. \hfill\textit{$\mathcal{O}(N \cdot m_{\text{max}})$}
        \ENDIF
    \ENDWHILE
    \STATE Initialize $valid\_rules \gets \emptyset$.
    \FOR{each $(A, B)$ split of $current\_itemset$ with $B \neq \emptyset$}
        \STATE Compute $p_A = \frac{1}{N} \sum_{j=1}^N \mathbb{I}(\mathbf{t}_j[k] = 1 \ \forall k \in A)$.
        \STATE Compute $conf = \frac{supp}{p_A}$.
        \IF{$conf \geq min\_conf$}
            \STATE Add $(A \rightarrow B)$ to $valid\_rules$. \hfill\textit{$\mathcal{O}(N \cdot m_{\text{max}})$}
        \ENDIF
    \ENDFOR
    \IF{$valid\_rules = \emptyset$}
        \RETURN 0. \hfill\textit{$\mathcal{O}(1)$}
    \ELSE
        \RETURN $\max(\{conf \mid (A \rightarrow B) \in valid\_rules\})$. \hfill\textit{$\mathcal{O}(2^{m_{\text{max}}})$}
    \ENDIF \hfill\textit{$\mathcal{O}(N \cdot m_{\text{max}} \cdot 2^{m_{\text{max}}})$}
\end{ALC@g}
\vspace{0.3cm}

\STATE \textbf{Function} BACKPROPAGATE($node$, $reward$):
\begin{ALC@g}
    \WHILE{$node \neq$ null}
        \STATE $node.visits \gets node.visits + 1$.
        \STATE $node.total\_reward \gets node.total\_reward + reward$.
        \STATE $node.average\_reward \gets \frac{node.total\_reward}{node.visits}$.
        \STATE $node \gets node.parent$. \hfill\textit{$\mathcal{O}(1)$}
    \ENDWHILE \hfill\textit{$\mathcal{O}(m_{\text{max}})$}
\end{ALC@g}
\vspace{0.3cm}

\STATE \textbf{Function} EXTRACT-RULES($root$, $\mathcal{T}$, $min\_supp$, $min\_conf$):
\begin{ALC@g}
    \STATE Initialize $\mathcal{R} \gets \emptyset$.
    \FOR{each $node$ in tree (via depth-first search) with $node.visits > 0$}
        \STATE Compute $supp = \frac{1}{N} \sum_{j=1}^N \mathbb{I}(\mathbf{t}_j[k] = 1 \ \forall k \in node.itemset)$.
        \IF{$supp \geq min\_supp$}
            \FOR{each $(A, B)$ split of $node.itemset$ with $B \neq \emptyset$}
                \STATE Compute $p_A = \frac{1}{N} \sum_{j=1}^N \mathbb{I}(\mathbf{t}_j[k] = 1 \ \forall k \in A)$.
                \STATE Compute $conf = \frac{supp}{p_A}$.
                \IF{$conf \geq min\_conf$}
                    \STATE Add $(A \rightarrow B)$ to $\mathcal{R}$. \hfill\textit{$\mathcal{O}(N \cdot m_{\text{max}})$}
                \ENDIF
            \ENDFOR
        \ENDIF
    \ENDFOR
    \RETURN $\mathcal{R}$. \hfill\textit{$\mathcal{O}(V \cdot 2^{m_{\text{max}}} \cdot N)$}
\end{ALC@g}
\vspace{0.3cm}
\end{algorithmic}
\end{algorithm}

\end{document}